\documentclass{article}

\usepackage{microtype}
\usepackage{subcaption}
\usepackage{booktabs} 
\usepackage{caption}
\usepackage{cpdef}
\usepackage{url}
\usepackage{mysymbol}
\usepackage{xcolor,colortbl,graphicx,caption,booktabs}
\usepackage{algorithm}
\usepackage{algorithmic}

\usepackage{placeins}
\usepackage{balance}
\usepackage{hyperref}

\newcommand{\algcomment}[1]{\textcolor{blue!70!black}{\small{\texttt{\textbf{//\hspace{2pt}#1}}}}}

\newcommand{\cp}[1]{\textcolor{blue}{\bf\small [#1 -CP]}}

\usepackage[accepted]{icml2026}

\usepackage{amsmath}
\usepackage{amssymb}
\usepackage{mathtools}
\usepackage{amsthm}

\theoremstyle{plain}

\usepackage[textsize=tiny]{todonotes}

\icmltitlerunning{Collaborative and Efficient Fine-tuning}

\begin{document}

\twocolumn[
  \icmltitle{Collaborative and Efficient Fine-tuning:\\ Leveraging Task Similarity}

  \begin{icmlauthorlist}
    \icmlauthor{Gagik Magakyan}{}
    \icmlauthor{Amirhossein Reisizadeh}{}
    \icmlauthor{Chanwoo Park}{}
    \icmlauthor{Pablo A. Parrilo}{}
    \icmlauthor{Asuman Ozdaglar}{}
  \end{icmlauthorlist}



  \vskip 0.3in
]

\printAffiliationsAndNotice{Authors are with
Laboratory for Information \& Decision Systems (LIDS), Massachusetts Institute of Technology (MIT)}

\begin{abstract}

\emph{Adaptability} has been regarded as a central feature in the foundation models, enabling them to effectively acclimate to unseen downstream tasks. Parameter-efficient fine-tuning methods such as celebrated LoRA facilitate efficient adaptation of large foundation models using labeled, high-quality and generally scarce task data. To mitigate data scarcity in fine-tuning of foundation models, we propose to leverage \emph{task similarity} across multiple downstream users. Intuitively, users with similar tasks must be able to assist each other in boosting the effective fine-tuning data size. We propose \emph{Collaborative Low-Rank Adaptation}, or CoLoRA, which exploits task similarity to collaboratively and efficiently fine-tune personalized
foundation models. The main idea in CoLoRA is to train one shared adapter capturing underlying task similarities across all tasks, and personalized adapters tailored to user-specific tasks. We theoretically study CoLoRA on heterogeneous linear regression and provide provable guarantees for ground truth recovery. 
We also conduct several natural language experiments with varying task similarity, which further demonstrate that when trained together with similar tasks, individual performances are significantly boosted.
\end{abstract}

\section{Introduction} 

\label{sec:intro}
Foundation models (FMs) such as large language models (LLMs) are the essential horsepower of modern AI systems and their remarkable advances. These are large and general-purpose models that are pre-trained on massive corpora of public data such as the Internet. One key characteristic of foundation models is their \emph{adaptability} to likely unseen user-specific tasks, also known as downstream tasks \citep{bommasani2021opportunities}. More precisely, pre-trained FMs can adapt to new tasks with minimal \emph{fine-tuning} to task-specific data. This remarkable feature facilitates computation-efficient adaptation of foundation models without retraining all parameters, which is clearly infeasible. Parameter-efficient fine-tuning (PEFT) is, in fact, a fast-growing area of research that devises such techniques, including the celebrated Low-Rank Adaptation (LoRA) method \citep{hu2022lora}.

Supervised fine-tuning data is typically a limited set of labeled samples drawn from a distribution specific to a particular user task. For instance, suppose an AI user aims to fine-tune a pre-trained Llama model for text summarization in English. A typical fine-tuning dataset is CNN/DailyMail benchmark which consists of news articles paired with multi-sentence summaries. High-quality fine-tuning data, however, is expensive to collect, resulting in a critical bottleneck in the widespread deployability of foundation models known as \emph{data scarcity} \citep{chen2023empirical,szep2024practical}.

To mitigate data scarcity in fine-tuning of foundation models, we propose to leverage \emph{task similarity} across downstream users.
Consider a geographically scattered pool of users seeking text summarizer AI assistants in different languages such as English, Spanish, etc. One simple approach is for each user to individually and locally fine-tune a pre-trained Llama model using their language-specific text summarization samples which, however, suffers from data scarcity. A key point here is that these are \emph{similar} tasks, though in different languages. For instance, one would expect similar content in summaries of an article written in English and its Spanish copy. In other words, summarization tasks across languages share a largely language-agnostic semantic representation. By leveraging this shared structure, labeled data from different languages can be pooled, alleviating data scarcity and improving fine-tuning.

In a nutshell, we propose utilizing fine-tuning data across similar tasks to capture the underlying common transformations, enabling a potentially much larger data pool. In addition, task-specific parameters help personalize and tailor the fine-tuned model to each individual user task. Now the central question becomes:

\newpage
\begin{center}
    \emph{How can we exploit task similarity to collaboratively and efficiently fine-tune personalized foundation models?}
\end{center}
We propose \emph{Collaborative Low-Rank Adaptation}, or \textbf{CoLoRA} in short, and describe it in the following. The main idea behind CoLoRA is to train two sets of fine-tuning adapters: common (or global) adapters used in all tasks which capture the underlying task similarities; and task-specific (or personalized) ones that tailor the fine-tuned models for each downstream task.

\paragraph{Main contributions.}

Here, we summarize the main contributions of the paper:
\begin{itemize}
    \item We introduce CoLoRA, a collaborative (and distributed) fine-tuning approach with minimal parameter overhead. The key idea is to leverage the underlying similarities in downstream tasks and train common adapters useful across all tasks.
    \item We investigate the notation of ``task similarity" for language tasks. To the best of our knowledge, there has been no robust similarity notion in this context, as opposed to standard deep learning tasks, such as image classification. We introduce preliminary similarity notions based on the task's adapter and utilize them both in our theoretical and empirical discussions.
    \item To analyse CoLoRA theoretically, we connect it to a heterogeneous matrix linear regression problem as they share their optimization objective. We utilize the Alternating Minimization (AltMin) approach and provide rigorous reconstruction error and sample complexity.
    \item We implement CoLoRA for collaboratively fine-tuning language tasks. Our results demonstrate significant improvement in downstream tasks when trained together with ``similar" tasks. Moreover, we compare CoLoRA against multiple federated and collaborative fine-tuning baselines.
\end{itemize}

\section{Preliminaries}

Modern large foundation models consist of a pre-trained model comprising billions of weight parameters which could be key, query and value weight matrices of several layers of Transformer models. We denote a fixed layer of the pre-trained model parameterized with a weight matrix $W_0 \in \reals^{d \times d}$ by $f_{W_0}$. Fine-tuning a pre-trained model refers to updating the model parameters to adapt to a new downstream task. More precisely, the pre-trained weight matrix $W_0$ is updated to $W_0 + \Delta W$ using the downstream task data. In supervised scenarios, such data is typically a collection of task-specific context-target pairs denoted by $\ccalD = \{(x_1,y_1), \cdots, (x_N,y_N)\}$. In text summarization, for instance, $x_i$ contains a news article paired with its summary $y_i$. For a proper choice of the loss function $\ell(\cdot)$, fine-tuning can be expressed as the following optimization problem:
\begin{align} \label{eq:FT} 
\min_{\Delta W} \sum_{(x,y) \in \ccalD} \ell(f_{W_0 + \Delta W}(x), y).
\end{align}
For instance, for a pre-trained autoregressive language model $p_{W_0}(y|x)$, the loss function is the negative log-likelihood over next-token predictions, i.e. $-\sum_t \log p_{W_0 + \Delta W}(y_t|x, y_{<t})$.

\subsection{LoRA}

Updating all model parameters  induces a massive computation cost and renders model adaptation inefficient and impractical. Low-Rank Adaptation (LoRA) is a parameter-efficient fine-tuning method that proposes low-rank structures for the model update. In our setting, LoRA considers the adapter $\Delta W = B A$ to be composed of low-rank matrices $B \in \reals^{d \times r}$ and $A \in \reals^{r \times d}$ of rank $r \ll d$, that is,
\begin{align} \label{eq:LoRA}
   \min_{A,B} \sum_{(x,y) \in \ccalD} \ell(f_{W_0 + BA}(x), y).
\end{align}
Consequently, the number of training parameters is dramatically reduced to linear $\ccalO(2rd)$ from quadratic $\ccalO(d^2)$ in \eqref{eq:FT}, facilitating more efficient fine-tuning. 

\subsection{Adapting to multiple tasks} \label{sec:multi LoRA}

Many AI applications rely on adapting \emph{one} pre-trained model to not one but \emph{multiple} downstream tasks. Consider $k$ tasks and their corresponding fine-tuning data $\ccalD_1,\cdots,\ccalD_k$. Applying LoRA \eqref{eq:LoRA} on each task individually leads to disjointly training $k$ pairs of matrices  $(A_1,B_1),\cdots,(A_k,B_k)$  as follows 
\begin{align} \label{eq:local-LoRA}
   \min_{A_i, B_i} \sum_{(x,y) \in \ccalD_i} \ell(f_{W_0 + B_i A_i}(x), y),
   \quad\quad
   i=1,\cdots,k.
\end{align}
Although LoRA significantly contributes to parameter efficiency via small ranks $r$, still, the number of training parameters $\ccalO(krd)$  scales directly with dimensions $d,r$ and the number of tasks $k$. As a result, purely disjoint fine-tuning on individual tasks prohibits task scalability. 

\subsection{Task similarity} \label{sec:task similarity}

As we observed above, in general, employing local LoRA without any collaboration between the users is not scalable. Now, what if user tasks are somehow related or similar? How can we exploit such \emph{task similarity}? Let us elaborate on our idea with a simple example. Consider a scenario in which a pretrained model has to be fine-tuned to the following two tasks: ``Task $T_1$: \emph{Count alphabetical elements in list.}" and ``Task $T_2$: \emph{Count numerical elements in list.}". It is fairly reasonable to consider these two tasks \emph{similar} since they both rely on classifying elements alphabetically or numerically and counting them. 

We fine-tune a base model separately to these two tasks using the local LoRA method \eqref{eq:local-LoRA} resulting in the optimized adapters $(A_1,B_1)$ and $(A_2,B_2)$ and fine-tuned models $W_0 + B_1 A_1$ and $W_0 + B_2 A_2$. If we could find shared adapters $A,B$ and task-specific ones $\Lambda_1, \Lambda_2 \in \reals^{r \times r}$ such that
\begin{align} \label{eq:shared}
   B_1 A_1 \approx B \Lambda_1 A
   \quad\quad
   \text{and}
   \quad\quad
   B_2 A_2 \approx B \Lambda_2 A,
\end{align}
then the two tasks effectively can be effectively represented with fewer parameters, $4dr$ vs. $2dr + 2r^2$. Here,  $A,B$ capture the underlying common structures between the two similar tasks, and the personalized ones $\Lambda_1,\Lambda_2$ reflect the task-specific characteristics. We examine this in the following: we utilize (column) subspace similarity which measures how much column subspaces of the fine-tuned adapters overlap across different tasks. We use subspace distance as a proxy for similarity of the corresponding tasks.

\begin{definition}[Column subspace similarity]\label{def:col_sub}
Let $A_i,B_i$ and $A_j,B_j$ denote the fine-tuned adapters of tasks $T_i$ and $T_j$ respectively. The column subspace similarity of the two tasks is defined as
\begin{align} \label{eq:similarity}
   {\normalfont\text{sim}}_c(T_i,T_j) \coloneqq \frac{1}{\sqrt{r}} \| U_i^\top U_j \|_F,
\end{align}
where $U_i$ denotes an orthonormal basis for the column subspace of $B_i A_i$.
\end{definition}

Note that $0 \leq \text{sim}(\cdot, \cdot) \leq 1$. Also, larger values of subspace similarity $\text{sim}(T_1, T_2)$ imply that the two matrices $B_2 A_2$ and $B_1 A_1$ have greater overlaps between their column subspaces, hinting of a common matrix $B$ that aims at satisfying \eqref{eq:shared} approximately. Also, we can likewise define row subspace similarity\footnote{It has been observed that in standard LoRA, trained $A_i$s tend to remain close to their initial value \citep{ban2025rethinking}. Therefore, with the same initializations across different tasks, we only consider the column subspaces ($B_i$ matrices) in our similarity measure. We also observed empirically that this similarity is robust to initialization.}. We further note that to the best of our knowledge, there has been no robust and efficient task similarity measure in the context of language tasks. In contrast, notions such as Task2Vec based on Fisher Information Matrix (FIM) have been well studied for standard deep learning tasks such as image classification \citep{achille2019task2vec}. Such notions are deemed infeasible in language tasks due to large model sizes.

We utilize the similarity metric defined in \eqref{eq:similarity} to order six tasks which results in the following grid. We considered six tasks and LoRA fine-tuned a model for each task independently. \autoref{fig:similarity_heatmap} demonstrates the subspace similarity measured across these tasks as defined in \eqref{eq:similarity}.


 \begin{figure}[h!]
    \centering
        \includegraphics[width= 0.7\linewidth]{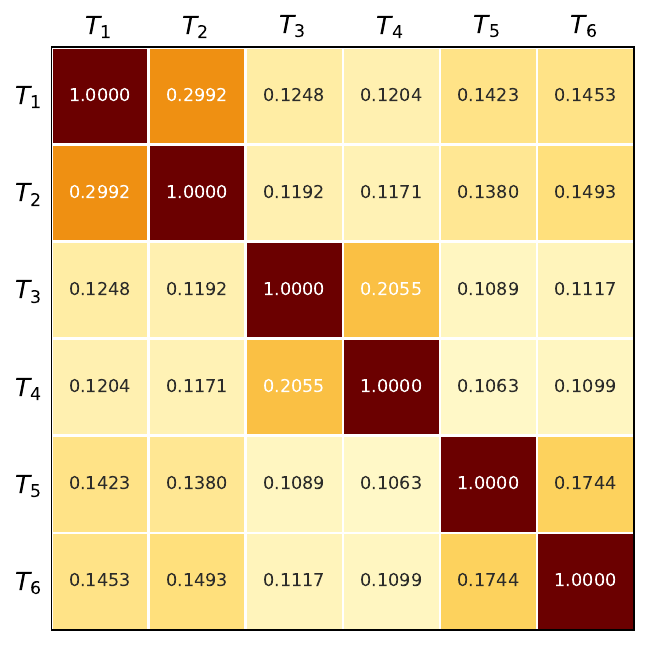} 
        \caption{Column subspace similarity averaged across all layers.}
        \label{fig:similarity_heatmap}
 \end{figure}
 \begin{table}[h!]
 \caption{Task descriptions.}
    \centering
        \begin{tabular}{|c|l|}
            \hline
            $T_1$ & \footnotesize{\emph{Count alphabetical elements in a list.}} \\ \hline
            $T_2$ & \footnotesize{\emph{Count numerical elements in a list.}} \\ \hline
            $T_3$  & \footnotesize{\emph{In a list, multiply positives by 2, negatives by $-3$.}} \\ \hline
            $T_4$ & \footnotesize{\emph{In a list, divide evens by $4$, multiply odds by $4$, add $2$.}} \\ \hline
            $T_5$ & \footnotesize{\emph{Count frequency of a letter in sentence.}}\\ \hline
            $T_6$ & \footnotesize{\emph{Count vowels and consonants in sentence.}}\\ \hline
        \end{tabular}
        \label{tab:tasks}
 \end{table}

This experiment reveals several interesting insights. By reading the task descriptions one could realize that the six tasks share similarity. For instance, tasks $T_1$ and $T_2$ are significantly similar as both involve classifying elements to alphabetical or numerical and counting them. Similarly, tasks $(T_3,T_4)$ and $(T_5,T_6)$ form similar pairs of tasks. Intrestingly, such pairs demonstrate relatively large signals in the grid above; see the similarities $\text{sim}(T_1,T_2)$, $\text{sim}(T_3,T_4)$ and $\text{sim}(T_5,T_6)$. It is also worth noting that tasks $T_1, T_2$ and $T_5, T_6$ show a similarity in the grid which can be attributed to the fact that both tasks involve counting.

\textbf{Takeaway:}  This experiment further corroborates our initial hypothesis that similar tasks tend to have common underlying structures in the sense of \eqref{eq:shared} when fine-tuned with LoRA. Consequently, we could utilize this observation for a novel adapter structure when fine-tuning for several relevant tasks. This paves the way for our proposed framework detailed in the next section.

\section{Collaborative Low-Rank Adaptation} \label{sec:CoLoRA}

To recap the discussion in the previous section, we hypothesize that if $n$ downstream tasks are \emph{similar}, their fine-tuned adapters would have significantly overlapped row and column subspaces. Consequently, there exist common adapters $A,B$ and personalized ones $\Lambda_1, \cdots, \Lambda_k \in \reals^{r \times r}$ such that
\begin{align} 
    B_i A_i \approx B \Lambda_i A
    \quad\quad
    \text{for }
    i=1,\cdots,k.
\end{align}
This is the main idea in our proposed Collaborative Low-Rank Adaptation (CoLoRA) method that solves the following optimization problem
\begin{align} \label{eq:CoLoRA}
    \min_{\substack{A, B \\ \Lambda_1,\cdots,\Lambda_k}} \,\, \sum_{1\leq i \leq k} \,
    \sum_{(x,y) \in \ccalD_i} \ell(f_{W_0 + B \Lambda_i A}(x), y).\tag{CoLoRA}
\end{align}
The main motivating ideas for us to consider this particular formulation are two-fold:

\textbf{Leveraging task similairity}: As we elaborated in Section \ref{sec:task similarity}, more similar tasks tend to have lower subspace distance as illustrated in \autoref{fig:similarity_heatmap}. The proposed adapter in CoLoRA leverages task similarity by training common $A,B$ for all tasks, capturing the underlying subspace overlaps. Each $\Lambda_i$ on the other hand, tailors local adapters to specific downstream task, resulting in personalized models. 

To demonstrate the task similarity point above, we focus on a particular task and monitor its performance in the data scarce regime, in two scenarios: i) collaboratively fine-tuned together with other similar tasks, and ii)
fine-tuned locally without any collaboration. Let us fix task $T$: \emph{Given a list of integers, remove all the even elements.}  We experiment several instances where in each instance, task $T$ is joined by three other tasks with varying similarities to $T$. Each instance is represented by a dot in Figure \ref{fig:CoLoRA_sample}. As illustrated, as task $T$ is trained together with more similar tasks (right side of the plot), it enjoys higher performance gains compared to training exclusively on its own data (dashed line). This experiment highlights how CoLoRA leverages task similarity and enables scalable and parameter-efficient collaborative fine-tuning. We provide further experimental results and their details in Section \ref{sec:experiments}.

\begin{figure}[t]
  \centering
  \includegraphics[width=0.96\columnwidth]{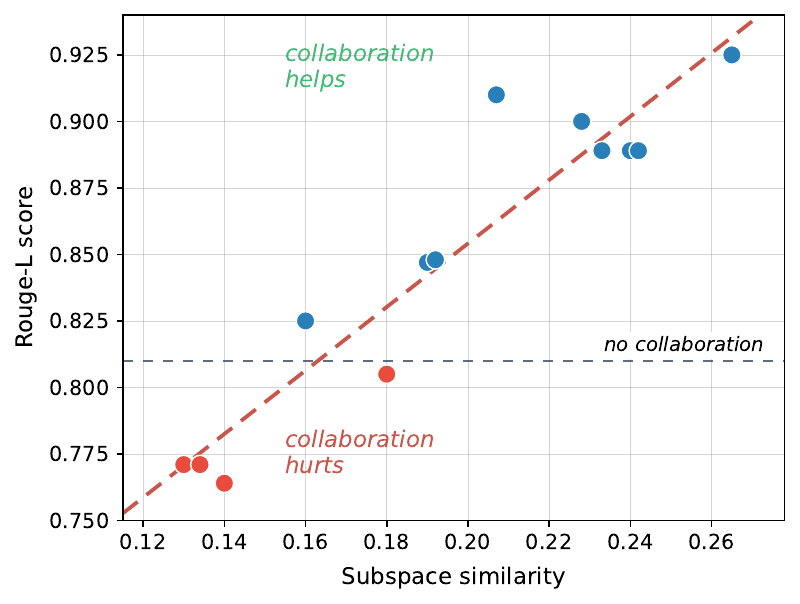}
  \caption{CoLoRA for Task $T$: \emph{Given a list of integers, remove all the even elements.} Each point corresponds to task $T$'s Rouge-L score jointly trained with three other particular tasks. The black dashed line indicates the score of task $T$ when trained exclusively.}
  \label{fig:CoLoRA_sample}
\end{figure}

\textbf{Parameter count:}  CoLoRA enables personalized collaborative fine-tuning with a total model size that grows as $\ccalO(dr + k r^2)$. The shared global component requires $\ccalO(dr)$ parameters, while each additional user contributes only $\ccalO(r^2)$ parameters. Since $r \ll d$, (typically $r=4,8,16$), this growth is mild even for thousands of users, making CoLoRA practical for collaborative and federated learning applications at scale.

\paragraph{Related work.}

We briefly discuss some of the main related works here and defer the reader to \autoref{sec:additional_related_work} for further details. 

\paragraph{Federated LoRA.} Among the first attempts to adapt foundation models to federated fine-tuning is \citet{Zhang2023} where LoRA adapters are updated across users using the standard \textsc{FedAvg} algorithm. The naive aggregation approach in this method, however, leads to inferior task performance because averaging LoRA factors independently across clients produces an inexact global update, as the true LoRA perturbation depends on their product.  Several follow-up works address this issue by aggregating the effective LoRA updates rather than the individual low-rank factors \citep{bai2024federated, Wang2024, singhal2024fedex, cho2024heterogeneous}. Specifically, client updates are combined at the level of the full LoRA perturbation, after which client adapters are recovered via a low-rank approximation using SVD. By adopting different adapter ranks across users, this approach further facilitates adaptation to heterogeneous resource or data constraints. Alternatively, \citet{singhal2024fedex} keeps track of both separate and simultaneous aggregations and updates the pretrained model to compensate for the residual error.

Several alternatives avoid the costly SVD required to project the aggregated LoRA update back to a low-rank representation, instead adopting more strategic aggregation schemes. Among those, \citet{Sun2024} suggests freezing the $A_i$ adapters at initialization and only training and aggregating the $B_i$ matrices. In contrast, \citet{GuoZengWangFanWangQu2025} argues that the $A_i$ matrices primarily encode global knowledge, while the $B_i$ adapters capture user-specific variations, and therefore propose aggregating only $A_i$ while keeping $B_i$ personalized. 
A subsequent work by \citet{ban2025rethinking} challenges this interpretation, suggesting that the apparent similarity of the $A_i$ matrices does not arise from shared global knowledge, but from their limited deviation from a common initialization. Consequently, they advocate aggregating only the $B_i$ dapters, which they identify as more critical for effective client-to-client knowledge transfer. Furthermore, \citet{yang-etal-2024-dualdpa} introduce a framework that maintains both global and personalized adapters, alternating their updates while aggregating only the global adapters at each communication round. However, this framework still suffers from the aforementioned issue: the adapters are aggregated independently, leading to an inexact global update.  
\citet{chen-etal-2025-rolora} propose alternating minimization over $A_i$ and $B_i$ and provide theoretical guarantees in the simplified case of rank-1 adapters. More related to our work is \citet{singhal2025fedsb} that aims to substantially reduce communication costs. It proposes an adapter structure $BR_i A$, where the matrices $B$ and $A$ are kept frozen, and only the smaller adapters $R_i$ are trained and communicated.

It is worth emphasizing that the works mentioned above either do not incorporate personalization, or their effective number of parameters scales as $O(kd)$, where $k$ denotes the number of clients and $d$ the problem dimension. In this work, we propose a parameter-efficient personalization approach, thereby achieving the best of both worlds.

We also note that \citet{bruel-gabrielsson2025compress} studies a decomposition of the form $B_i A_i \approx U \Lambda_i V^{\top}$, with the primary goal of compressing large collections of LoRA adapters. Their experimental results further demonstrate that LoRA adapters share significant commonalities in structure that can be leveraged to mitigate memory overhead.


\paragraph{Linear Representation Learning.} Our theoretical framework is related to the literature on multitask linear representation learning \citep{collins2021exploiting, du2021few, thekumparampil2021sample, collins2022fedavg, tripuraneni2021provable, park2024rlhf}. Similar to \citet{collins2021exploiting}, we employ an alternating optimization approach. However, motivated by the LoRA structure, our setting introduces two shared representations instead of a single one, which introduces an additional layer of non-convexity and changes the problem structure.

Last but not least, it is worth noting that addressing data scarcity and model personalization is not a new direction and has been studied extensively in collaborative and federated learning literature \citep{fallah2020personalized, tan2023towards, hanzely2020federated,farnia2022optimal,mansour2020three,deng2020adaptive}. The typical theme in this line of work is to train a global model using the collection of data across users and personalize it via local training, something we also utilize in our framework. While we adopt a similar global–local training paradigm, our focus is on achieving this personalization in a parameter-efficient manner suitable for federated fine-tuning.

\vspace{1cm}

\vspace{-1cm}

\section{Theoretical Understanding of CoLoRA with Linear Regression}

In this section, we focus on a rigorous understanding of CoLoRA's convergence properties via a simple problem: linear regression. Consider a collection of $k$ users, each observing linear regression samples $\ccalD_i=\{(\bG^{i}_{j}, y^i_j)\}$ governed by 
\begin{align}
y^{i}_{j} = \langle \bG^{i}_{j},\, {\bM^i}^* \rangle,
\end{align}
where entries of every $\bG$ are i.i.d standard Gaussian.
We are particularly interested in the case that $d\times d$ ground truth ${\bM^i}^*$s are related via their column and row subspaces. Roughly speaking, there exist common orthonormal $d \times r$ matrices  $\bU^*, \bV^*$ and user-specific $r \times r$ ones ${\bLam^i}^*$ such that 
\begin{align} \label{eq:Mi}
    {\bM^i}^* \approx \bU^* {\bLam^i}^* {\bV^*}^{\top}.
\end{align}
More precisely, we aim to solve the following optimization problem
\begin{align} \label{eq:LinReg}
    \min_{\substack{\bU, \bV \\ \bLam^1,\cdots,\bLam^k}} \,\, \sum_{1\leq i \leq k} \,
    \sum_{(\bG,y) \in \ccalD_i} \big(
    y - 
    \langle \bG, \bU {\bLam^i} {\bV}^{\top} \rangle
    \big)^2.
\end{align}
This problem is, in fact, similar to CoLoRA's objective in \eqref{eq:CoLoRA} in Section \ref{sec:CoLoRA} if we pick the loss $\ell$ to be mean-squared error. Therefore, our linear regression problem \eqref{eq:LinReg} is identical to an instance of CoLoRA's optimization problem for specific choices of the mapping and the loss. Consequently, we set our goal in the rest of the section to solve and analyse   \eqref{eq:LinReg} where for each user $i$, the underlying matrix ${\bM^i}^*$  identifies its ``task" $T_i$. To formalize how these tasks are related, we introduce task similarity based on subspace similarity.

\begin{definition}[Task similarity]
\label{def:task_similarity}
For a collection of tasks $T_1,\cdots,T_k$ coresponding to matrices $
{\bM^1}^*,\cdots,{\bM^k}^*$, we define the task similarity as the largest $\xi$ such that $\forall i,j$,
\begin{align}
{\normalfont\text{sim}}_c(T_i,T_j),\, {\normalfont\text{sim}}_r(T_i,T_j) \geq \xi.
\end{align}
Here, ${\normalfont\text{sim}}_c$ and ${\normalfont\text{sim}}_r$ denote column and row subspace similarity defined in Definition \eqref{def:col_sub}, that is,
\begin{align}
{\normalfont\text{sim}}_c(T_i,T_j) 
&\coloneqq \frac{1}{\sqrt{r}} \| {\bU^i}^{*\top} {\bU^j}^* \|_F, \label{col:sim}\\
{\normalfont\text{sim}}_r(T_i,T_j) 
&\coloneqq \frac{1}{\sqrt{r}} \| {\bV^i}^{*\top} {\bV^j}^* \|_F
\label{row:sim},
\end{align}
where ${\bU^i}^*,{\bV^i}^*$ are orthonormal bases for the column and row subspaces of ${\bM^i}^*$.
\end{definition}
The tasks similarity notion defined above yields that for any $\xi$, there exist reference matrices $\bU^*, \bV^*$ such that for all $i$,
\begin{align}
    \dist{\bU^*}{{\bU^i}^*},\, \dist{\bV^*}{{\bV^i}^*} \leq \sqrt{r(1 - \xi^2)}. \label{dist_assumption}
\end{align}
where $\dist{\cdot}{\cdot}$ denotes the subspace distance (See Section \ref{sec:sub_dist}). This further formalizes the notion in  \eqref{eq:Mi} and our optimization problem in \eqref{eq:LinReg} aims to find such $\bU^*, \bV^*$.

Note that when $\xi = 1$, \eqref{eq:Mi} holds with equality for all users. In words, columns (and rows) of ${\bM^i}^*$s generate identical subspaces. Moreover, our optimization problem \eqref{eq:LinReg} covers classical and well-studies \emph{low-rank matrix sensing} \citep{recht2010guaranteed} for one user ($k=1$) and low-rank ground truth ${\bM}^* = \bU^* \bLam^* {\bV^*}^{\top}$ ($\xi=1$). Alternative minimization (AltMin) is a general approach for solving the low-rank matrix sensing problem \citet{jain2013lowrank}.

\subsection{Our approach: Collaborative AltMin}

Following the classical AltMin method for (one-user) matrix-sensing, we present a similar yet collaborative approach to solve our optimization problem \eqref{eq:LinReg}. The main idea is that our optimization variables serve distinct purposes and therefore should be treated differently. On the one hand, two matrices $\bU, \bV$ will be used in the final recovered model \emph{for all the users}, hence called \emph{global} variables. On the other hand, each $\bLam^i$ is exclusively used by user $i$, thus we call them \emph{personalized} variables. All in all, we present a collaborative alternative minimization method, namely CoAltMin, in which global variables are updated using the samples from all users, while personalized ones utilize user-specific samples (Algorithm \ref{alg:altminrep}).
\begin{algorithm}
\caption{CoAltMin}\label{alg:altminrep}
\begin{algorithmic}[1]
    \STATE Initialize $\bU_0, \bV_0$ according to \eqref{init:formula}. 
    \FOR{$t = 0, \cdots, T-1$}
    \FOR{$i = 1, \cdots, k$}
    \STATE $\bLam_{t+1}^i = 
    \argmin_{\bLam} l_{t}^i(\bU_t, \bV_t, \bLam)$. 
    
    \ENDFOR
    \STATE $
    \hat{\bV}_{t+1}
     = \argmin_{\bV} \sum_{i=1}^k l^i(\bU_t,
     \bV,\bLam_{t+1}^i).$

    \STATE $\bV_{t+1}, \cdot = 
    \mathsf{QR}(\hat{\bV}_{t+1})$
    \STATE $
    \hat{\bU}_{t+1}
     = \argmin_{\bU} \sum_{i=1}^k l^i(\bU,
     \bV_{t},\bLam_{t+1}^i)$

    \STATE $\bU_{t+1}, \cdot = 
    \mathsf{QR}(\hat{\bU}_{t+1})$
    \ENDFOR
    \RETURN $\bU_T, \bV_T, \bLam_{T-1}^1, \cdots,\bLam_{T-1}^k$
\end{algorithmic}
\end{algorithm}
 
\textbf{Data Splitting:} For analysis purposes, we split the data samples of each user to several batches: one ``large" batch of size $N$ that defines the loss $l^i$, and $T$ ``small" disjoint batches of size $n$ making up losses $l^i_t$. That is,
\begin{align*}
l^i(\bU, \bV, \bLam^i) = 
\sum_{j=1}^{N}
(
y^i_j - 
\langle \bG^i_j, \bU \bLam^i \bV^{\top} \rangle )^2 
\end{align*}
denotes the loss associated with the large batches of size $N$ for user $i \in [k]$. Small batch losses $l^i_t$ are defined similarly for each $t \in [T]$ and $i \in [k]$. Moreover, we pick a particular initialization for global variables as follows
\begin{align}
\hat{\bM} := \frac{1}{kN}\sum_{i=1}^k \sum_{j=1}^{N} y^i_j \bG^i_j, 
\,\, \bU_0 \bLam_0 {\bV_0}^{\top} = \mathsf{SVD}(\hat{\bM}). \label{init:formula}
\end{align}
Note that here, we use large batches across all users for this initialization. We defer the justification of such a choice to the appendix.

\subsection{Theoretical results}

Before presenting guarantees for CoAltMin, let us introduce a few notations and assumptions. First, note that $\bU^*,\bV^*$ are rank-$r$, therefore, in order to have nontrivial solutions, we need ${\bM^i}^*$s to be rank-$r$ as well. To capture how well-conditioned the problem is, we let $\kappa$ denote the worst-case global conditioning among ${\bM^i}^*$s.  Moreover, we quantify the degree of alignment among them by $\gamma$ as follows
\begin{align*}
    \kappa :=  
    \frac{\max_{i=1}^k \sigma_{1} ({\bM^i}^*)}{\min_{i=1}^k  \sigma_{r} ({\bM^i}^*)},
    \quad 
    \gamma := \frac{\max_{i=1}^k \sigma_1 ({\bM^i}^*)}{\sigma_{r}\big( \frac{1}{k} \sum_{i=1}^k {\bM^i}^* \big)}.
\end{align*}
Here, $\sigma_{i}(\cdot)$ denotes the $i$-th largest singular value and note that $\kappa,\gamma \geq 1$. Next, we present our main result.
\begin{theorem}
\label{theorem:informal_main}
Assume that large and small batch sizes are
\begin{align*}
    N = \varrho^4
    \min (
    \varrho^4r^2 ,
    rk)
     (dr/k + r^2) \tilde{\Theta}(1),
    \,\, 
    n = \kappa^4 r^3  \tilde{\Theta}(1),
\end{align*}
where $\varrho = \max(\kappa, \gamma)$ and $\tilde{\Theta}(\cdot)$ hides logarithmic factors. Moreover, suppose that task similarity $\xi$ is large enough s.t.
\begin{align*}
    \xi^2 \geq 1 - 
    \frac{\Theta(1) }{\kappa^2 \varrho^2 r (1 + rd/N)}.
\end{align*}
Then for any $\varepsilon > 0$ and with high probability, CoAltMin recovers $\bU^*,\bV^*$ after $T = \Theta(\log(1/\varepsilon))$ iterations with 
\begin{align*}
    \normalfont{\text{dist}}(\bU_T,\bU^*) ,\,
    \normalfont{\text{dist}}(\bV_T,\bV^*) 
     \leq  \eps + \kappa^2 r  \sqrt{1 - \xi^2} \ccalO(1).
\end{align*}
\end{theorem}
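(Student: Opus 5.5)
The plan is to follow the standard AltMin template for matrix sensing and multitask representation learning (Jain et al.; Collins et al.). There are two parts: an initialization step that lands inside a basin of attraction, and a one-step contraction inequality for the subspace distances. The contraction carries an additive bias term coming from task heterogeneity. Throughout, we write each ground truth as ${\bM^i}^* = \bU^* {\bLam^i}^* {\bV^*}^{\top} + \bE^i$. The residual $\bE^i$ is controlled by \eqref{dist_assumption}: projecting ${\bM^i}^*$ onto the column span of $\bU^*$ and the row span of $\bV^*$ leaves an error with $\|\bE^i\|_F \lesssim \sigma_1({\bM^i}^*)\sqrt{r(1-\xi^2)}$.

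\textbf{Step 1 (initialization).} We have $\mathbb{E}\hat{\bM} = \frac1k\sum_i {\bM^i}^*$. A matrix Bernstein / Gaussian concentration bound over the $kN$ samples gives $\|\hat{\bM} - \frac1k\sum_i{\bM^i}^*\|_2 \lesssim \max_i\sigma_1({\bM^i}^*)\sqrt{d/(kN)}$ up to logs. Apply Wedin's theorem, using $\sigma_r(\frac1k\sum_i{\bM^i}^*)$ as the gap; this is where $\gamma$ enters. Add the bias from the fact that the top-$r$ subspace of the average differs from $\bU^*$ by $O(\gamma\sqrt{r(1-\xi^2)})$. Together these give $\text{dist}(\bU_0,\bU^*),\text{dist}(\bV_0,\bV^*) \le c/(\kappa\varrho)$. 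The choice of $N$, together with the lower bound on $\xi$, is exactly what makes this work.

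\textbf{Step 2 ($\bLam$-update).} Each $\bLam^i_{t+1}$ solves an $r^2$-dimensional least squares problem with $n$ fresh samples. With $n \gtrsim \kappa^4 r^3$, the restricted operator $\frac1n\sum_j \mathrm{vec}(\bU_t^\top\bG_j\bV_t)\mathrm{vec}(\cdot)^\top$ concentrates around the identity. Its solution is $\bLam^i_{t+1} = \bU_t^\top{\bM^i}^*\bV_t + F^i$, with $F^i$ small. Using Step 3's inductive hypothesis on the distances, $\sigma_r(\bLam^i_{t+1}) \gtrsim \sigma_r({\bM^i}^*)$ and $\|\bLam^i_{t+1}\|\lesssim\sigma_1$. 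This keeps the later $\bV$ and $\bU$ systems well conditioned, at cost $\kappa$.

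\textbf{Step 3 (global update and contraction).} This is the main step. The $\bV$-update is a $dr$-dimensional least squares problem over all $kN$ samples. Its normal equations have the form $\hat{\bV} = \bV^*\bQ + (\text{perturbation from } \bU_t-\bU^*\text{ misalignment}) + (\text{concentration error}) + (\text{bias from }\bE^i)$. Following Collins et al., we write the solution's expectation and bound the deviation of the Gaussian operator restricted to matrices $\bU_t\bLam^i\bV^\top$, uniformly via an $\epsilon$-net of dimension $dr + kr^2$. This is where the $(dr/k+r^2)$ factor in $N$ comes from. After the QR step, dividing by $\sigma_r$ of the $R$ factor (lower bounded through $\sigma_r(\bLam^i)$ and $\gamma$) yields
\begin{align*}
\text{dist}(\bV_{t+1},\bV^*) \le \tfrac12\,\text{dist}(\bU_t,\bU^*) + C\kappa^2 r\sqrt{1-\xi^2},
\end{align*}
and symmetrically for $\bU$. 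The updates use $\bU_t$ and $\bV_t$ respectively, so the two distances are coupled. Unrolling over $T=\Theta(\log 1/\varepsilon)$ steps then gives the stated bound $\varepsilon + O(\kappa^2 r\sqrt{1-\xi^2})$. A union bound over the $T$ fresh batches gives the high-probability statement.

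\textbf{Main obstacle.} The hard part is Step 3. Because both $\bU$ and $\bV$ are shared, the error in $\bV_{t+1}$ involves products of misalignment in $\bU_t$ with the $\bLam$ error. These products must be shown to contract rather than merely stay bounded. One must also keep the inductive invariant, $\text{dist}\le c/(\kappa\varrho)$, valid despite the nonvanishing bias. That invariant is what forces the condition $\xi^2\ge 1-\Theta(1)/(\kappa^2\varrho^2 r(1+rd/N))$, since the bias term must stay below the basin radius. My first attempt would be to bound the cross terms by $\text{dist}(\bU_t,\bU^*)\cdot\kappa\cdot\max_i\|F^i\|/\sigma_r$ and absorb them using the small-batch accuracy.
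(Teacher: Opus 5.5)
Your skeleton is the paper's: spectral initialization from $\hat{\bM}$, a $\bLam$-step on fresh small batches so that $(\bU_t,\bV_t)$ can be treated as fixed, a global least-squares step controlled by a uniform bound over the $(dr+kr^2)$-dimensional class $\{\bU\bLam^i\bV^\top\}$ (the paper's GRIP, Proposition~\ref{prop:GRIP}), then QR and an induction with an additive bias. However, Step 3 has a genuine gap in the term you call ``bias from $\mathbf{E}^i$''. In the sampled normal equations for $\hat{\bV}_{t+1}$ it appears as $\sum_{i,j}\langle\bG^i_j,\bU_t\bLam^i_{t+1}\bX^\top\rangle\langle\bG^i_j,\mathbf{E}^i\rangle$. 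This must be bounded uniformly in the test direction $\bX$ and in the data-dependent $\bU_t$. Only its population part $\sum_i\langle\bU_t\bLam^i_{t+1}\bX^\top,\mathbf{E}^i\rangle$ is directly controlled by $\|\mathbf{E}^i\|_F$. The fluctuation is not covered by your $\epsilon$-net:
the $\mathbf{E}^i$ live in task-dependent subspaces, so these matrices lack the shared-factor form $\bU\bLam^i\bV^\top$ with common rank-$O(r)$ factors;
and per-task RIP is unavailable, since $N$ can be far below $rd$.

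The paper's extra ingredient here is the sub-isometric property (Definition~\ref{def:sub_isometric}, Lemma~\ref{lemma:sub_isometric}). It states that $\sum_{i,j}\langle\bG^i_j,\bX^i\rangle^2\le kN A_{N,d,r}\max_i\|\bX^i\|_F^2$ for arbitrary rank-$r$ $\bX^i$, with $A_{N,d,r}=C_2(N+rd)/N$ obtained from a Gaussian-width bound. The paper applies it through Cauchy--Schwarz to every term containing ${\bU^i}^*-\bU^*$ or ${\bV^i}^*-\bV^*$ (Lemmas~\ref{lem:F1_bound} and \ref{lem:F2_bound}, and likewise in the initialization). This inflates the bias to $8\beta B_{N,d,r}$, with $\beta=\sqrt{r(1-\xi^2)}$ and $B_{N,d,r}=10^4\sqrt r\kappa^2\sqrt{A_{N,d,r}}$. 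It is exactly where the factor $(1+rd/N)$ in the $\xi$ hypothesis comes from. Nothing in your sketch produces that factor, which shows the term is unaccounted for rather than harmlessly absorbed. Since the $\mathbf{E}^i$ are deterministic, a separate net argument over $(\bU,\{\bLam^i\},\bX)$ with $\mathbf{E}^i$ held fixed is a plausible substitute, but some such argument must be supplied.

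Two further corrections. First, you misdiagnose the main obstacle. The Step 2 error $F^i$ has size $\lesssim\delta'_{2r}\|{\bLam^i}^*\|_F(\mathrm{dist}(\bU_t,{\bU^i}^*)+\mathrm{dist}(\bV_t,{\bV^i}^*))$. It enters the $\bV$-update at first order and does not vanish when $\bU_t=\bU^*$. So a product bound like $\mathrm{dist}(\bU_t,\bU^*)\,\kappa\max_i\|F^i\|/\sigma_r$ is not available, and your recursion with only $\mathrm{dist}(\bU_t,\bU^*)$ on the right is not what the argument gives. The paper proves the coupled bound $\mathrm{dist}(\bV_{t+1},\bV^*)\le 2\cdot10^4\sqrt r\kappa^2(\delta_{3r}+\delta'_{2r})\bigl(\mathrm{dist}(\bU_t,\bU^*)+\mathrm{dist}(\bV_t,\bV^*)\bigr)+2\beta B_{N,d,r}$, and the symmetric bound for $\bU$. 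Contraction comes from the smallness of $\delta_{3r}$ and $\delta'_{2r}$. Also, $\gamma$ plays no role in this step: the lower bound on $\sum_i\bB^i$ (Lemma~\ref{lem:sumB_bound}) uses $\min_i\sigma_{\min}({\bLam^i}^*)$, i.e.\ $\kappa$.

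Second, your Step 1 (matrix Bernstein plus Wedin on $\frac1k\sum_i{\bM^i}^*$) is a valid alternative to the paper's GRIP-inner-product, sub-isometry and Eckart--Young argument, and it handles heterogeneity at the population level. Two details need fixing. The Bernstein deviation scales with $\max_i\|{\bM^i}^*\|_F$, not $\sigma_1$. And under the stated $\xi$ hypothesis the Wedin bias $\gamma\beta$ is only $\lesssim 1/\kappa$, so you get a basin of radius $c/\kappa$, not $c/(\kappa\varrho)$. That is what the paper uses ($1/(20\kappa)$), and it suffices.
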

\begin{proof}
We defer the proof to Section \ref{sec:proof}.
\end{proof}

Let us highlight several insights provided by this result. First, the optimization error in recovering the matrices $\bU^*,\bV^*$ scales as $\varepsilon + \mathcal{O}(\sqrt{1 - \xi^2})$ which contains an irreducible error depending on task similarity $\xi$ which decreases as the tasks are more similar. This is expected since matrices with distinct column (or row) subspaces ($\xi < 1$) can not be represented by a single subspace of the same rank. Second,  particularly for $\xi=1$, i.e. ${\bM^i}^*$s with identical column (and row) subspaces, the irreducible error vanishes and the underlying ground truth is recovered up to adjustable error $\varepsilon$. Lastly, the total sample complexity guaranteed by Theorem \ref{theorem:informal_main} is at most
\begin{align*}
    \varrho^4
    \min (
    \varrho^4 r^2 ,
    rk)
     (dr+r^2k) \Theta(1)
     + \kappa^4 r^3 k \log(1/\varepsilon) \tilde{\Theta}(1).
\end{align*}

To gain a better understanding of this sample complexity, consider the simple case of one user, i.e., $k=1$. In this particular case, our proof strategy can be tailored, resulting in improved sample complexity  $\Theta(dr^2)$ which tightens the classical AltMin guarantee in the matrix-sensing literature \citep{jain2013lowrank}, i.e. $\ccalO(dr^3)$. 
We defer further additional discussion to Section \ref{sec:additional_related_work} and \ref{improving_rate}.

The following corollary of the main theorem shows that  $\bLam_{T-1}^1, \cdots,\bLam_{T-1}^k$ resulted from CoAltMin (Algorithm \ref{alg:altminrep}) together with $\bU_T, \bV_T$ generate the underlying ground truths ${\bM^i}^*$s with bounded error in the matrix norm sense.

\begin{corollary}
\label{cor_theorem}
Under the setting of Theorem \ref{theorem:informal_main}, global matrices $\bU_T, \bV_T$ and personalized ones $\bLam_{T-1}^1, \cdots,\bLam_{T-1}^k$ resulted from the CoAltMin gaurantee reconstruction error
\begin{gather*}
    \| \bU_T \bLam_{T-1}^i \bV_T \!-\! {\bM^i}^* \|_2 
    \leq 
    \! \ccalO(\|{\bM^i}^* \|_F)
    (\eps \!+\! \kappa^2 r  \sqrt{1 - \xi^2} ).
\end{gather*}
\end{corollary}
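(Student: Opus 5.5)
We will prove Corollary \ref{cor_theorem} by combining the subspace guarantees of Theorem \ref{theorem:informal_main} with a perturbation analysis of the least-squares step that produces $\bLam^i_{T-1}$. Set $\delta := \eps + \kappa^2 r\sqrt{1-\xi^2}$. We first decompose the ground truth as ${\bM^i}^* = {\bU^i}^*{\bLam^i}^{*}{{\bV^i}^*}^\top$ with ${\bU^i}^*, {\bV^i}^*$ orthonormal. By \eqref{dist_assumption} and Theorem \ref{theorem:informal_main}, the triangle inequality for subspace distance gives $\dist{\bU_t}{{\bU^i}^*},\dist{\bV_t}{{\bV^i}^*} \le \ccalO(\delta)$ at iterate $t=T-1$. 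We use $T-1$ because $\bLam^i_{T-1}$ is computed from $(\bU_{T-2},\bV_{T-2})$ (index bookkeeping: $\bLam^i_{t+1}$ uses $\bU_t,\bV_t$). We also need the same bound at $T$. Both hold, since the theorem's contraction yields $\ccalO(\delta)$ at every late iterate, with the $\eps$ term changed only by a constant factor.

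Next we write the error through orthogonal projections. Let $P_U=\bU\bU^\top$ and $P_V=\bV\bV^\top$, and let $(\bU,\bV)$ denote the iterate used to form $\bLam^i$. Then
\[
\bU_T\bLam^i\bV_T^\top-{\bM^i}^* = \bU_T\bLam^i\bV_T^\top - P_{U}{\bM^i}^*P_{V} - \big({\bM^i}^*-P_U{\bM^i}^*P_V\big).
\]
The last term has norm at most $\|(I-P_U){\bU^i}^*\|_2\|{\bM^i}^*\|+\|{\bM^i}^*\|\|(I-P_V){\bV^i}^*\|_2 \le \ccalO(\delta)\|{\bM^i}^*\|_F$. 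Replacing $(\bU,\bV)$ by $(\bU_T,\bV_T)$ in the outer factors costs a further $\|\bLam^i\|\cdot\ccalO(\delta)$, because $\|\bU_T-\bU R\|\lesssim\dist{\cdot}{\cdot}$ for a suitable rotation $R$.

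The core step is to show that $\bLam^i$ is close to the oracle $\bU^\top{\bM^i}^*\bV$. The small-batch least squares satisfies the normal equations
\[
\mathcal{A}^*\mathcal{A}(\bU\bLam\bV^\top)=\mathcal{A}^*\mathcal{A}({\bM^i}^*),
\]
restricted to the $r^2$-dimensional space $\{\bU X\bV^\top\}$. With $n=\kappa^4r^3\tilde\Theta(1)$ Gaussian samples, the operator $\frac1n\mathcal{A}^*\mathcal{A}$ is a restricted isometry with constant $\ccalO(1/\kappa^2 r)$ on this space. Here we use the concentration results from the proof of Theorem \ref{theorem:informal_main}, applied conditionally, which is legitimate since the small batches are independent of $(\bU,\bV)$. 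This gives
\[
\|\bLam^i-\bU^\top{\bM^i}^*\bV\|_F\lesssim \big\|\bU^\top\tfrac1n\mathcal{A}^*\mathcal{A}\big({\bM^i}^*-P_U{\bM^i}^*P_V\big)\bV\big\|_F .
\]
A second concentration step, for a fixed matrix against a fixed subspace, bounds the right-hand side by $\ccalO(1)\|{\bM^i}^*-P_U{\bM^i}^*P_V\|_F\le\ccalO(\delta)\|{\bM^i}^*\|_F$. In particular $\|\bLam^i\|\le\ccalO(\|{\bM^i}^*\|_F)$. Summing the three contributions yields the claim.

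The main obstacle is this last cross term. The residual ${\bM^i}^*-P_U{\bM^i}^*P_V$ is not in the restricted space, so controlling it requires a bound on $\frac1n\mathcal{A}^*\mathcal{A}$ mapping a fixed matrix into an $r^2$-dimensional space. We would handle it with a net argument over unit $X\in\reals^{r\times r}$, giving a deviation of order $\sqrt{r^2/n}\,\|\cdot\|_F$, which is $\ccalO(1)$ under the stated $n$. This is also why the final bound is in terms of $\|{\bM^i}^*\|_F$ rather than the spectral norm.
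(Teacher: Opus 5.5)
Your core decomposition matches the paper's proof. The paper also applies the triangle inequality to split the error into two pieces. The first is the least-squares deviation $\|\bLam^i_{t+1}-\bU_t^\top{\bU^i}^*{\bLam^i}^*{{\bV^i}^*}^\top\bV_t\|_2$, controlled by Proposition~\ref{prop:main_lambda_bound}: normal equations plus the $(\tilde{\bU}_{\ell,t},\tilde{\bV}_{\ell,t})$-RIP, which is exactly your fixed-subspace concentration for the off-subspace residual. The second is the projection error, bounded by $\|{\bM^i}^*\|_F(\dist{\bU_t}{{\bU^i}^*}+\dist{\bV_t}{{\bV^i}^*})$. The paper then finishes with $\dist{\bU_t}{{\bU^i}^*}\le\dist{\bU_t}{\bU^*}+\beta$. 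The gap is the extra step you add: moving from the basis $(\bU_{T-2},\bV_{T-2})$ that produced $\bLam^i_{T-1}$ to the returned $(\bU_T,\bV_T)$.

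The justification you give for that step does not work. Small subspace distances only give $\bU_T\approx\bU R$ and $\bV_T\approx\bV S$ for some nearly orthogonal $R,S\in\mathbb{R}^{r\times r}$. Then $\bU_T\bLam^i\bV_T^\top\approx\bU R\bLam^i S^\top\bV^\top$, which is close to $\bU\bLam^i\bV^\top$ only if $R\bLam^i S^\top\approx\bLam^i$ (for instance $R,S\approx I$). Nothing in Theorem~\ref{theorem:informal_main} controls this. Replacing $\bV_T$ by $-\bV_T$ leaves every subspace distance unchanged, yet turns the reconstruction into roughly $-{\bM^i}^*$, an error of about $2\|{\bM^i}^*\|_2$.

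So you need an additional lemma showing the iterates are stable as matrices, not just as subspaces. One route uses the paper's identity $\hat{\bV}_{t+1}=\bV^*(\bD_t^{-1})^\top-\mathrm{mat}(\bH_t)$ with $\bD_t={\bV^*}^\top\bV_t$. It gives $\|\hat{\bV}_{t+1}-\bV_t\|_2\le\dist{\bV_t}{\bV^*}+\|(\bD_t^{-1})^\top-\bD_t\|_2+\|\mathrm{mat}(\bH_t)\|_2$, where the middle term is second order in the distance. With a positive-diagonal QR convention, the triangular factor is then close to the identity, so $\bV_{t+1}$ stays close to $\bV_t$, and likewise for $\bU$. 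That this depends on the QR sign convention confirms it cannot follow from subspace bounds alone.

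The paper avoids the issue entirely. Its proof bounds the consistent triple $\|\bU_t\bLam^i_{t+1}\bV_t^\top-{\bM^i}^*\|_2$, pairing each $\bLam^i_{t+1}$ with the $(\bU_t,\bV_t)$ that produced it. For that quantity, your first two steps already suffice.
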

\begin{proof}
    We defer the proof to Section \ref{proof:cor_theorem}.
\end{proof}

Similar to matrix-sensing problems and AltMin-type methods, our main tool for analysing CoAltMin builds on the Restricted Isometry Property (RIP). 

\subsection{Generalized Restricted Isometry Property}

RIP is a condition imposed on the linear measurement operator to guarantee that low-rank matrices can be recovered from a small number of measurements.

\begin{definition}[RIP 
{\citep{candes2005decoding, recht2010guaranteed}}]
\label{def:standard_RIP}
The ensemble $\{\bG_j:j \in [N]\}$ satisfies r-RIP with constant $\delta$, if for any matrix $\bX$ of at most rank $r$, 
\begin{align*}
    \bigg|
    \frac{1}{N}  \sum_{j=1}^N \langle \bG_j, \bX \rangle^2 - 
     \| \bX\|_F^2
    \bigg| \leq \delta \| \bX\|_F^2.
\end{align*}
\end{definition}
Several well-known random ensembles satisfy RIP. For instance, if $N = \Omega(dr/\delta^2)$ and entries of $\bG_j$ are i.i.d samples from a zero mean sub-Gaussian distribution, $r$-RIP holds with high probability. Next, we introduce the Generalized Restricted Isometry Property (GRIP).


\begin{definition}[GRIP]
\label{def:GRIP}
We say that the ensemble $\{\bG^i_j:i \in [k], j \in [N] \}$ satisfies r-GRIP with constant $\delta$, if for any collection of matrices 
$\bU, \bV \in \reals^{d \times r}$ and $\{\bLam^{i}\}_{i=1}^k \in \reals^{r \times r}$,
\begin{gather*}
\bigg| \frac{1}{kN}\sum_{i=1}^k \sum_{j=1}^N \langle \bG^i_j,\, \bU \bLam^{i} \bV^{\top} \rangle^2
- \frac{1}{k} \sum_{i=1}^k \left\|  \bU \bLam^{i} \bV^{\top} \right\|_F^2 \bigg|
\\
\leq \delta \max_{i\in[k]} \left\| \bU \bLam^{i} \bV^{\top} \right\|_F^2.
\end{gather*}
\end{definition}
In the following, we establish that the collection of large batches of size $N$ across all $k$ users used in CoAltMin guarantees $r$-GRIP with high probability.
\begin{proposition}
\label{prop:GRIP}
Consider the random ensemble of matrices $\bG^i_j$ with i.i.d sub-Gaussian entries. Then, for any $\delta>0$ and sample size
\begin{align*}
    kN = \frac{dr + kr^2}{ \Bar{\delta}^2}
    \log \left( r/\Bar{\delta} \right) \Omega(1),
\,\,
\Bar{\delta} = \delta \max(\delta, 1/\sqrt{k}),
\end{align*}
the ensemble $\{\bG^i_j:i \in [k], j \in [N] \}$ satisfies \( r \)-GRIP with constant \( \delta \), with probability at least $1 - \exp(-\Theta(kN\Bar{\delta}^2))$.
\end{proposition}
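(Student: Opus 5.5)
The plan is the standard covering argument behind RIP (as in \citet{candes2005decoding,recht2010guaranteed}), adapted to the block structure in which $\bU,\bV$ are shared across users and only the $\bLam^i$ vary. By homogeneity I normalize so that $\max_i \|\bU\bLam^i\bV^\top\|_F = 1$. Since $\bU\bLam^i\bV^\top$ depends only on the column space of $\bU$, the row space of $\bV$ and the corresponding core, I may take $\bU,\bV$ orthonormal, so that $\|\bU\bLam^i\bV^\top\|_F=\|\bLam^i\|_F$. The parameter set is then $\mathcal{S}=\{(\bU,\bV,\bLam^1,\dots,\bLam^k): \bU,\bV \text{ orthonormal},\ \max_i\|\bLam^i\|_F\le 1\}$, and its intrinsic dimension is $2dr + kr^2$. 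This dimension count is what produces the factor $dr+kr^2$ in the sample size.

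\textbf{Step 1: pointwise concentration.} For a fixed point of $\mathcal{S}$, the variables $\langle \bG^i_j, \bX^i\rangle$ with $\bX^i=\bU\bLam^i\bV^\top$ are independent sub-Gaussian with variance $\|\bX^i\|_F^2$. Their squares are sub-exponential with $\psi_1$-norm $\lesssim \|\bX^i\|_F^2\le 1$. The quantity $Z=\frac{1}{kN}\sum_{i,j}(\langle\bG^i_j,\bX^i\rangle^2-\|\bX^i\|_F^2)$ is therefore an average of $kN$ independent centered sub-exponential variables, and Bernstein's inequality gives $\Pr(|Z|>t)\le 2\exp(-c\,kN\min(t^2,t))$.

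The subtle point is that the normalization is by $\max_i\|\bX^i\|_F^2$, not by the average. This forces using the worst case $\|\bX^i\|_F\le 1$ for every $i$ in the variance proxy. I expect the $\bar\delta=\delta\max(\delta,1/\sqrt{k})$ to come from a refined accounting here. The sum $\sum_i\|\bX^i\|_F^4$ in the Bernstein variance term is at most $k$, while the $\psi_1$ term is controlled by $\max_i$. Balancing the Gaussian regime against the sub-exponential regime, with $t$ tied to $\delta$ and the $1/\sqrt{k}$ scale, gives an exponent of order $kN\bar\delta^2$. I will fix this bookkeeping so that the final failure probability reads $\exp(-\Theta(kN\bar\delta^2))$.

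\textbf{Step 2: a net over $\mathcal{S}$ and a union bound.} I will take $\epsilon$-nets (with $\epsilon\sim\bar\delta/r$ or similar) for $\bU$ and $\bV$ in operator norm on the Stiefel manifold, each of size $(C/\epsilon)^{dr}$. For each $\bLam^i$ I take a net of the Frobenius unit ball of size $(C/\epsilon)^{r^2}$. The total cardinality is $(C/\epsilon)^{2dr+kr^2}$. A union bound with Step 1 then requires $kN\bar\delta^2\gtrsim (dr+kr^2)\log(r/\bar\delta)$, which is exactly the stated sample size; the $\log(r/\bar\delta)$ comes from $\log(1/\epsilon)$.

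\textbf{Step 3: extension from the net (main obstacle).} Let $\delta_\star$ be the supremum of the normalized deviation over $\mathcal{S}$. For an arbitrary point, I write $\bU\bLam^i\bV^\top-\bar\bU\bar\bLam^i\bar\bV^\top$ as a sum of three terms: $(\bU-\bar\bU)\bLam^i\bV^\top$, $\bar\bU(\bLam^i-\bar\bLam^i)\bV^\top$ and $\bar\bU\bar\bLam^i(\bV-\bar\bV)^\top$. Each is again of the shared form (rank $\le r$ with common left and right factors across $i$) and has max-Frobenius norm $\lesssim\epsilon$ after normalization.

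The deviation functional is a quadratic form, so I will use its polarization or bilinear version. The bilinear deviation is bounded by $\delta_\star$ times the product of the max-norms, provided both arguments are in shared form; when a factor is non-orthonormal I will re-orthonormalize it via QR and absorb the triangular factor into the $\bLam^i$. This yields $\delta_\star\le \delta/2 + C\epsilon\,\delta_\star + C\epsilon$, hence $\delta_\star\le\delta$ for small $\epsilon$.

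The delicate part is verifying that every cross term stays within the class where the GRIP deviation is controlled, with common factors across all $k$ users. Differences of two shared-form collections have rank up to $2r$ with shared $2r$-dimensional factors, so I will actually run the argument for the rank-$2r$ shared class, which only changes constants. I also have to make sure the $\max_i$ normalization survives polarization, which it does since $\max_i$ is subadditive.
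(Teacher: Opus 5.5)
Your skeleton (pointwise Bernstein bound, nets of cardinality $(C/\epsilon)^{2dr+kr^2}$, union bound, self-bounding supremum) matches the paper's. However, Step 3 as written does not close.

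Let $\mathcal{A}(\bX)=\bigl(\langle\bG^i_j,\bX^i\rangle/\sqrt{kN}\bigr)_{i,j}$, let $\|\mathcal{B}(\bX)\|_2^2=\frac1k\sum_i\|\bX^i\|_F^2$, and let $D=\|\mathcal{A}\|_2^2-\|\mathcal{B}\|_2^2$. You claim the bilinear deviation is at most $\delta_\star$ times the product of max-norms whenever both arguments are in shared form. That is only true when they share the \emph{same} factors. Polarizing the cross term between $\bar{\bX}$ and $\bDelta_\ell$ needs $D(\lambda\bar{\bX}\pm\bDelta_\ell/\lambda)$. These collections have common factors $[\bar{\bU},\bU]$ and $[\bar{\bV},\bV]$, so they are rank-$2r$ shared and lie outside the class defining $\delta_\star$; the same holds for the $\bDelta_\ell$--$\bDelta_{\ell'}$ terms.

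Re-running everything in the rank-$2r$ class does not fix this. There the net-to-point pieces are rank-$2r$ shared and the polarized sums are rank-$4r$. The recursion never closes on a single supremum, so $\delta_\star\le\delta/2+C\epsilon\delta_\star+C\epsilon$ is unjustified.

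The missing idea is to not polarize:
\begin{itemize}
\item After QR-absorption each $\bDelta_\ell$ is rank-$r$ shared with $\max_i\|\bDelta_\ell^i\|_F\le\epsilon$. The rank-$r$ \emph{upper} bound alone then gives $\|\mathcal{A}(\bDelta_\ell)\|_2\le\epsilon\sqrt{1+\delta_\star}$.
\item Linearity and the triangle inequality give $\bigl|\|\mathcal{A}(\bX)\|_2-\|\mathcal{A}(\bar{\bX})\|_2\bigr|\le 3\epsilon\sqrt{1+\delta_\star}$.
\item The identity $a^2-b^2=(a-b)(a+b)$, applied likewise to $\mathcal{B}$, yields exactly your recursion with $\epsilon\asymp\delta$.
\end{itemize}
Alternatively, telescope $\bX\to\bar{\bU}\bLam^i\bV^\top\to\bar{\bU}\bar{\bLam}^i\bV^\top\to\bar{\bX}$. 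Then each polarized pair shares two of its three factors and stays rank-$r$ shared. The paper takes the first route in spirit. It never polarizes, and runs the self-bounding argument on $\gamma=\sup_S\bigl|\|\mathcal{A}(\cdot)\|_2-\|\mathcal{B}(\cdot)\|_2\bigr|$, using only linearity of $\mathcal{A}$ and the QR-absorption observation.

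Your explanation of where $\bar\delta$ comes from is also off, though harmlessly. In Step 1, Bernstein's exponent is simply $kN\min(t^2,t)$ with $t$ measured in units of $\max_i\|\bX^i\|_F^2$. No $1/\sqrt{k}$ scale comes out of the variance term, and none is needed: taking $t\asymp\bar\delta$ already gives failure probability $\exp(-ckN\bar\delta^2)$.

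In the paper, $\bar\delta$ is an artifact of the unsquared-norm extension. The paper converts the net bound $\bigl|\|\mathcal{A}\|_2^2-\|\mathcal{B}\|_2^2\bigr|\le\bar\delta\max_i\|\bar{\bLam}^i\|_F^2$ into a bound on $\bigl|\|\mathcal{A}\|_2-\|\mathcal{B}\|_2\bigr|$ by dividing by $\|\mathcal{A}\|_2+\|\mathcal{B}\|_2$. Since $\|\mathcal{B}\|_2$ can be as small as $k^{-1/2}\max_i\|\bar{\bLam}^i\|_F$, it only obtains $\min(\bar\delta\sqrt{k},\sqrt{\bar\delta})$. Requiring $50\min(\bar\delta\sqrt{k},\sqrt{\bar\delta})\le\delta$ is what produces $\bar\delta\approx\delta\max(\delta,1/\sqrt{k})$.

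Your repaired argument works with squared deviations throughout and avoids that loss. A threshold $\delta/2$ on the net and mesh $\epsilon\asymp\delta$ suffice. This gives $kN\gtrsim(dr+kr^2)\log(r/\delta)/\delta^2$ with failure probability $\exp(-ckN\delta^2)$. That implies the stated proposition, since $\bar\delta\le\delta$ for $\delta\le 1$.
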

\begin{proof}
We defer the proof to Section \ref{proof:GRIP}.
\end{proof}
 With the choice of $N$ prescribed in Theorem \ref{theorem:informal_main}, Proposition \ref{prop:GRIP} ensures  $r$-GRIP with high probability.

\section{CoLoRA in Experiments} \label{sec:experiments}

In this section, we investigate CoLoRA's performance in federated fine-tuning settings. We conduct experiments on \textsc{Natural Instructions} \citep{wang-etal-2022-super, mishra-etal-2022-cross}. The benchmark is organized into \emph{metatasks}, each comprising multiple related tasks (see Fig.~2 of \citealt{wang-etal-2022-super}). We use 140 tasks in total. We primarily focus on the \emph{Program Execution} metatask, but additionally include tasks from other metatasks. For all experiments, we use Qwen2.5-1.5B-Instruct \citep{yang2024qwen2.5} as the base model. 
We freeze the base model weights and fine-tune LoRA adapters on all attention and MLP layers. We use and modify the PEFT library \citep{peft2024} for our experiments.

Now we introduce CoLoRA-Alt %
, which is our algorithm for the federated setting. Pseudocode is given in \Cref{alg:colora_alt}. We initialize  \(\bA,\bB,\{\bLambda^i\}_{i=1}^k\) (the base model weights are frozen), then alternate local updates and aggregation of LoRA factors. We (i) train \(\bLambda^i\) jointly with \(\bB\) while keeping \(\bA\) fixed and average \(\bB\) across clients, then (ii) train \(\bLambda^i\) jointly with \(\bA\) while keeping \(\bB\) fixed and average \(\bA\).

To simulate a data-scarce setting, we fix the number of training datapoints to 50 for all clients across all our experiments. 

\textbf{Impact of task similarity}. In this section, we consider two target tasks from \emph{Program Exeuction} metatask: $T_1^1$: \emph{Given a list remove all the even elements}, and $T_2^1$: \emph{Given a list of lists, multiply all odd elements in each list.}
We investigate how task similarity affects collaborative learning. 
Specifically, for each target task $T_1^i$, we evaluate its performance when jointly trained 
with three auxiliary tasks exhibiting varying levels of similarity (See the tasks in Section \ref{section:tasks}.). More precisely, given a target task $T_1^i$, we construct sets of collaborators 
$(T_2^i, T_3^i, T_4^i)$ such that $1/3 \sum_{j=2}^{4} \mathrm{sim}_c ( T_1^i,\, T_j^i)$ 
spans a range from low to high values. We then apply CoLoRA to collaboratively train each subset 
$\{T_j^i\}_{j=1}^4$ and measure the resulting performance of the target task $T_1^i$. 
As shown in \autoref{fig:fixed_task} and \autoref{tab:sim_perf_task1}, \autoref{tab:sim_perf_task2}, performance is correlated with similarity, which is predicted in our theory.

\begin{figure}[h!]
    \centering
    \begin{subfigure}[b]{0.23\textwidth}
        \includegraphics[width=\textwidth]{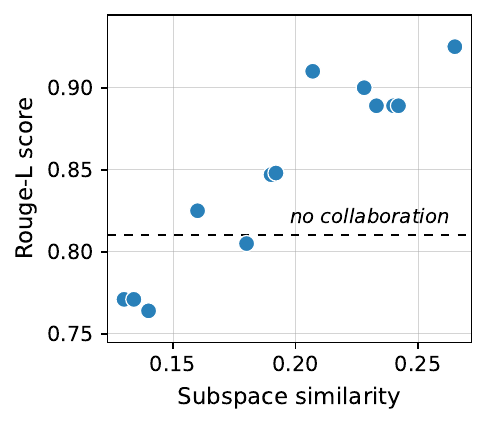}
        \caption{CoLoRA for $T_1^1$}
    \end{subfigure}
    \begin{subfigure}[b]{0.23\textwidth}
        \includegraphics[width=\textwidth]{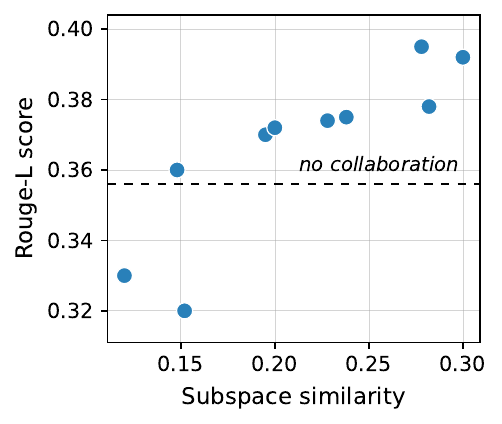}
        \caption{CoLoRA for $T_2^1$}
    \end{subfigure}
    \caption{Performance of CoLoRA for a fixed task w.r.t different levels of similarity. The dashed line is the performance when only using local data.}
    \label{fig:fixed_task}
\end{figure}

\begin{figure*}[t!]
  \centering
  \begin{subfigure}[b]{0.24\linewidth}
    \includegraphics[width=\linewidth]{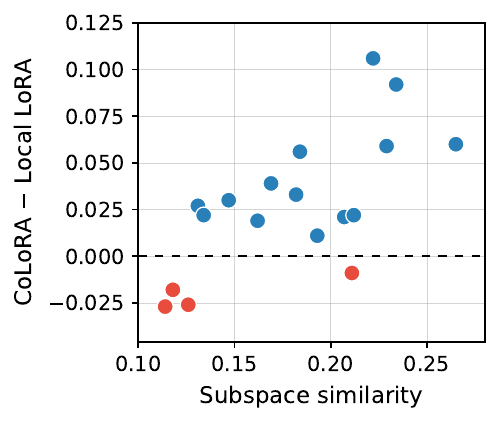}
  \end{subfigure}\hspace{0.005\linewidth}%
  \begin{subfigure}[b]{0.24\linewidth}
    \includegraphics[width=\linewidth]{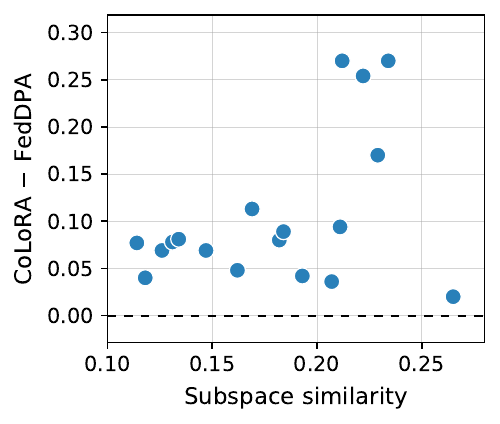}
  \end{subfigure}\hspace{0.005\linewidth}%
  \begin{subfigure}[b]{0.24\linewidth}
    \includegraphics[width=\linewidth]{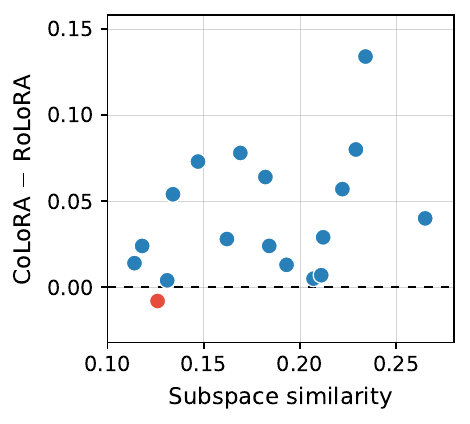}
  \end{subfigure}\hspace{0.005\linewidth}%
  \begin{subfigure}[b]{0.24\linewidth}
    \includegraphics[width=\linewidth]{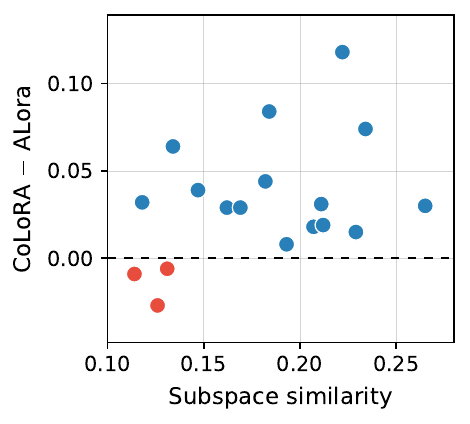}
  \end{subfigure}
  \caption{Performance difference between CoLoRA and baseline methods.
           For each experiment, we compute the average performance of all clients on their
           respective tasks and plot the difference between CoLoRA and each baseline.
           A positive score difference indicates superior performance of CoLoRA over the baseline.}
  \label{fig:baseline_comparison}
\end{figure*}

\textbf{Baselines.} Across our experiments, we compare CoLoRA against the following baselines:
\begin{itemize}
\item \textbf{Local LoRA.} Each client independently fine-tunes LoRA adapters on its private data; no parameters are communicated.
\item \textbf{RoLoRA} \citep{chen-etal-2025-rolora}. It alternates between local training of the \(\bB\) matrices followed by averaging, and local training of the \(\bA\)  matrices followed by averaging.

\item \textbf{FedDPA} \citep{yang-etal-2024-dualdpa}. It maintains both global (\(\bB, \bA\)) and personalized (\(\bC, \bD\)) adapters for each client; all adapters are trained locally, but only the global adapters are communicated and aggregated, while the personalized ones remain private.
\item \textbf{ALoRA}  \citep{ban2025rethinking}. It train both \(\bA\) and \(\bB\) adapters locally, but keep the \(\bA\) adapters private while communicating and aggregating the \(\bB\) adapters.
\item \textbf{RAVAN} \citep{ravan_fl}.  It uses the adapter structure $\sum_{i=1}^h \bB_i \bR_i \bA_i$, where $\bB_i$ and $\bA_i$ are frozen and only $\bR_i$ are aggregated across clients. The rank for RAVAN is selected such that the total number of learnable paramaters matches the other baselines.
\end{itemize}
To ensure an evaluation across varying levels of task similarity, 
we take task sets $\{T_j^i\}_{j=1}^4$ such that average similarity $1/6 \sum_{j < k} \mathrm{sim}_c 
    ( T_j^i,\, T_k^i )$
spans a wide range as $i$ varies. See the tasks in \autoref{section:tasks}.

For each group of four clients, we apply CoLoRA as well as all baseline methods 
under identical federated training conditions. 
Performance is measured as the average rougle-L score across the four clients in each group. 
We then report, for each experiment, the performance difference between CoLoRA and the baselines. A positive score difference indicates superior performance of CoLoRA over the baseline.

Across most experimental settings, CoLoRA surpasses the baselines (see \autoref{fig:baseline_comparison} and \autoref{tab:baseline_comparison}). When task similarity is very low, however, Local LoRA is superior, as anticipated.


\textbf{Experiments with 20 users}. 
We also conducted a large-scale experiment involving 20 similar yet distinct tasks as demonstrated in Figure \ref{fig:20_tasks_rouge} (RougeL score). We further provide exact matching scores in Appendix \ref{subsec:20client}.

\textbf{Experiments with Qwen2.5-3B-Instruct}. In addition to the Qwen2.5-1.5B model, we examined a larger model Qwen2.5-3B-Instruc and a higher rank $r=16$ as shown in Figures \ref{fig:3b_rouge} (RougeL score) and \ref{fig:3b_exact} (exact matching). See \autoref{subsec:bigger-rank-larger-base} for numerical scores.

\begin{figure*}[t]
  \centering
  \begin{subfigure}[b]{0.32\linewidth}
    \includegraphics[width=\linewidth]{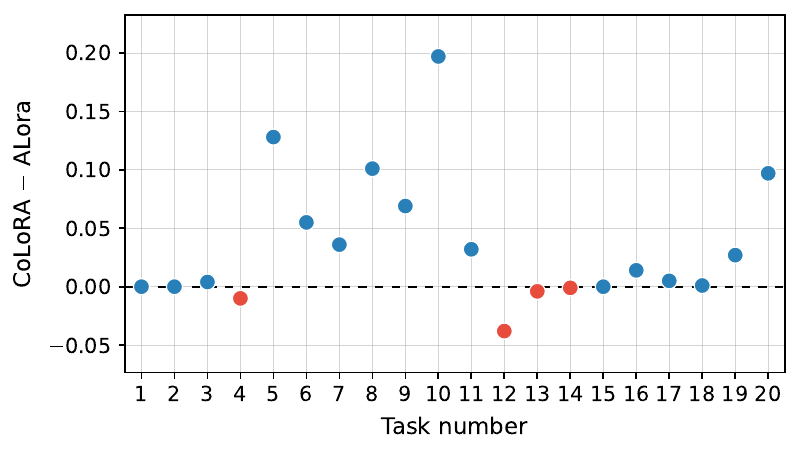}
    \label{fig:diff_alora}
  \end{subfigure}\hfill
  \begin{subfigure}[b]{0.32\linewidth}
    \includegraphics[width=\linewidth]{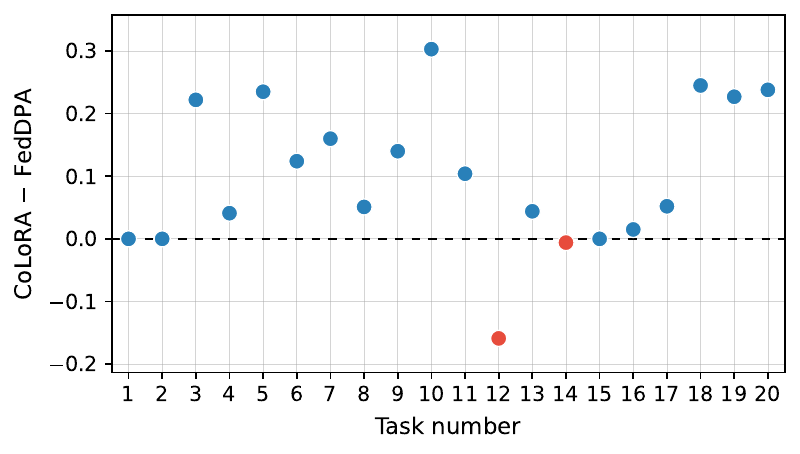}
    \label{fig:diff_feddpa}
  \end{subfigure}\hfill
  \begin{subfigure}[b]{0.32\linewidth}
    \includegraphics[width=\linewidth]{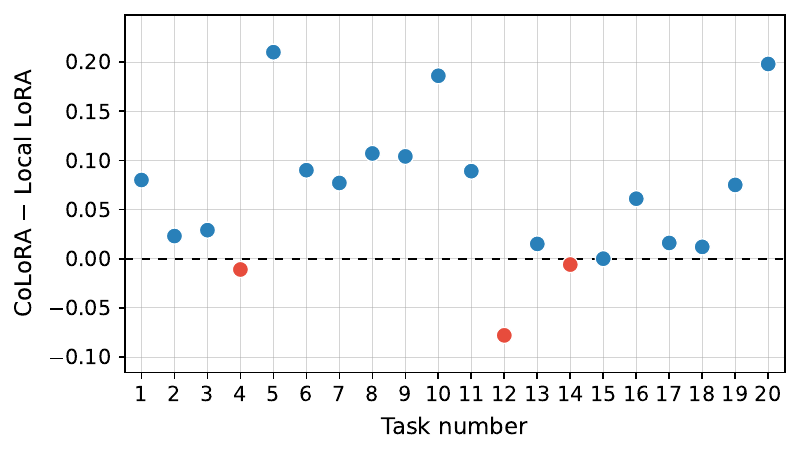}
    \label{fig:diff_localora}
  \end{subfigure}

  \vspace{-0.5em}

  \begin{subfigure}[b]{0.32\linewidth}
    \includegraphics[width=\linewidth]{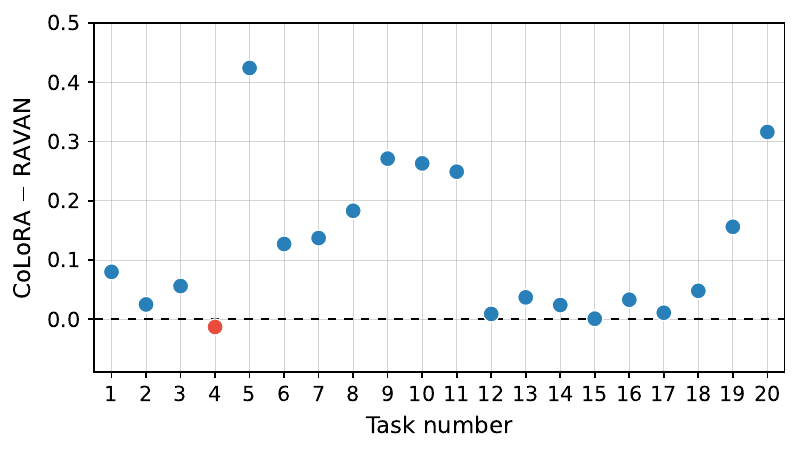}
    \label{fig:diff_ravan}
  \end{subfigure}\hspace{0.02\linewidth}
  \begin{subfigure}[b]{0.32\linewidth}
    \includegraphics[width=\linewidth]{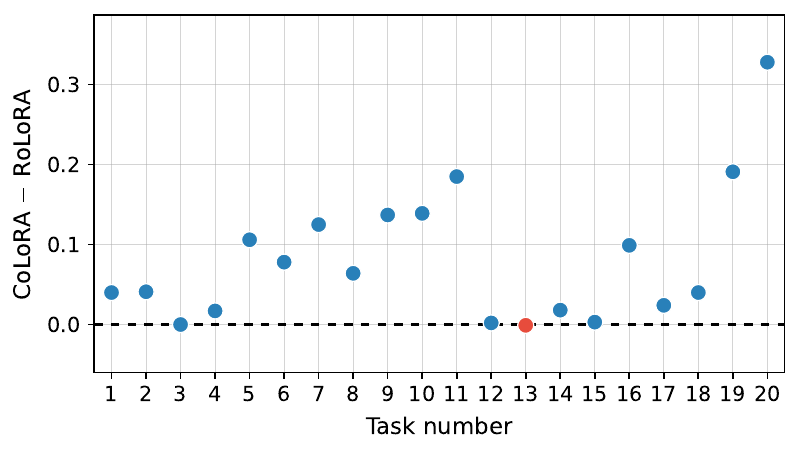}
    \label{fig:diff_rolora}
  \end{subfigure}
  \caption{Per-task accuracy difference between CoLoRA and each baseline for \textbf{20} similar yet different tasks.}
  \label{fig:20_tasks_rouge}
\end{figure*}

\begin{figure*}[t]
  \centering
  \begin{subfigure}[b]{0.195\linewidth}
    \includegraphics[width=\linewidth]{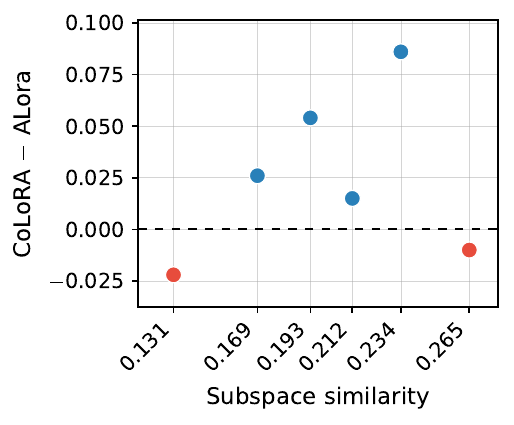}
    \label{fig:sim_alora}
  \end{subfigure}\hspace{0.005\linewidth}%
  \begin{subfigure}[b]{0.195\linewidth}
    \includegraphics[width=\linewidth]{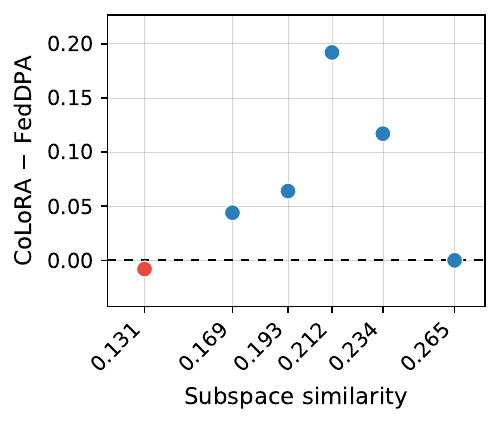}
    \label{fig:sim_feddpa}
  \end{subfigure}\hspace{0.005\linewidth}%
  \begin{subfigure}[b]{0.195\linewidth}
    \includegraphics[width=\linewidth]{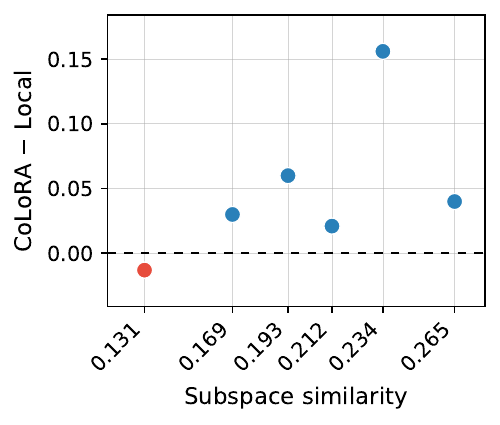}
    \label{fig:sim_localora}
  \end{subfigure}\hspace{0.005\linewidth}%
  \begin{subfigure}[b]{0.195\linewidth}
    \includegraphics[width=\linewidth]{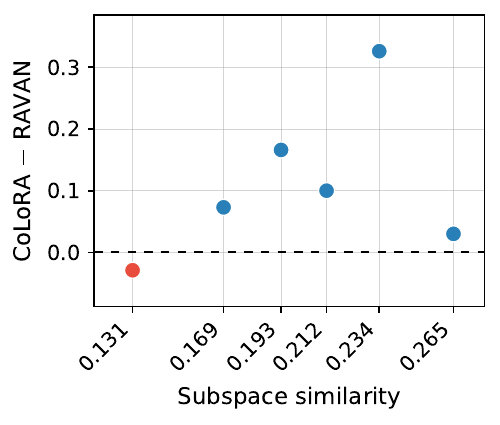}
    \label{fig:sim_ravan}
  \end{subfigure}\hspace{0.005\linewidth}%
  \begin{subfigure}[b]{0.195\linewidth}
    \includegraphics[width=\linewidth]{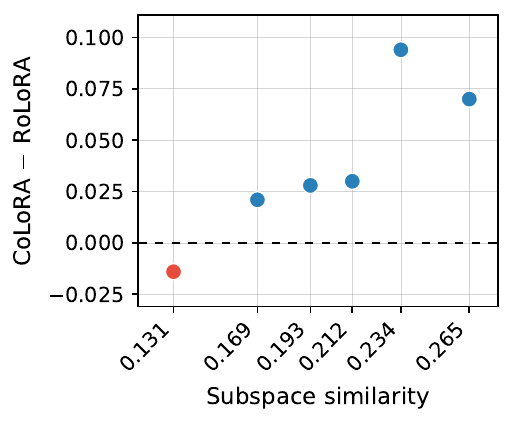}
    \label{fig:sim_rolora}
  \end{subfigure}
  \caption{Accuracy gain of CoLoRA over each baseline as a function of subspace similarity.
           Blue: CoLoRA $\geq$ baseline; red: CoLoRA $\leq$ baseline. The dashed line marks zero difference. Rank $r=16$, rougeL score, model = Qwen2.5-1.5B-Instruct}
  \label{fig:sim_diffs}
\end{figure*}

\begin{figure*}[t]
  \centering
  \begin{subfigure}[b]{0.24\linewidth}
    \includegraphics[width=\linewidth]{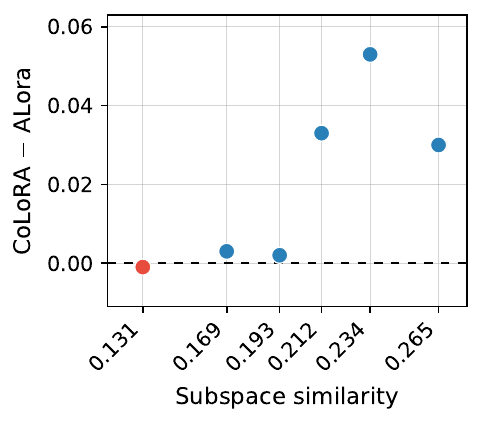}
    \label{fig:t8_alora}
  \end{subfigure}\hspace{0.005\linewidth}%
  \begin{subfigure}[b]{0.24\linewidth}
    \includegraphics[width=\linewidth]{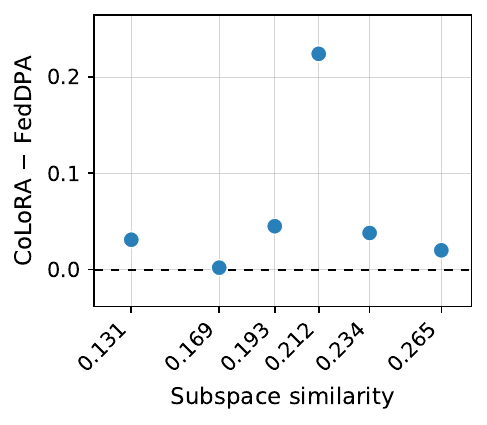}
    \label{fig:t8_feddpa}
  \end{subfigure}\hspace{0.005\linewidth}%
  \begin{subfigure}[b]{0.24\linewidth}
    \includegraphics[width=\linewidth]{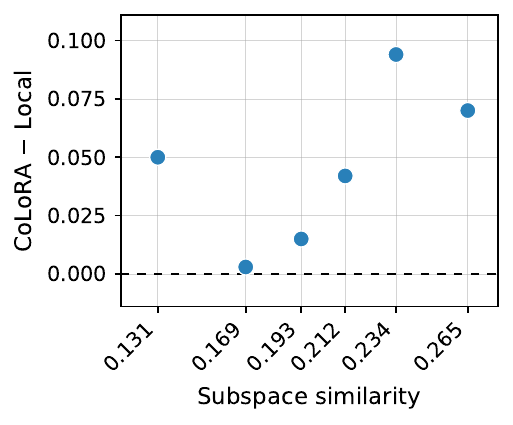}
    \label{fig:t8_localora}
  \end{subfigure}\hspace{0.005\linewidth}%
  \begin{subfigure}[b]{0.24\linewidth}
    \includegraphics[width=\linewidth]{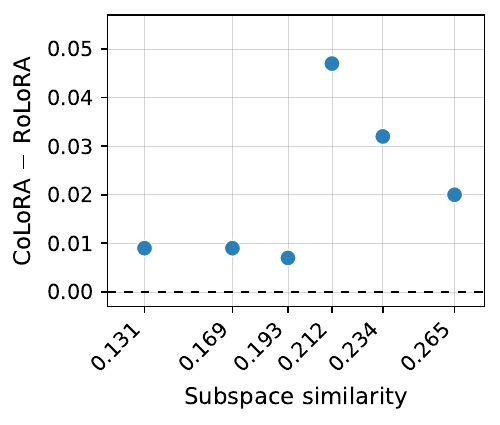}
    \label{fig:t8_rolora}
  \end{subfigure}
  \caption{Accuracy gain of CoLoRA over each baseline as a function of subspace similarity.
           Blue: CoLoRA $\geq$ baseline; red: CoLoRA $\leq$ baseline. The dashed line marks zero difference. Rank $r=4$, rougeL score, model = Qwen2.5-3B-Instruct}
  \label{fig:3b_rouge}
\end{figure*}

\begin{figure*}[t]
  \centering
  \begin{subfigure}[b]{0.24\linewidth}
    \includegraphics[width=\linewidth]{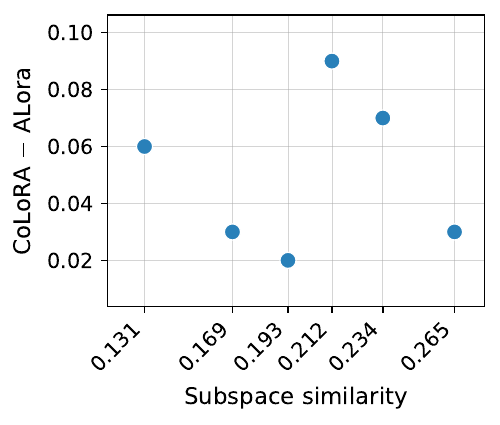}
    \label{fig:t9_alora}
  \end{subfigure}\hspace{0.005\linewidth}%
  \begin{subfigure}[b]{0.24\linewidth}
    \includegraphics[width=\linewidth]{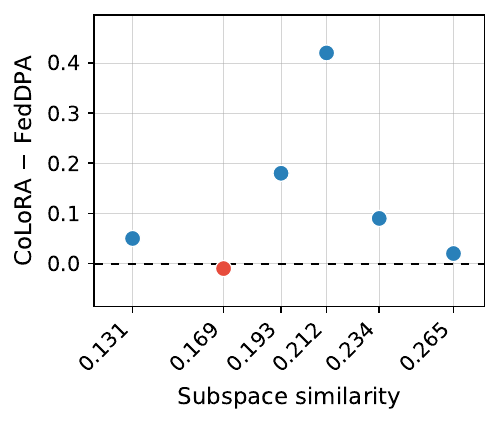}
    \label{fig:t9_feddpa}
  \end{subfigure}\hspace{0.005\linewidth}%
  \begin{subfigure}[b]{0.24\linewidth}
    \includegraphics[width=\linewidth]{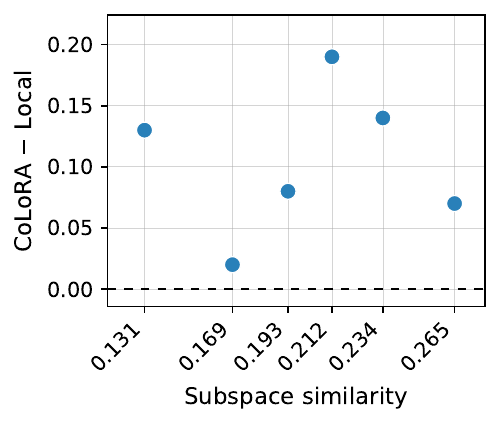}
    \label{fig:t9_localora}
  \end{subfigure}\hspace{0.005\linewidth}%
  \begin{subfigure}[b]{0.24\linewidth}
    \includegraphics[width=\linewidth]{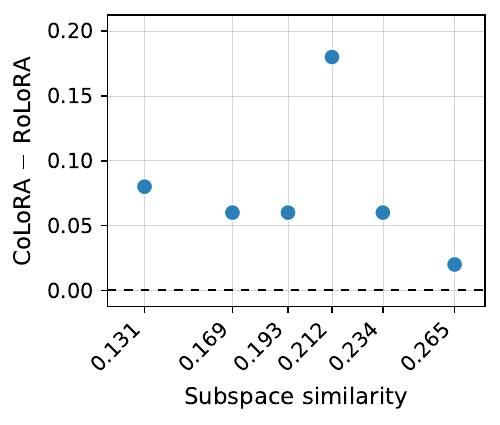}
    \label{fig:t9_rolora}
  \end{subfigure}
  \caption{Accuracy gain of CoLoRA over each baseline as a function of subspace similarity.
           Blue: CoLoRA $\geq$ baseline; red: CoLoRA $\leq$ baseline. The dashed line marks zero difference. Rank $r=4$, exact matching score, model = Qwen2.5-3B-Instruct}
  \label{fig:3b_exact}
\end{figure*}

\section{Discussion}

In this paper, we introduced CoLoRA, a collaborative fine-tuning approach with minimal parameter overhead, suitable for distributed and federated settings. CoLoRA stands on leveraging similarities in the downstream tasks. Consequently, we investigated the similarity notion, particularly in language tasks. We provided a preliminary and simple similarity metric, however, developing robust and efficient similarity measures for language tasks remains an interesting future direction.

Through experiments on federated fine-tuning of large language models across diverse tasks, we demonstrated that CoLoRA consistently outperforms existing baselines, with particularly strong gains in regimes where tasks are highly related.  Finally, we provided a theoretical analysis of CoLoRA via heterogeneous linear regression. By extending techniques from the matrix sensing literature, we derived sample complexity guarantees for recovering the underlying ground truth.


\clearpage
\newpage

\section*{Impact Statement}

This work aims to advance machine learning methods in collaborative settings. While such methods may have broader
societal implications, we do not identify any specific societal
risks or consequences that require special discussion.

\section*{Acknowledgements}
Gagik Magakyan was supported by Aeropuertos Argentina 2000 Fellowship.
The authors acknowledge the MIT SuperCloud and Lincoln Laboratory Supercomputing Center
for providing computing resources that have contributed
to the research results reported within this paper.

\balance

\bibliography{gmref}
\bibliographystyle{icml2026}

\newpage
\appendix
\onecolumn

\section{Appendix}

\begin{algorithm}[t]
\caption{\textsc{CoLoRA-Alt}: Alternating collaborative LoRA with personalized $\bLam^i$}\label{alg:colora_alt}
\begin{algorithmic}[1]
\STATE \textbf{Input:} clients \(i=1,\dots,k\); comm rounds \(T\)
\STATE Initialize \(\bA,\bB,\{\bLambda^i\}_{i=1}^k\)
\FOR{$t=0$ to $T-1$}
  \IF{$t$ even} 
\FOR{each client $i = 1,\dots,k$ in parallel}
    \STATE \algcomment{update \(\bLambda^i,\bB\); hold \(\bA\) fixed}
    \STATE $(\bB^i,\bLambda^i)\leftarrow \textsc{TrainLocal}(\bB^i,\bLambda^i;\ \text{client $i$ data};\ \bA^i\ \text{frozen})$
\ENDFOR 
    \STATE $\bar{\bB}\leftarrow \textsc{Average}(\{\bB^i\}_{i=1}^k)$;\quad \(\bB^i\leftarrow \bar{\bB}\) for all \(i\)
  \ELSE 
  \STATE \algcomment{update \(\bLambda^i,\bA\); hold \(\bB\) fixed}
\FOR{each client $i = 1,\dots,k$ in parallel}
      \STATE $(\bA^i,\bLambda^i)\leftarrow \textsc{TrainLocal}(\bA^i,\bLambda^i;\ \text{client $i$ data};\ \bB^i\ \text{frozen})$
    \ENDFOR
    \STATE $\bar{\bA}\leftarrow \textsc{Average}(\{\bA^i\}_{i=1}^k)$;\quad \(\bA^i\leftarrow \bar{\bA}\) for all \(i\)
  \ENDIF
\ENDFOR
\end{algorithmic}
\end{algorithm}

\subsection{Additional related work}
\label{sec:additional_related_work}

\paragraph{Matrix sensing.}
\citep{recht2010guaranteed} introduced the Restricted Isometry Property (RIP) framework for low-rank matrix recovery and analyzed the matrix sensing problem under this assumption. They proved that if the rank-$r$ RIP constant $\delta_r \leq c$, then the solution to the trace-minimization program exactly recovers the minimum-rank matrix. Furthermore, they established that Gaussian measurement ensembles satisfy this condition with high probability when the number of measurements scales as $\Omega(dr)$. Building upon, \citep{JainMekaDhillon2010} proposed a projected gradient descent algorithm that achieves similar recovery guarantees as \citep{recht2010guaranteed}. However, their method requires computing a full singular value decomposition (SVD) at each iteration, which limits its computational efficiency. \citep{jain2013lowrank} developed a more scalable alternating minimization approach. Their theoretical analysis requires a stronger condition, namely $\delta_{4r} \leq c/r$, which leads to a higher sample complexity of order $\Omega(dr^3)$. Later, \citep{Tu2016} demonstrated that a simple gradient descent algorithm can achieve recovery under the milder condition $\delta_{6r} \leq c$, reducing the sample complexity back to $\Omega(dr)$ while remaining computationally efficient.

\paragraph{Linear Representation Learning.} Our framework is related to the literature on multitask linear representation learning \citep{collins2021exploiting, du2021few, thekumparampil2021sample, collins2022fedavg, tripuraneni2021provable}. These works typically consider the data model
\begin{align}
    y^i_j = \inner{\bx^i_j}{\bB^* {\bw^i}^*}, 
    \quad i = 1, \ldots, k, \quad j = 1, \ldots, n,
    \label{eq:linear_rep}
\end{align}
where $k$ denotes the number of clients, $n$ the number of datapoints per client, $\bx^i_j$ are the input features, $\bB^* \in \mathbb{R}^{d \times r}$ is the shared representation, and $\bw^i$ are client-specific linear heads. 
In particular, \citet{collins2021exploiting} show that if each client has 
\[
    n \gtrsim r^2 \!\left(\frac{d}{k} + r \log k \right) \!\log \!\frac{1}{\epsilon},
\]
then their alternating minimization algorithm recovers $\bB^*$ up to $\epsilon$ accuracy in subspace distance.

Recall that in the special case $\beta = 0$, our data model reduces to
\begin{align*}
    y^i_j = \inner{\bG^i_j}{\bU^* {\bLam^i}^* {\bV^*}^{\top}}, 
    \quad i = 1, \ldots, k, \quad j = 1, \ldots, n,
\end{align*}
which can be rewritten in the form of~\eqref{eq:linear_rep} as follows:
\begin{align*}
    y^i_j 
    &= \inner{\bG^i_j}{\bU^* {\bLam^i}^* {\bV^*}^{\top}} \\
    &= \inner{\vecop{\bG^i_j}}{\vecop{\bU^* {\bLam^i}^* {\bV^*}^{\top}}} \\
    &= \inner{\vecop{\bG^i_j}}{
        \big( \bV^* \otimes \bU^* \big) \vecop{{\bLam^i}^*}
      },
\end{align*}
where $\bB^* = \bV^* \otimes \bU^* \in \mathbb{R}^{d^2 \times r^2}$ and $(\bw^i)^* = \vecop{{\bLam^i}^*} \in \mathbb{R}^{r^2}$. 
Hence, under this transformation, our formulation corresponds to a linear representation learning model in which the shared representation exhibits a Kronecker product structure. 
Directly applying existing results without accounting for this structure would lead to bounds scaling with $d^2$, which are therefore suboptimal.

\paragraph{Personalized Federated Learning.}
\citep{mishchenko2025partially, pillutla2022federated} propose frameworks that combine global and local parameters, along with alternating optimization framework to jointly learn them. \citet{fallah2020personalized} adopts a meta-learning perspective, aiming to learn a global model that performs well on each local task after a single gradient update.
While these methods are related to our setting, their analyses focus on general non-convex objectives and establish convergence under that regime. In contrast, we study a linear model, which allows us to derive exact sample complexity guarantees. Moreover, applying these methods directly to LoRA adapters would lead to inexact parameter averaging, typically resulting in degraded performance. Conceptually, their techniques draw from the classical federated learning literature, whereas ours are rooted in matrix sensing theory. A deeper connection between the two perspectives may exist under certain conditions, which we leave as an interesting direction for future work. \citet{hanzely2020federated} propose a formulation that interpolates between local and global models by introducing a regularization term. However, their approach differs from ours in two key aspects. First, they employ a quadratic regularizer, which is incompatible with our setting, as our notion of distance is based on subspace distance. Second, their theoretical analysis assumes strongly convex objectives, whereas our problem involves non-convex optimization.

\paragraph{Task similarity.} 
Task relatedness and similarity have been extensively studied in both computer vision and natural language processing through the lens of transfer learning \citep{zamir2018taskonomy, standley2020tasks, aribandi2022ext5}.
These works typically either train a model on one task and fine-tune it on another to assess transferability, or jointly train the tasks and evaluate the resulting performance gains.
A complementary line of research \citep{achille2019task2vec, vu2020exploring} measures task similarity via task embeddings derived from the Fisher Information Matrix (FIM).
Our similarity measure is conceptually closer to this second line of work, but measures similarity based on the LoRA adapters.
While prior methods compute the FIM by fine-tuning a full model, one could, in principle, apply the same approach to LoRA adapters.
Nonetheless, such an extension is not invariant to the transformations
$\bB \leftarrow \bB \bR,\ \bA \leftarrow \bR^{-1} \bA$,
which makes the method unsuitable in the LoRA setting.

\subsection{Details on experiments}

\paragraph{Optimization and hyperparameters}
We mirror the defaults in our code for reproducibility. We used AdamW Optimizer \citep{loshchilov2019decoupled} with learning rate \(1.0\times10^{-4}\). Training batch size was 4, communication total rounds is set as 50, and we set local epochs between communications as 1. We used rouge-L and exact matching for evaluation metrics. For all the comparisons experiments, we used the same seed in the initialization to ensure consistency. 
All experiments were run on a single NVIDIA L40 GPU, except the large-scale experiments, which used a single NVIDIA H200 GPU.

\paragraph{Similarity calculation protocol.} 
Following \citet{bai2024federated} and \citet{mishra-etal-2022-cross}, we do not use the full data for training adapters for reducing computational cost. For each task, we use 600 datapoints to train the task-specific LoRA adapters, starting from the same initialization. Empirically, we find that our similarity measure is robust to initialization and preserves the relative distance between tasks.

Note that the values in \autoref{fig:similarity_heatmap} are obtained by averaging the similarity across all matrices in the pretrained model. Here, we report the similarity values for individual matrices across all layers for the $Q$ and MLP components between task pairs $(T_1, T_2)$ and $(T_3, T_4)$. We observe that for more similar tasks, the similarity tends to increase in the later layers of the network, and this trend is more pronounced for the $Q$ matrices. This suggests that the early layers capture general representations, while the later layers become increasingly task-specific. Consequently, it may be advantageous to define task similarity using only the later layers of the network—a direction we leave for future investigation.

\begin{figure}[H]
    \centering
    \begin{subfigure}[b]{0.4\textwidth}
        \includegraphics[width=\textwidth]{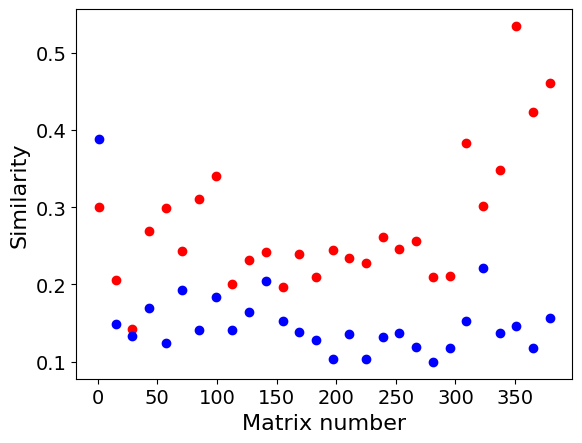}
        \caption{Similarity for $Q$ matrices}
    \end{subfigure}
    \begin{subfigure}[b]{0.4\textwidth}
        \includegraphics[width=\textwidth]{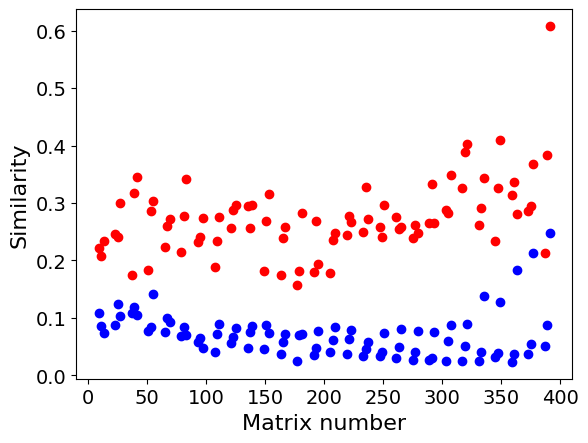}
        \caption{Similarity for MLP matrices}
    \end{subfigure}
    \caption{Similarity across network layers. Smaller matrix indices correspond to parameters from earlier layers in the network. Red points denote the similarity between tasks $T_1$ and $T_2$, while blue points correspond to $T_1$ and $T_3$.}
    \label{fig:similarity_layers}
\end{figure}

\subsection{Experiment tables}
\label{sec:baselines}

\begin{table}[h!]
\caption{Similarity and performance values for $T^1_1$.}
\centering
\begin{tabular}{c | *{11}{c}}
\hline
\textbf{Subspace similarity} & 
0.129 & 0.133 & 0.138 & 0.163 & 0.180 & 0.192 & 0.207 & 0.230 & 0.234 & 0.242 & 0.263 \\ 
\textbf{Rougle-L score} & 
0.772 & 0.786 & 0.760 & 0.823 & 0.805 & 0.847 & 0.910 & 0.899 & 0.890 & 0.891 & 0.924 \\ 
\hline
\end{tabular}
\label{tab:sim_perf_task1}
\end{table}

\begin{table}[h!]
\caption{Similarity and performance values for $T^2_1$.}
\centering
\begin{tabular}{c | *{8}{c}}
\hline
\multicolumn{9}{c}{\textbf{Task 2}} \\
\hline
\textbf{Subspace similarity} & 
0.114 & 0.136 & 0.151 & 0.196 & 0.238 & 0.280 & 0.287 & 0.301 \\ 
\textbf{Rougle-L score} & 
0.333 & 0.362 & 0.322 & 0.372 & 0.373 & 0.377 & 0.399 & 0.391 \\ 
\hline
\end{tabular}
\label{tab:sim_perf_task2}
\end{table}

\begin{table}[h!]
\caption{Performance comparison across varying similarity levels. The bolded values indicate the best performance for each similarity level. Oracle values denote the performance of local fine-tuning in the data-rich regime (using 600 datapoints).}
\centering
\resizebox{\textwidth}{!}{
\begin{tabular}{l | *{18}{c}}
\hline
Subspace similarity &
0.114 & 0.118 & 0.126 & 0.131 & 0.134 & 0.147 &
0.162 & 0.169 & 0.182 & 0.184 & 0.193 & 0.207 &
0.211 & 0.212 & 0.222 & 0.229 & 0.234 & 0.265 \\
\hline
Local LoRA &
\textbf{0.405} & \textbf{0.677} & \textbf{0.593} & 0.806 & 0.543 & 0.381 &
0.711 & 0.905 & 0.789 & 0.779 & 0.937 & 0.958 &
\textbf{0.742} & 0.939 & 0.527 & 0.618 & 0.635 & 0.910 \\ 
FedDPA & 
0.301 & 0.619 & 0.498 & 0.755 & 0.484 & 0.342 & 
0.682 & 0.831 & 0.742 & 0.746 & 0.906 & 0.943 &
0.639 & 0.691 & 0.379 & 0.507 & 0.457 & 0.95 \\ 
RoLoRA &
0.364 & 0.635 & 0.575 & 0.829 & 0.511 & 0.338 &
0.702 & 0.866 & 0.758 & 0.811 & 0.935 & 0.974 &
0.726 & 0.932 & 0.576 & 0.597 & 0.593 & 0.930 \\ 
ALoRA &
0.387 & 0.627 & 0.594 & \textbf{0.839} & 0.501 & 0.372 &
0.701 & 0.915 & 0.778 & 0.751 & 0.940 & 0.961 &
0.702 & 0.942 & 0.515 & 0.662 & 0.653 & 0.940 \\ 
CoLoRA &
0.378 & 0.659 & 0.567 & 0.833 & \textbf{0.565} & \textbf{0.411} &
\textbf{0.730} & \textbf{0.944} & \textbf{0.822} & \textbf{0.835} & \textbf{0.948} & \textbf{0.979} &
0.733 & \textbf{0.961} & \textbf{0.633} & \textbf{0.677} & \textbf{0.727} & \textbf{0.970} \\ 
\hline
Oracle  &
0.512 & 0.763 & 0.696 & 0.908 & 0.680 & 0.580 &
0.827 & 0.982 & 0.950 & 0.936 & 0.977 & 0.981 &
0.976 & 0.993 & 0.886 & 0.893 & 0.925 & 0.987 \\ 
\hline
\end{tabular}
}
\label{tab:baseline_comparison}
\end{table}

\begin{table*}[t]

\refstepcounter{subsection}
\label{subsec:bigger-rank-larger-base}
\noindent\textbf{\thesubsection\quad Bigger rank and larger base model experiments}
\addcontentsline{toc}{subsection}{\protect\numberline{\thesubsection}Large number of clients experiment}

\vspace{0.8em}

\centering
\begin{minipage}{0.48\textwidth}
    \centering
    \caption{Rank $r=16$, rougeL score, model = Qwen2.5-1.5B-Instruct}
    \renewcommand{\arraystretch}{1.1}
    \resizebox{\textwidth}{!}{
    \begin{tabular}{l|cccccc}
    \toprule
    Subspace similarity & 0.131 & 0.169 & 0.193 & 0.212 & 0.234 & 0.265 \\
    \midrule
    Local
    & 0.782 & 0.938 & 0.919 & 0.941 & 0.657 & 0.920 \\
    FedDPA
    & 0.777 & 0.924 & 0.915 & 0.770 & 0.696 & 0.960 \\
    RoLoRA
    & 0.783 & 0.947 & 0.951 & 0.932 & 0.719 & 0.890 \\
    ALoRA
    & \textbf{0.791} & 0.942 & 0.925 & 0.947 & 0.727 & \textbf{0.970} \\
    RAVAN
    & 0.798 & 0.895 & 0.813 & 0.862 & 0.487 & 0.930 \\
    CoLoRA
    & 0.769 & \textbf{0.968} & \textbf{0.979} & \textbf{0.962} & \textbf{0.813} & 0.960 \\
    \bottomrule
    \end{tabular}
    }
    \label{table:larger_rank_rougel}
\end{minipage}
\hfill
\begin{minipage}{0.48\textwidth}
\centering
    \caption{Rank $r=16$, exact matching score, model = Qwen2.5-1.5B-Instruct}
    \renewcommand{\arraystretch}{1.1}
    \resizebox{\textwidth}{!}{
    \begin{tabular}{l|cccccc}
    \toprule
    Subspace similarity & 0.131 & 0.169 & 0.193 & 0.212 & 0.234 & 0.265 \\
    \midrule
    Local
    & 0.620 & 0.720 & 0.690 & 0.650 & 0.380 & 0.920 \\
    FedDPA
    & 0.630 & 0.700 & 0.700 & 0.390 & 0.400 & 0.960 \\
    RoLoRA
    & 0.640 & 0.760 & 0.810 & 0.670 & 0.500 & 0.890 \\
    ALoRA
    & \textbf{0.660} & 0.800 & 0.760 & 0.710 & 0.500 & \textbf{0.970} \\
    RAVAN
    & 0.700 & 0.610 & 0.550 & 0.480 & 0.180 & 0.930 \\
    CoLoRA
    & \textbf{0.660} & \textbf{0.870} & \textbf{0.910} & \textbf{0.780} & \textbf{0.750} & 0.960 \\
    \bottomrule
    \end{tabular}
    }
    \label{table:larger_rank_exact}
\end{minipage}
\end{table*}

\begin{table*}[t]
\centering
\begin{minipage}{0.48\textwidth}
    \centering
    \caption{Rank $r=4$, rougeL score, model = Qwen2.5-3B-Instruct}
    \renewcommand{\arraystretch}{1.1}
    \resizebox{\textwidth}{!}{
    \begin{tabular}{l|cccccc}
    \toprule
    Subspace similarity & 0.131 & 0.169 & 0.193 & 0.212 & 0.234 & 0.265 \\
    \midrule
    Local
    & 0.785 & 0.973 & 0.949 & 0.922 & 0.689 & 0.880 \\
    FedDPA
    & 0.804 & \textbf{0.974} & 0.919 & 0.740 & 0.745 & 0.930 \\
    RoLoRA
    & 0.826 & 0.967 & 0.957 & 0.917 & 0.751 & 0.930 \\
    ALoRA
    & \textbf{0.836} & 0.973 & 0.962 & 0.931 & 0.730 & 0.920 \\
    CoLoRA
    & 0.835 & 0.976 & \textbf{0.964} & \textbf{0.964} & \textbf{0.783} & \textbf{0.950} \\
    \bottomrule
    \end{tabular}
    }
    \label{table:rank16_rougel_new}
\end{minipage}
\hfill
\begin{minipage}{0.48\textwidth}
    \centering
    \caption{Rank $r=4$, exact matching score, model = Qwen2.5-3B-Instruct}
    \renewcommand{\arraystretch}{1.1}
    \resizebox{\textwidth}{!}{
    \begin{tabular}{l|cccccc}
    \toprule
    Subspace similarity & 0.131 & 0.169 & 0.193 & 0.212 & 0.234 & 0.265 \\
    \midrule
    Local
    & 0.640 & 0.830 & 0.830 & 0.620 & 0.580 & 0.880 \\
    FedDPA
    & 0.720 & \textbf{0.860} & 0.730 & 0.390 & 0.630 & 0.930 \\
    RoLoRA
    & 0.690 & 0.790 & 0.850 & 0.630 & 0.660 & 0.930 \\
    ALoRA
    & 0.710 & 0.820 & 0.890 & 0.720 & 0.650 & 0.920 \\
    CoLoRA
    & \textbf{0.770} & 0.850 & \textbf{0.910} & \textbf{0.810} & \textbf{0.720} & \textbf{0.950} \\
    \bottomrule
    \end{tabular}
    }
    \label{table:rank16_exact_new}
\end{minipage}
\end{table*}

\begin{table*}[t]

\refstepcounter{subsection}
\label{subsec:20client}
\noindent\textbf{\thesubsection\quad Large number of clients experiment}
\addcontentsline{toc}{subsection}{\protect\numberline{\thesubsection}Large number of clients experiment}

\vspace{0.8em}

\caption{Individual rougeL scores for the experiment with 20 clients}
\centering
\resizebox{\textwidth}{!}{
\renewcommand{\arraystretch}{1.1}
\begin{tabular}{l|cccccccccccccccccccc}
\toprule
Task number & T1 & T2 & T3 & T4 & T5 & T6 & T7 & T8 & T9 & T10 & T11 & T12 & T13 & T14 & T15 & T16 & T17 & T18 & T19 & T20 \\
\midrule
Local LoRA 
& 0.920 & 0.937 & 0.962 & 0.972 & 0.786 & 0.848 & 0.787 & 0.790 & 0.853 & 0.664 & 0.876 & 0.253 & 0.917 & \textbf{0.996} & \textbf{1.000} & 0.924 & 0.972 & 0.974 & 0.376 & 0.331 \\

FedDPA 
& \textbf{1.000} & 0.960 & 0.769 & 0.920 & 0.761 & 0.814 & 0.704 & 0.846 & 0.817 & 0.547 & 0.861 & \textbf{0.334} & 0.888 & \textbf{0.996} & \textbf{1.000} & 0.970 & 0.936 & 0.741 & 0.224 & 0.291 \\

RoLoRA 
& 0.960 & 0.919 & \textbf{0.991} & 0.944 & 0.890 & 0.860 & 0.739 & 0.833 & 0.820 & 0.711 & 0.780 & 0.173 & 0.933 & 0.972 & 0.997 & 0.886 & 0.964 & 0.946 & 0.260 & 0.201 \\

ALoRA 
& \textbf{1.000} & 0.960 & 0.987 & 0.971 & 0.868 & 0.883 & 0.828 & 0.796 & 0.888 & 0.653 & 0.933 & 0.213 & \textbf{0.936} & 0.991 & \textbf{1.000} & 0.971 & 0.983 & 0.985 & 0.424 & 0.432 \\

RAVAN 
& 0.920 & 0.935 & 0.935 & \textbf{0.974} & 0.572 & 0.811 & 0.727 & 0.714 & 0.686 & 0.587 & 0.716 & 0.166 & 0.895 & 0.966 & 0.999 & 0.952 & 0.977 & 0.938 & 0.295 & 0.213 \\

CoLoRA 
& \textbf{1.000} & \textbf{0.960} & \textbf{0.991} & 0.961 & \textbf{0.996} & \textbf{0.938} & \textbf{0.864} & \textbf{0.897} & \textbf{0.957} & \textbf{0.850} & \textbf{0.965} & 0.175 & 0.932 & 0.990 & \textbf{1.000} & \textbf{0.985} & \textbf{0.988} & \textbf{0.986} & \textbf{0.451} & \textbf{0.529} \\
\bottomrule
\end{tabular}
}
\label{table:20clients_rougeL}
\end{table*}

\begin{table*}[t]
\caption{Individual exact matching scores for the experiment with 20 clients}
\centering
\resizebox{\textwidth}{!}{
\renewcommand{\arraystretch}{1.1}
\begin{tabular}{l|cccccccccccccccccccc}
\toprule
Task number & T1 & T2 & T3 & T4 & T5 & T6 & T7 & T8 & T9 & T10 & T11 & T12 & T13 & T14 & T15 & T16 & T17 & T18 & T19 & T20 \\
\midrule
Local LorRA
& 0.920 & 0.880 & 0.760 & \textbf{0.760} & 0.240 & 0.400 & 0.280 & 0.560 & 0.320 & 0.240 & 0.320 & 0.120 & 0.520 & \textbf{0.960} & \textbf{1.000} & \textbf{0.667} & \textbf{0.840} & 0.720 & 0.000 & 0.000 \\
FedDPA
& \textbf{1.000} & 0.960 & 0.200 & 0.480 & 0.280 & \textbf{0.360} & 0.200 & \textbf{0.800} & 0.200 & 0.320 & 0.320 & \textbf{0.160} & 0.320 & \textbf{0.960} & \textbf{1.000} & 0.762 & 0.360 & 0.160 & 0.000 & 0.000 \\
RoLoRA
& 0.920 & 0.760 & \textbf{0.920} & 0.600 & 0.520 & 0.560 & 0.240 & 0.640 & 0.240 & 0.320 & 0.200 & 0.040 & \textbf{0.560} & 0.800 & 0.920 & 0.571 & 0.640 & 0.560 & 0.000 & 0.000 \\
ALoRA
& \textbf{1.000} & 0.960 & 0.880 & \textbf{0.760} & 0.480 & 0.480 & 0.440 & 0.560 & 0.400 & 0.160 & 0.440 & 0.080 & \textbf{0.560} & \textbf{0.960} & \textbf{1.000} & \textbf{0.667} & \textbf{0.840} & 0.840 & \textbf{0.040} & 0.000 \\
CoLoRA
& \textbf{1.000} & \textbf{1.000} & \textbf{0.920} & 0.720 & \textbf{0.960} & 0.680 & \textbf{0.600} & \textbf{0.920} & \textbf{0.640} & \textbf{0.760} & \textbf{0.720} & 0.040 & \textbf{0.560} & 0.920 & \textbf{1.000} & 0.810 & \textbf{0.840} & \textbf{0.880} & \textbf{0.040} & 0.000 \\
\bottomrule
\end{tabular}
}
\label{table:clients_exact}
\end{table*}

\clearpage

\section{Appendix}

\subsection{On the subspace distance}
\label{sec:sub_dist}

We consider the subspace distance \citep{golub2013matrix, jain2013lowrank} defined as:
\begin{definition}
    Let $\bX_1, \bX_2 \in R^{d\times r}$ and $\bQ_1, \bQ_2 \in R^{d \times r}$ be the orthonormal basis of $\mathrm{span}(\bX_1),\mathrm{span}(\bX_2)$ respectively. The subspace distance between $\bX_1$ and $\bX_2$ is given by:
    \begin{align*}
        \dist{\bX_1}{\bX_2} &\overset{\mathrm{def}}{=} 
        \norm{\bracket{I - \bQ_1 \bQ_1^{\top}}\bQ_2}_2 \\
        &= 
        \norm{\bracket{I - \bQ_2 \bQ_2^{\top}}\bQ_1}_2.
    \end{align*}
\end{definition}

The subspace distance is connected to \emph{principal angles} \citep{golub2013matrix} between two subspaces as follows. If $\bQ_1^{\top}\bQ_2 = \bU \bS \bV^{\top}$ is the singular value decomposition, 
then the singular values $\mathrm{diag}(\bS) = (\cos(\btheta_1),\cdots, \cos(\btheta_r))$, where
$\btheta_1 \geq \cdots \geq \btheta_r$ are called the principal angles between subspaces $\mathrm{span}(\bQ_1)$ and
$\mathrm{span}(\bQ_2)$. One can show
\begin{align*}
    \dist{\bX_1}{\bX_2} = \sin(\btheta_{1}).
\end{align*}
Recall that our subspace similarity notion defined in \ref{col:sim}, \ref{row:sim} is equal to
\begin{align*}
    \text{sim}(\bX_1, \bX_2) = 
    \frac{\norm{\bQ_1^{\top} \bQ_2}_F}{\sqrt{r}} = 
    \sqrt{\frac{\sum_{i=1}^r \cos(\btheta_i)^2}{r}}.
\end{align*}
Hence, we have:

\begin{align}
    \dist{\bX_1}{\bX_2}^2 = \sin(\btheta_1)^2 \leq \sum_{i=1}^r \sin(\btheta_i)^2 = 
    r - \sum_{i=1}^r \cos(\btheta_i)^2 = 
    r(1- \text{sim}(\bX_1, \bX_2)^2)
    \label{bound_dist_sim}
\end{align}

\subsection{Main Theorem} \label{sec:proof}
For the proof of our main theorem we are going to work with subspace distance rather then subspace similarity. We define 

\begin{align}
    \beta := \sqrt{r} \cdot \sqrt{1 - \xi^2}
    \label{beta:def}
\end{align}

Recall by \ref{dist_assumption}, for all $i \in [k]$ we have:

\begin{align}
    \dist{\bU^*}{\text{col}({\bM^i}^*)}, 
    \dist{\bV^*}{\text{row}({\bM^i}^*)}
    \leq \beta,
\end{align}
where $\text{col}({\bM^i}^*), \text{row}({\bM^i}^*)$ are the column and row subspaces of ${\bM^i}^*$ respectively. 

\label{maintheorem:formal}

We define

\begin{align*}
    A_{N,d,r} := C_2 \frac{N + rd}{N}, \quad B_{N,d,r} = 
    10^4 \cdot \sqrt{r} \kappa^2 \sqrt{A_{N,d,r}},
\end{align*}
where $C_2$ is an absolute constant coming from a concentration inequality and will be specified later.

We now state the full version of the main theorem. 
\begin{theorem}
\label{theorem:main}
Suppose we have
\begin{align}
    \beta \leq \frac{1}{4\cdot 10^4} \cdot 
    \frac{1}{\sqrt{r}} \cdot  \frac{1}{\max(\kappa^2, \kappa \gamma )} \cdot \frac{1}{\sqrt{A_{N,d,r}}} \label{bound:beta}
\end{align}
Let $\delta_{3r},\delta_{2r}'$ be such that:
\begin{align}
    \delta_{3r} &\leq \frac{1}{10^4} \cdot \frac{1}{\sqrt{r}}
    \min \bracket{\kappa^{-2}, \gamma^{-1}} \label{bound:delta3r} \\
    \delta_{2r}' &\leq \frac{1}{5\cdot 10^5} \cdot \frac{1}{\sqrt{r}} \kappa^{-2} \label{bound:delta2r}
\end{align}
Let $ \Bar{\delta}_{3r} = \max \bracket{\bracket{\frac{\delta_{3r}}{50}}^2, \frac{\delta_{3r}}{50\sqrt{k}}} $. Suppose we have
\begin{align}
    N \geq \frac{2(\frac{dr}{k} + r^2)}{c\Bar{\delta}_{3r}^2}
    \log \left( \frac{9 \sqrt{r}}{\Bar{\delta}_{3r}} \right) \text{ and } n \geq  \frac{162r^2 }{c (\delta'_{2r})^2}\log (\frac{27}{\delta'_{2r}}). \label{eq:sample_complexity}
\end{align}
Then, the iterates of \autoref{alg:altminrep} satisfy 
\begin{align*}
    \dist{\bU_T}{\bU^*} + \dist{\bV_T}{\bV^*} \leq \bracket{4 \cdot 10^4 \sqrt{r} \kappa^2 (\delta_{3r} + \delta'_{2r}) }^T + 8 \beta B_{N,d,r},
\end{align*}
with probability at least:
\begin{align*}
    1 - C_1 \text{exp}\left(-\frac{cm\Bar{\delta}_{3r}^2}{2}\right) - 2 k \cdot \text{exp} \bracket{-dr} -  3C_1 \cdot T k\cdot \text{exp}\bracket{- \frac{cn_2 (\delta'_{2r})^2}{162}} 
\end{align*}
\end{theorem}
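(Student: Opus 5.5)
The argument has three parts: initialization, a one-step contraction, and an induction. It follows the AltMin analysis of \citet{jain2013lowrank}, with two changes: the global updates are controlled through the $3r$-GRIP of Proposition \ref{prop:GRIP}, and the task heterogeneity is carried as an additive $\beta$-term.

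\textbf{Step 1: probability events.} First we fix the good events. By Proposition \ref{prop:GRIP}, with the stated $N$ and $\Bar{\delta}_{3r}$, the large-batch ensemble satisfies $3r$-GRIP with constant $\delta_{3r}$. For each $i$ and $t$, the small batch of size $n$ gives a restricted isometry on the $r^2$-dimensional family $\{\bU_t\bLam\bV_t^\top\}$ with constant $\delta'_{2r}$. Because $\bU_t,\bV_t$ are independent of the fresh batch, an $\epsilon$-net over $r\times r$ matrices suffices, and this explains the $r^2\log(1/\delta')/\delta'^2$ scaling. A union bound over $t,i$ gives the $3C_1Tk$ term. We also need an operator-norm concentration for $\hat\bM-\frac1k\sum_i{\bM^i}^*$; this is where $A_{N,d,r}=C_2(N+rd)/N$ and the $2k\exp(-dr)$ term come from.

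\textbf{Step 2: initialization.} Write $\bar\bM=\frac1k\sum_i{\bM^i}^*$. Since every ${\bM^i}^*$ has column space within $\beta$ of $\bU^*$, we have $\|(I-\bU^*\bU^{*\top})\bar\bM\|_2\le \beta\max_i\sigma_1({\bM^i}^*)$. Wedin's $\sin\Theta$ theorem then gives
\[
\dist{\bU_0}{\bU^*}\lesssim \gamma\bigl(\beta+\|\hat\bM-\bar\bM\|_2/\max_i\sigma_1\bigr),
\]
and the same holds for $\bV_0$. The concentration bound makes the second term at most $\sqrt{A_{N,d,r}}\,\delta_{3r}$-small. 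The assumptions \eqref{bound:beta} and \eqref{bound:delta3r} then give initial distances below a small constant over $\sqrt r\kappa^2$, which is the basin of attraction.

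\textbf{Step 3: one-step contraction.} Decompose ${\bM^i}^*=\bU^*{\bLam^i}^*\bV^{*\top}+\bE^i$ with $\|\bE^i\|_F\lesssim\beta\sqrt r\,\sigma_1$. The $\Lambda$-step is a least-squares problem on the small batch. Using $\delta'_{2r}$-RIP, write its solution as $\bLam^i_{t+1}=\bU_t^\top{\bM^i}^*\bV_t+\bF^i$, where $\bF^i$ is bounded by $\delta'_{2r}$ times the current error plus $\beta$-terms. Next, for the $\hat\bV$-step, write the normal equations as $\hat\bV_{t+1}=\bV^*\bigl(\sum_i{\bLam^i}^{*\top}\bU^{*\top}\bU_t\bLam^i_{t+1}\bigr)\bB^{-1}-\bR$. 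This follows Jain et al.'s ``power method with error'' decomposition, but uses the $3r$-GRIP of Definition \ref{def:GRIP} in place of RIP. Here $\bB$ is the GRIP-perturbed Gram operator, and $\|\bR\|_F$ is bounded by $\delta_{3r}\sqrt r$ times $\dist{\bU_t}{\bU^*}$ plus $\beta$-contributions. The QR step divides by $\sigma_{\min}$ of the leading factor, which is $\gtrsim\min_i\sigma_r({\bM^i}^*)$; this is the source of the $\kappa^2$. The same argument applies to $\hat\bU$. Together these give
\[
d_{t+1}\le 4\cdot10^4\sqrt r\kappa^2(\delta_{3r}+\delta'_{2r})\,d_t+8\beta B_{N,d,r}\cdot(\text{contraction slack}),
\]
where $d_t=\dist{\bU_t}{\bU^*}+\dist{\bV_t}{\bV^*}$. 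Unrolling the recursion, and noting that the contraction factor is below $1/2$ by \eqref{bound:delta3r}--\eqref{bound:delta2r}, gives the claim.

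\textbf{Main obstacle.} The hardest part is the $\hat\bV$-step. The normal equations couple all $k$ users through a $dr\times dr$ operator $\sum_i(\bLam^i\bU_t^\top\otimes\cdot)$. To keep the $\beta$-error additive rather than multiplied by $k$ or $d$, I would bound this operator's minimum eigenvalue by applying GRIP to $\bU_t\bLam^i_{t+1}\bW^\top$ for arbitrary $\bW$. The aggregate $\sigma_r$ is then tied to $\gamma$, which requires showing that $\frac1k\sum_i\bLam^i_{t+1}\bLam^{i\top}_{t+1}$ stays well conditioned along the iterates.
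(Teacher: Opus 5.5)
Your outline follows the paper's architecture: spectral initialization, a $\bLam$-step on fresh small batches, GRIP-based normal equations for $\bV$ and $\bU$, a QR bound, and unrolling. Your Wedin-based initialization is a reasonable variant. The gap is in the global step, where the ingredient specific to the heterogeneous setting is missing.

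There the heterogeneity enters through sampled cross sums such as $\sum_{i,j}\langle \bG^i_j, \bU_t\bLam^i_{t+1}\bX^\top\rangle\langle \bG^i_j, {\bM^i}^* - \bU^*{\bLam^i}^*\bV^{*\top}\rangle$. In the paper this is, for example, the term carrying $(\bU\bU^\top-\bI)(\bU^*-\bUis)$ in $\bF_2$. The second matrix has column space in $\mathrm{span}(\bU^*,\bUis)$, which changes with $i$. So these collections are not of the form $\bU\bLam^i\bV^\top$ with shared $3r$-column factors, and GRIP says nothing about them.

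The paper controls them by Cauchy--Schwarz together with a separate uniform upper isometry, the sub-isometric property (Definition~\ref{def:sub_isometric}, Lemma~\ref{lemma:sub_isometric}): $\sum_{i,j}\langle\bG^i_j,\bX^i\rangle^2\le kN\,A_{N,d,r}\max_i\|\bX^i\|_F^2$ for arbitrary rank-$r$ matrices $\bX^1,\dots,\bX^k$. That is where $\sqrt{A_{N,d,r}}$ in $B_{N,d,r}$ comes from, and where the failure probability $2k\exp(-dr)$ comes from (one application per user). Neither comes from an operator-norm bound on $\hat{\bM}$, as you suggest. Since $A_{N,d,r}\approx C_2(1+rd/N)$ and the per-user $N$ may be far below $rd$, this is not a cosmetic constant.

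You assert the additive $8\beta B_{N,d,r}$ in the recursion, but nothing in your toolkit bounds the $\beta$-part of $\|\bR\|_F$. That toolkit is GRIP, the small-batch RIP, and a bound on $\|\hat{\bM}-\frac1k\sum_i{\bM^i}^*\|_2$. One alternative that should also work: keep all heterogeneity in the data-independent matrices ${\bM^i}^*-\bU^*{\bLam^i}^*\bV^{*\top}$, and bound $\sum_{i,j}\langle\bG^i_j,\cdot\rangle^2$ for these fixed matrices by pointwise concentration. Then route every iterate-dependent piece through shared factors so that GRIP applies. Some such argument has to be given.

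Two further steps fail as stated.

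First, RIP on the $r^2$-dimensional family $\{\bU_t\bLam\bV_t^\top\}$ only lower-bounds the Gram matrix of the $\bLam$-step (Lemma~\ref{lemma:lambdapart_bound1}). To make $\bLam^i_{t+1}-\bU_t^\top{\bM^i}^*\bV_t$ of order $\delta'_{2r}$ times the current error, you must control the sampled cross term with the residual ${\bM^i}^*-\bU_t\bU_t^\top{\bM^i}^*\bV_t\bV_t^\top$, which lies outside that family. The paper imposes RIP on the augmented $2r$-column factors $\tilde{\bU}_{\ell,t},\tilde{\bV}_{\ell,t}$, $\ell=0,1,2$, and uses Lemma~\ref{lem:inner_2}. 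This is the origin of the subscript $2r$ and of the factor $3$ in $3C_1Tk$, which your union bound does not explain.

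Second, your accounting of $\kappa^2$ and $\gamma$ is off. In the paper's normalization $\hat{\bV}_{t+1}=\bV^*(\bD_t^{-1})^\top-\matop{\bH_t}$ with $\bD_t=\bV^{*\top}\bV_t$, the QR factor satisfies $\sigma_{\min}(\bR_{t+1})\ge 1-\|\matop{\bH_t}\|_2$ (Lemma~\ref{lemma:R_bound}). So the QR step divides by nothing like $\sigma_r({\bM^i}^*)$. In your own display the leading factor already contains $\bB^{-1}$ and is likewise of order one.

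The $\kappa^2$ is instead the ratio of two quantities: the prefactor $kN\max_i\|{\bLam^i}^*\|_2\|{\bLam^i}^*\|_F$ in the bounds on $\bF_1,\bF_2$, and the Gram bound $\|(\sum_i\bB^i)^{-1}\|_2\le 8/(kN\min_i\sigma_{\min}({\bLam^i}^*)^2)$ of Lemma~\ref{lem:sumB_bound}. That Gram bound only needs each $\bLam^i_{t+1}$ to be within a factor $2$ of ${\bLam^i}^*$ in every singular value (Lemma~\ref{lemma:lambda_norm_bound}), because $\lambda_{\min}(\sum_i\bLam^{i\top}\bLam^i)\ge k\min_i\sigma_{\min}(\bLam^i)^2$. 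This per-user condition holds once $\kappa$ times the current distance plus $\beta$ is small. So the basin you need is of order $1/\kappa$ (cf.\ Lemma~\ref{lemma:initialization}), not $1/(\sqrt r\kappa^2)$.

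In particular, $\gamma$ plays no role in the iterations; it enters only through the spectral initialization. Tying the $\bV$-step Gram operator to $\gamma$ would put $\gamma$ into the contraction factor, which the statement does not have.
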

\begin{proof}
    See \autoref{proof:main_theorem}
\end{proof}

By the main theorem , the concentration parameters can be chosen as
\begin{align*}
    \delta_{3r} = \frac{1}{10^5} \frac{1}{\sqrt{r}} \min \bracket{\kappa^{-2}, \gamma^{-1}},
    \qquad 
    \delta_{2r}' = \frac{1}{5 \cdot 10^5} \frac{1}{\sqrt{r}}\kappa^{-2} ,
\end{align*}
which in turn yields:
\begin{align*}
    N \;\gtrsim\;  
    \left( \frac{dr}{k} + r^2 \right)
    \min \bracket{
    r^2 \max \bracket{\kappa^8, \gamma^4},
    rk \max \bracket{\kappa^4, \gamma^2}
    },
    \qquad 
    n \;\gtrsim\; \kappa^4 r^3.
\end{align*}
Hence, for $\eps>0$, we need in total
\begin{align*}
    k(N+Tn_2) = 
    O
    \left(
    (dr+kr^2)
    \cdot
    \min \bracket{
    r^2 \max \bracket{\kappa^8, \gamma^4},
    rk \max \bracket{\kappa^4, \gamma^2}
    }
    + kr^4 \cdot \kappa^4 \cdot 
    \log \left( \frac{1}{\eps}\right)
    \right)
\end{align*}
datapoints to obtain 
\begin{align*}
    \dist{\bU_T}{\bU^*} + \dist{\bV_T}{\bV^*} \leq \eps + 8 \beta B_{N,d,r}.
\end{align*}
Plugging $\beta$ from \ref{beta:def}, we obtain the form in our informal version.

\subsection{Proof of \autoref{theorem:main}}
\label{proof:main_theorem}

We first introduced notations that will be frequently used throughout the proof.

For a matrix $\bA \in R^{m \times n}$, we denote by 
$\vecop{\bA} \in R^{mn}$ its vectorization obtained by concatenating its columns. 
Conversely, for a vector $\bv \in R^{mn}$, we write 
$\matop{\bv} \in R^{m \times n}$ for the matrix such that 
$\vecop{\matop{\bv}} = \bv$. 

We use $\|\bA\|_F$ and $\|\bA\|_2$ to denote the Frobenius and spectral norms, respectively. 
Note that $\|\vecop{\bA}\|_2 = \|\bA\|_F$. 
The singular values of $\bA$ are written as $\sigma_1(\bA) \ge \sigma_2(\bA) \ge \cdots$, 
with $\sigma_{\min}(\bA),\sigma_{\max}(\bA)$ denoting the smallest and largest ones respectively. We have the condition number is $\kappa(\bA) = \frac{\sigma_{\max}(\bA)}{\sigma_{\min}(\bA)}$.

We will repeatedly use the inequalities
\begin{gather*}
    \sigma_{min}(\bA)\norm{\bB}_F \leq \norm{\bA \bB}_F \leq 
    \norm{\bA}_2 \norm{\bB}_F,  \\
    |\sigma_{i}(\bX) - \sigma_{i}(\bY)| \leq 
    \sigma_{\max}(\bX-\bY),
\end{gather*}
second of which is called Weil's inequality.
For matrices $\bA, \bB \in R^{m \times n}$, their inner product is
\begin{align*}
    \inner{\bA}{\bB} = \inner{\vecop{\bA}}{\vecop{\bB}} = 
    \tr{\bA^{\top} \bB}.
\end{align*}

We denote by $\text{B}(r,n)$ the Euclidean ball of radius $r$ in $R^n$, 
and by $\text{St}(d,r)$ the Stiefel manifold, i.e.\ the set of 
orthonormal matrices in $R^{d \times r}$. $\text{col}(\bX)$ and $\text{row}(\bX)$ mean the column and row subspaces of the matrix $\bX$, respectively. 

We make frequent use of Kronecker products, relying on the identities
\begin{align}
    (\bA \otimes \bB)\,\vecop{\bC} 
    &= \vecop{\bB \bC \bA^{\top}}, \label{id:kronocker} \\
    (\bA \otimes \bB)^{\top} 
    &= \bA^{\top} \otimes \bB^{\top}. 
    \label{id:kronocker_product}
\end{align}
Finally, for $\bA \in R^{m \times r_1}$ and 
$\bB \in R^{m \times r_2}$, we use $[\bA, \bB]$ 
to denote their concatenation across column axis.

WLOG, we can assume that $\bU^*, \bV^* \in \text{St}(d,r)$.
Since the column and row spaces of $(\bM^i)^*$ are close to those of $\bU^*$ and $\bV^*$ in subspace distance, 
one can choose bases representing these spaces that are also close to $\bU^*$ and $\bV^*$ in spectral norm. 
The following lemma formalizes this observation. 

\begin{lemma}
\label{lemma:rep}
There is a factorization
\begin{align*}
    {\bM^i}^* = \bUis
    {\bLam^i}^* {\bVis}^{\top} \text{ for }i=1,\cdots,k
\end{align*}
where $\bUis, \bVis \in R^{d \times r}, {\bLam^i}^* \in R^{r \times r}$ satisfy 
\begin{align}
    \norm{\bU^* - \bUis}_2, \norm{\bV^* - \bVis}_2 \leq \beta.
    \label{ineq:U_difference} \\
    \max_{i=1}^k \kappa \bracket{{\bLam^i}^*} \leq 
    \sqrt{2} \max_{i=1}^k \kappa \bracket{{\bM^i}^*} = \sqrt{2} \kappa. 
    \label{bound:condition_number}
\end{align}
\end{lemma}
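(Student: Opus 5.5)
The plan is to build the factorization directly from the orthonormal bases of $\text{col}({\bM^i}^*)$ and $\text{row}({\bM^i}^*)$, taking $\bUis,\bVis$ to be the \emph{projections} of $\bU^*,\bV^*$ onto these subspaces rather than orthonormal bases. Then the spectral-norm error is exactly the sine of the largest principal angle, with no Procrustes loss. Fix $i$ and let $\bQ_i,\bP_i\in\text{St}(d,r)$ be orthonormal bases of $\text{col}({\bM^i}^*)$ and $\text{row}({\bM^i}^*)$. This uses that ${\bM^i}^*$ has rank exactly $r$, which is implicit in $\kappa<\infty$. Define
\begin{align*}
\bC_i := \bQ_i^{\top}\bU^*, \quad \bD_i := \bP_i^{\top}\bV^*, \quad \bUis := \bQ_i\bC_i = \bQ_i\bQ_i^{\top}\bU^*, \quad \bVis := \bP_i\bD_i = \bP_i\bP_i^{\top}\bV^* .
\end{align*}

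\textbf{Step 1 (distance bound).} We have $\bU^* - \bUis = (I-\bQ_i\bQ_i^{\top})\bU^*$. By the definition of subspace distance in Section~\ref{sec:sub_dist}, its spectral norm equals $\dist{\bU^*}{\text{col}({\bM^i}^*)}\le\beta$. The same argument gives $\norm{\bV^*-\bVis}_2\le\beta$, which proves \eqref{ineq:U_difference}.

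\textbf{Step 2 (invertibility and the factorization).} The singular values of $\bC_i$ are the cosines $\cos\btheta_j$ of the principal angles between $\text{span}(\bU^*)$ and $\text{col}({\bM^i}^*)$. They are therefore all at least $\sqrt{1-\beta^2}>0$, since \eqref{bound:beta} forces $\beta<1$, and at most $1$. The same holds for $\bD_i$. Hence $\bC_i,\bD_i$ are invertible, and we set
\begin{align*}
{\bLam^i}^* := \bC_i^{-1}\,\big(\bQ_i^{\top}{\bM^i}^*\bP_i\big)\,\bD_i^{-\top}.
\end{align*}
Since ${\bM^i}^* = \bQ_i\bQ_i^{\top}{\bM^i}^*\bP_i\bP_i^{\top}$, substituting gives $\bUis{\bLam^i}^*{\bVis}^{\top} = \bQ_i\bC_i{\bLam^i}^*\bD_i^{\top}\bP_i^{\top} = {\bM^i}^*$, as required.

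\textbf{Step 3 (conditioning).} The $r\times r$ core $\bQ_i^{\top}{\bM^i}^*\bP_i$ has the same nonzero singular values as ${\bM^i}^*$, since $\bQ_i,\bP_i$ have orthonormal columns spanning the relevant subspaces. Sub-multiplicativity of the extreme singular values then gives
\begin{align*}
\kappa({\bLam^i}^*) \le \kappa(\bC_i)\,\kappa({\bM^i}^*)\,\kappa(\bD_i) \le \frac{\kappa({\bM^i}^*)}{1-\beta^2}.
\end{align*}
This step is the only place a quantitative condition on $\beta$ enters, and I expect it to be the main point to check. We need $1/(1-\beta^2)\le\sqrt2$, i.e.\ $\beta^2\le 1-1/\sqrt2\approx 0.29$. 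This follows comfortably from \eqref{bound:beta}, since $r,\kappa,\gamma\ge1$ and $A_{N,d,r}\ge C_2$ give $\beta\le 1/(4\cdot 10^4\sqrt{C_2})$, assuming $C_2\ge 1$ (or simply $C_2$ not astronomically small). Taking the maximum over $i$ and using $\kappa({\bM^i}^*)\le\kappa$, which holds because $\kappa$ compares the largest $\sigma_1$ with the smallest $\sigma_r$ across users, yields \eqref{bound:condition_number}.
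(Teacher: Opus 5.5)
Your proposal is correct and is essentially the paper's proof. The paper takes $\bQ_i,\bP_i$ to be the SVD factors of ${\bM^i}^*$, so your core $\bQ_i^{\top}{\bM^i}^*\bP_i$ is its diagonal singular-value matrix; it sets $\bUis=\bQ_i\bQ_i^{\top}\bU^*$ exactly as you do and absorbs the inverses of $\bQ_i^{\top}\bU^*$ and $\bP_i^{\top}\bV^*$ into ${\bLam^i}^*$. The only differences are cosmetic: the paper uses Lemma~\ref{lem:vstar_v} to get the weaker lower bound $\sqrt{1-\beta}$ on the smallest singular values of these $r\times r$ matrices, where you use the exact cosine bound $\sqrt{1-\beta^2}$, and it records $\sigma_{\max}({\bLam^i}^*)\le\sqrt{2}\,\sigma_{\max}({\bM^i}^*)$ and $\sigma_{\min}({\bLam^i}^*)\ge\sigma_{\min}({\bM^i}^*)$ separately (Lemma~\ref{lemma:cond_bound}), which later steps rely on and which your construction also yields since $\norm{\bC_i}_2,\norm{\bD_i}_2\le 1$.
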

\begin{proof}
    See \autoref{proof:rep}
\end{proof}
We slightly abuse notation by reusing ${\bU^i}^*$ and ${\bV^i}^*$. 
These should not be confused with the matrices appearing in \autoref{def:task_similarity}, where they denote arbitrary orthonormal bases. 
Here, ${\bU^i}^*$ and ${\bV^i}^*$ refer to specific bases chosen to be close to the shared representations $\bU^*$ and $\bV^*$, respectively, in spectral distance.

First, we want to separate out the concentration arguments from the main body of the proof to have a more readible and user-friendly proof. Let $T$ be the number of iterations of the algorithm.

From \autoref{prop:GRIP} we have that $\{\bG^i_j\}$ satisfies 3r-GRIP with coefficient $\delta_{3r}$ with probability at least
\begin{align}
    1 - C_1 \text{exp}\left(-\frac{cm\Bar{\delta}_{3r}^2}{2}\right) \label{eq:grip_prob}
\end{align}
Additionally, from \autoref{lemma:sub_isometric}, we have that 
$\{\bG^i_j\}$ is sub-isometric (see \autoref{def:sub_isometric}) with coefficient $A_{N,d,r}$, with probability at least 

\begin{align}
    1 - 2 k \cdot \text{exp} \bracket{-dr}
    \label{eq:subis_prob}
\end{align}

For each iteration we also need concentration inequalities for client-specific $\bLam$ minimization part. For that we define a notion of $(\bU,\bV)$-RIP in \autoref{def:UV_concentration}.
Define
\[
\begin{aligned}
    &\Tilde{\bU}_{0,t} = \bU_t, 
    &&\Tilde{\bV}_{0,t} = \bV_t, \\
    &\Tilde{\bU}_{1,t} = \bigl[\bU_t, \; \bU_t{\bU_t}^{\top}\bUis\bigr], 
    &&\Tilde{\bV}_{1,t} = \bigl[\bV_t, \; \bV^{t}{\bV_t}^{\top}\bVis - \bVis\bigr], \\
    &\Tilde{\bU}_{2,t} = \bigl[\bU_t, \; \bU_t{\bU_t}^{\top}\bUis - \bUis\bigr], 
    &&\Tilde{\bV}_{2,t} = \bigl[\bV_t, \; \bVis\bigr],
\end{aligned}
\]
where $[\bA, \bB]$ is concatenation across column axis.

The next lemma shows that across all iterations the "local" concentrations happen with high probability.
\begin{lemma}
With probability at least:
\begin{align} 
1-3C \cdot Tk\cdot \text{exp}\bracket{- \frac{cn_2 (\delta'_{2r})^2}{162}} 
\label{eq:uv_prob}
\end{align}
the ensemble $\{\bG^i_{j,t}\}_{j=1}^{n}$ satisfies 
$(\Tilde{\bU}_{\ell,t}, \Tilde{\bV}_{\ell,t})$-RIP with coefficient 
$\delta'_{2r}$ for all $\ell =0,1,2, i={1,\cdots,k}$ and $t=0,1,\cdots T-1$.
\end{lemma}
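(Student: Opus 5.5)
The plan is a fixed-pair concentration bound followed by a union bound, with the independence supplied by the data-splitting scheme. The definition of $(\bU,\bV)$-RIP (\autoref{def:UV_concentration}) asks that, for fixed matrices $\tilde\bU,\tilde\bV$ with at most $2r$ columns, the small-batch ensemble act as a near-isometry on the low-dimensional subspace $\{\tilde\bU \bX \tilde\bV^{\top}\}$, where $\bX$ ranges over matrices of size at most $2r\times 2r$. For one fixed pair $(\tilde\bU,\tilde\bV)$ this subspace has dimension at most $4r^2$. Writing $\langle \bG, \tilde\bU\bX\tilde\bV^\top\rangle = \langle \tilde\bU^\top \bG \tilde\bV, \bX\rangle$, the matrix $\tilde\bU^\top\bG\tilde\bV$ is Gaussian with covariance governed by $\tilde\bV^\top\tilde\bV\otimes\tilde\bU^\top\tilde\bU$. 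After normalizing, the claim reduces to showing that $\frac1n\sum_j \vecop{\tilde\bU^\top\bG_j\tilde\bV}\vecop{\tilde\bU^\top\bG_j\tilde\bV}^\top$ is within $\delta'_{2r}$ of its mean in operator norm, or equivalently in the restricted quadratic form sense the definition uses.

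This fixed-pair bound follows from a standard $\epsilon$-net argument. Cover the unit sphere of the $\le 4r^2$-dimensional parameter space by a net of size $(27/\delta'_{2r})^{O(r^2)}$. At each net point, $\langle\bG_j,\cdot\rangle^2$ is a squared sub-Gaussian (sub-exponential) variable, so Bernstein's inequality gives deviation $\delta'_{2r}/c'$ with probability $1-2\exp(-cn(\delta'_{2r})^2/162)$. A union bound over the net, followed by the usual net-to-sphere extension, yields $(\tilde\bU,\tilde\bV)$-RIP with failure probability at most $C\exp(-cn(\delta'_{2r})^2/162)$. This holds once $n \ge \frac{162 r^2}{c(\delta'_{2r})^2}\log\frac{27}{\delta'_{2r}}$, which is exactly the condition in \eqref{eq:sample_complexity}; the constants $162$ and $27$ are meant to absorb the net cardinality. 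This is the same machinery behind \autoref{prop:GRIP} in the case $k=1$, with the dimension count $dr$ replaced by $r^2$ because the pair is fixed.

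The step I expect to be the main obstacle is that $\tilde\bU_{\ell,t},\tilde\bV_{\ell,t}$ are random, since they depend on $\bU_t,\bV_t$. The data splitting is what resolves this. $\bU_t,\bV_t$ are functions only of the large batches and the small batches $l^i_s$ for $s<t$. $\bUis,\bVis$ are deterministic by \autoref{lemma:rep}. The fresh batch $\{\bG^i_{j,t}\}_j$ is therefore independent of $(\tilde\bU_{\ell,t},\tilde\bV_{\ell,t})$. Conditioning on the past, I apply the fixed-pair bound and then take expectation, so the conditional failure probability bound becomes unconditional. Because the definition is presumably invariant under normalization, the non-orthonormal concatenations (for example $[\bU_t,\bU_t\bU_t^\top\bUis-\bUis]$) require no extra care, apart from possibly reducing to an orthonormal basis of their column span of dimension $\le 2r$.

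The proof then concludes with a union bound over $\ell\in\{0,1,2\}$, $i\in[k]$ and $t\in\{0,\dots,T-1\}$. That is $3Tk$ events, each failing with probability at most $C\exp(-cn(\delta'_{2r})^2/162)$, which gives the stated probability \eqref{eq:uv_prob}.
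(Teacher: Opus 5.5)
Your proposal is correct and follows the paper's proof. The paper uses the same three ingredients: independence of the fresh batch $\{\bG^i_{j,t}\}_{j}$ from $(\Tilde{\bU}_{\ell,t},\Tilde{\bV}_{\ell,t})$ via the data splitting, the fixed-pair $\epsilon$-net concentration it packages as \autoref{prop:UV_concentration}, and a union bound over the $3Tk$ triples $(\ell,i,t)$. Your reduction to an orthonormal basis of each column span is in fact slightly more careful than the paper's reduction, which assumes invertible $\bR_1,\bR_2$, even though for example $[\bU_t,\bU_t\bU_t^{\top}\bUis]$ has rank at most $r$. One small caveat, which the paper shares: some concatenations have rank up to $2r$, so your net lives in dimension $4r^2$ and has size $(27/\delta'_{2r})^{4r^2}$, and the per-pair failure bound $C\exp(-cn(\delta'_{2r})^2/162)$ then needs $n \geq 648\,r^2\log(27/\delta'_{2r})/(c(\delta'_{2r})^2)$ rather than ``exactly'' the $162r^2$ threshold in \eqref{eq:sample_complexity}; this is only a constant factor and does not affect the lemma as stated.
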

\begin{proof}
Since $(\bU_t, \bV_t)$ depends only on $\{\bG^i_j\}$ and $\{\bG^{i}_{j,s}\}$ for $s\leq t-1$, it follows that $\{\bG^i_{j,t}\}$ is independent of $(\bU_t, \bV_t)$. Therefore, for a fixed $\ell, i, t$, from \autoref{prop:UV_concentration} we have that with probability at least:
\begin{align*}
    1-3C \cdot \text{exp}\bracket{- \frac{n c(\delta'_{2r})^2)}{162}}, 
\end{align*}
the ensemble $\{\bG^i_{j,t}\}_{j=1}^{n}$ satisfies 
$(\Tilde{\bU}_{\ell,t}, \Tilde{\bV}_{\ell,t}$-RIP. Applying union bound completes the proof.
\end{proof}
Finally, using union bound we obtain that with probability at least:
\begin{align*}
    1 - C \text{exp}\left(-\frac{cm\Bar{\delta}_{3r}^2}{2}\right) - C k \text{exp} \bracket{-dr} - 3C \cdot Tk \cdot \text{exp}\bracket{- \frac{n c(\delta'_{2r})^2)}{162}} 
\end{align*}
we have that:

\fbox{%
  \begin{minipage}{0.9\textwidth}
  \begin{gather}
      \{\bG^i_j\} \text{ satisfies 3r-GRIP with coefficient }\delta_{3r} \label{eq:grip} \\
      \text{ and } \notag \\
      \{\bG^i_j\} \text{ is sub-isometric with coefficient } A_{N,d,r} \\
      \text{ and } \notag \\
      \{\bG^i_{j,t}\}_{j=1}^{n} \text{ satisfies }(\Tilde{\bU}_{\ell,t}, \Tilde{\bV}_{\ell,t}\text{-RIP with coefficient }\delta'_{2r} \notag \\
      \text{for all } \ell \in \{0,1,2\}, i \in \{1,\cdots,k\}  \text{ and } t \in \{0,1,\cdots T-1\}. 
      \label{eq:local_concentration}
  \end{gather}
  \end{minipage}
}

\begin{gather*}
\textbf{From now on we work on this high-probability event.}
\end{gather*}

We are going to inductively prove the following proposition which will complete the proof of the main theorem:
\begin{proposition}
\label{prop:induction}
Given the initialization from \autoref{lemma:initialization}, for all $t = 0,1,\cdots T$ the following inequalities hold:
\begin{gather}
    \norm{\bLam^{i}_{t+1} - {\bU_t}^{\top} \bUis {\bLam^i}^{*} {\bVis}^{\top} \bV_t}_2 \leq 
72 \delta'_{2r}
\norm{{\bLam^i}^*}_F
\left(
\dist{\bU_t}{\bUis} 
+
\dist{\bV_t}{\bVis}
\right) \label{bound:lambda}\\
\dist{\bV_{t+1}}{\bV^*} \leq
        2 \cdot 10^4 \sqrt{r} \kappa^2 (\delta_{3r} + \delta'_{2r})
        \left( \dist{\bU_t}{\bU^*} + \dist{\bV_t}{\bV^*} \right) + 
        2 \beta B_{N,d,r}\label{bound:Vind}\\
\dist{\bU_{t+1}}{\bU^*} \leq
        2 \cdot 10^4 \sqrt{r} \kappa^2 (\delta_{3r} + \delta'_{2r}) \left( \dist{\bU_t}{\bU^*} + \dist{\bV_t}{\bV^*} \right) + 
        2 \beta B_{N,d,r}
\end{gather} \label{bound:Uind}
\end{proposition}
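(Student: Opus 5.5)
We prove Proposition~\ref{prop:induction} by induction on $t$, working throughout on the high-probability event in the box: $3r$-GRIP with constant $\delta_{3r}$, sub-isometry with constant $A_{N,d,r}$, and the local $(\Tilde{\bU}_{\ell,t},\Tilde{\bV}_{\ell,t})$-RIP with constant $\delta'_{2r}$. The base case is the initialization, via \autoref{lemma:initialization}. It should give $\dist{\bU_0}{\bU^*}+\dist{\bV_0}{\bV^*}$ small, of order $\sqrt r\,\gamma\,\delta_{3r}+\kappa\gamma\beta\sqrt{A_{N,d,r}}$. Under \eqref{bound:beta} and \eqref{bound:delta3r} this is below a small absolute constant, say $1/100$. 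For the inductive step, assume the distances at time $t$ are below that constant. Then \eqref{bound:Vind} and the analogous bound for $\bU$ show the invariant is preserved, since the contraction factor $2\cdot10^4\sqrt r\kappa^2(\delta_{3r}+\delta'_{2r})\le 1/2$ and $2\beta B_{N,d,r}$ is small by \eqref{bound:beta}.

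\textbf{Step 1: the $\bLam$ update \eqref{bound:lambda}.} The minimization is a least-squares problem in $r^2$ variables. Its normal equations read $\frac1n\sum_j\langle \bG^i_{j,t},\bU_t\bLam\bV_t^\top\rangle\,\bU_t^\top\bG^i_{j,t}\bV_t=\frac1n\sum_j y^i_{j,t}\,\bU_t^\top \bG^i_{j,t}\bV_t$. Substitute $\bLam=\bU_t^\top\bUis{\bLam^i}^*{\bVis}^\top\bV_t+\bE$ and split ${\bM^i}^*-\bU_t\bU_t^\top{\bM^i}^*\bV_t\bV_t^\top$ into two pieces, $(\bU_t\bU_t^\top\bUis-\bUis){\bLam^i}^*{\bVis}^\top$ and $\bU_t\bU_t^\top\bUis{\bLam^i}^*(\bV_t\bV_t^\top\bVis-\bVis)^\top$. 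These pieces are exactly what the concatenations $\Tilde{\bU}_{1,t},\Tilde{\bV}_{1,t},\Tilde{\bU}_{2,t},\Tilde{\bV}_{2,t}$ were built for. The $(\Tilde\bU_{0,t},\Tilde\bV_{0,t})$-RIP makes the left operator $\delta'_{2r}$-close to the identity. The cross terms are inner products of $\bU_t\bLam\bV_t^\top$ with matrices orthogonal to it in Frobenius inner product. The $(\Tilde\bU_{\ell,t},\Tilde\bV_{\ell,t})$-RIP, applied by polarization, bounds them by $\delta'_{2r}\|{\bLam^i}^*\|_F\,\dist{\cdot}{\cdot}$ times constants. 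Inverting the operator then yields \eqref{bound:lambda}.

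\textbf{Step 2: the $\bV$ update \eqref{bound:Vind}.} The $\bU$ update is symmetric, since it uses $\bV_t$. With $\bU_t$ and the $\bLam^i_{t+1}$ fixed, $\hat\bV_{t+1}$ solves a global least-squares problem. Vectorize via \eqref{id:kronocker}: the objective becomes $\sum_{i,j}(\langle \vecop{\bG^i_j},(\bI\otimes\bU_t\bLam^i_{t+1})\vecop{\bV^\top}\rangle-y^i_j)^2$. Compare $\hat\bV_{t+1}$ with the ideal solution in which each ${\bM^i}^*$ is replaced by $\bU^*{\bLam^i}^*{\bV^*}^\top$. In the ideal case the exact-gradient fixed point is $\bV^*$ times an $r\times r$ matrix $\bR=\frac1k\sum_i(\bU_t\bLam^i_{t+1})^\top\bU^*{\bLam^i}^*$ multiplied by an inverse Gram matrix. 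The error decomposes into three parts. The first is a GRIP deviation term, bounded by $\delta_{3r}$ times the norm of a rank-$\le 3r$ object, using GRIP on $[\bU_t,\bU^*]$-type factors. The second is a $\bLam$-mismatch term, controlled by Step 1 and hence contributing $\delta'_{2r}$. The third is the heterogeneity term ${\bM^i}^*-\bU^*{\bLam^i}^*{\bV^*}^\top$, which has norm $\lesssim\beta\|{\bLam^i}^*\|$ by Lemma~\ref{lemma:rep}. This last term enters through the full-dimensional sample operator, so it is bounded with sub-isometry rather than GRIP, which is the source of the $\sqrt{A_{N,d,r}}$ in $B_{N,d,r}$. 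Finally, $\dist{\bV_{t+1}}{\bV^*}=\|(\bI-\bV^*\bV^{*\top})\hat\bV_{t+1}\|_2\,\|\bR_{\mathsf{QR}}^{-1}\|_2$. The numerator gets the bounds above. The denominator requires $\sigma_r(\hat\bV_{t+1})\gtrsim \sigma_{\min}({\bLam^i}^*)$, which follows from the inductive hypothesis and \eqref{bound:condition_number}. This produces the $\kappa^2$ factor, and the Frobenius-to-spectral conversion produces the $\sqrt r$.

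\textbf{Main obstacle.} I expect the lower bound on $\sigma_r$ of the aggregated Gram matrix $\frac1k\sum_i(\bU_t\bLam^i_{t+1})^\top(\bU_t\bLam^i_{t+1})$ and of $\bR$ to be the hardest step. The individual ${\bLam^i}^*$ may be far from each other, so well-conditioning must come from each summand being $\succeq\sigma_{\min}({\bLam^i}^*)^2(1-O(\dist{\cdot}{\cdot}))$, with Weyl's inequality used per user. A second difficulty is that GRIP only controls deviations relative to $\max_i\|\cdot\|_F^2$, not per user. I would handle this by normalizing with the worst-case $\sigma_1$, which explains the $\kappa^2$ rather than $\kappa$.
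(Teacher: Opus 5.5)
Your outline is correct and follows the paper's proof closely. The base case is Lemma~\ref{lemma:initialization}, and your Step~1 has the same structure as Proposition~\ref{prop:main_lambda_bound}: the $(\Tilde{\bU}_{0,t},\Tilde{\bV}_{0,t})$-RIP gives Lemma~\ref{lemma:lambdapart_bound1}, and the two orthogonal cross terms are bounded in Lemma~\ref{lemma:lambdapart_bound2} via Lemma~\ref{lem:inner_2}, using the $(\Tilde{\bU}_{1,t},\Tilde{\bV}_{1,t})$- and $(\Tilde{\bU}_{2,t},\Tilde{\bV}_{2,t})$-RIP.

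The one real difference is the reference point in the $\bV$-step. The paper compares $\hat{\bV}_{t+1}$ with $\bV^*(\bD_t^{-1})^{\top}$, where $\bD_t={\bV^*}^{\top}\bV_t$. It bounds the deviation $\bH_t$ by $\|(\sum_i\bB^i_t)^{-1}\|_2(\|\bF_1\|_2+\|\bF_2\|_2)$:
\begin{itemize}
\item $\bF_1$ is driven by the Step~1 mismatch, together with the $\bD_t$ versus $\bD_i$ heterogeneity;
\item in $\bF_2$ the population part vanishes because $\bU_t^{\top}(\bU_t\bU_t^{\top}\bU^*-\bU^*)=0$, leaving a GRIP deviation proportional to $\dist{\bU_t}{\bU^*}$ plus a sub-isometry term.
\end{itemize}
Your population fixed point is $\bV^*\bR^{\top}\mathcal{G}^{-1}$, with your $\bR$ and $\mathcal{G}=\frac1k\sum_i(\bU_t\bLam^i_{t+1})^{\top}\bU_t\bLam^i_{t+1}$. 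It coincides with $\bV^*(\bD_t^{-1})^{\top}$ when $\bLam^i_{t+1}=\bU_t^{\top}\bU^*{\bLam^i}^*\bD_t$, since then $\mathcal{G}=\bR\bD_t$. What your choice buys: the population gradient vanishes identically, so the $\bLam$-mismatch affects the subspace only through sampling error. What it costs: GRIP controls deviations only relative to $\max_i$, so you must separately show that each residual $\bU^*{\bLam^i}^*{\bV^*}^{\top}-\bU_t\bLam^i_{t+1}\mathcal{G}^{-1}\bR{\bV^*}^{\top}$ is $O\big(\dist{\bU_t}{\bU^*}+\dist{\bV_t}{\bV^*}\big)\|{\bLam^i}^*\|_F$. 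This amounts to proving $\mathcal{G}^{-1}\bR\approx\bD_t^{-1}$, which the paper's reference point gives for free.

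\textbf{QR denominator is mis-scaled.} $\hat{\bV}_{t+1}$ is unchanged if all ${\bLam^i}^*$, and hence all $\bLam^i_{t+1}$, are multiplied by a common scalar. So a bound $\sigma_r(\hat{\bV}_{t+1})\gtrsim\sigma_{\min}({\bLam^i}^*)$ has the wrong scale. The correct bound is Lemma~\ref{lemma:R_bound}: $\sigma_r(\hat{\bV}_{t+1})=\sigma_{\min}(\bR_{t+1})\ge 1-\|\matop{\bH_t}\|_2$, using $\|\bD_t\|_2\le 1$.

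\textbf{Where $\sqrt{r}\,\kappa^2$ comes from.} This factor sits in the numerator, not the denominator. It is $\|(\sum_i\bB^i_t)^{-1}\|_2\le 8/(kN\min_i\sigma_{\min}({\bLam^i}^*)^2)$ times cross terms of order $kN\max_i\|{\bLam^i}^*\|_2\|{\bLam^i}^*\|_F$ times the distances. The Gram lower bound here is exactly your ``main obstacle'', and you resolve it as in Lemma~\ref{lem:sumB_bound}.

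\textbf{Invariant scale.} If your invariant is only an absolute constant, rather than the $1/(20\kappa)$ scale of Lemma~\ref{lemma:initialization}, you need a multiplicative bound. Lower-bound $\sigma_r(\bU_t^{\top}\bUis{\bLam^i}^*{\bVis}^{\top}\bV_t)$ by $\sigma_r(\bU_t^{\top}\bUis)\,\sigma_r({\bLam^i}^*)\,\sigma_r({\bVis}^{\top}\bV_t)$. The additive Weyl argument of Lemma~\ref{lemma:lambda_norm_bound} only gives a factor $1-O(\kappa)$ times the distances, which is vacuous at that scale.
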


\subsubsection{Initialization}
We follow the standard initialization used in the matrix sensing literature, which is equivalent performing one step of projected gradient descent from 0 initialization \citet{JainMekaDhillon2010} .

\begin{lemma}
\label{lemma:initialization}
Denote the estimator
\[
\hat{\bM} := \frac{1}{m}\sum_{i=1}^k \sum_{j=1}^{N} \by^i_j \bG^i_j,
\]
and let $\bU_0 \bLam_0 \bV_0^{\top} = \hat{\bM}_r$ be the rank-$r$ SVD of $\bM$.  
Then, the following inequality holds:
\[
\dist{\bU_0}{\bU^*},\dist{\bV_0}{\bV^*} \;\leq\; \frac{1}{20\kappa},
\]
\end{lemma}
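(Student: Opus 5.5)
The plan is to view $\hat{\bM}$ as a perturbation of the average ground truth $\bar{\bM}^* := \frac{1}{k}\sum_{i=1}^k {\bM^i}^*$, which is what $\mathbb{E}[\hat{\bM}]$ equals (here $m = kN$). I will then apply a Wedin-type $\sin\Theta$ argument to the top-$r$ singular subspaces, and combine it with a second bound showing that $\text{col}(\bar{\bM}^*)$ and $\text{row}(\bar{\bM}^*)$ are close to $\bU^*,\bV^*$. For the second piece, Lemma \ref{lemma:rep} lets me write $\bar{\bM}^* = \bU^* \bar{\bLam} {\bV^*}^\top + \bE_\beta$, where $\bar{\bLam} = \frac1k\sum_i {\bLam^i}^*$. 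The error term $\bE_\beta$ collects contributions of the form $(\bUis-\bU^*){\bLam^i}^*{\bVis}^\top$ and the analogous term with $\bV$, so $\|\bE_\beta\|_2 \lesssim \beta \max_i\|{\bLam^i}^*\|_2 \lesssim \beta\,\sigma_1$. Since $\sigma_r(\bar{\bM}^*) = \max_i\sigma_1({\bM^i}^*)/\gamma$, Wedin gives $\dist{\text{col}(\bar{\bM}^*)}{\bU^*}\lesssim \beta\gamma$. The bound \eqref{bound:beta} makes this at most $\frac{1}{40\kappa}$, and the same holds for rows.

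For the sampling error I write
\begin{align*}
\hat{\bM}-\bar{\bM}^* = \frac{1}{kN}\sum_{i,j}\big(\langle \bG^i_j,{\bM^i}^*\rangle \bG^i_j - {\bM^i}^*\big).
\end{align*}
I will control it in a rank-restricted way through the 3r-GRIP event \eqref{eq:grip}. Let $\hat{\bM}_r$ be the best rank-$r$ approximation and $\bar{\bM}^*_r$ the rank-$r$ truncation. For any unit-Frobenius matrix $\bX$ of rank at most $2r$ (or $3r$), polarization of GRIP applied to the pair $({\bM^i}^*, \bX)$ gives
\begin{align*}
\langle \hat{\bM}-\bar{\bM}^*, \bX\rangle \le \delta_{3r}\max_i\|{\bM^i}^*\|_F .
\end{align*}
Specifically, I polarize the quadratic form using $\bU\bLam^i\bV^\top$ with common bases spanning the columns and rows of ${\bM^i}^*$ and of $\bX$; the perturbation of bases by $\beta$ is absorbed using the $3r$ rank allowance. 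The standard one-step SVT analysis of \citet{JainMekaDhillon2010} then gives
\begin{align*}
\|\hat{\bM}_r - \bar{\bM}^*\|_F \lesssim \delta_{3r}\sqrt r\,\sigma_1 + \|\bE_\beta\|_F .
\end{align*}
Dividing by $\sigma_r(\bar{\bM}^*)$ through Wedin's theorem yields a subspace error of order $\sqrt r\,\gamma\,\delta_{3r} + \beta\gamma\sqrt r$.

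Finally, I plug in \eqref{bound:delta3r}, which gives $\sqrt r\,\delta_{3r}\gamma\le 10^{-4}$, together with \eqref{bound:beta}, and combine with the first step via the triangle inequality for $\dist{\cdot}{\cdot}$. This gives a total bound of at most $\frac{1}{20\kappa}$ once constants are tracked. The place I am least sure of is the factor $\kappa$: to get $1/(20\kappa)$ rather than $1/20$, the bound on $\delta_{3r}$ must supply an extra factor of $\kappa^{-1}$ relative to $\gamma^{-1}$. I would try using $\min(\kappa^{-2},\gamma^{-1})\le \kappa^{-1}\gamma^{-1/2}$; failing that, I would bound the numerator by $\sigma_r$-scaled quantities. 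The main obstacle is the GRIP-based spectral bound on $\hat{\bM}-\bar{\bM}^*$. GRIP is only stated with a \emph{shared} $(\bU,\bV)$ across users, while the ${\bM^i}^*$ have user-dependent factors $\bUis,\bVis$. My first approach is to split ${\bM^i}^* = \bU^*{\bLam^i}^*{\bV^*}^\top + ({\bM^i}^*-\bU^*{\bLam^i}^*{\bV^*}^\top)$. The first part then fits GRIP directly. For the $O(\beta)$ residual I would use the sub-isometry event (coefficient $A_{N,d,r}$) instead, which explains why $A_{N,d,r}$ appears in \eqref{bound:beta}.
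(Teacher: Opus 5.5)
The estimate that actually works in your proposal is the paper's, but only once you discard the ``common bases'' polarization. The column spaces of ${\bM^1}^*,\dots,{\bM^k}^*$ jointly span up to $\min(d,kr)$ dimensions, so no shared rank-$3r$ factorization $\bU\bLam^i\bV^\top$ contains them all, and GRIP never applies to the ${\bM^i}^*$ directly. Your fallback is exactly what the paper does: split ${\bM^i}^*=\bU^*{\bLam^i}^*{\bV^*}^\top-{\bDelta^i}^*$, apply GRIP polarization to the shared-factor part, and apply Cauchy--Schwarz plus sub-isometry to ${\bDelta^i}^*$. Since ${\bDelta^i}^*$ has rank up to $2r$, treat its two rank-$r$ pieces separately.

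The paper packages this more economically than your two-stage Wedin/SVT route:
\begin{itemize}
\item It centers at the exactly rank-$r$ matrix $\bM^*=\bU^*\bLam^*_{\mathrm{avg}}{\bV^*}^\top$ instead of $\bar{\bM}^*$, which may have rank up to $kr$.
\item It bounds $\|\hat{\bM}-\bM^*\|_2$ using only rank-one test matrices $xy^\top$.
\item It uses Eckart--Young to get $\|\hat{\bM}_r-\bM^*\|_2\le 2\|\hat{\bM}-\bM^*\|_2$.
\item It concludes from $\mathrm{dist}(\bU_0,\bU^*)\,\sigma_{\min}(\bLam^*_{\mathrm{avg}})\le\|(\bU_0\bU_0^\top-\bI)(\bM^*-\hat{\bM}_r)\|_2$ together with Weyl's bound $\sigma_{\min}(\bLam^*_{\mathrm{avg}})\ge\sigma_r(\bar{\bM}^*)-3\beta\max_i\|{\bM^i}^*\|_2$.
\end{itemize}
The result is $\mathrm{dist}(\bU_0,\bU^*)\le 100\sqrt r\,\gamma\,(\delta_{3r}+\beta\sqrt{A_{N,d,r}})$. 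Your $\beta$-term is missing the factor $\sqrt{A_{N,d,r}}$. This is harmless, since \eqref{bound:beta} gives $100\sqrt r\,\gamma\,\beta\sqrt{A_{N,d,r}}\le 1/(400\kappa)$.

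The genuine gap is the one you flagged, and your proposed repair does not close it. The inequality $\min(\kappa^{-2},\gamma^{-1})\le\kappa^{-1}\gamma^{-1/2}$ still leaves $\sqrt r\,\gamma\,\delta_{3r}\lesssim\sqrt\gamma/\kappa$.

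You will not find the missing factor in the paper either. Its last step asserts $100\sqrt r\,\gamma(\delta_{3r}+\beta\sqrt{A_{N,d,r}})\le 1/(20\kappa)$, citing \eqref{bound:delta3r}. That condition only yields $100\sqrt r\,\gamma\,\delta_{3r}\le 10^{-2}\min(\gamma\kappa^{-2},1)$. This bound exceeds $1/(20\kappa)$ as soon as $\kappa>5$ and $\gamma>5\kappa$. The regime $\gamma\gg\kappa$ does occur, e.g.\ when the ${\bLam^i}^*$ nearly cancel in the average.

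The $\beta$-term is fine only because \eqref{bound:beta} is stated with $\max(\kappa^2,\kappa\gamma)$. The clean fix is to require the same of the GRIP constant: $\delta_{3r}\le c\,r^{-1/2}\max(\kappa^2,\kappa\gamma)^{-1}$. A smaller $\delta_{3r}$ only helps wherever else it enters the proof. It is also consistent with the informal theorem: since $\max(\kappa^2,\kappa\gamma)\le\varrho^2$, choosing $\delta_{3r}\asymp r^{-1/2}\varrho^{-2}$ in Proposition~\ref{prop:GRIP} reproduces, up to constants and logarithms, $N\asymp\varrho^4\min(\varrho^4r^2,rk)(dr/k+r^2)$.

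Settling for a bound of $1/20$ instead is not an option, because Lemma~\ref{lemma:lambda_norm_bound} multiplies the initial distance by $2\kappa$.
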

\begin{proof}
See \autoref{proof:initialization}
\end{proof}

\subsubsection{Analysis of \texorpdfstring{$\bLam$}{lambda} minimization}
In this section, we analyze line 4 in \autoref{alg:altminrep}. Note, because of linear invariance, we cannot aim for a bound $\norm{\bLam^i_{t} - {\bLam^i}^*}_2$ and therefore we give a bound relative to our estimates of the ground-truth factors, as captured in the next proposition.

\begin{proposition}
\label{prop:main_lambda_bound}
The following inequality holds:
\begin{align*}
    \norm{\bLam^{i}_{t+1} - {\bU_t}^{\top} \bUis {\bLam^i}^{*} {\bVis}^{\top} \bV_t}_2 \leq 
72 \delta'_{2r}
\norm{{\bLam^i}^*}_F
\left(
\dist{\bU_t}{\bUis} 
+
\dist{\bV_t}{\bVis}
\right).
\end{align*}
\end{proposition}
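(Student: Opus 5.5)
The plan is to write the $\bLam$-step as a small $r^2$-dimensional least-squares problem and read off a closed form. Vectorize with \eqref{id:kronocker}. Let $\bA^i_{j} := \vecop{\bG^i_{j,t}}$ and $\bW_t := \bV_t \otimes \bU_t \in R^{d^2\times r^2}$. Then $\langle \bG^i_{j,t}, \bU_t \bLam \bV_t^\top\rangle = \langle \bW_t^\top \bA^i_j, \vecop{\bLam}\rangle$.

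The first-order condition gives
\begin{align*}
\Big(\tfrac1n\sum_j \bW_t^\top \bA^i_j {\bA^i_j}^\top \bW_t\Big)\vecop{\bLam^i_{t+1}} = \tfrac1n \sum_j \bW_t^\top \bA^i_j {\bA^i_j}^\top \vecop{{\bM^i}^*}.
\end{align*}
Call the left matrix $\bH$. By the $(\Tilde\bU_{0,t},\Tilde\bV_{0,t})$-RIP in \eqref{eq:local_concentration}, $\|\bH - I\|_2 \le \delta'_{2r}$. Hence $\bH$ is invertible with $\|\bH^{-1}\|_2 \le 1/(1-\delta'_{2r}) \le 2$.

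Next, decompose the ground truth. Set $\bP_U = \bU_t\bU_t^\top$ and $\bP_V = \bV_t\bV_t^\top$. Write
\begin{align*}
{\bM^i}^* = \bP_U \bUis{\bLam^i}^*{\bVis}^\top \bP_V + \bE_1 + \bE_2,
\end{align*}
with $\bE_1 = (\bP_U\bUis - \bUis){\bLam^i}^*{\bVis}^\top$ and $\bE_2 = \bP_U\bUis{\bLam^i}^*({\bVis}^\top\bP_V - {\bVis}^\top)$. The first term equals $\bU_t \bLam^\natural \bV_t^\top$ with $\bLam^\natural := \bU_t^\top\bUis{\bLam^i}^*{\bVis}^\top\bV_t$. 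Its contribution to the right-hand side is exactly $\bH\vecop{\bLam^\natural}$.

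It follows that
\begin{align*}
\vecop{\bLam^i_{t+1} - \bLam^\natural} = \bH^{-1}\cdot\tfrac1n\sum_j \bW_t^\top \bA^i_j{\bA^i_j}^\top\vecop{\bE_1+\bE_2}.
\end{align*}
The key observation is that $\bU_t^\top \bE_1 = 0$ and $\bE_2 \bV_t = 0$. So each $\bE_\ell$ is orthogonal, in the trace inner product, to every matrix of the form $\bU_t \bX \bV_t^\top$. The population version of the cross term therefore vanishes: $\bW_t^\top\vecop{\bE_\ell} = 0$.

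To bound the empirical cross term, test against an arbitrary unit $\bX\in R^{r\times r}$. This reduces to bounding $\frac1n\sum_j\langle \bG, \bU_t\bX\bV_t^\top\rangle\langle\bG,\bE_\ell\rangle$. This is where the augmented pairs $(\Tilde\bU_{1,t},\Tilde\bV_{1,t})$ and $(\Tilde\bU_{2,t},\Tilde\bV_{2,t})$ come in. For instance, $\bE_1$ factors as $(\bP_U\bUis-\bUis)\cdot{\bLam^i}^*{\bVis}^\top$. Both $\bU_t\bX\bV_t^\top$ and $\bE_1$ lie in the set of matrices $\Tilde\bU_{2,t}\bY\Tilde\bV_{2,t}^\top$. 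Similarly, $\bE_2$ and $\bU_t\bX\bV_t^\top$ are both of the form $\Tilde\bU_{1,t}\bY\Tilde\bV_{1,t}^\top$.

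Apply the RIP on that restricted class to $\bZ_1 \pm \bZ_2$ and polarize. This yields
\begin{align*}
\Big|\tfrac1n\sum_j\langle\bG,\bZ_1\rangle\langle\bG,\bZ_2\rangle - \langle\bZ_1,\bZ_2\rangle\Big| \le c\,\delta'_{2r}\|\bZ_1\|_F\|\bZ_2\|_F.
\end{align*}
Here $\langle \bZ_1,\bZ_2\rangle = 0$, and $\|\bZ_1\|_F = 1$. For the error matrices,
\begin{align*}
\|\bE_1\|_F \le \|(I-\bP_U)\bUis\|_2\,\|{\bLam^i}^*\|_F\,\|\bVis\|_2.
\end{align*}
The factor $\|(I-\bP_U)\bUis\|_2$ is controlled by $\dist{\bU_t}{\bUis}$ times $\|\bUis\|_2 \le 1+\beta$. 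The same argument bounds $\|\bE_2\|_F$ by $\dist{\bV_t}{\bVis}\,\|{\bLam^i}^*\|_F$ times $O(1)$.

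Combining the pieces: take the sup over $\bX$ to get a Frobenius bound, apply $\|\bH^{-1}\|_2\le2$, and bound spectral norm by Frobenius norm. Tracking constants gives the stated factor $72$.

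The main obstacle is the polarization step. The RIP in \autoref{def:UV_concentration} is stated for the specific pairs $(\Tilde\bU_{\ell,t},\Tilde\bV_{\ell,t})$, so I must check two things. First, both $\bZ_1$ and $\bZ_2$ must genuinely lie in a single such restricted class. Second, the polarization constant must combine with the $\|\bUis\|_2,\|\bVis\|_2 \le 1+\beta$ factors to give at most $72$.
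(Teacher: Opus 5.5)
Your proposal follows the paper's proof essentially step for step. It uses the normal equations, gets invertibility of the Gram matrix from the $(\Tilde{\bU}_{0,t},\Tilde{\bV}_{0,t})$-RIP (Lemma~\ref{lemma:lambdapart_bound1}), and splits the residual into $(\bU_t\bU_t^{\top}\bUis-\bUis){\bLam^i}^*{\bVis}^{\top}$ and $\bU_t\bU_t^{\top}\bUis{\bLam^i}^*(\bV_t\bV_t^{\top}\bVis-\bVis)^{\top}$. Both pieces are orthogonal to every $\bU_t\bX\bV_t^{\top}$, and each is controlled by polarization (Lemma~\ref{lem:inner_2}) under the $(\Tilde{\bU}_{2,t},\Tilde{\bV}_{2,t})$- and $(\Tilde{\bU}_{1,t},\Tilde{\bV}_{1,t})$-RIP respectively, exactly as in Lemma~\ref{lemma:lambdapart_bound2}.

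Two minor remarks. First, with your definitions the identity should read ${\bM^i}^* = \bU_t\bLam^{\natural}\bV_t^{\top} - \mathbf{E}_1 - \mathbf{E}_2$; this sign slip is harmless. Second, your bookkeeping ($\norm{\bH^{-1}}_2\le 2$, polarization constant at most $18$, $\norm{\bUis}_2,\norm{\bVis}_2\le 1+\beta$) does fit within the stated $72$. The paper's own assembly, by contrast, divides its $72$ by $1-\delta'_{2r}$ and in fact concludes with $144$.
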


In this section, we will prove \autoref{prop:main_lambda_bound} and some additional helper results.

Writing down the definition of the loss function we get:
\begin{align*}
l^i_t(\bU_t, \bV_t, \bLam^i)&= 
\sum_{j=1}^{n}
\left(
\by^i_{j,t} - 
\inner{\bG^i_{j,t}}{\bU_t \bLam^i {\bV_t}^{\top}}
\right)^2 = 
\sum_{j=1}^{n}
\left(
\inner{\bG^i_{j,t}}{\bUis {\bLam^i}^* {\bVis}^{\top}}
 - 
\inner{\bG^i_{j,t}}{\bU_t \bLam^i (\bV_t)^{\top}}
\right)^2
\end{align*}
We modify the inner products to get vectorized objects as follows:
\begin{align*}
\inner{\bG^i_{j,t}}{\bUis {\bLam^i}^* (\bVis)^{\top}} &=\inner
{\vecop{\bG^i_{j,t}}}
{\vecop{\bUis {\bLam^i}^* (\bVis)^{\top}}} \\
&=\inner
{\vecop{\bG^i_{j,t}}}
{\left( 
\bVis \otimes \bUis
\right)
\vecop{{\bLam^i}^*}
} \\
&= \vecop{\bG^i_{j,t}}^{\top}
\left( 
\bVis \otimes \bUis
\right)
\vecop{{\bLam^i}^*}
\end{align*}
Similarly, we have:
\begin{align*}
    \inner{\bG^i_{j,t}}{\bU_t \bLam^i(\bV_t)^{\top}} = 
\vecop{\bG^i_{j,t}}^{\top}
\left( 
\bV_t \otimes \bU_t
\right)
\vecop{\bLam^i}
\end{align*}
Since $\bLam^{i}_{t+1}$ minimizes the quadratic loss function, writing down the first-order condition
\begin{align*}
0=\nabla_{\bLam^{i}}l^i_t(\bU_t, \bV_t, \bLam^{i}_{t+1})
=
\nabla_{\bLam^{i}} \sum_{j=1}^{n}
\left(
\vecop{\bG^i_{j,t}}^{\top}
\left( 
\bV_t \otimes \bU_t
\right)
\vecop{\bLam^i} - 
\vecop{\bG^i_{j,t}}^{\top}
\left( 
\bVis \otimes \bUis
\right)
\vecop{{\bLam^i}^*}
\right)^2,
\end{align*}
gives us:
\begin{align*}
\sum_{j=1}^{n}
\left(
\vecop{\bG^i_{j,t}}^{\top}
\left( 
\bV_t \otimes \bU_t
\right)
\right)^{\top}
\left(
\vecop{\bG^i_{j,t}}^{\top}
\left( 
\bV_t \otimes \bU_t
\right)
\vecop{\bLam^i} - 
\vecop{\bG^i_{j,t}}^{\top}
\left( 
\bVis \otimes \bUis
\right)
\vecop{{\bLam^i}^*}
\right)=0 
\end{align*}
Therefore, with the notations:
\begin{align*}
    &\bB_t = \sum_{j=1}^{n} (\bV_t)^{\top} \otimes (\bU_t)^{\top} \vecop{\bG^i_{j,t}} \vecop{\bG^i_{j,t}}^{\top} \bV_t \otimes \bU_t, \\
     &\bC_t = \sum_{j=1}^{n} (\bV_t)^{\top} \otimes (\bU_t)^{\top} \vecop{\bG^i_{j,t}} \vecop{\bG^i_{j,t}}^{\top} \bVis \otimes \bUis,
\end{align*}
we obtain: 
\begin{align*}
    \bB_t \vecop{\bLam^{i}_{t+1}} = 
    \bC_t \vecop{(\bLam^{i})^*}.
\end{align*}
Hence, the vectorized $\bLam^{i}_{t+1}$ satisfies the equation: 
\begin{align*}
    \vecop{\bLam^{i}_{t+1}} &= (\bB_t)^{-1} \bC_t \vecop{\bLam^*} \\
    &= ((\bV_t)^{\top} \bVis) \otimes ((\bU_t)^{\top} \bUis) \vecop{{\bLam^i}^*} \\
    &\quad -(\bB_t)^{-1} \left( \bB_t ((\bV_t)^{\top} \bVis) \otimes ((\bU_t)^{\top} \bUis) - \bC_t \right)\vecop{{\bLam^i}^*}, 
\end{align*}
where the second line is just an algebraic manipulation.
From this we obtain the desired difference we want to bound:
\begin{align*}
    &\vecop{\bLam^{i}_{t+1}} - ((\bV_t)^{\top} \bVis) \otimes ((\bU_t)^{\top} \bUis) \vecop{{\bLam^i}^*} \\
    &= 
    (\bB_t)^{-1} \left( \bB_t ((\bV_t)^{\top} \bVis) \otimes ((\bU_t)^{\top} \bUis) - \bC_t \right)\vecop{{\bLam^i}^*}.
\end{align*}
In order to bound the $\ell_2$ norm of LHS, we will bound the following terms:
\begin{align}
 \label{eq:terms_to_bound}
 \norm{(\bB_t)^{-1}}_2, \norm{\left( \bB_t ((\bV_t)^{\top} \bVis) \otimes ((\bU_t)^{\top} \bUis) - \bC_t \right)\vecop{{\bLam^i}^*}}_2  
\end{align}

\noindent \textbf{For the ease of notation, we are going to remove the subscript $t$ from our variables}.

We proceed proving upper bounds for our desired terms in \ref{eq:terms_to_bound} with the following two lemmas:

\begin{lemma}
\label{lemma:lambdapart_bound1}
    We have that:
    \begin{align*}
        \norm{\bB^{-1}}_2 \leq \frac{1}{(1-\delta'_{2r})n}.
    \end{align*}
\end{lemma}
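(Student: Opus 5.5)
The plan is to bound the smallest eigenvalue of $\bB = \bB_t$ from below. Since $\bB$ is symmetric positive semidefinite, it suffices to show $\sigma_{\min}(\bB) \geq (1-\delta'_{2r})n$; then $\norm{\bB^{-1}}_2 = 1/\sigma_{\min}(\bB)$ gives the claim. For any $\bw \in R^{r^2}$, set $\bLam = \matop{\bw}$. Using identity \eqref{id:kronocker}, the quadratic form is
\begin{align*}
\bw^{\top}\bB\bw = \sum_{j=1}^{n} \bigl(\vecop{\bG^i_{j,t}}^{\top} (\bV_t\otimes \bU_t)\bw\bigr)^2 = \sum_{j=1}^{n} \inner{\bG^i_{j,t}}{\bU_t \bLam \bV_t^{\top}}^2 .
\end{align*}

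Next I invoke the local concentration event \eqref{eq:local_concentration} with $\ell=0$, i.e.\ the $(\Tilde{\bU}_{0,t},\Tilde{\bV}_{0,t}) = (\bU_t,\bV_t)$-RIP with coefficient $\delta'_{2r}$. By \autoref{def:UV_concentration}, I expect this to give
\begin{align*}
\frac{1}{n}\sum_{j=1}^n \inner{\bG^i_{j,t}}{\bU_t\bLam\bV_t^{\top}}^2 \geq (1-\delta'_{2r})\norm{\bU_t\bLam\bV_t^{\top}}_F^2
\end{align*}
for all $\bLam$. Since $\bU_t$ and $\bV_t$ have orthonormal columns (they are QR outputs, or come from the SVD at $t=0$), $\norm{\bU_t\bLam\bV_t^{\top}}_F = \norm{\bLam}_F = \norm{\bw}_2$. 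Hence $\bw^\top \bB\bw \geq (1-\delta'_{2r})n\norm{\bw}_2^2$. As $\delta'_{2r}<1$ by \eqref{bound:delta2r}, $\bB$ is invertible and the bound follows.

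The only delicate point is matching the exact form of the $(\bU,\bV)$-RIP definition. That definition may be stated over the concatenated factors $\Tilde{\bU}_{\ell,t},\Tilde{\bV}_{\ell,t}$ with a block matrix in the middle, or with a normalization other than $1/n$. In the first case, I restrict to the $\ell=0$ instance, or to the block that uses only the first $r$ columns. In the second, I adjust the constant in front of $n$. Independence of the fresh batch from $(\bU_t,\bV_t)$ is already built into the high-probability event, so no further probabilistic argument is needed.
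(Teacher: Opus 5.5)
Your proposal is correct and follows the paper's own proof essentially step for step. The paper likewise writes $\bz^{\top}\bB\bz$ as $\sum_{j}\inner{\bG^i_j}{\bU\bZ\bV^{\top}}^2$ via the Kronecker identity, and lower-bounds it by $n(1-\delta'_{2r})$ for $\norm{\bZ}_F=1$ using the $(\bU_t,\bV_t)$-RIP (the $\ell=0$ instance) together with the orthonormality of $\bU_t,\bV_t$. The paper's $(\bU,\bV)$-RIP definition uses exactly the $n\delta_r\norm{\bU\bLam\bV^{\top}}_F^2$ normalization you anticipated, so no constant adjustment is needed.
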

\begin{proof}
See \autoref{proof:lambdapart_bound1}.
\end{proof}

\begin{lemma}
\label{lemma:lambdapart_bound2}
    We have that:
    \begin{align*}
        \norm{\left( \bB (\bV^{\top} \bVis) \otimes (\bU^{\top} \bUis) - \bC \right)\vecop{{\bLam^i}^*}}_2 \leq 
        72 \delta'_{2r} n \norm{{\bLam^i}^*}_F
    \left(
    \dist{\bU}{\bUis} 
    +
    \dist{\bV}{\bVis}
    \right)
    \end{align*}
\end{lemma}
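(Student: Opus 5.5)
The plan is to rewrite the vector $\bB (\bV^{\top}\bVis)\otimes(\bU^{\top}\bUis)\,\vecop{{\bLam^i}^*} - \bC\,\vecop{{\bLam^i}^*}$ as a sum of empirical cross terms $\sum_j (\bV\otimes\bU)^{\top}\vecop{\bG^i_j}\,\langle \bG^i_j, \bE\rangle$. Here $\bE$ is a matrix whose population counterpart is orthogonal to the range of $\bV\otimes\bU$.

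By \eqref{id:kronocker}, $(\bV^{\top}\bVis)\otimes(\bU^{\top}\bUis)\vecop{{\bLam^i}^*} = \vecop{\bU^{\top}\bUis{\bLam^i}^*\bVis^{\top}\bV}$. Applying $\bV\otimes\bU$ gives $\vecop{P_U \bM P_V}$, where $P_U=\bU\bU^{\top}$, $P_V=\bV\bV^{\top}$ and $\bM=\bUis{\bLam^i}^*\bVis^{\top}$. The quantity to bound is therefore
\begin{align*}
\sum_{j=1}^n (\bV\otimes\bU)^{\top}\vecop{\bG^i_j}\,\langle \bG^i_j,\, P_U\bM P_V - \bM\rangle .
\end{align*}
Next split $P_U\bM P_V-\bM = (P_U\bUis-\bUis){\bLam^i}^*\bVis^{\top}P_V + \bUis{\bLam^i}^*(P_V\bVis-\bVis)^{\top}$. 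The first summand has column factors in $\Tilde{\bU}_{2,t}$ and row factors in $\Tilde{\bV}_{1,t}$-type spans. The second has column factors in $\Tilde{\bU}_{1,t}$ and row factors of the form $\bV\bV^{\top}\bVis-\bVis$. These are exactly the concatenated bases used to define the $(\Tilde{\bU}_{\ell,t},\Tilde{\bV}_{\ell,t})$-RIP in \eqref{eq:local_concentration}, which is why those bases were introduced.

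For each piece, test against an arbitrary unit vector $\vecop{\bW}$ with $\bW\in\reals^{r\times r}$. The scalar becomes $\frac1n\sum_j\langle\bG_j,\bU\bW\bV^{\top}\rangle\langle\bG_j,\bE_\ell\rangle$. Both matrices $\bU\bW\bV^{\top}$ and $\bE_\ell$ live in $\text{span}$ of the form $\Tilde{\bU}_{\ell}(\cdot)\Tilde{\bV}_{\ell}^{\top}$. Moreover, their true Frobenius inner product is zero, because $\bU^{\top}(P_U\bUis-\bUis)=0$ and similarly for $\bV$.

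Apply polarization to the $(\Tilde{\bU}_\ell,\Tilde{\bV}_\ell)$-RIP with constant $\delta'_{2r}$, i.e. restricted isometry on rank-$\le 2r$ matrices of that form. This gives
\begin{align*}
\Big|\tfrac1n\sum_j\langle\bG_j,\bX\rangle\langle\bG_j,\bY\rangle-\langle\bX,\bY\rangle\Big|\le c\,\delta'_{2r}\|\bX\|_F\|\bY\|_F .
\end{align*}
Here $\langle \bX,\bY\rangle=0$ and $\|\bX\|_F=\|\bW\|_F=1$. The norm of the second factor is bounded using $\|(I-P_U)\bUis\|_2 \le \dist{\bU}{\bUis}\,\|\bUis\|_2$ together with $\|\bUis\|_2,\|\bVis\|_2\le 1+\beta\le 2$ from Lemma~\ref{lemma:rep}. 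This gives $\|\bE_1\|_F\le 4\,\dist{\bU}{\bUis}\|{\bLam^i}^*\|_F$, and analogously for $\bE_2$ with $\dist{\bV}{\bVis}$.

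Summing the two pieces, taking the supremum over $\bW$, and multiplying by $n$ yields the claim. The slack for the constant $72$ absorbs the polarization factor, which is needed because the RIP is for the squared quantity. It also absorbs the factors from normalizing the $\bLam$-basis matrices, which are not orthonormal.

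The main obstacle is checking that each $\bE_\ell$ and each test matrix lie in the exact structured class on which $(\Tilde{\bU}_{\ell},\Tilde{\bV}_{\ell})$-RIP is asserted (Definition~\ref{def:UV_concentration}, which I assume is RIP over matrices $\Tilde{\bU}_\ell \bZ \Tilde{\bV}_\ell^{\top}$). The intermediate factors $\bUis$ versus $\bU\bU^{\top}\bUis$ must be placed so that column and row factors match the concatenations. If they do not match as written, I would insert and subtract $P_U\bUis{\bLam^i}^*\bVis^{\top}$ instead, and regroup until each term fits.
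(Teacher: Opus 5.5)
Your proof is essentially the paper's. It tests against $\bU\bZ\bV^{\top}$ and writes the quantity as empirical cross terms with $\bE=P_U\bM P_V-\bM$, where $P_U=\bU\bU^{\top}$, $P_V=\bV\bV^{\top}$, $\bM=\bUis{\bLam^i}^*\bVis^{\top}$. It then telescopes $\bE$ into two pieces whose population inner product with $\bU\bZ\bV^{\top}$ vanishes, applies the polarized local RIP, and bounds the Frobenius norms by subspace distances. The paper's Lemma~\ref{lem:inner_2} has constant $18$, and $18$ times your factor $4$ gives exactly $72$.

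There is one wrong assignment in your write-up. Your second piece $\bUis{\bLam^i}^*(P_V\bVis-\bVis)^{\top}$ is \emph{not} in the $(\Tilde{\bU}_{1,t},\Tilde{\bV}_{1,t})$ class. The reason is that $\mathrm{span}(\Tilde{\bU}_{1,t})=\mathrm{span}(\bU)$ need not contain $\mathrm{col}(\bUis)$, and the RIP is asserted only for the matched pairs $(\Tilde{\bU}_{\ell,t},\Tilde{\bV}_{\ell,t})$.

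There are two ways to fix this:
\begin{itemize}
\item \textbf{The paper's route (your fallback).} Telescope the other way, $P_U\bUis{\bLam^i}^*(P_V\bVis-\bVis)^{\top}+(P_U\bUis-\bUis){\bLam^i}^*\bVis^{\top}$, and assign the two pieces to $\ell=1$ and $\ell=2$ respectively.
\item \textbf{Keep your split.} The class $\{\Tilde{\bU}\bZ'\Tilde{\bV}^{\top}\}$ depends only on the two column spans. Here $\mathrm{span}(\Tilde{\bU}_{2,t})=\mathrm{span}[\bU,\bUis]$ and $\mathrm{span}(\Tilde{\bV}_{2,t})=\mathrm{span}[\bV,\bVis]$. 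Together these contain every factor of both your pieces, of the test matrix, and indeed of all of $\bE$. So the single pair $\ell=2$ suffices.
\end{itemize}
Note that your first piece must also be placed in the matched pair $(\Tilde{\bU}_{2,t},\Tilde{\bV}_{2,t})$, not the mixed $(\Tilde{\bU}_{2,t},\Tilde{\bV}_{1,t})$ you describe. This is fine, since $\mathrm{span}(\bV)\subseteq\mathrm{span}(\Tilde{\bV}_{2,t})$.
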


\begin{proof}
See \autoref{proof:lambdapart_bound2}.
\end{proof}

\noindent Combining  \autoref{lemma:lambdapart_bound1} and \autoref{lemma:lambdapart_bound2} gives us:

\begin{align}
    \norm{\bLam^i - \bU^{\top} \bU^{*} {\bLam^i}^* (\bVis)^{\top} \bV}_2 &\leq 
    \norm{\bLam^i - \bU^{\top} \bU^{*} {\bLam^i}^* (\bVis)^{\top} \bV}_F = 
    \norm{\vecop{\bLam^i - \bU^{\top} \bU^{*} {\bLam^i}^* (\bVis)^{\top} \bV}}_F \notag\\
    &= \norm{\vecop{\bLam^i} - (\bV^{\top} \bVis) \otimes (\bU^{\top} \bUis) \vecop{{\bLam^i}^*}}_2 \notag\\
    &\leq \norm{\bB^{-1}}_2 \norm{\left( \bB (\bV^{\top} \bVis) \otimes (\bU^{\top} \bUis) - \bC \right)\vecop{{\bLam^i}^*}}_2
    \notag\\
    &\leq 
    \frac{72 \delta'_{2r} }
    {1 - \delta'_{2r}}
    \norm{{\bLam^i}^*}_F
    \left(
    \dist{\bU}{\bUis} 
    +
    \dist{\bV}{\bVis}
    \right) \notag \\
    &\leq 
    144 \delta'_{2r} 
    \norm{{\bLam^i}^*}_F
    \left(
    \dist{\bU}{\bUis} 
    +
    \dist{\bV}{\bVis}
    \right). \label{eq:lambda_dif}
\end{align}
This complete the proof of \autoref{prop:main_lambda_bound}.

For the subsequent analysis, we need bounds on $\sigma_{\min}(\bLam^i_t)$ and $\sigma_{\max}(\bLam^i_t)$, which we provide in the following lemma:

\begin{lemma}
    \label{lemma:lambda_norm_bound}
    For $i=1,\cdots, r$, the following inequalities holds:
    \begin{align}
        \frac{1}{2} \sigma_{i}({\bLam^i}^*)
        \leq \sigma_{i}(\bLam^i_t) \leq 
        2 \sigma_{i}({\bLam^i}^*) \label{ineq:bound1} \\
        \frac{1}{2} \norm{{\bLam^i}^*}_F \leq \norm{\bLam^i_t}_F
        \leq 2 \norm{{\bLam^i}^*}_F. \label{ineq:bound2}
    \end{align}
\end{lemma}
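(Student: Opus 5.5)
The plan is a perturbation argument. Write $\bLam^i_t = \bU_{t-1}^{\top}\bUis{\bLam^i}^*{\bVis}^{\top}\bV_{t-1} + \bE^i_t$, where $\bE^i_t$ is controlled by \autoref{prop:main_lambda_bound}. Since $\bU_{t-1},\bV_{t-1}$ are orthonormal, the leading term has singular values close to those of ${\bLam^i}^*$ whenever $\bU_{t-1}^\top\bUis$ and $\bV_{t-1}^\top\bVis$ are close to orthogonal matrices. The bounds \eqref{ineq:bound1} and \eqref{ineq:bound2} then follow from Weyl's inequality applied to this decomposition. The argument lives inside the induction of \autoref{prop:induction}: I assume the distance bounds at step $t-1$, which give $\dist{\bU_{t-1}}{\bU^*},\dist{\bV_{t-1}}{\bV^*}\le 1/(20\kappa)$. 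This holds at $t=0$ by \autoref{lemma:initialization} and is propagated because the contraction factor and the $2\beta B_{N,d,r}$ term are small under \eqref{bound:beta}, \eqref{bound:delta3r} and \eqref{bound:delta2r}.

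\textbf{Step 1 (singular values of the leading term).} By the triangle inequality together with \eqref{ineq:U_difference}, I get $\dist{\bU_{t-1}}{\bUis}\le \dist{\bU_{t-1}}{\bU^*}+2\beta \le 1/(10\kappa)$, using that $\beta$ is tiny by \eqref{bound:beta}. I cannot use $\sigma_{\min}(\bU^\top\bQ)^2=1-\dist{\bU}{\bQ}^2$ directly, because $\bUis$ is not exactly orthonormal; however $\norm{\bUis-\bU^*}_2\le\beta$ keeps all its singular values in $[1-\beta,1+\beta]$. So every singular value of $\bU_{t-1}^\top\bUis$ lies in $[\sqrt{1-1/(100\kappa^2)}-2\beta,\;1+\beta]$, and likewise for $\bV$. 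Using $\sigma_i(\bA\bX\bB)\in[\sigma_{\min}(\bA)\sigma_{\min}(\bB)\sigma_i(\bX),\,\norm{\bA}_2\norm{\bB}_2\sigma_i(\bX)]$, the $i$-th singular value of the leading term lies in $[(1-\eta)\sigma_i({\bLam^i}^*),(1+\eta)\sigma_i({\bLam^i}^*)]$ with $\eta\le 1/10$, say.

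\textbf{Step 2 (the error term).} \autoref{prop:main_lambda_bound} gives $\norm{\bE^i_t}_2\le 72\delta'_{2r}\norm{{\bLam^i}^*}_F\cdot 2/(10\kappa)$. Since $\norm{{\bLam^i}^*}_F\le\sqrt r\,\sigma_1({\bLam^i}^*)\le \sqrt{2r}\kappa\,\sigma_r({\bLam^i}^*)$ by \eqref{bound:condition_number}, and $\delta'_{2r}\le 1/(5\cdot10^5\sqrt r\kappa^2)$ by \eqref{bound:delta2r}, this gives $\norm{\bE^i_t}_2\le 10^{-3}\sigma_r({\bLam^i}^*)\le 10^{-3}\sigma_i({\bLam^i}^*)$. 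Weyl's inequality then yields $\sigma_i(\bLam^i_t)\in[(1-\eta-10^{-3})\sigma_i,(1+\eta+10^{-3})\sigma_i]$, which is within $[\tfrac12\sigma_i,2\sigma_i]$ and proves \eqref{ineq:bound1}. Squaring and summing over $i$ gives \eqref{ineq:bound2}.

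\textbf{Main obstacle.} The delicate point is the lower bound on $\sigma_{\min}$ of $\bU_{t-1}^\top\bUis$. It needs the induction hypothesis that the iterates stay within a constant (here $1/(20\kappa)$) distance of $\bU^*,\bV^*$, combined with the non-orthonormality of $\bUis$. I would resolve this by proving the lemma jointly with \autoref{prop:induction}: show that the step-$t$ right-hand sides of \eqref{bound:Vind} are at most $1/(20\kappa)$ given the parameter constraints. For the case $t=0$, where $\bLam_0$ comes from the SVD initialization, I would either note that only $t\ge1$ is used, or bound it from the initialization error in \autoref{lemma:initialization}.
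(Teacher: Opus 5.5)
Your proposal is correct and follows the paper's proof. Both decompose $\bLam^i_{t+1}$ as $\bU_t^{\top}\bUis{\bLam^i}^*{\bVis}^{\top}\bV_t$ plus an error controlled by \autoref{prop:main_lambda_bound}, convert $\norm{{\bLam^i}^*}_F$ into $\sigma_i({\bLam^i}^*)$ via the condition-number bound, apply Weyl's inequality, and keep the distances small through the induction hypothesis and \autoref{lemma:initialization}. The only variation is in the leading term: you use the multiplicative sandwich $\sigma_{\min}(\bA)\sigma_{\min}(\bB)\sigma_i(\bX)\le\sigma_i(\bA\bX\bB)$ and treat the non-orthonormality of $\bUis$ explicitly, whereas the paper uses an additive Weyl step costing $2\kappa$ times the subspace distances and implicitly treats $\bUis$ as orthonormal; your step is slightly sharper but not a different argument.
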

\begin{proof}
See \autoref{proof:lambda_norm_bound}.
\end{proof}

\subsubsection{Analysis of \texorpdfstring{$\bU$}{U}, \texorpdfstring{$\bV$}{V} minimization}
\label{sec:firstorder}

\noindent
We continue analyzing lines 6-9 in \autoref{alg:altminrep}.
\begin{proposition}
    The following inequalities hold:
    \begin{align}
        \dist{\bV_{t+1}}{\bV^*} \leq
        2 \cdot 10^4 \sqrt{r} \kappa^2 (\delta_{3r} + \delta'_{2r})
        \left( \dist{\bU_t}{\bU^*} + \dist{\bV_t}{\bV^*} \right) + 
        2 \beta B_{N,d,r}, \\
        \dist{\bU_{t+1}}{\bU^*} \leq
        2 \cdot 10^4 \sqrt{r} \kappa^2 (\delta_{3r} + \delta'_{2r}) \left( \dist{\bU_t}{\bU^*} + \dist{\bV_t}{\bV^*} \right) + 
        2 \beta B_{N,d,r}
    \end{align}
\end{proposition}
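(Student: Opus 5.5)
We treat the $\bV$-update; the $\bU$-update is symmetric, with the roles of $\bU_t$ and $\bV_t$ swapped. The same first-order-condition analysis used for the $\bLam$ step applies, now to the shared least-squares problem in line 6 of \autoref{alg:altminrep}.

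\textbf{Normal equations for $\hat{\bV}_{t+1}$.} Write $\inner{\bG^i_j}{\bU_t\bLam^i_{t+1}\bV^{\top}} = \inner{{\bG^i_j}^{\top}\bU_t\bLam^i_{t+1}}{\bV}$. The first-order condition then reads
\begin{align*}
\sum_{i=1}^k\sum_{j=1}^N {\bG^i_j}^{\top}\bU_t\bLam^i_{t+1}\,\inner{\bG^i_j}{\bU_t\bLam^i_{t+1}\hat{\bV}_{t+1}^{\top}-\bUis{\bLam^i}^*{\bVis}^{\top}}=0 .
\end{align*}
We decompose the target as $\bUis{\bLam^i}^*{\bVis}^{\top} = \bU_t\bU_t^{\top}\bUis{\bLam^i}^*{\bVis}^{\top} + (\bI-\bU_t\bU_t^{\top})\bUis{\bLam^i}^*{\bVis}^{\top}$. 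We also replace ${\bVis}$ by $\bV^*$ at the cost of $\|\bV^*-\bVis\|_2\le\beta$ from \autoref{lemma:rep}, and likewise for $\bUis$. Define the ideal solution
\begin{align*}
\bar{\bV}:=\bV^*\Big(\tfrac1k\textstyle\sum_i{\bLam^i}^{*\top}{\bU^*}^{\top}\bU_t\bLam^{i\top}_{t+1}\Big)\Big(\tfrac1k\sum_i\bLam^i_{t+1}\bLam^{i\top}_{t+1}\Big)^{-1}.
\end{align*}
This $\bar{\bV}$ is what the population (exact isometry) equations would return, and its column span equals that of $\bV^*$. The error $\hat{\bV}_{t+1}-\bar{\bV}$ is then the inverse of a ``$\bB$''-type operator applied to a residual. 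The residual has three sources: (a) the GRIP deviation, (b) the mismatch $\bLam^i_{t+1}$ versus $\bU_t^{\top}\bUis{\bLam^i}^*{\bVis}^{\top}\bV_t$ controlled by \autoref{prop:main_lambda_bound}, and (c) the heterogeneity terms of size $\beta$.

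\textbf{Bounding each piece.} The operator $\bV\mapsto\frac1{kN}\sum_{i,j}{\bG^i_j}^{\top}\bU_t\bLam^i\inner{\bG^i_j}{\bU_t\bLam^i\bV^{\top}}$ is a quadratic form on matrices of the shape in \autoref{def:GRIP}. Polarizing the $3r$-GRIP inequality (\ref{eq:grip}) therefore gives its smallest eigenvalue as at least $\frac1k\sum_i\sigma_{\min}^2(\bLam^i_{t+1})-\delta_{3r}\max_i\|\bLam^i_{t+1}\|_F^2$. By \autoref{lemma:lambda_norm_bound} and \eqref{bound:condition_number}, this is $\gtrsim\min_i\sigma_r^2({\bLam^i}^*)$ once $\delta_{3r}\sqrt r\kappa^2$ is small. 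The off-isometry cross terms are bounded by polarized GRIP applied to pairs $(\bU_t\bLam^i\bV^{\top},(\bI-\bU_t\bU_t^{\top})\bUis{\bLam^i}^*{\bVis}^{\top})$. This is why rank $3r$ is needed: the concatenated factors have width at most $3r$. Each such term costs $\delta_{3r}\max_i\|{\bLam^i}^*\|_F\,\dist{\bU_t}{\bUis}$.

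The $\bLam$-error term contributes $\delta'_{2r}\|{\bLam^i}^*\|_F(\dist{\bU_t}{\bUis}+\dist{\bV_t}{\bVis})$ through \autoref{prop:main_lambda_bound}. We convert $\dist{\cdot}{\bUis}$ into $\dist{\cdot}{\bU^*}+\beta$ by the triangle inequality. The $\beta$-terms are not small multiples of $\delta$. For these we use the sub-isometric event with constant $A_{N,d,r}$: it bounds the measurement operator's spectral norm on arbitrary (not low-rank) $\bV$-directions by $\sqrt{A_{N,d,r}}$, producing the $\beta\sqrt r\kappa^2\sqrt{A_{N,d,r}}\propto\beta B_{N,d,r}$ contribution. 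Dividing by the lower eigenvalue bound and using $\|{\bLam^i}^*\|_F\le\sqrt r\|{\bLam^i}^*\|_2$ gives
\begin{align*}
\|\hat{\bV}_{t+1}-\bar{\bV}\|_F\lesssim \sigma_{\min}(\bar\bR)\Big[\sqrt r\kappa^2(\delta_{3r}+\delta'_{2r})(\dist{\bU_t}{\bU^*}+\dist{\bV_t}{\bV^*})+\beta B_{N,d,r}\Big],
\end{align*}
where $\bar{\bV}=\bV^*\bar\bR$.

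\textbf{From least squares to subspace distance.} Take the QR step $\hat{\bV}_{t+1}=\bV_{t+1}\bR_{t+1}$. We have $\dist{\bV_{t+1}}{\bV^*}=\|(\bI-\bV^*{\bV^*}^{\top})\hat{\bV}_{t+1}\bR_{t+1}^{-1}\|_2\le\|\hat{\bV}_{t+1}-\bar{\bV}\|_2/\sigma_{\min}(\hat{\bV}_{t+1})$, and by Weyl $\sigma_{\min}(\hat{\bV}_{t+1})\ge\sigma_{\min}(\bar\bR)-\|\hat{\bV}_{t+1}-\bar{\bV}\|_2$.

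The main obstacle is lower-bounding $\sigma_{\min}(\bar\bR)$. Here $\bar\bR$ averages ${\bLam^i}^{*\top}{\bU^*}^{\top}\bU_t\bLam^{i\top}_{t+1}$ across tasks, and task-wise cancellations could make it degenerate. We plan to substitute $\bLam^i_{t+1}\approx\bU_t^{\top}\bU^*{\bLam^i}^*{\bV^*}^{\top}\bV_t$ and use that $\bU_t^{\top}\bU^*$ and ${\bV^*}^{\top}\bV_t$ are near-orthogonal, with singular values $\ge\sqrt{1-\dist{}{}^2}$ by the induction hypothesis and the initialization of \autoref{lemma:initialization}. This reduces $\bar\bR$ to $\frac1k\sum_i{\bLam^i}^{*\top}{\bLam^i}^*$ up to a rotation, whose smallest eigenvalue is at least $\min_i\sigma_r^2({\bLam^i}^*)$ because it is a sum of PSD terms. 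That yields the $\kappa^2$ factor. The averaged $\gamma$ enters only in ensuring $\delta_{3r}\gamma$ is small, via \eqref{bound:delta3r}. The denominator then stays above $\tfrac12\sigma_{\min}(\bar\bR)$ under \eqref{bound:beta}, \eqref{bound:delta3r} and \eqref{bound:delta2r}, and the constants combine into $2\cdot10^4\sqrt r\kappa^2(\delta_{3r}+\delta'_{2r})$ and $2\beta B_{N,d,r}$.
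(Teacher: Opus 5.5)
Your architecture matches the paper's: normal equations for $\hat{\bV}_{t+1}$, a GRIP lower bound on the normal operator, a residual split into a $\bLam$-error piece, a $(\bI-\bU_t\bU_t^{\top})$ piece (polarized GRIP) and $\beta$-pieces (sub-isometry), then the QR/Weyl conversion. The gap is at the step you flag as the crux, the lower bound on $\sigma_{\min}(\bar{\bR})$.

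Your formula for $\bar{\bV}$ has the transposes misplaced. The population normal equation gives $\bar{\bV}=\bV^*\big(\sum_i{\bLam^i}^{*\top}{\bU^*}^{\top}\bU_t\bLam^i_{t+1}\big)\big(\sum_i\bLam^{i\top}_{t+1}\bLam^i_{t+1}\big)^{-1}$. More importantly, substituting $\bLam^i_{t+1}\approx P{\bLam^i}^*D$, with $P=\bU_t^{\top}\bU^*$ and $D={\bV^*}^{\top}\bV_t$, does not reduce $\bar{\bR}$ to $\tfrac1k\sum_i{\bLam^i}^{*\top}{\bLam^i}^*$ up to a rotation. The numerator becomes $SD$ with $S=\sum_i{\bLam^i}^{*\top}P^{\top}P{\bLam^i}^*$, and the Gram factor becomes $(D^{\top}SD)^{-1}$. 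These cancel, giving $\bar{\bR}\approx D^{-\top}$.

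Your PSD argument controls only the numerator. Combined with $\|D^{\top}SD\|_2$, it gives only $\sigma_{\min}(\bar{\bR})\gtrsim\kappa^{-2}$. The factor $\sigma_{\min}(\bar{\bR})$ you attach to the bound on $\|\hat{\bV}_{t+1}-\bar{\bV}\|_F$ is also not justified. That bound is absolute, $\lesssim\sqrt r\kappa^2[\cdots]$. Its $\sqrt r\kappa^2$ is the ratio $\max_i\|{\bLam^i}^*\|_2\|{\bLam^i}^*\|_F/\min_i\sigma_{\min}^2({\bLam^i}^*)$ of residual size to the operator's smallest eigenvalue (\autoref{lem:sumB_bound}); it does not come from $\bar{\bR}$. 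With only the ingredients you justify, your route gives a contraction factor of order $\sqrt r\kappa^4$ and a floor of order $\kappa^2\beta B_{N,d,r}$, not the stated constants.

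The missing observation is that the right comparison point is $\bV^*(\bD_t^{-1})^{\top}$ with $\bD_t={\bV^*}^{\top}\bV_t$, which the paper uses from the start. It does not depend on $i$, so no cross-task cancellation can occur. Also $\sigma_{\min}(\bD_t^{-1})=1/\|\bD_t\|_2\ge 1$, which makes \autoref{lemma:R_bound} immediate. The residual then splits into two parts:
\begin{itemize}
\item $\bF_1$, controlled through $\|\bU_t\bLam^i_{t+1}\bD_i^{-1}-\bU_t\bU_t^{\top}\bUis{\bLam^i}^*\|_F$ with $\bD_i={\bVis}^{\top}\bV_t$, using \autoref{prop:main_lambda_bound}, $\|\bD_i^{-1}\|_2\le 2$ and $\|\bD_t^{-1}-\bD_i^{-1}\|_2\le 4\beta$;
\item $\bF_2$, whose population part vanishes identically because $\bU_t^{\top}(\bU_t\bU_t^{\top}-\bI)=0$.
\end{itemize}
If you prefer to keep your $\bar{\bV}$, you need an additional perturbation bound $\|\bar{\bR}-D^{-\top}\|_2\lesssim\kappa^2\big(\sqrt r\,\delta'_{2r}(\dist{\bU_t}{\bU^*}+\dist{\bV_t}{\bV^*})+\beta\big)$. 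This bound also enters your per-task residuals.

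Two smaller fixes:
\begin{itemize}
\item The operator lower bound follows from GRIP directly, with no polarization. It must subtract $\delta_{3r}\max_i\|\bLam^i_{t+1}\|_2^2$ rather than the Frobenius version, because \eqref{bound:delta3r} does not make $\delta_{3r}r\kappa^2$ small.
\item \autoref{lemma:grip_innerprod} needs $i$-independent outer factors. So your cross terms must be written with $\bU^*$, at cost $\delta_{3r}\dist{\bU_t}{\bU^*}$, and the $\bUis-\bU^*$ remainder must go to the sub-isometric bound.
\end{itemize}
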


Recall that the loss function is defined as follows:
\begin{align*}
    \sum_{i=1}^k l^i(\bU_t, \bV, \bLam^{i}_{t+1}) &= 
    \sum_{i=1}^k
    \sum_{j=1}^{N}
    \left(
    \by^i_j - 
    \inner{\bG^i_j}
    {\bU_t \bLam^{i}_{t+1} (\bV)^{\top}}
    \right)^2 \\
    &= \sum_{i=1}^k
    \sum_{j=1}^{N}
    \left(
    \inner{\bG^i_j}
    {\bUis {\bLam^i}^* 
    (\bVis)^{\top}}
    - 
    \inner{\bG^i_j}
    {\bU_t \bLam^{i}_{t+1} (\bV)^{\top}}
    \right)^2
\end{align*}
As before, we write down the inner product in terms of the vectorized objects as follows:
\begin{align*}
    \inner{\bG^i_j}
    {\bU_t \bLam^{i}_{t+1} (\bV)^{\top}} &= 
     \inner{(\bG^i_j)^{\top}}
    {\bV 
    \left( \bU_t \bLam^{i}_{t+1} 
    \right)^{\top}
    } \\
    &=
    \inner{
    \vecop{(\bG^i_j)^{\top}}
    }
    {
    \vecop{
    \bV 
    \left( \bU_t \bLam^{i}_{t+1} 
    \right)^{\top} 
    }
    } \\
    &=\vecop{(\bG^i_j)^{\top}}^{\top}
    \vecop{
    \bV 
    \left( \bU_t \bLam^{i}_{t+1} 
    \right)^{\top} 
    } \\
    &=\vecop{(\bG^i_j)^{\top}}^{\top}
    \left((\bU_t \bLam^{i}_{t+1}) \otimes \bI_d
    \right)
    \vecop{\bV}.
\end{align*}
Similarly, we have:
\begin{align*}
    \inner{\bG^i_j}{\bUis{\bLam^i}^*(\bVis)^{\top}} = 
    \vecop{(\bG^i_j)^{\top}}^{\top}
    \left((\bUis {\bLam^i}^*) \otimes \bI_d
    \right)
    \vecop{\bVis}.
\end{align*}
Now, since $\bV_{t+1}$ minimizes the quadratic loss function, writing down the first-order condition
\begin{align*}
    0=\nabla_{\vecop{\bV}} l^i(\bU_t, \bV, \bLam^{i}_{t+1}) = 
    \nabla_{\vecop{\bV}} \sum_{i=1}^k \sum_{j=1}^{N}
    \bigg(&
    \vecop{(\bG^i_j)^{\top}}^{\top}
    \left((\bUis {\bLam^i}^*) \otimes \bI_d
    \right)
    \vecop{\bVis} \\
    &- \vecop{(\bG^i_j)^{\top}}^{\top}
    \left((\bU_t \bLam^{i}_{t+1}) \otimes \bI_d
    \right)
    \vecop{\bV}
    \bigg)^2,
\end{align*}
gives us:
\begin{align*}
0=\sum_{i=1}^k \sum_{j=1}^{N}
\left(
\vecop{(\bG^i_j)^{\top}}^{\top}
    \left((\bU_t \bLam^{i}_{t+1}) \otimes \bI_d
    \right)
\right)^{\top}
\bigg(
    &\vecop{(\bG^i_j)^{\top}}^{\top}
    \left((\bUis {\bLam^i}^*) \otimes \bI_d
    \right)
    \vecop{\bVis} \\
    &- \vecop{(\bG^i_j)^{\top}}^{\top}
    \left((\bU_t \bLam^{i}_{t+1}) \otimes \bI_d
    \right)
    \vecop{\bV}
\bigg)
\end{align*}
Therefore, with the notations:
\begin{align*}
    \bB^i_t &= \sum_{j=1}^{N} 
    (\bU_t \bLam^i_t)^{\top} \otimes \bI_d 
    \cdot 
    \vecop{(\bG^i_j)^{\top}}
    \vecop{(\bG^i_j)^{\top}}^{\top}
    \cdot 
    (\bU_t \bLam^{i}_{t+1}) \otimes \bI_d, \\
    \bC^i_t &= \sum_{j=1}^{N} 
    (\bU_t \bLam^i_t)^{\top} \otimes \bI_d 
    \cdot 
    \vecop{(\bG^i_j)^{\top}}
    \vecop{(\bG^i_j)^{\top}}^{\top}
    \cdot 
    (\bUis (\bLam^{i})^*) \otimes \bI_d,
\end{align*}
we get:
\begin{align}
\left( \sum_{i=1}^k \bB^i_t \right) \vecop{\hat{\bV}_{t+1}} = 
\left( \sum_{i=1}^k \bC^i_t \right) \vecop{\bVis},  \label{eq:formula_V}
\end{align}

Let $\bD_t = (\bV^*)^{\top} \bV_t$. Using our induction statement we have:
\begin{align*}
   \dist{\bV_t}{\bV^*} &\leq \dist{\bV_t}{\bV^*} + \dist{\bU_t}{\bU^*}  \\
   &\leq \bracket{4 \cdot 10^3 \sqrt{r} \kappa^2 (\delta_{3r} + \delta'_{2r})}^t 
   \bracket{\dist{\bV^0}{\bV^*} + \dist{\bU^0}{\bU^*}} + 
   8 \beta B_{N,d,r} \\
   &\overset{(\xi_1)}{\leq} \bracket{\dist{\bV^0}{\bV^*} + \dist{\bU^0}{\bU^*}} + 
   8 \beta B_{N,d,r} \\
   &\overset{(\xi_2)}{\leq} \frac{1}{2},
\end{align*}
where $(\xi_1)$ follows using \ref{bound:delta3r}, \ref{bound:delta2r} and $(\xi_2)$ from \autoref{lemma:initialization} and \ref{bound:beta}.
Now, using \autoref{lem:vstar_v} we obtain:
\begin{align}
\norm{(\bD_t)^{-1}}_2 \leq 2 \label{ineq:D_Bound}.
\end{align}

\textbf{For the ease of notation, from now on we will drop the dependence on $t$.}

Note, vectorized $\hat{\bV}$ satisfies the equation:
\begin{align}
    \text{vec}(\hat{\bV}) &=  \left( \sum_{i=1}^k \bB^i \right)^{-1} \left( \sum_{i=1}^k \bC^i \right)
    \text{vec}(\bV^*) \notag \\
    &=\left( \sum_{i=1}^k \bB^i \right)^{-1}
    \left( 
    \sum_{i=1}^k \bB^i \cdot \bD^{-1} \otimes \bI_d - 
    \sum_{i=1}^k \bB^i \cdot \bD^{-1} \otimes \bI_d
    +
    \sum_{i=1}^k \bC^i
    \right)
    \vecop{\bV^*} \notag \\
    &= \bD^{-1} \otimes \bI_d \cdot \vecop{\bV^*}
    -
    \left( \sum_{i=1}^k \bB^i \right)^{-1}
    \left(
    \sum_{i=1}^k \bB^i \cdot \bD^{-1} \otimes \bI_d
    -
    \sum_{i=1}^k \bC^i
    \right)
    \vecop{\bV^*} \notag \\
    &= \bD^{-1} \otimes \bI_d \cdot \vecop{\bV^*} - 
    \bH \label{eq:v_equation}
\end{align}
where we defined:
\begin{align*}
    \bH:=  \left( \sum_{i=1}^k \bB^i \right)^{-1}
    \left(
    \sum_{i=1}^k \bB^i \cdot \bD^{-1} \otimes \bI_d
    -
    \sum_{i=1}^k \bC^i
    \right)
    \vecop{\bV^*}.
\end{align*}
Next, we will show that $\norm{\bH}_2$ is small. First, we will upper bound $\norm{\bracket{\sum_{i=1}^k \bB^i}^{-1}}_2 $ with the following lemma:
\begin{lemma}
    The following bound holds:
    \begin{align*}
        \norm{
        \bracket
        {
        \sum_{i=1}^k \bB^i
        }^{-1}
        }_2 \leq 
        \frac{8}{m} \cdot 
        \frac{1}{
        \min_{i=1}^k
        \sigma_{\min}({\bLam^i}^*)^2}.
    \end{align*}
    \label{lem:sumB_bound}
\end{lemma}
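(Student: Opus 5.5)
We need to bound $\|(\sum_i \bB^i)^{-1}\|_2 \le \frac{8}{m}\frac{1}{\min_i\sigma_{\min}({\bLam^i}^*)^2}$, where $m=kN$. Since $\sum_i \bB^i$ is symmetric positive semidefinite, it suffices to lower bound its smallest eigenvalue. For any unit vector $\vecop{\bZ}$ with $\bZ\in\reals^{d\times r}$, the construction of $\bB^i$ (reversing the vectorization identities used to derive \eqref{eq:formula_V}) gives
\begin{align*}
\vecop{\bZ}^{\top}\Big(\sum_{i=1}^k \bB^i\Big)\vecop{\bZ} = \sum_{i=1}^k\sum_{j=1}^N \langle \bG^i_j,\, \bU_t\bLam^i_{t+1}\bZ^{\top}\rangle^2 .
\end{align*}
The plan is therefore to show that this quadratic form is at least $\frac{m}{8}\min_i\sigma_{\min}({\bLam^i}^*)^2$.

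To get the lower bound from GRIP (\eqref{eq:grip}, the $3r$-GRIP with constant $\delta_{3r}$), I would write $\bZ = \bQ\bR$ via QR, with $\bQ\in\text{St}(d,r)$. Each summand matrix then becomes $\bU_t(\bLam^i_{t+1}\bR^{\top})\bQ^{\top}$, which has exactly the form $\bU\bLam^i\bV^{\top}$ of \autoref{def:GRIP} with common factors $\bU_t,\bQ$ and personalized $\bLam^i_{t+1}\bR^{\top}$. GRIP then gives
\begin{align*}
\frac1m\sum_{i,j}\langle\cdot\rangle^2 \ge \frac1k\sum_i \|\bLam^i_{t+1}\bR^{\top}\|_F^2 - \delta_{3r}\max_i\|\bLam^i_{t+1}\bR^{\top}\|_F^2 ,
\end{align*}
using that $\bU_t$ and $\bQ$ are orthonormal, so $\|\bU_t\bLam\bR^{\top}\bQ^{\top}\|_F=\|\bLam\bR^{\top}\|_F$.

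Next, the unit-norm condition gives $\|\bR\|_F = \|\bZ\|_F = 1$, so $\|\bLam^i_{t+1}\bR^{\top}\|_F$ lies between $\sigma_{\min}(\bLam^i_{t+1})$ and $\sigma_{\max}(\bLam^i_{t+1})$. By \autoref{lemma:lambda_norm_bound}, this yields
\begin{align*}
\|\bLam^i_{t+1}\bR^{\top}\|_F^2 \in \left[\tfrac14\sigma_{\min}({\bLam^i}^*)^2,\; 4\sigma_{\max}({\bLam^i}^*)^2\right].
\end{align*}
The average term is then at least $\frac14\min_i\sigma_{\min}({\bLam^i}^*)^2$. The penalty term is at most $4\delta_{3r}\max_i\sigma_{\max}({\bLam^i}^*)^2$.

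The main obstacle is absorbing this heterogeneous penalty. Its ratio to the main term is governed by $\max_i\sigma_1/\min_i\sigma_r$ over the ${\bLam^i}^*$, which by \eqref{bound:condition_number} and \autoref{lemma:rep} is controlled by a constant times $\kappa$; I would check whether the global $\kappa$ bounds this cross-user ratio, or whether one needs to pass through the ${\bM^i}^*$ directly. The ratio then enters squared, as $O(\kappa^2)$. With \eqref{bound:delta3r}, which gives $\delta_{3r}\le 10^{-4}\kappa^{-2}/\sqrt r$, the penalty is at most $\frac18\min_i\sigma_{\min}({\bLam^i}^*)^2$, leaving
\begin{align*}
\lambda_{\min}\Big(\sum_i\bB^i\Big)\ge \frac m8\min_i\sigma_{\min}({\bLam^i}^*)^2 ,
\end{align*}
which is the claimed bound. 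One caveat: the definition of $\bB^i_t$ mixes $\bLam^i_t$ and $\bLam^i_{t+1}$, and I would treat it as $\bLam^i_{t+1}$ throughout so that the quadratic form above is valid.
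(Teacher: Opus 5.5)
Your proposal is correct and matches the paper's proof: the same quadratic-form identity, the GRIP lower bound, the sandwich $\sigma_{\min}(\bLam^i)\le\|\bLam^i\bZ^{\top}\|_F\le\|\bLam^i\|_2$ combined with \autoref{lemma:lambda_norm_bound}, and absorption of the penalty term using $\delta_{3r}\lesssim\kappa^{-2}$. Your QR step is harmless but unnecessary, since GRIP is stated for arbitrary $\bU,\bV\in\reals^{d\times r}$. Your hedge also resolves in your favor: because $\kappa$ is the global ratio $\max_i\sigma_1({\bM^i}^*)/\min_i\sigma_r({\bM^i}^*)$, \autoref{lemma:cond_bound} gives $\max_i\sigma_{\max}({\bLam^i}^*)^2\le 2\kappa^2\min_i\sigma_{\min}({\bLam^i}^*)^2$, which handles this step slightly more carefully than the paper does.
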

\begin{proof}
See \autoref{proof:sumB_bound}
\end{proof}

\noindent Now, we will upper-bound our second desired term:
\begin{align*}
\norm{\left( \sum_{i=1}^k \bB^i \cdot \bD^{-1} \otimes \bI_d
    -
    \sum_{i=1}^k \bC^i \right) 
    \vecop{\bV^*}}_2.
\end{align*}
Using the definitions of $\bB^i$ and $\bC^i$, we further decompose it into two parts as follows:
\begin{align*}
    &\left( \sum_{i=1}^k \bB^i \cdot \bD^{-1} \otimes \bI_d
    -
    \sum_{i=1}^k \bC^i\right) \vecop{\bV^*}
    \\
    &= 
    \sum_{i=1}^k \sum_{j=1}^{N} 
        (\bU \bLam^i)^{\top} \otimes \bI_d 
        \cdot 
        \vecop{\bG^i_j}
        \vecop{\bG^i_j}^{\top}
        \cdot 
        \left(
        (\bU \bLam^i \bD^{-1}) \otimes \bI_d
        -
        \bUis {\bLam^i}^* \otimes \bI_d
        \right) \vecop{\bV^*} \\
    & =\underbrace{\sum_{i=1}^k \sum_{j=1}^{N} 
        (\bU \bLam^i)^{\top} \otimes \bI_d 
        \cdot 
        \vecop{\bG^i_j}
        \vecop{\bG^i_j}^{\top}
        \cdot 
        \left(
        (\bU \bLam^i \bD^{-1}) \otimes \bI_d
        -
        \bU \bU^{\top} \bUis 
        {\bLam^i}^*
        \otimes \bI_d
        \right)\vecop{\bV^*}}_{\bF_1} \\
    & +\underbrace{\sum_{i=1}^k \sum_{j=1}^{N}
        (\bU \bLam^i)^{\top} \otimes \bI_d 
        \cdot 
        \vecop{\bG^i_j}
        \vecop{\bG^i_j}^{\top}
        \cdot 
        \left(
        \bU \bU^{\top} \bUis 
        {\bLam^i}^* \otimes \bI_d
        -
        \bUis {\bLam^i}^*
        \otimes \bI_d
        \right)\vecop{\bV^*}}_{\bF_2}
\end{align*}
First, we bound $\norm{\bF_1}$.

\begin{lemma}
    The following bound holds:
    \begin{align*}
        \norm{\bF_1}_2 \leq 1152m 
    \cdot 
    \max_{i=1}^k
    \bracket{\norm{{\bLam^i}^*}_F, \norm{{\bLam^i}^*}_2}
    \cdot
    \bracket{
    \delta^{'}_{2r} \bracket{\dist{\bU}{\bU^*} + \dist{\bV}{\bV^*}}
    + \beta \sqrt{A_{N,d,r}}
    }
    \end{align*}
    \label{lem:F1_bound}
\end{lemma}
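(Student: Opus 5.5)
The bound on $\bF_1$ should come from rewriting it as the gradient-type sum of the GRIP quadratic form evaluated on a pair of low-rank matrices per user. Using the Kronecker identity \eqref{id:kronocker}, each summand of $\bF_1$ is $\langle \bG^i_j, \bU\bLam^i\bD^{-1}{\bV^*}^{\top} - \bU\bU^{\top}\bUis{\bLam^i}^*{\bV^*}^{\top}\rangle$ times the vector $\vecop{{\bG^i_j}^{\top}\bU\bLam^i}$ (up to the transpose convention). So I will test $\bF_1$ against an arbitrary unit vector $\vecop{\bW}$, $\bW\in\reals^{d\times r}$, with $\|\bW\|_F=1$. This gives
\begin{align*}
\langle \vecop{\bW},\bF_1\rangle=\sum_{i=1}^k\sum_{j=1}^N \langle \bG^i_j, \bU\bLam^i\bW^{\top}\rangle\,\langle \bG^i_j, \bU\bE^i{\bV^*}^{\top}\rangle,
\end{align*}
with $\bE^i:=\bLam^i\bD^{-1}-\bU^{\top}\bUis{\bLam^i}^*$. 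Both matrices have column space inside $\mathrm{col}(\bU)$, so they are jointly of the form $\bU\,\cdot\,[\bW,\bV^*]^{\top}$, i.e. rank at most $2r\le 3r$. By polarization of the $3r$-GRIP \eqref{eq:grip}, the empirical bilinear form equals $m\sum_i\langle \bU\bLam^i\bW^{\top},\bU\bE^i{\bV^*}^{\top}\rangle$ up to an error of order $\delta_{3r}\, m\max_i\|\bLam^i\|_F\|\bE^i\|_F$. Here I use the shared $\bU$ and the shared right factor $[\bW,\bV^*]$, which is exactly the GRIP structure.

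The key step is bounding $\bE^i$. By Proposition~\ref{prop:main_lambda_bound}, $\bLam^i=\bU^{\top}\bUis{\bLam^i}^*{\bVis}^{\top}\bV+\bR^i$ with $\|\bR^i\|\lesssim\delta'_{2r}\|{\bLam^i}^*\|_F(\dist{\bU}{\bUis}+\dist{\bV}{\bVis})$. Therefore
\begin{align*}
\bE^i=\bU^{\top}\bUis{\bLam^i}^*\big({\bVis}^{\top}\bV\bD^{-1}-\bI\big)+\bR^i\bD^{-1},
\end{align*}
and since $\bD={\bV^*}^{\top}\bV$, we get ${\bVis}^{\top}\bV\bD^{-1}-\bI=({\bVis}-\bV^*)^{\top}\bV\bD^{-1}$. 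This has norm at most $2\beta$ by Lemma~\ref{lemma:rep} and \eqref{ineq:D_Bound}. Also $\|\bD^{-1}\|\le2$, and $\dist{\bU}{\bUis}\le\dist{\bU}{\bU^*}+O(\beta)$. Together these give $\|\bE^i\|_F\lesssim \|{\bLam^i}^*\|_F\big(\delta'_{2r}(\dist{\bU}{\bU^*}+\dist{\bV}{\bV^*})+\beta\big)$, and Lemma~\ref{lemma:lambda_norm_bound} gives $\|\bLam^i\|_F\le 2\|{\bLam^i}^*\|_F$.

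The delicate point is the population term $m\sum_i\langle\bLam^i\bW^{\top},\bE^i{\bV^*}^{\top}\rangle$. Its $\bR^i$ part is already of order $\delta'_{2r}$. The $\beta$ part is not small by GRIP alone: it only carries a factor $\beta$, and when summed over users with a unit $\bW$ it would not inherit a $\delta$. This is where I expect the main obstacle, and why the statement has $\beta\sqrt{A_{N,d,r}}$ rather than $\beta$. The remedy I would try is not to split off the $\beta$-part through GRIP at all. Instead, I would bound that piece directly with the sub-isometry event, using Lemma~\ref{lemma:sub_isometric} with coefficient $A_{N,d,r}$, which controls $\|\sum_j\langle\bG^i_j,\bX\rangle\bG^i_j\|$-type operators by $\sqrt{A_{N,d,r}}\,N\|\bX\|_F$ for general, not necessarily low-rank, $\bX$. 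Applied with $\bX=\bU\bU^{\top}\bUis{\bLam^i}^*(\bVis-\bV^*)^{\top}$-type matrices, this gives the $\beta\sqrt{A_{N,d,r}}\max_i\|{\bLam^i}^*\|$ term. The GRIP polarization then handles only the $\bR^i$-part together with the residual, yielding the $\delta'_{2r}(\dist{\bU}{\bU^*}+\dist{\bV}{\bV^*})$ term.

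Collecting constants from $\|\bD^{-1}\|\le2$, the factor-2 bounds from Lemma~\ref{lemma:lambda_norm_bound}, and the constant $144$ from \eqref{eq:lambda_dif} should give the stated prefactor $1152\,m$. Taking the supremum over $\bW$ then gives the operator bound on $\|\bF_1\|_2$.
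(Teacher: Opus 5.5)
Your final route matches the paper's proof up to regrouping. You test against a unit $\vecop{\mathbf{W}}$ and split $\mathbf{E}^i$ into two pieces:
\begin{itemize}
\item a $\delta'_{2r}$-sized piece, handled by Lemma~\ref{lemma:grip_innerprod} with the population term bounded crudely by Cauchy--Schwarz;
\item a $\beta$-sized piece, handled by Cauchy--Schwarz with GRIP on one factor and sub-isometry on the other.
\end{itemize}
The paper introduces $\bD_i=(\bVis)^{\top}\bV$ and writes $\mathbf{E}^i=\bR^i\bD_i^{-1}+\bLam^i(\bD^{-1}-\bD_i^{-1})$, so its first piece is exactly the error in \eqref{eq:lambda_dif}. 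Your $\mathbf{E}^i=\bR^i\bD^{-1}+\bU^{\top}\bUis{\bLam^i}^*(\bD_i-\bD)\bD^{-1}$ is an equivalent grouping.

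What is wrong is your explanation of why sub-isometry enters. The $\beta$-part of the population term is not an obstacle. It equals $N\sum_i\inner{\bLam^i\mathbf{W}^{\top}}{\bU^{\top}\bUis{\bLam^i}^*(\bD_i-\bD)\bD^{-1}{\bV^*}^{\top}}$ (the population term carries $N$, not $m$). This is at most $O(m\beta)\max_i\norm{{\bLam^i}^*}_2\norm{{\bLam^i}^*}_F$, which already fits the target. The reason is that on the good event $A_{N,d,r}\ge 1-\delta_{3r}$: apply GRIP and sub-isometry to one fixed rank-$r$ matrix. Sub-isometry cannot beat $\beta$; it only gives $\beta\sqrt{A_{N,d,r}}\ge\beta$.

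In fact your own observation that both factors have the form $\bU(\cdot)[\mathbf{W},\bV^*]^{\top}$ finishes the proof without sub-isometry. Apply Lemma~\ref{lemma:grip_innerprod} to all of $\mathbf{E}^i$ and bound the population term by Cauchy--Schwarz. This gives $\norm{\bF_1}_2\lesssim m\max_i\norm{{\bLam^i}^*}_2\norm{{\bLam^i}^*}_F\big(\delta'_{2r}(\dist{\bU}{\bU^*}+\dist{\bV}{\bV^*})+\beta\big)$, which is stronger than the statement. The $\sqrt{A_{N,d,r}}$ is an artifact of how the paper handles its $\bD^{-1}-\bD_i^{-1}$ term. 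Sub-isometry is genuinely needed only when an outer $d\times r$ factor varies with $i$, e.g.\ $(\bU\bU^{\top}-\bI)(\bU^*-\bUis)$ in Lemma~\ref{lem:F2_bound}.

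If you keep the sub-isometry step, state it correctly.
\begin{itemize}
\item Definition~\ref{def:sub_isometric} covers only matrices of rank at most $r$, not general $\bX$; its constant comes from the Gaussian width $\sqrt{rd}$ of that set.
\item It does not give a per-user bound $\sqrt{A_{N,d,r}}\,N\norm{\bX}_F$. Per user you would also need $\sum_j\inner{\bG^i_j}{\bU\bLam^i\mathbf{W}^{\top}}^2=O(N)\norm{\bLam^i}_2^2$. GRIP only gives $N(1+k\delta_{3r})\norm{\bLam^i}_2^2$ for a single user, and sub-isometry on both factors gives $A_{N,d,r}$, not $\sqrt{A_{N,d,r}}$.
\item The $\sqrt{A_{N,d,r}}$ comes from one Cauchy--Schwarz over all pairs $(i,j)$. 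GRIP gives $\sum_{i,j}\inner{\bG^i_j}{\bU\bLam^i\mathbf{W}^{\top}}^2\le(1+\delta_{3r})m\max_i\norm{\bLam^i}_2^2$. Sub-isometry gives $\sum_{i,j}\inner{\bG^i_j}{\bY^i}^2\le mA_{N,d,r}\max_i\norm{\bY^i}_F^2$. This is how the paper bounds that term.
\end{itemize}
Finally, use $\norm{\bU\bLam^i\mathbf{W}^{\top}}_F\le\norm{\bLam^i}_2$ rather than $\norm{\bLam^i}_F$, so you get the $\norm{\cdot}_2\norm{\cdot}_F$ product rather than $\norm{\cdot}_F^2$.
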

\begin{proof}
See \autoref{proof:F1_bound}.
\end{proof}

Now, we bound $\norm{\bF_2}_2.$
\begin{lemma}
The following bound holds:
\begin{align*}
    \norm{\bF_2}_2 \leq 36m \cdot 
    \max_{i=1}^k 
    \bracket{
    \norm{{\bLam^i}^*}_2 \norm{{\bLam^i}^*}_F
    }
    \cdot 
    \bracket{
    \delta_{3r} \dist{\bU}{\bU^*} + \beta \sqrt{A_{N,d,r}}
    }
\end{align*}
\label{lem:F2_bound}
\end{lemma}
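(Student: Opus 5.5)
We bound $\norm{\bF_2}_2$ through its variational form. Write $\bF_2=\vecop{\bS}$ for a $d\times r$ matrix $\bS$ (the transpose convention is immaterial here), so that $\norm{\bF_2}_2=\sup_{\bW}\inner{\bW}{\bS}$ over $\bW\in\reals^{d\times r}$ with $\norm{\bW}_F=1$. Unwinding the Kronecker identities \eqref{id:kronocker}, \eqref{id:kronocker_product}, for such a $\bW$ we get
\begin{align*}
\inner{\bW}{\bS}=\sum_{i=1}^k\sum_{j=1}^N \inner{\bG^i_j}{\bX_i}\inner{\bG^i_j}{\bY_i},
\end{align*}
with $\bX_i=\bU\bLam^i\bW^{\top}$ and $\bY_i=-(\bI-\bU\bU^{\top})\bUis{\bLam^i}^*{\bV^*}^{\top}$. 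We then split $\bUis=\bU^*+(\bUis-\bU^*)$. This gives $\bY_i=\bY_i^{(1)}+\bY_i^{(2)}$, where
\begin{align*}
\bY_i^{(1)}=-(\bI-\bU\bU^{\top})\bU^*{\bLam^i}^*{\bV^*}^{\top},\qquad \bY_i^{(2)}=-(\bI-\bU\bU^{\top})(\bUis-\bU^*){\bLam^i}^*{\bV^*}^{\top}.
\end{align*}
The first piece will produce the $\delta_{3r}\dist{\bU}{\bU^*}$ term and the second the $\beta\sqrt{A_{N,d,r}}$ term.

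\emph{Shared-factor piece.} The key observation is that $\bX_i$ and $\bY_i^{(1)}$ share common factors across all $i$. Both have the form $\tilde\bU\tilde\bLam^i\tilde\bV^{\top}$ with $\tilde\bU=[\bU,(\bI-\bU\bU^{\top})\bU^*]$ and $\tilde\bV=[\bW,\bV^*]$, which have $2r\le 3r$ columns, and block-diagonal $\tilde\bLam^i$. They are also exactly orthogonal in Frobenius inner product: $\inner{\bX_i}{\bY_i^{(1)}}=0$ because $\bU^{\top}(\bI-\bU\bU^{\top})=0$. So the ``expectation'' term vanishes. Apply polarization,
\begin{align*}
\inner{\bG}{\bX}\inner{\bG}{\bY}=\tfrac14\bigl(\inner{\bG}{\bX+\bY}^2-\inner{\bG}{\bX-\bY}^2\bigr),
\end{align*}
to the rescaled pair $(\bX_i/a,\, a\bY_i^{(1)})$, where $a^2=\max_i\norm{\bY^{(1)}_i}_F/\max_i\norm{\bX_i}_F$. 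Using the $3r$-GRIP \eqref{eq:grip} on both squared sums and optimizing the scale gives
\begin{align*}
\Bigl|\sum_{i,j}\inner{\bG^i_j}{\bX_i}\inner{\bG^i_j}{\bY_i^{(1)}}\Bigr|\lesssim m\,\delta_{3r}\max_i\norm{\bX_i}_F\max_i\norm{\bY^{(1)}_i}_F .
\end{align*}
We then bound the two factors. First, $\norm{\bX_i}_F\le\norm{\bLam^i}_2\le 2\norm{{\bLam^i}^*}_2$ by \autoref{lemma:lambda_norm_bound}. Second, $\norm{\bY^{(1)}_i}_F\le\dist{\bU}{\bU^*}\norm{{\bLam^i}^*}_F$, since $\norm{(\bI-\bU\bU^{\top})\bU^*}_2=\dist{\bU}{\bU^*}$.

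\emph{Perturbation piece.} For $\bY_i^{(2)}$ there is no orthogonality to exploit, so we use Cauchy--Schwarz over the pairs $(i,j)$:
\begin{align*}
\Bigl|\sum_{i,j}\inner{\bG^i_j}{\bX_i}\inner{\bG^i_j}{\bY_i^{(2)}}\Bigr|\le\Bigl(\sum_{i,j}\inner{\bG^i_j}{\bX_i}^2\Bigr)^{1/2}\Bigl(\sum_{i,j}\inner{\bG^i_j}{\bY^{(2)}_i}^2\Bigr)^{1/2}.
\end{align*}
The first factor is at most $\sqrt{2m}\max_i\norm{\bX_i}_F$ by GRIP. For the second factor we use the sub-isometry property with coefficient $A_{N,d,r}$ (\autoref{lemma:sub_isometric}), which controls $\sum_j\inner{\bG^i_j}{\bZ}^2$ by $N A_{N,d,r}\norm{\bZ}_F^2$ for arbitrary $\bZ$; this is why the $d$-dependent $A_{N,d,r}$ appears. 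Together with $\norm{\bY^{(2)}_i}_F\le\beta\norm{{\bLam^i}^*}_F$ from \eqref{ineq:U_difference}, the second factor is at most $\sqrt{mA_{N,d,r}}\,\beta\max_i\norm{{\bLam^i}^*}_F$. Collecting the factors of $2$ from \autoref{lemma:lambda_norm_bound} and the polarization constants, and taking the supremum over $\bW$, yields the constant $36$ and the stated bound.

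The main obstacle is the polarization step. GRIP only controls deviations by $\max_i$ of the norms, not per-$i$ norms, so the rescaling must be done uniformly in $i$. One also has to check that the concatenated factors genuinely give a common $(\tilde\bU,\tilde\bV)$ of rank at most $3r$ for every $i$ simultaneously, with $\tilde\bV$ depending on the test direction $\bW$. If the rescaling fails to give the clean product form, the fallback is to use the cruder bound $\max_i\norm{\bX_i\pm\bY_i}_F^2\le 2(\max\norm{\bX_i}_F^2+\max\norm{\bY_i}_F^2)$ with an optimized scale, which should still give the same form of bound up to constants.
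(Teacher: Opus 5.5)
Your approach is the same as the paper's. You use the same split ${\bU^i}^*=\bU^*+({\bU^i}^*-\bU^*)$, and the same vanishing mean $\inner{\bX_i}{\bY_i^{(1)}}=0$ from $\bU^{\top}(\bI-\bU\bU^{\top})=0$ combined with a GRIP inner-product bound; you inline the polarization that the paper packages as \autoref{lemma:grip_innerprod}. For the $\beta$ piece you likewise use Cauchy--Schwarz, with GRIP on one factor and the sub-isometry coefficient $A_{N,d,r}$ on the other.

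\textbf{Gap: uniform rescaling does not give the stated bound.} Your claim that the rescaling must be uniform in $i$ is false. With a single scale $a$ you only get $\max_i\norm{\bX_i}_F\cdot\max_i\norm{\bY_i^{(1)}}_F$, and hence $\max_i\norm{{\bLam^i}^*}_2\cdot\max_i\norm{{\bLam^i}^*}_F$. This can be strictly larger than the $\max_i(\norm{{\bLam^i}^*}_2\norm{{\bLam^i}^*}_F)$ in the lemma, so your concluding sentence does not follow as written. Separately, your balancing choice is inverted: it should be $a^2=\max_i\norm{\bX_i}_F/\max_i\norm{\bY^{(1)}_i}_F$.

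\textbf{Fix: rescale user by user.} GRIP holds for arbitrary per-user $\bLam^i$ over common factors, so you may replace $(\bX_i,\bY_i^{(1)})$ by $(\bX_i/\lambda_i,\lambda_i\bY_i^{(1)})$ with a separate $\lambda_i$ for each $i$. The pair still has the form $\tilde{\bU}\,\mathrm{diag}(\bLam^i/\lambda_i,\mp\lambda_i{\bLam^i}^*)\tilde{\bV}^{\top}$, and both the left-hand side and the mean term are unchanged. Choosing $\lambda_i^2=\norm{\bX_i}_F/\norm{\bY_i^{(1)}}_F$ yields $m\delta_{3r}\max_i\norm{\bX_i}_F\norm{\bY_i^{(1)}}_F$. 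This is exactly how \autoref{lemma:grip_innerprod} obtains a maximum of products.

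The same per-user rescaling before Cauchy--Schwarz gives $m\sqrt{(1+\delta_{3r})A_{N,d,r}}\max_i\norm{\bX_i}_F\norm{\bY_i^{(2)}}_F$ for the second piece. The paper's own step marked $(\xi_2)$ in its Cauchy--Schwarz piece silently turns a product of maxima into a maximum of products, so its proof needs this fix as well. With these changes your constants (about $2$ and $2\sqrt{2}$) sit well under $36$. The distinction is harmless downstream, since $\kappa$ is a global condition number, but it matters for the lemma as stated.
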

\begin{proof}
See \autoref{proof:F2_bound}
\end{proof}

Using our results we bound $\norm{\bH}_2$ as follows:
\begin{align*}
    \norm{\bH}_2 &=
    \norm{\left( \sum_{i=1}^k \bB^i \right)^{-1}
    \left(
    \sum_{i=1}^k \bB^i \cdot \bD^{-1} \otimes \bI_d
    -
    \sum_{i=1}^k \bC^i
    \right) \vecop{\bV^*}}_2\\
    &\leq
    \norm{\left( \sum_{i=1}^k \bB^i \right)^{-1}}_2
    \norm{
    \left(
    \sum_{i=1}^k \bB^i \cdot \bD^{-1} \otimes \bI_d
    -
    \sum_{i=1}^k \bC^i
    \right)\vecop{\bV^*} }_2 \\
    &\leq 
    \norm{\left( \sum_{i=1}^k \bB^i \right)^{-1}}_2
    \left(
    \norm{\bF_1}_2 + \norm{\bF_2}_2
    \right).
\end{align*}
Combigning \autoref{lem:sumB_bound}, \autoref{lem:F1_bound}, \autoref{lem:F2_bound}, gives us:

\begin{align}
    \norm{\bH}_2 &\leq 
    8 \frac{\max_{i=1}^k \left( \norm{{\bLam^i}^*}_2 
    \norm{{\bLam^i}^*}_F\right)}{\min_{i=1}^k \sigma_{min}({\bLam^i}^*)^2}
    \bracket{
    \bracket{1152\delta_{2r}'+36 \delta_{3r}}\dist{\bU}{\bU^*} +
    1152\delta_{2r}'\dist{\bV}{\bV^*}
    +600 \beta \sqrt{A_{N,d,r}}} \notag \\
    &\leq 16 \sqrt{r} \kappa^2 \bracket{
    \bracket{1152\delta_{2r}'+36 \delta_{3r}}\dist{\bU}{\bU^*} +
    576\delta_{2r}'\dist{\bV}{\bV^*} + 600 \beta \sqrt{A_{N,d,r}}
    } \notag \\
    &\leq
    10^4 \sqrt{r} \kappa^2 (\delta_{3r} + \delta'_{2r})
    \left( \dist{\bU}{\bU^*} + \dist{\bV}{\bV^*} \right) + \beta B_{N,d,r} \\
    \label{ineq:Fbound}
\end{align}
where second follows from \autoref{lemma:cond_bound}.
To avoid confusion about the iterates, we reintroduce the superscript 
$t$ for $\hat{\bV}_{t+1}$ and $\bV_{t+1}$. Remember that from \ref{eq:v_equation} we have:
\begin{align*}
    \vecop{\hat{\bV}_{t+1}} = 
    (\bD_t)^{-1} \otimes \bI_d \cdot \vecop{\bV^*} - 
    \bH_t,
\end{align*}
from which we obtain:
\begin{align*}
    \hat{\bV}_{t+1} = \bV^* ((\bD_t)^{-1})^{\top} - \matop{\bH_t}.
\end{align*}
Since $\bV_{t+1}$ is obtained from $\hat{\bV}_{t+1}$ by QR factorization, there exists a matrix $\bR_{t+1} \in \mathbb{R}^{r \times r}$ such that $\hat{\bV}_{t+1} = \bV_{t+1} \bR_{t+1}$.
Let $\bV^*_{\perp}$ be a matrix with columns forming an orthonormal basis for $\text{span}(\bV_{t+1})^{\perp}$. Recalling \ref{dist:orthcomp}, we can bound the distance as follows:

\begin{align}
    \dist{\bV_{t+1}}{\bV^*} &= 
    \norm{(\bV^*_{\perp})^{\top} \bV_{t+1}}_2 = 
    \norm{(\bV^*_{\perp})^{\top} \hat{\bV}_{t+1} (\bR_{t+1})^{-1}}_2 \notag \\
    &= \norm{(\bV^*_{\perp})^{\top} \bracket{\bV^* ((\bD_t)^{-1})^{\top} - \matop{\bH_t}} (\bR_{t+1})^{-1}}_2
    \notag \\ 
    &= \norm{(\bV^*_{\perp})^{\top} \matop{\bH^{t}} (\bR_t)^{-1}}_2
    \leq
    \norm{\matop{\bH_t}}_2 \norm{(\bR_{t+1})^{-1}}_2
    \label{ineq:distbound}
    .
\end{align}
For bounding the first term, using \ref{ineq:Fbound}, we get:
\begin{align}
    \norm{\matop{\bH_t}}_2 \leq \norm{\matop{\bH_t}}_F = 
    \norm{\bH_t}_2 \leq 10^3 \sqrt{r} \kappa^2 (\delta_{3r} + \delta'_{2r}) \left(
    \dist{\bU_t}{\bU^*} +
    \dist{\bV_t}{\bV^*}
    \right)
    + \beta B_{N,d,r}
    .
    \label{ineq:matFbound}
\end{align}

To complete the argument, it remains to bound $\norm{(\bR_{t+1})^{-1}}$. The next lemma provides such a bound.

\begin{lemma}
\label{lemma:R_bound}
    The following inequality holds:
    \begin{align*}
        \sigma_{\min}\bracket{\bR_{t+1}} \geq 1 - 
        \norm{\matop{\bH_t}}_2.
    \end{align*}
\end{lemma}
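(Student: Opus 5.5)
We bound $\sigma_{\min}(\bR_{t+1})$ by comparing $\hat{\bV}_{t+1}$ with its leading term. Since $\bV_{t+1}$ has orthonormal columns and $\hat{\bV}_{t+1} = \bV_{t+1}\bR_{t+1}$, the matrices $\hat{\bV}_{t+1}$ and $\bR_{t+1}$ have the same singular values. It therefore suffices to show $\sigma_{\min}(\hat{\bV}_{t+1}) \geq 1 - \norm{\matop{\bH_t}}_2$.

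From \eqref{eq:v_equation} we have
\begin{align*}
\hat{\bV}_{t+1} = \bV^* ((\bD_t)^{-1})^{\top} - \matop{\bH_t}.
\end{align*}
Weyl's inequality for singular values, recalled in the notation paragraph, then gives
\begin{align*}
\sigma_{\min}(\hat{\bV}_{t+1}) \geq \sigma_{\min}\bigl(\bV^* ((\bD_t)^{-1})^{\top}\bigr) - \norm{\matop{\bH_t}}_2 .
\end{align*}
Because $\bV^*$ has orthonormal columns, $\sigma_{\min}(\bV^*((\bD_t)^{-1})^{\top}) = \sigma_{\min}((\bD_t)^{-1}) = 1/\norm{\bD_t}_2$.

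Now $\bD_t = (\bV^*)^{\top}\bV_t$ is a product of two matrices with orthonormal columns, so $\norm{\bD_t}_2 \leq 1$. Hence $\sigma_{\min}((\bD_t)^{-1}) \geq 1$. Invertibility of $\bD_t$ is already guaranteed by \eqref{ineq:D_Bound}, which followed from \autoref{lem:vstar_v}. Combining these steps yields $\sigma_{\min}(\bR_{t+1}) \geq 1 - \norm{\matop{\bH_t}}_2$.

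No step here is a real obstacle; the only points needing care are bookkeeping. First, the transpose in $((\bD_t)^{-1})^{\top}$ does not change singular values. Second, we rely on the invertibility of $\bD_t$ established earlier. The bound is only informative when $\norm{\matop{\bH_t}}_2 < 1$, which holds by \eqref{ineq:matFbound} together with the smallness conditions \eqref{bound:beta}--\eqref{bound:delta2r}. That condition is needed afterwards to invert $\bR_{t+1}$ in \eqref{ineq:distbound}, but not to prove the stated inequality itself.
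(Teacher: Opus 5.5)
Your proof is correct and follows the paper's argument step for step. The paper uses the same ingredients: $\hat{\bV}_{t+1}=\bV_{t+1}\bR_{t+1}$ with orthonormal $\bV_{t+1}$, the decomposition $\hat{\bV}_{t+1}=\bV^*((\bD_t)^{-1})^{\top}-\matop{\bH_t}$, and $\norm{\bD_t}_2\le 1$. The only difference is cosmetic: the paper writes out your Weyl and isometry steps directly through the variational form $\min_{\norm{\bz}=1}\norm{\hat{\bV}_{t+1}\bz}_2$ and the triangle inequality.
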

\begin{proof}
See \autoref{proof:R_bound}
\end{proof}
\noindent Finnaly, combining \ref{ineq:matFbound}, \ref{ineq:distbound} and \autoref{lemma:R_bound}
gives us:
\begin{align*}
    \dist{\bV_{t+1}}{\bV^*} &\leq
\frac{
10^4 \sqrt{r} \kappa^2 (\delta_{3r} + \delta'_{2r})
\left(
\dist{\bU_t}{\bU^*} +
\dist{\bV_t}{\bV^*}
\right) + \beta B_{N,d,r}
}
{
1 - 
10^4 \sqrt{r} \kappa^2 (\delta_{3r} + \delta'_{2r})
\left(
\dist{\bU_t}{\bU^*} +
\dist{\bV_t}{\bV^*}
\right) - 
\beta B_{N,d,r}
} \\
&\leq 
2 \cdot 10^4 \sqrt{r} \kappa^2 (\delta_{3r} + \delta'_{2r})
\left(
\dist{\bU_t}{\bU^*} +
\dist{\bV_t}{\bV^*} 
\right) + 2 \beta B_{N,d,r},\\
\end{align*}
where for the second inequality we lower bounded the denominator using \autoref{lemma:initialization} and inequalities \ref{bound:beta}, \ref{bound:delta3r}, \ref{bound:delta2r}.

Similarly, we have:
\begin{align*}
    \dist{\bU_{t+1}}{\bU^*} \leq
2 \cdot 10^4 \sqrt{r} \kappa^2 (\delta_{3r} + \delta'_{2r})
\left(
\dist{\bU_t}{\bU^*} +
\dist{\bV^{t}}{\bV^*} 
\right) + 2\beta B_{N,d,r},
\end{align*}
and therefore the proof is completed.

\subsection{Proof of \autoref{cor_theorem}}
\label{proof:cor_theorem}
By triangle inequality we have:
\begin{align*}
\norm{\bU_t  \bLam^{i}_{t+1} \bV_t^{\top} - {\bM^i}^*}_2 &\leq 
\norm{\bU_t  \bLam^{i}_{t+1} \bV_t^{\top} - \bU_t {\bU_t}^{\top} \bUis {\bLam^i}^{*} {\bVis}^{\top} \bV_t \bV_t^{\top}}_2 \\
& + \norm{\bU_t {\bU_t}^{\top} \bUis {\bLam^i}^{*} {\bVis}^{\top} \bV_t \bV_t^{\top} - {\bU^i}^* {\bLam^i}^* {{\bV^i}^*}^{\top}}_2.
\end{align*}

For the first part, using \autoref{prop:main_lambda_bound} we obtain:
\begin{align*}
    \norm{\bU_t  \bLam^{i}_{t+1} \bV_t^{\top} - \bU_t {\bU_t}^{\top} \bUis {\bLam^i}^{*} {\bVis}^{\top} \bV_t \bV_t^{\top}}_2 &
    \norm{\bLam^{i}_{t+1}  - {\bU_t}^{\top} \bUis {\bLam^i}^{*} {\bVis}^{\top} \bV_t }_2 \\
    &\leq 144 \delta'_{2r}
\norm{{\bLam^i}^*}_F
\left(
\dist{\bU_t}{\bUis} 
+
\dist{\bV_t}{\bVis}
\right) \\
&= 144 \delta'_{2r}
\norm{{\bM^i}^*}_F
\left(
\dist{\bU_t}{\bUis} 
+
\dist{\bV_t}{\bVis}
\right).
\end{align*}

For the second part, we have:
\begin{align*}
    \norm{\bU_t {\bU_t}^{\top} \bUis {\bLam^i}^{*} {\bVis}^{\top} \bV_t \bV_t^{\top} - {\bU^i}^* {\bLam^i}^* {{\bV^i}^*}^{\top}}_2 &= 
    \norm{\bU_t {\bU_t}^{\top} \bUis {\bLam^i}^{*} {\bVis}^{\top} \bV_t \bV_t^{\top} - {\bU^i}^* {\bLam^i}^* {\bVis}^{\top} \bV_t \bV_t^{\top}}_2 \\
    &+ \norm{{\bU^i}^* {\bLam^i}^* {\bVis}^{\top} \bV_t \bV_t^{\top} - {\bU^i}^* {\bLam^i}^* {{\bV^i}^*}^{\top}}_2 \\
    &\leq \norm{(\bU_t {\bU_t}^{\top} - \bI){\bU^i}^*}_2 \norm{{\bLam^i}^*}_2
    \norm{ {\bVis}^{\top} \bV_t \bV_t^{\top}}_2 \\
    &+\norm{{\bU^i}^* {\bLam^i}^* {\bVis}^{\top} \bV_t \bV_t^{\top} - {\bU^i}^* {\bLam^i}^* {{\bV^i}^*}^{\top}}_2
\end{align*}

For the first term, we obtain:

\begin{align*}
    \norm{(\bU_t {\bU_t}^{\top} - \bI){\bU^i}^*}_2 \norm{{\bLam^i}^*}_2
    \norm{ {\bVis}^{\top} \bV_t \bV_t^{\top}}_2 \leq 4
    \norm{{\bM^i}^*}_F \dist{\bU_t}{{\bU^i}^*}.
\end{align*}
Similarly, we have the analogue bound for the second term. Combining both and using \ref{bound:delta2r}, gives us:

\begin{align*}
    \norm{\bU_t  \bLam^{i}_{t+1} \bV_t^{\top} - {\bM^i}^*}_2 &\leq 5
    \norm{{\bM^i}^*}_F 
    \bracket{\dist{\bU_t}{{\bU^i}^*} + \dist{\bV_t}{{\bV^i}^*}} \\
    &\leq 5
    \norm{{\bM^i}^*}_F 
    \bracket{\dist{\bU_t}{{\bU}^*} + \dist{\bV_t}{{\bV}^*} + \dist{{\bU^i}^*}{{\bU}^*} + \dist{{\bV^i}^*}{{\bV}^*}} \\
    &\leq 5
    \norm{{\bM^i}^*}_F 
    \bracket{\dist{\bU_t}{{\bU}^*} + \dist{\bV_t}{{\bV}^*} + 2 \beta}.
\end{align*}
Plugging in the value of $\beta$ completes the proof.

\subsection{Removing logarithmic factor}
\label{improving_rate}

Now, we focus on the case where $k = O(1)$; for simplicity, take $k = 1$. Recall that in the general setting, we needed the chunk $\{(\bG^i_{j,t}, \by^i_{j,t})\}_{j=1}^{n}$ to ensure independence from $(\widetilde{\bU}^t_l, \widetilde{\bV}^t_l)$, which was required in establishing the local concentration result \ref{eq:local_concentration}. However, when there is only a single client, the GRIP condition \ref{eq:grip} already guarantees that the sample set $\{\bG_j\}_{j=1}^{N}$ satisfies the $(\bU, \bV)$-RIP for all $\bU, \bV \in \mathbb{R}^{d \times r}$. Consequently, we can update the factor $\bLam$ directly on the large subset without needing the chunks. 

In this case, setting $\beta = 0$, the resulting sample complexity reduces to 
\[
O(\kappa^4 d r^2),
\]
which improves upon the rate established by~\citet{jain2013lowrank} by a factor of~$r$.

\section{Appendix}

\subsection{On the subspace distance}

The subspace distance above can appear in different forms in the literature. Let $\bX_1^{\perp}, \bX_2^{\perp}$ be orthonormal basis for $\mathrm{span}(\bQ_1)^{\perp}, \mathrm{span}(\bQ_2)^{\perp}$.
Note that since $(I - \bQ_1 \bQ_1^{\top})$ is the projection matrix to $\mathrm{span}(\bQ_1)^{\perp}$ then we have $(I - \bQ_1 \bQ_1^{\top}) = \bQ_1 \bQ_1^{\perp}$ and therefore 
\begin{align}
    \norm{\bracket{\bQ_1 \bQ_1^{\top}-I}\bQ_2}_2 = 
    \norm{\bQ_1^{\perp} (\bQ_1^{\perp})^{\top} \bQ_2}_2 = 
    \norm{(\bQ_1^{\perp})^{\top} \bQ_2}_2. \label{dist:orthcomp}
\end{align}
It is also equal to the distance of projection matrices: 
\begin{align*}
    \dist{\bX_1}{\bX_2} = \norm{\bQ_1 (\bQ_1)^{\top} - \bQ_2 (\bQ_2)^{\top}}_2 = \norm{\bP_1 - \bP_2}_2.
\end{align*}
Hence, it satisfies the distance properties such as triangle inequality.

\subsection{Classical Gaussian Concentration}

We are using the following classical concentration result several times in the manuscript.
\begin{lemma}
\label{property:dist} Let $n$ be a positive integer.
Suppose $\bG_1, \dots, \bG_n$ are i.i.d and for every $i$, the entroes of $\bJ_i$ are i.i.d from 0 mean sub-Gaussian distribution. Then, for fixed matrices $\bX_1, \dots, \bX_n \in R^{d \times d}$ and $t \in (0,1)$, we have:
\begin{align*}
    \mathbf{P}
    \left[
    \left|
    \sum_{i=1}^n \inner{\bG_i}{\bX_i}^2 - 
    \sum_{i=1}^n \norm{\bX_i}_F^2
    \right| \geq 
    nt \max_{i=1}^n \norm{\bX_i}_F^2
    \right] \leq
    C_1 e^{-cnt^2},
\end{align*}
where $C_1,c$ are constants depending on the sub-Gaussian parameter.
\end{lemma}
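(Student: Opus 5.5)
The plan is to reduce the statement to Bernstein's inequality for independent sub-exponential random variables. Write $M := \max_{i} \norm{\bX_i}_F^2$ and $Z_i := \inner{\bG_i}{\bX_i}^2 - \norm{\bX_i}_F^2$. The $Z_i$ are independent because the $\bG_i$ are. The claim is then a tail bound for $\sum_i Z_i$ at level $ntM$.

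First, each inner product is sub-Gaussian. We have $\inner{\bG_i}{\bX_i} = \sum_{a,b} (\bG_i)_{ab} (\bX_i)_{ab}$, a linear combination of independent zero-mean sub-Gaussian entries. By the standard rotation-invariance/Hoeffding property of sub-Gaussian sums, $\|\inner{\bG_i}{\bX_i}\|_{\psi_2} \le K \norm{\bX_i}_F$, where $K$ depends only on the sub-Gaussian parameter of the entries. I will assume, as in the paper's setting of standard Gaussian entries, that the entries have unit variance. Independence of the entries then gives $\mathbb{E}\inner{\bG_i}{\bX_i}^2 = \sum_{a,b} (\bX_i)_{ab}^2 = \norm{\bX_i}_F^2$, so each $Z_i$ is centered. (For non-unit variance $\sigma^2$ one would replace $\norm{\bX_i}_F^2$ by $\sigma^2 \norm{\bX_i}_F^2$; the paper implicitly normalizes.)

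Second, squaring a sub-Gaussian variable gives a sub-exponential one with $\|\inner{\bG_i}{\bX_i}^2\|_{\psi_1} = \|\inner{\bG_i}{\bX_i}\|_{\psi_2}^2 \le K^2 \norm{\bX_i}_F^2$. Centering costs only an absolute constant, so $\|Z_i\|_{\psi_1} \le C K^2 \norm{\bX_i}_F^2 \le C K^2 M$. Bernstein's inequality for independent centered sub-exponential variables then yields, for every $s>0$,
\begin{align*}
\mathbf{P}\Big[\Big|\sum_{i=1}^n Z_i\Big| \ge s\Big] \le 2\exp\Big(-c_0 \min\Big(\frac{s^2}{n (CK^2 M)^2}, \frac{s}{CK^2 M}\Big)\Big).
\end{align*}

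Finally, set $s = ntM$. The two terms in the minimum become $n t^2/(CK^2)^2$ and $n t/(CK^2)$. Since $t \in (0,1)$ we have $t \ge t^2$, so the minimum is at least $c' n t^2$ with $c'$ depending only on $K$. This gives the bound $C_1 e^{-c n t^2}$ with $C_1 = 2$ and $c$ depending only on the sub-Gaussian parameter.

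The only delicate points are the variance normalization identifying the mean with $\norm{\bX_i}_F^2$ and the use of $t<1$ to land in the sub-Gaussian regime of Bernstein. Beyond these, the argument is a direct application of standard tools.
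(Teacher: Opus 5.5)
Your proof is correct and follows the standard argument the paper relies on. The paper gives no proof of its own and simply cites Wainwright's textbook, where this bound is derived exactly as you do it: $\langle \bG_i, \bX_i\rangle$ is sub-Gaussian, its centered square is sub-exponential, and Bernstein's inequality applies in the regime $t<1$. Your point that the entries need unit variance is a necessary correction to the statement: otherwise the sum concentrates around $\sigma^2\sum_i \|\bX_i\|_F^2$ and the stated bound fails.
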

\begin{proof}
    See \citep{wainwright2019high}.
\end{proof}

\subsection{(U,V)-RIP}

Beyond the GRIP concentration that controls the updates of shared $\bU$ and $\bV$, we also need concentration bounds specific to local parameters $\bLam^i$. This motivates the following definition of $(\bU,\bV)$-RIP:

\begin{definition}
\label{def:UV_concentration}
Suppose we have fixed matrices $\bU,\bV \in R^{d \times r}$. Given the ensemble $\{\bG_j\}_{j=1}^n$, we say it satisfies $(\bU, \bV)$-RIP with coefficient $\delta_r$, if for any $\bLam \in R^{r \times r}$, we have the following:
\begin{align}
    \left|
    \sum_{j=1}^n \inner{\bG_j}{\bU \bLam \bV^{\top}}^2 - 
    \sum_{j=1}^n \norm{\bU \bLam \bV^{\top}}_F^2
    \right| \leq
    n \delta_r \norm{\bU \bLam \bV^{\top}}_F^2. \label{ineq:UV}
\end{align}
\end{definition}
This definition can be transformed to standard RIP condition in $R^{r \times r}$. Write $\bU = \bbU \bR_1, \bV = \bbV \bR_2$, where $\bbU, \bbV$ are orthonormal. For simplicity, assume $\bG_j$ are i.i.d standard Gaussian. Then 
\begin{align*}
    \inner{\bG_j}{\bU \bLam (\bV)^{\top}} &= 
    \tr{\bG_j^{\top} \bU \bLam (\bV)^{\top}} = 
    \tr{\bG_j^{\top} \bbU \bR_1 \bLam \bR_2^{\top} \bbV^{\top}} \\
    &=\tr{\bbV^{\top} \bG_j^{\top} \bbU \bR_1 \bLam \bR_2^{\top}} = 
    \inner{\bbU^{\top} \bG_j \bbV}{\bR_1 \bLam \bR_2^{\top}},
\end{align*}
and
\begin{align*}
    \norm{\bU \bLam \bV^{\top}}_F = 
    \norm{\bbU \bR_1 \bLam (\bR_2)^{\top} \bbV^{\top}}_F = 
    \norm{\bR_1 \bLam \bR_2^{\top}}_F.
\end{align*}
Now, define $\bbG_j := \bbU^{\top} \bG_j \bbV \in R^{r \times r}$, which are i.i.d standard Gaussian by orthogonal invariance, 
and $\bbLam := \bR_1 \bLam \bR_2^{\top} \in R^{r \times r}$. 
Then the condition \ref{ineq:UV} reduces to
\[
\left|
\sum_{j=1}^n \inner{\bbG_j}{\bbLam}^2 -
\sum_{j=1}^n \norm{\bbLam}_F^2
\right|
\leq n \delta_r \norm{\bbLam}_F^2,
\]
which is exactly the standard RIP in $R^{r \times r}$ 
(see \autoref{def:standard_RIP}).

For completeness, we state a proposition, saying that a sufficiently large ensemble of matrices satisfies the $(\bU, \bV)$-RIP. 
\begin{proposition}
\label{prop:UV_concentration}
Let $\bU, \bV \in \mathbb{R}^{d \times r}$ be fixed matrices.
Suppose we have the random ensemble $\{\bG_i\}_{i=1}^n$, where for each $i$, the entries of $\bG_i$ are i.i.d from a 0-mean sub-Gaussian distribution. Then, if 
\begin{align*}
    n \geq \frac{162r^2 \log (\frac{27}{\delta_r})}{\delta_r^2}.
\end{align*}
the ensemble $\{\bG_j\}_{j=1}^n$ satisfies $(\bU, \bV)$-RIP with coefficient $\delta_r$ with probability at least
\begin{align*}
    1-C_1 \text{exp}\bracket{- \frac{cn \delta_r^2}{162}}.
\end{align*}
\end{proposition}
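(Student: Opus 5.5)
The plan is to reduce the $(\bU,\bV)$-RIP to a standard RIP statement in $\reals^{r\times r}$, as in the discussion right after \autoref{def:UV_concentration}. Then we run a net argument over the unit sphere of $r\times r$ matrices, using the classical concentration of \autoref{property:dist} at each net point.

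\textbf{Reduction.} Write $\bU=\bbU\bR_1$ and $\bV=\bbV\bR_2$ with $\bbU,\bbV\in\text{St}(d,r)$. Rank deficiency only restricts the set of attainable $\bbLam$, so it is harmless. Set $\bbLam=\bR_1\bLam\bR_2^{\top}$. Then $\inner{\bG_j}{\bU\bLam\bV^{\top}}=\inner{\bbU^{\top}\bG_j\bbV}{\bbLam}$ and $\norm{\bU\bLam\bV^{\top}}_F=\norm{\bbLam}_F$. It therefore suffices to show that, uniformly over all $\bbLam\in\reals^{r\times r}$,
\[
\Big|\tfrac1n\sum_j\inner{\bbU^{\top}\bG_j\bbV}{\bbLam}^2-\norm{\bbLam}_F^2\Big|\le\delta_r\norm{\bbLam}_F^2 .
\]
For a fixed $\bbLam$, the quantity $\inner{\bbU^{\top}\bG_j\bbV}{\bbLam}=\inner{\bG_j}{\bbU\bbLam\bbV^{\top}}$ is exactly of the form covered by \autoref{property:dist} with $\bX_j=\bbU\bbLam\bbV^{\top}$. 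So we never need the projected matrices to be exactly Gaussian; sub-Gaussianity of $\bG_j$ is used directly. By homogeneity we may restrict to $\norm{\bbLam}_F=1$, which is the unit sphere $S$ of $\reals^{r^2}$.

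\textbf{Net and union bound.} Take an $\epsilon$-net $\mathcal N$ of $S$ in Frobenius norm with $|\mathcal N|\le(3/\epsilon)^{r^2}$. Apply \autoref{property:dist} at each net point with deviation $t=\delta_r/c'$ and union bound. This gives failure probability at most $C_1(3/\epsilon)^{r^2}e^{-cnt^2}$.

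\textbf{Extension from net to sphere.} Let $Q(\bbLam)=\frac1n\sum_j\inner{\bbU^{\top}\bG_j\bbV}{\bbLam}^2$. This is a quadratic form $\vecop{\bbLam}^{\top}\bS\vecop{\bbLam}$, and we need $\norm{\bS-\bI}_2\le\delta_r$. The standard argument gives $\norm{\bS-\bI}_2\le(1-2\epsilon)^{-1}\max_{\mathcal N}|\vecop{\bbLam}^{\top}(\bS-\bI)\vecop{\bbLam}|$. Alternatively, one can bootstrap through $\sup_S Q$.

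\textbf{Parameters.} Choose $\epsilon=\delta_r/9$ and $t=\delta_r/9$. The requirement $cnt^2\ge 2r^2\log(27/\delta_r)$ then absorbs the net cardinality; this is consistent with the constants $162=2\cdot81$ and $27=3\cdot9$ in the statement. With this choice the failure probability is at most $C_1\exp(-cn\delta_r^2/162)$.

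The main obstacle is this constant bookkeeping: making the net-extension loss and the union-bound exponent match exactly the stated $162r^2\log(27/\delta_r)/\delta_r^2$ and the probability $1-C_1e^{-cn\delta_r^2/162}$. To get there, I would first fix $\epsilon=t=\delta_r/9$ and adjust only the factor lost in extending from the net to the sphere.
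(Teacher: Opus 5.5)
Your proposal is correct and follows essentially the same route as the paper: reduce to orthonormal $\bU,\bV$ and $\norm{\bLam}_F=1$, take a $(\delta_r/9)$-net of size $(27/\delta_r)^{r^2}$, apply \autoref{property:dist} with deviation $\delta_r/9$ and a union bound (which gives exactly the constants $162$ and $27$), and then extend from the net to the whole sphere by bootstrapping $\sup\norm{\mathcal{A}(\bLam)}_2$. Your quadratic-form alternative $\norm{\bS-\bI}_2\le(1-2\epsilon)^{-1}\max_{\mathcal N}|\cdot|$ is a slightly cleaner version of that last step and already yields $(\delta_r/9)/(1-2\delta_r/9)\le\delta_r$ with no further adjustment; your remark on rank-deficient $\bU,\bV$ also covers a case that the paper's reduction via invertible $\bR_1,\bR_2$ passes over.
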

\begin{proof}
The proof is similar and motivated from \cite{candes2011tight}. See \autoref{proof:UV_concentration}
\end{proof}

\subsection{Inner products}

A very natural interpretation of the RIP condition is that the linear map
\begin{align*}
    \mathcal{A}(\bX) = \begin{pmatrix}
        \frac{1}{\sqrt{n}} \inner{\bG_1}{\bX} \\
        \frac{1}{\sqrt{n}} \inner{\bG_2}{\bX} \\
        \vdots \\
        \frac{1}{\sqrt{n}} \inner{\bG_n}{\bX}
    \end{pmatrix},
\end{align*}
is nearly an isometry (\cite{recht2010guaranteed}). Hence, it preserves inner products up to some error(\cite{jain2013lowrank} Lemma B.1). We state and prove inner-product preservation lemmas for our GRIP condition. 
\begin{lemma}
    \label{lemma:grip_innerprod}
    Let $\{\bG^i_j\}$ be an ensemble of matrices indexed by $i = 1, \dots, k$ and $j = 1, \dots, n$, satisfying the $3r$-GRIP (Generalized Restricted Isometry Property) with constant $\delta_{3r}$. Then, for any collection of matrices $\bU_\ell, \bV_\ell$ and $\{\bLam^i_\ell\}_{i=1}^k$ for $\ell = 1, 2, 3$, define
    \[
        \bX^i = \bU_1 \bLam_1^i \bV_1^{\top}, \quad
        \bY^i = \bU_2 \bLam_2^i \bV_2^{\top} + \bU_3 \bLam_3^i \bV_3^{\top}.
    \]
    Then the following inequality holds:
    \[
        \left|
        \sum_{i=1}^k \sum_{j=1}^n 
        \inner{\bG^i_j}{\bX^i} 
        \inner{\bG^i_j}{\bY^i} 
        - 
        \sum_{i=1}^k 
        \inner{\bX^i}{\bY^i} 
        \right|
        \leq 
        18m \delta_{3r} \max_{i=1,\dots,k} \norm{\bX^i}_F \norm{\bY^i}_F.
    \]
\end{lemma}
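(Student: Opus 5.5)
We prove the inner-product lemma by the standard polarization argument of \citet{jain2013lowrank} (Lemma B.1), run at the level of the whole collection indexed by $i$. Here $m=kn$ is the total sample count. Both sides are bilinear in $(\{\bX^i\},\{\bY^i\})$. Moreover, the GRIP deviation is scale-covariant only through the common normalizer $\max_i\|\cdot\|_F^2$. We therefore first reduce to a normalized situation and then polarize.

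\textbf{Step 1 (normalization).} Set $a:=\max_i\|\bX^i\|_F$ and $b:=\max_i\|\bY^i\|_F$. If either is zero, both sides vanish. Otherwise replace $\bX^i$ by $\bX^i/a$ and $\bY^i$ by $\bY^i/b$. This keeps the required factored forms, since the scalar is absorbed into $\bLam_1^i$ and into $\bLam_2^i,\bLam_3^i$. It suffices to prove the bound $18m\delta_{3r}$ when $\max_i\|\bX^i\|_F=\max_i\|\bY^i\|_F=1$.

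\textbf{Step 2 (polarization).} Write
\[
\inner{\bG}{\bX}\inner{\bG}{\bY}=\tfrac14\big(\inner{\bG}{\bX+\bY}^2-\inner{\bG}{\bX-\bY}^2\big),
\]
and similarly $\inner{\bX}{\bY}=\tfrac14(\|\bX+\bY\|_F^2-\|\bX-\bY\|_F^2)$. The key observation is that $\bZ^i_\pm:=\bX^i\pm\bY^i=[\bU_1,\bU_2,\bU_3]\,\mathrm{blkdiag}(\bLam_1^i,\pm\bLam_2^i,\pm\bLam_3^i)\,[\bV_1,\bV_2,\bV_3]^\top$. This is a collection of the form $\bU\bLam^i\bV^\top$ with \emph{common} $d\times 3r$ outer factors and user-specific $3r\times3r$ middle factors. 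Hence the $3r$-GRIP of Definition~\ref{def:GRIP} applies to $\{\bZ^i_+\}$ and $\{\bZ^i_-\}$ separately. Subtracting the two GRIP inequalities (each scaled by $m$) gives
\[
\Big|\sum_{i,j}\inner{\bG^i_j}{\bX^i}\inner{\bG^i_j}{\bY^i}-\sum_i\inner{\bX^i}{\bY^i}\Big|\le \tfrac{m\delta_{3r}}{4}\Big(\max_i\|\bZ^i_+\|_F^2+\max_i\|\bZ^i_-\|_F^2\Big).
\]
Note that we are comparing the empirical sum against $\sum_i$, consistent with the $\tfrac1k$ normalization in the GRIP definition times $m=kN$. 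The constants are not tight; any slack is absorbed into the $18$.

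\textbf{Step 3 (bounding and rescaling).} We have $\|\bZ^i_\pm\|_F\le 2$, so the right side of Step 2 is at most $2m\delta_{3r}$ after normalization. Undoing Step 1 multiplies this by $ab=\max_i\|\bX^i\|_F\cdot\max_i\|\bY^i\|_F$.

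\textbf{Main obstacle.} The statement's right-hand side is $\max_i\|\bX^i\|_F\|\bY^i\|_F$, i.e.\ the maximum of the product, not the product of the maxima. Rescaling by a per-user factor $\|\bX^i\|_F$ would destroy the common-$\bU,\bV$ structure unless the scaling sits in $\bLam^i$, which is allowed. So the approach is to set $\bX^i\mapsto \bX^i/\|\bX^i\|_F\cdot c_i$, moving all normalizations into the middle factors, with a per-user scaling $t_i$ chosen so that $\|t_i\bX^i\|_F=\|\bY^i\|_F/t_i$. Polarizing with these weights and using the AM--GM inequality $\|t\bX\pm\bY/t\|_F^2\le 2(t^2\|\bX\|^2+\|\bY\|^2/t^2)$ should turn the sum into $4\|\bX^i\|_F\|\bY^i\|_F$ per user. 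The difficulty is that the pairing $\sum_i\inner{\bG^i_j}{\bX^i}\inner{\bG^i_j}{\bY^i}$ is invariant under $\bX^i\to t_i\bX^i$, $\bY^i\to\bY^i/t_i$. This is exactly what makes the per-user balancing legitimate, and it yields the $\max_i$ of the product with a constant comfortably below $18$.
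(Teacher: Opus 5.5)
Your proof is correct and follows essentially the paper's route. You write $\bX^i\pm\bY^i$ with common outer factors $[\bU_1,\bU_2,\bU_3]$, $[\bV_1,\bV_2,\bV_3]$ and block-diagonal user-specific middle factors, so $3r$-GRIP applies; you polarize; and you use the per-user invariance under $\bLam_1^i\mapsto t_i\bLam_1^i$, $(\bLam_2^i,\bLam_3^i)\mapsto(\bLam_2^i,\bLam_3^i)/t_i$ to get $\max_i\|\bX^i\|_F\|\bY^i\|_F$. The only differences are that the paper polarizes through $\bX+\bY$, $\bX$, $\bY$ with three GRIP applications, giving the constant $18$ (in fact $9$), while your $\bX\pm\bY$ version needs two and gives $2$; and your Step 1 becomes redundant once you balance per user.

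One bookkeeping correction: multiplying GRIP by $m=kN$ centers it at $N\sum_i\|\cdot\|_F^2$. So the comparison term must be $\sum_{i,j}\inner{\bX^i}{\bY^i}=N\sum_i\inner{\bX^i}{\bY^i}$, as in the paper's proof; the printed statement drops this factor, and your claim that $\sum_i$ is consistent with the GRIP normalization is wrong.
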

\begin{proof}
See \autoref{proof:grip_innerprod}
\end{proof}

\begin{lemma}
    \label{lem:inner_2}
    Suppose we have matrices $\bU_1, \bU_2, \bV_1, \bV_2 \in R^{d \times r}$. Assume that the ensemble $\{\bG_i\}_{i=1}^n$ satisfies $(\bU, \bV)$-RIP with coefficient $\delta_{2r}$, where $\bU, \bV$ are defined as follows:
    \begin{align*}
        \bU = [\bU_1, \bU_2], \bV = [\bV_1, \bV_2].
    \end{align*}
    Now, given $\bLam_1, \bLam_2 \in R^{r \times r}$, define $\bX = \bU_1 \bLam_1 (\bV_1)^{\top}$ and $\bY = \bU_2 \bLam_2 (\bV_2)^{\top}$. Than, the following inequality holds:
    \begin{align*}
        \left| 
        \sum_{i=1}^n 
        \inner{\bG_i}{\bX} \inner{\bG_i}{\bY}
        -
        \sum_{i=1}^n \inner{\bX}{\bY}
        \right| \leq 
        18n \delta_{2r} \norm{\bX}_F^2 \norm{\bY}_F^2
    \end{align*}
\end{lemma}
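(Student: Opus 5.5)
The natural route is the standard polarization argument used for RIP inner-product preservation (as in \cite{jain2013lowrank}, Lemma B.1), adapted to the $(\bU,\bV)$-RIP of Definition \ref{def:UV_concentration}.

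The first step is to observe that the concatenated pair $\bU=[\bU_1,\bU_2]$, $\bV=[\bV_1,\bV_2]$ realizes both $\bX$ and $\bY$, and also every linear combination $a\bX+b\bY$, in the form $\bU\bLam\bV^{\top}$. Concretely, take the block matrices
\begin{align*}
\bLam_{\bX}=\begin{pmatrix}\bLam_1 & 0\\ 0 & 0\end{pmatrix},\qquad \bLam_{\bY}=\begin{pmatrix}0 & 0\\ 0 & \bLam_2\end{pmatrix}.
\end{align*}
Then $\bX=\bU\bLam_{\bX}\bV^{\top}$, $\bY=\bU\bLam_{\bY}\bV^{\top}$, and $a\bX+b\bY=\bU(a\bLam_{\bX}+b\bLam_{\bY})\bV^{\top}$. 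The $(\bU,\bV)$-RIP therefore applies to $\bX\pm\bY$ with the same coefficient $\delta_{2r}$. Because $\bU$ and $\bV$ have $2r$ columns, these are $2r\times 2r$ cores; this is why the statement uses $\delta_{2r}$.

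Next, reduce to the case $\norm{\bX}_F=\norm{\bY}_F=1$. Both sides of the inequality are bilinear in $(\bX,\bY)$, so this is valid once the right-hand side is read as homogeneous of degree one in each argument, i.e.\ $\norm{\bX}_F\norm{\bY}_F$. The squared norms in the displayed statement look like a typo; for unit-norm matrices the two readings agree anyway. The trivial case $\bX=0$ or $\bY=0$ is immediate. Write $S(\bZ)=\sum_{i=1}^n\inner{\bG_i}{\bZ}^2$ and use the polarization identity
\begin{align*}
\sum_{i=1}^n\inner{\bG_i}{\bX}\inner{\bG_i}{\bY}=\tfrac14\big(S(\bX+\bY)-S(\bX-\bY)\big),\qquad \inner{\bX}{\bY}=\tfrac14\big(\norm{\bX+\bY}_F^2-\norm{\bX-\bY}_F^2\big).
\end{align*}
Applying the RIP inequality \eqref{ineq:UV} to $\bX+\bY$ and to $\bX-\bY$ gives a deviation of at most
\begin{align*}
\tfrac{n\delta_{2r}}{4}\big(\norm{\bX+\bY}_F^2+\norm{\bX-\bY}_F^2\big)=\tfrac{n\delta_{2r}}{2}\big(\norm{\bX}_F^2+\norm{\bY}_F^2\big)=n\delta_{2r},
\end{align*}
where the middle equality is the parallelogram law.

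Finally, rescale to general $\bX,\bY$, which yields the bound $n\delta_{2r}\norm{\bX}_F\norm{\bY}_F$. This is comfortably within the stated constant $18$, so the loose constant absorbs any slack. If one prefers to avoid the normalization trick, apply the argument to $\bX/\norm{\bX}_F$ and $\bY/\norm{\bY}_F$ directly, which gives the same result.

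No step is a real obstacle; the only care point is verifying that $\bX\pm\bY$ genuinely lie in the structured class. The block-diagonal core construction above settles this, since the RIP in Definition \ref{def:UV_concentration} holds for every core $\bLam\in\reals^{2r\times 2r}$.
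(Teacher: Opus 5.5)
Your proof is correct and is essentially the paper's argument. The paper obtains this lemma as a simplified version of the proof of Lemma~\ref{lemma:grip_innerprod}, which uses the same concatenated $(\bU,\bV)$ with block-diagonal cores, applies the isometry bound to $\bX+\bY$, $\bX$ and $\bY$, and then rebalances via $(\lambda\bLam_1,\bLam_2/\lambda)$; your symmetric $\bX\pm\bY$ polarization with the parallelogram law is a slightly cleaner variant that gives constant $1$ instead of $18$. You are also right that the squared norms are a typo: under $\bLam_1,\bLam_2\mapsto t\bLam_1,t\bLam_2$ the left side scales like $t^2$ and the right like $t^4$, so the displayed version cannot hold in general, and the paper itself applies the lemma with $\norm{\bX}_F\norm{\bY}_F$ in the proof of Lemma~\ref{lemma:lambdapart_bound2}.
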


\begin{proof}
    The proof is a simplified version of \autoref{proof:grip_innerprod}.
\end{proof}

\subsection{Sub-isometric ensemble}
We define a notion of a sub-isometric ensemble.

\begin{definition}
    \label{def:sub_isometric}
    The random ensemble $\{\bG^i_j\}$, indexed by $j=1,\cdots,n$ and $i=1, \cdots,k$ and $\bG^i_j \in R^{d \times d}$ is sub-isometric with coefficient $A_{d,r,n}$, if for matrices $\bX^1, \cdots, \bX^k$ of at most rank $r$, the following inequality holds:
    \begin{align*}
        \sum_{i=1}^k \sum_{j=1}^n \inner{\bG^i_j}{\bX^i}^2 \leq 
        m A_{n,d,r} \max_{i=1}^k \norm{\bX^i}_F^2.
    \end{align*}
\end{definition}
Compared to the generalized RIP condition in \autoref{def:GRIP}, the sub-isometric property imposes only an upper bound without assuming any shared low-rank factorization of the form
\begin{align*}
\bX^i = \bU \bLam_i \bV^\top,
\end{align*}
and thus applies more generally to arbitrary low-rank matrices.

The following proposition will help us to give a high-probability bound on the sub-isometric coefficient $A_{d,r,n}$.

\begin{proposition}
    \label{prop:sub_isometric}
    Suppose $\{\bG_i\}_{i=1}^n$ with $\bG_i \in R^{d \times d}$ is an ensemble of i.i.d Gaussian matrices. Then, with  probability at least,
    \begin{align*}
        1 - 2e^{-dr},
    \end{align*}
    for all $\bX \in R^{d \times d}$ of at most rank $r$ we have that
    \begin{align*}
        \sum_{i=1}^n \inner{\bG_i}{\bX}^2 \leq 
        C_2(n+ \sqrt{nrd}+rd) \norm{\bX}_F^2.
    \end{align*}
\end{proposition}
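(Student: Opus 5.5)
The statement to prove is Proposition \ref{prop:sub_isometric}. The plan is to reduce it to a uniform bound on the operator norm of a Gaussian linear map restricted to low-rank matrices, and to prove that bound with a net argument plus concentration. Define the map $\mathcal{G}:\reals^{d\times d}\to\reals^n$ by $\mathcal{G}(\bX)_i=\inner{\bG_i}{\bX}$. The claim is then a uniform upper bound on $\norm{\mathcal{G}(\bX)}_2^2$ over the set
\begin{align*}
\mathcal{S}_r=\{\bX:\ \mathrm{rank}(\bX)\le r,\ \norm{\bX}_F=1\},
\end{align*}
and homogeneity handles general $\bX$. The target $C_2(n+\sqrt{nrd}+rd)$ is at most a constant times $(\sqrt n+\sqrt{rd})^2$. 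It therefore suffices to show
\begin{align*}
\sup_{\bX\in\mathcal{S}_r}\norm{\mathcal{G}(\bX)}_2\le C(\sqrt n+\sqrt{rd})
\end{align*}
with probability at least $1-2e^{-dr}$.

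\textbf{Fixed $\bX$.} For a fixed unit-Frobenius $\bX$, the vector $\mathcal{G}(\bX)$ is standard Gaussian in $\reals^n$. The map $\bX\mapsto\norm{\mathcal{G}(\bX)}_2$ is 1-Lipschitz in the entries of the $\bG_i$. Gaussian concentration then gives
\begin{align*}
\mathbf{P}\big[\norm{\mathcal{G}(\bX)}_2\ge\sqrt n+s\big]\le e^{-s^2/2}.
\end{align*}

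\textbf{Net.} Following \citet{candes2011tight}, write $\bX=\bU\bSigma\bV^\top$. Take $\epsilon$-nets for the two orthonormal factors (for example in the $\norm{\cdot}_{1,2}$ max-column norm) and for the diagonal factor in the unit sphere of $\reals^r$. This yields an $\epsilon$-net $\bar{\mathcal{S}}_r$ of $\mathcal{S}_r$ in Frobenius norm with $|\bar{\mathcal{S}}_r|\le (9/\epsilon)^{(2d+1)r}$. Union-bounding the fixed-$\bX$ tail over the net with $s=c'\sqrt{rd}$, for a large enough absolute constant $c'$, gives failure probability at most
\begin{align*}
(9/\epsilon)^{3dr}e^{-c'^2rd/2}\le 2e^{-dr}
\end{align*}
once $\epsilon$ is a fixed constant such as $1/4$ and $c'$ is chosen accordingly. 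On this event every net point satisfies $\norm{\mathcal{G}(\bar\bX)}_2\le\sqrt n+c'\sqrt{rd}$.

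\textbf{Extension from the net (main obstacle).} Let $K=\sup_{\mathcal{S}_r}\norm{\mathcal{G}(\bX)}_2$, and pick $\bar\bX$ in the net with $\norm{\bX-\bar\bX}_F\le\epsilon$. The difference $\bX-\bar\bX$ has rank up to $2r$, not $r$, so it cannot be fed back into $K$ directly. I would handle this as in \citet{candes2011tight}: split $\bX-\bar\bX$ into two rank-$r$ matrices that are orthogonal in Frobenius inner product, so their norms sum in square to at most $\norm{\bX-\bar\bX}_F^2$. This gives
\begin{align*}
\norm{\mathcal{G}(\bX-\bar\bX)}_2\le\sqrt2\,\epsilon K,
\end{align*}
hence
\begin{align*}
K\le\sqrt n+c'\sqrt{rd}+\sqrt2\,\epsilon K.
\end{align*}
With $\epsilon=1/4$ this rearranges to $K\le 2(\sqrt n+c'\sqrt{rd})$. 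Squaring and using $2\sqrt{n}\sqrt{rd}$ for the cross term gives the bound $C_2(n+\sqrt{nrd}+rd)\norm{\bX}_F^2$ for all rank-$\le r$ matrices $\bX$.

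The main care is in making the net cardinality $\exp(O(dr))$ explicit and in choosing constants so the failure probability is exactly $2e^{-dr}$. The deviation parameter $s$ must absorb $\log|\bar{\mathcal{S}}_r|+dr$, which is exactly why the $rd$ and $\sqrt{nrd}$ terms appear in the bound.
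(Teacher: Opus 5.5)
Your proof is correct, but it takes a different route from the paper. The paper proves this proposition in one line by invoking the high-probability matrix deviation inequality it cites from Vershynin. View $\mathcal{G}$ as the $n\times d^2$ Gaussian matrix with rows $\mathrm{vec}(\bG_i)$. Then $\sup_{\bX\in T}\bigl|\|\mathcal{G}(\bX)\|_2-\sqrt n\,\|\bX\|_F\bigr|\lesssim w(T)+u\,\mathrm{rad}(T)$ with probability at least $1-2e^{-u^2}$. For $T$ the set of rank-$\le r$ matrices with $\|\bX\|_F\le 1$, the Gaussian width satisfies $w(T)\lesssim\sqrt{rd}$, because $\sup_{\bX\in T}\langle G,\bX\rangle\le\sqrt r\,\|G\|_2$ and $\mathbb{E}\|G\|_2\lesssim\sqrt d$. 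Taking $u=\sqrt{dr}$ and squaring $\sqrt n+C\sqrt{rd}$ gives the form $C_2(n+\sqrt{nrd}+rd)$ together with the probability $1-2e^{-dr}$.

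You instead build the uniform bound by hand. You use Gaussian concentration of $\|\mathcal{G}(\bX)\|_2$ for a fixed unit-norm $\bX$, the Cand\`es--Plan net of size $(9/\epsilon)^{(2d+1)r}$, and a union bound at deviation $c'\sqrt{rd}$; with $\epsilon=1/4$, any $c'\ge 5$ gives failure probability at most $e^{-dr}$. The orthogonal splitting of the rank-$2r$ difference $\bX-\bar\bX$ then closes the inequality $K\le\sqrt n+c'\sqrt{rd}+\sqrt2\,\epsilon K$. Each step checks out. One small point: the reduction you need at the start is $(\sqrt n+\sqrt{rd})^2\le 2(n+\sqrt{nrd}+rd)$, not the direction you state, but it is what you actually use when squaring at the end.

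The paper's route is shorter and yields more: a two-sided deviation bound, and validity for sub-Gaussian isotropic measurements. The price is that it rests on a chaining-level theorem and leaves both the width computation and $C_2$ implicit. Your argument is elementary, self-contained and has explicit constants. It also matches the style of the paper's own net arguments for Proposition~\ref{prop:GRIP} and Proposition~\ref{prop:UV_concentration}. It gives only the upper tail, but that is all the sub-isometric property requires.
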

\begin{proof}
    The proof follows from \textbf{Remark 9.1.4} in \citep{Vershynin_2026} using that the gaussian width
    \begin{align*}
        w(T) \lesssim \sqrt{rd},
    \end{align*}
    
where $T = \{\bX\in R^{d\times d}: \text{rank}(\bX)\leq r, \norm{\bX}_F \leq 1\}$
\end{proof}

\begin{lemma}
    \label{lemma:sub_isometric}
    The random ensemble $\{\bG^i_j\}$ is sub-isometric with coefficient
    \begin{align*}
        A_{n,d,r}:= C_2 \frac{n +\sqrt{nrd}+rd}{n},
    \end{align*}
    with probability at least 
    \begin{align*}
        1 - 2ke^{-dr}.
    \end{align*}
\end{lemma}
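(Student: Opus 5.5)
The plan is to apply Proposition~\ref{prop:sub_isometric} separately to each user's large batch and then combine the users with a union bound and an averaging step. The object to control is $m = kN$, the total large-batch sample count (here $n$ in the lemma plays the role of the per-user batch size $N$).

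First fix a user $i\in[k]$. The ensemble $\{\bG^i_j\}_{j=1}^n$ consists of i.i.d.\ Gaussian matrices, so Proposition~\ref{prop:sub_isometric} applies. With probability at least $1-2e^{-dr}$, every $\bX$ of rank at most $r$ satisfies
\begin{align*}
\sum_{j=1}^n \inner{\bG^i_j}{\bX}^2 \leq C_2(n+\sqrt{nrd}+rd)\norm{\bX}_F^2 = n A_{n,d,r}\norm{\bX}_F^2 .
\end{align*}
This bound is uniform over all rank-$r$ matrices $\bX$. That uniformity is the key point: we may later plug in data-dependent or user-specific $\bX^i$ without any further probabilistic argument.

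Next, take a union bound over the $k$ users. With probability at least $1-2ke^{-dr}$, the bound above holds simultaneously for every $i$. On this event, for any collection $\bX^1,\dots,\bX^k$ of rank at most $r$, apply the per-user inequality with $\bX=\bX^i$ and sum over $i$:
\begin{align*}
\sum_{i=1}^k\sum_{j=1}^n \inner{\bG^i_j}{\bX^i}^2 \leq nA_{n,d,r}\sum_{i=1}^k \norm{\bX^i}_F^2 \leq knA_{n,d,r}\max_{i}\norm{\bX^i}_F^2 .
\end{align*}
The right-hand side equals $mA_{n,d,r}\max_i\norm{\bX^i}_F^2$, which is exactly Definition~\ref{def:sub_isometric}.

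No step is really hard once Proposition~\ref{prop:sub_isometric} is granted. The only care needed is bookkeeping. The per-user batch size $n$ must be distinguished from the total $m=kn$. The Gaussian-width bound $w(T)\lesssim\sqrt{rd}$ must hold uniformly over $T$, so that the dependence of the $\bX^i$ on the data does not matter.

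One could also note that $A_{n,d,r}\le 2C_2\frac{n+rd}{n}$ by AM--GM, since $\sqrt{nrd}\le\frac{n+rd}{2}$. This matches the form $A_{N,d,r}=C_2\frac{N+rd}{N}$ used in the main theorem, up to the absolute constant.
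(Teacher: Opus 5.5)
Your proposal is correct and follows the same route as the paper, whose proof is just ``apply Proposition~\ref{prop:sub_isometric} and take a union bound.'' You apply it to each user's batch, union-bound over the $k$ users, and then use $\sum_i \|\bX^i\|_F^2 \le k\max_i\|\bX^i\|_F^2$ with $m=kn$; this last step is exactly the bookkeeping the paper leaves implicit.
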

\begin{proof}
    The proof follows from \autoref{prop:sub_isometric} and applying union bound.
\end{proof}

\section{appendix}

\subsection{Proof of \autoref{lemma:rep}}
\label{proof:rep}

We first state a lemma which will be used several times in the proof of the main theorem.
\begin{lemma}
\label{lem:vstar_v}
    Suppose $\dist{\bV_1}{\bV_2}<1$. Than, 
    $\bracket{\bV_2^{\top}\bV_1}^{-1}$ exists, and
    the following inequality holds:
    \begin{align*}
        \norm{\left(\bV_2^{\top} \bV_1 \right)^{-1}}_2 
    \leq \frac{1}{
    \sqrt{1 -
    \dist{\bV_1}{\bV_2}
    }
    }.
    \end{align*}
\end{lemma}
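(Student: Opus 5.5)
We need to prove that if $\dist{\bV_1}{\bV_2}<1$ (with $\bV_1,\bV_2\in\text{St}(d,r)$ orthonormal), then $\bV_2^{\top}\bV_1$ is invertible with $\norm{(\bV_2^{\top}\bV_1)^{-1}}_2 \le 1/\sqrt{1-\dist{\bV_1}{\bV_2}}$. The plan is to compute the smallest singular value of the $r\times r$ matrix $\bV_2^{\top}\bV_1$ via a Pythagorean decomposition of $\bV_1$.

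\textbf{Step 1 (orthogonal split).} For any unit vector $\bx\in\reals^r$, write
\begin{align*}
\bV_1\bx = \bV_2\bV_2^{\top}\bV_1\bx + (\bI - \bV_2\bV_2^{\top})\bV_1\bx .
\end{align*}
The two pieces are orthogonal, and $\norm{\bV_1\bx}_2 = 1$. Hence
\begin{align*}
1 = \norm{\bV_2^{\top}\bV_1\bx}_2^2 + \norm{(\bI-\bV_2\bV_2^{\top})\bV_1\bx}_2^2 .
\end{align*}

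\textbf{Step 2 (bound the residual).} By the definition of subspace distance in Section~\ref{sec:sub_dist},
\begin{align*}
\norm{(\bI-\bV_2\bV_2^{\top})\bV_1\bx}_2 \le \norm{(\bI-\bV_2\bV_2^{\top})\bV_1}_2 = \dist{\bV_1}{\bV_2}.
\end{align*}
Therefore $\norm{\bV_2^{\top}\bV_1\bx}_2^2 \ge 1-\dist{\bV_1}{\bV_2}^2$ for every unit $\bx$. This gives
\begin{align*}
\sigma_{\min}(\bV_2^{\top}\bV_1) \ge \sqrt{1-\dist{\bV_1}{\bV_2}^2} > 0 .
\end{align*}
Since $\bV_2^{\top}\bV_1$ is square, this yields invertibility and
\begin{align*}
\norm{(\bV_2^{\top}\bV_1)^{-1}}_2 = 1/\sigma_{\min}(\bV_2^{\top}\bV_1) \le 1/\sqrt{1-\dist{\bV_1}{\bV_2}^2}.
\end{align*}
Equivalently, in terms of principal angles, the singular values of $\bV_2^{\top}\bV_1$ are $\cos\btheta_i \ge \cos\btheta_1 = \sqrt{1-\sin^2\btheta_1}$.

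\textbf{Step 3 (match the stated form).} Set $x=\dist{\bV_1}{\bV_2}\in[0,1)$. Then $1-x^2=(1-x)(1+x)\ge 1-x$, so $1/\sqrt{1-x^2}\le 1/\sqrt{1-x}$, which is exactly the claimed bound.

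The only delicate point is the standing assumption. The lemma implicitly requires $\bV_1,\bV_2$ to be orthonormal, as they are in every use (QR outputs and $\bV^*\in\text{St}(d,r)$), and I would state this explicitly. Otherwise there is no real obstacle. As a sanity check, the application in \eqref{ineq:D_Bound} with distance at most $1/2$ gives the bound $\sqrt{2}\le 2$.
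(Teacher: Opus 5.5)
Your proof is correct and is essentially the paper's argument: both lower-bound $\sigma_{\min}(\bV_2^{\top}\bV_1)$ through a Gram quadratic form written as the identity minus a residual controlled by the subspace distance. The one difference is that you keep the residual as an exact square (Pythagoras), which gives the sharper, in fact exact, bound $\sqrt{1-\dist{\bV_1}{\bV_2}^2}=\cos\btheta_1$, whereas the paper bounds $\norm{\bV_2^{\top}(\bI-\bV_1\bV_1^{\top})\bV_2}_2$ by $\dist{\bV_1}{\bV_2}$ rather than its square and so lands directly on $\sqrt{1-\dist{\bV_1}{\bV_2}}$; you are also right that orthonormality of $\bV_1,\bV_2$ must be assumed (the paper's proof uses it silently), but not every use satisfies it, since in \eqref{ineq:Di_bound} the lemma is applied to $\bD_i=(\bVis)^{\top}\bV$, where $\bVis$, the projection of $\bV^*$ onto the row space of ${\bM^i}^*$, is in general not orthonormal.
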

\begin{proof}
    See \autoref{proof:vstar_v}.
\end{proof}

Let 
\begin{align*}
    {\bM^i}^* = (\Bar{\bU}^{i*})^* (\Bar{\bLambda}^i)^* {(\Bar{\bV}^i)^*}^{\top}
\end{align*}
be the SVD of ${\bM^i}^*$.

Denote $(\Bar{\bR}^i)^* :=  {(\Bar{\bU}^i)^*}^{\top} \bU^*$.  Since 
\begin{align*}
    \dist{(\Bar{\bU}^i)^*}{\bU^*} = \dist{\text{col}\bracket{{\bM^i}^*}}{\bU^*} = \beta < 1,
\end{align*}
\autoref{lem:vstar_v} imples that $(\Bar{\bR}^i)^*$ is invertible and 
\begin{align}
    \norm{\bracket{(\Bar{\bR}^i)^*}^{-1}}_2 \leq 
    \frac{1}{\sqrt{1 - \dist{(\Bar{\bU}^i)^*}{\bU^*}}} \leq 
    \frac{1}{\sqrt{1-\beta}} \leq 2^{\frac{1}{4}},
    \label{ineq:R_norm_bound}
\end{align}
where the last inequality follows from \ref{bound:beta}.
Note that:
\begin{align}
    \norm{(\Bar{\bU}^i)^*(\Bar{\bR}^i)^*   - \bU^*}_2 = 
    \norm{(\Bar{\bU}^i)^* {(\Bar{\bU}^i)^*}^{\top} \bU^* - \bU^*}_2
     = 
    \dist{\bU^*}{(\Bar{\bU}^i)^*} \leq \beta.
    \label{ineq:U_diff}
\end{align}
Similarly, we define $(\Bar{\bT}^i)^* := {(\Bar{\bV}^i)^*}^{\top} \bV^*$. Hence, under the transformation
\begin{align*}
    {\bM^i}^* = (\Bar{\bU}^i)^* (\Bar{\bLambda}^i)^* {(\Bar{\bV}^i)^*}^{\top} &= 
    \underbrace{(\Bar{\bU}^i)^* (\Bar{\bR}^i)^*}_{:=\bUis}
    \underbrace{{(\Bar{\bR}^i)^*}^{-1} (\Bar{\bLambda}^i)^* \bracket{{(\Bar{\bT}^i)^*}^{\top}}^{-1}}_{:={\bLam^i}^*} \underbrace{\bracket{(\Bar{\bV}^i)^* (\Bar{\bT}^i)^*}^{\top}}_{:=(\bVis)^{\top}} \\
    &= \bUis
    {\bLam^i}^* {\bVis}^{\top},
\end{align*}
we have that 
\begin{align}
    \norm{\bU^* - \bUis}_2, \norm{\bV^* - \bVis}_2 \leq \beta.
    \label{ineq:U_difference_proof}
\end{align}

Now we state and proof a lemma which will give our condition number bound.
\begin{lemma} \label{lemma:cond_bound}
The following inequalities holds:
\begin{align}
    \sigma_{\max} \bracket{{\bLam^i}^*} &\leq 
    \sqrt{2} \sigma_{\max} \bracket{(\Bar{\bLambda}^i)^*} = \sqrt{2} \sigma_{\max} \bracket{{\bM^i}^*}, \\
    \sigma_{\min} \bracket{{\bLam^i}^*} &\geq \sigma_{\min} \bracket{(\Bar{\bLambda}^i)^*} = \sigma_{\min} \bracket{{\bM^i}^*}
\end{align}
\end{lemma}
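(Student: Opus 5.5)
Both inequalities follow from the explicit construction ${\bLam^i}^* = {(\Bar{\bR}^i)^*}^{-1} (\Bar{\bLambda}^i)^* \bigl({(\Bar{\bT}^i)^*}^{\top}\bigr)^{-1}$. Here $(\Bar{\bR}^i)^* = {(\Bar{\bU}^i)^*}^{\top}\bU^*$ and $(\Bar{\bT}^i)^* = {(\Bar{\bV}^i)^*}^{\top}\bV^*$ are $r\times r$ products of matrices with orthonormal columns. The plan is to control the extreme singular values of these two factors and then apply the standard submultiplicative bounds: $\sigma_{\max}(\bA\bB\bC)\le \sigma_{\max}(\bA)\sigma_{\max}(\bB)\sigma_{\max}(\bC)$ and $\sigma_{\min}(\bA\bB\bC)\ge \sigma_{\min}(\bA)\sigma_{\min}(\bB)\sigma_{\min}(\bC)$ for square invertible factors.

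\textbf{Lower bound.} Since $(\Bar{\bU}^i)^*$ and $\bU^*$ both have orthonormal columns, $\|(\Bar{\bR}^i)^*\|_2 \le 1$. Hence $\sigma_{\min}\bigl({(\Bar{\bR}^i)^*}^{-1}\bigr) = 1/\sigma_{\max}\bigl((\Bar{\bR}^i)^*\bigr) \ge 1$, and likewise for $(\Bar{\bT}^i)^*$. The $\sigma_{\min}$ bound above then gives $\sigma_{\min}({\bLam^i}^*) \ge \sigma_{\min}\bigl((\Bar{\bLambda}^i)^*\bigr)$. This equals $\sigma_{\min}({\bM^i}^*)$, i.e.\ $\sigma_r$, because $(\Bar{\bLambda}^i)^*$ is the diagonal matrix of the $r$ nonzero singular values from the (thin) SVD.

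\textbf{Upper bound.} I would use the bound \eqref{ineq:R_norm_bound}, already established via \autoref{lem:vstar_v} together with \eqref{bound:beta}: $\|{(\Bar{\bR}^i)^*}^{-1}\|_2 \le 1/\sqrt{1-\beta} \le 2^{1/4}$. The identical argument applied to the row spaces gives $\|{(\Bar{\bT}^i)^*}^{-1}\|_2 \le 2^{1/4}$, since $\dist{(\Bar{\bV}^i)^*}{\bV^*}\le\beta$. Multiplying, $\sigma_{\max}({\bLam^i}^*) \le 2^{1/4}\cdot 2^{1/4}\,\sigma_{\max}\bigl((\Bar{\bLambda}^i)^*\bigr) = \sqrt{2}\,\sigma_{\max}({\bM^i}^*)$.

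The only step that needs any care is the numerical check that $1/\sqrt{1-\beta}\le 2^{1/4}$, i.e.\ $\beta\le 1-1/\sqrt2$. This is immediate from \eqref{bound:beta}, since its right-hand side is at most $1/(4\cdot 10^4)$ because $\kappa,\gamma\ge1$ and $A_{N,d,r}\ge C_2\ge1$ (assuming $C_2\ge 1$, which can be taken WLOG). As a corollary, dividing the two bounds and maximizing over $i$ yields the condition number bound \eqref{bound:condition_number}.
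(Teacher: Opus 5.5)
Your proposal is correct and follows the paper's proof. The upper bound uses $\|((\Bar{\bR}^i)^*)^{-1}\|_2, \|((\Bar{\bT}^i)^*)^{-1}\|_2 \le 2^{1/4}$ from \eqref{ineq:R_norm_bound} and its row-space analogue, and the lower bound uses $\|(\Bar{\bR}^i)^*\|_2, \|(\Bar{\bT}^i)^*\|_2 \le 1$. Your explicit check that \eqref{bound:beta} gives $\beta \le 1 - 1/\sqrt{2}$, and your reading of $\sigma_{\min}({\bM^i}^*)$ as $\sigma_r({\bM^i}^*)$, are small clarifications that the paper leaves implicit.
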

\begin{proof}
    See \autoref{proof:cond_bound}
\end{proof}

Using \autoref{lemma:cond_bound}, gives us:
\begin{align*}
    \kappa \bracket{{\bLam^i}^*} \leq \sqrt{2} \kappa \bracket{(\Bar{\bLambda}^i)^*} = \sqrt{2} \kappa \bracket{{\bM^i}^*},
\end{align*}
and therefore 
\begin{align}
    \max_{i=1}^k \kappa \bracket{{\bLam^i}^*} \leq
    \sqrt{2} \max_{i=1}^k \kappa \bracket{(\bM^i)^*} \leq \sqrt{2} \kappa.
    \label{bound:condition_number_proof}
\end{align}

\subsection{Proof of \autoref{lem:vstar_v}}
\label{proof:vstar_v}
    We have:
    \begin{align*}
        \sigma_{min}
        \left(
        \bV_2^{\top} \bV_1
        \right)
        = 
        \sqrt{
        \lambda_{min}
        \left(
        \bV_2^{\top} \bV_1
        \bV_1^{\top} \bV_2
        \right)
        }
    \end{align*}
Now, note that:
\begin{align*}
    \lambda_{\min}
        \left(
        \bV_2^{\top} \bV_1
        \bV_1^{\top} \bV_2
        \right) 
        &= 
    \min_{\norm{\bx}=1} \bx^{\top} \bV_2^{\top} \bV_1
        \bV_1^{\top} \bV_2 \bx \\
    &= 1 + \min_{\norm{\bx}=1} 
        \bx^{\top} \bV_2^{\top} \bV_1
        \bV_1^{\top} \bV_2 \bx - \bx^{\top} 
        \bx  \\
    &= 1 + \min_{\norm{\bx}=1}
        \bx^{\top}
        \left(
        \bV_2^{\top} \bV_1
        \bV_1^{\top} \bV_2 - 
        \bV_{2}^{\top} \bV_2
        \right) \bx \\
    &\geq 1 - \norm{\bV_2^{\top} \bV_1
        \bV_1^{\top} \bV_2 - 
        \bV_{2}^{\top} \bV_2}_2  \\
    &\geq 
    1 - 
    \norm{\bV_1
        \bV_1^{\top} \bV_2 - 
         \bV_2}_2 \\
    &=
    1 - \dist{\bV_1}{\bV_2}.
\end{align*}

Therefore we get:
\begin{align*}
    \sigma_{\min}
        \left(
        \bV_2^{\top} \bV_1
        \right)
    \geq \sqrt{1 - \dist{\bV_1}{\bV_2}},
\end{align*}
from which the following holds:
\begin{align*}
    \norm{\left(\bV_2^{\top} \bV_1 \right)^{-1}}_2 
    \leq \frac{1}{
    \sqrt{1 -
    \dist{\bV_1}{\bV_2}
    }
    }.
\end{align*}
and completes the proof.

\subsection{Proof of \autoref{lemma:cond_bound}}
\label{proof:cond_bound}

We have:
\begin{align*}
    \sigma_{\max} \bracket{{\bLam^i}^*} &= 
    \sigma_{\max} \bracket{\bracket{(\Bar{\bR}^i)^*}^{-1} (\Bar{\bLambda}^i)^* \bracket{{(\Bar{\bT}^i)^*}^{\top}}^{-1}} \\
    &\leq \sigma_{\max} \bracket{\bracket{(\Bar{\bR}^i)^*}^{-1}}
    \sigma_{\max} \bracket{{(\Bar{\bT}^i)^*}^{-1}}
    \sigma_{\max} \bracket{(\Bar{\bLambda}^i)^*} \\
    &\leq \sqrt{2} \sigma_{\max} \bracket{(\Bar{\bLambda}^i)^*},
\end{align*}
where the second inequality follows from \ref{ineq:R_norm_bound} and from its analogous version for $(\Bar{\bT}^i)^*$. Similarly, we get:
\begin{align*}
    \sigma_{\min} \bracket{{\bLam^i}^*} &=
    \sigma_{\min} \bracket{\bracket{(\Bar{\bR}^i)^*}^{-1} (\Bar{\bLambda}^i)^* \bracket{{(\Bar{\bT}^i)^*}^{\top}}^{-1}} \\
    &\geq \sigma_{\min} \bracket{\bracket{(\Bar{\bR}^i)^*}^{-1}}
    \sigma_{\min} \bracket{\bracket{(\Bar{\bT}^i)^*}^{-1}}
    \sigma_{\min} \bracket{(\Bar{\bLambda}^i)^*} \\
    &= \frac{1}{\norm{(\Bar{\bR}^i)^*}_2} 
    \frac{1}{\norm{(\Bar{\bT}^i)^*}_2} 
    \sigma_{\min} \bracket{(\Bar{\bLambda}^i)^*} \\
    &\geq \sigma_{\min} \bracket{(\Bar{\bLambda}^i)^*}.
\end{align*}

\subsection{Proof of \autoref{lemma:initialization}}
\label{proof:initialization}

We define
\begin{align*}
    \bLam_{\text{avg}}^* &:= \frac{1}{k}
    \sum_{i=1}^k {\bLam^i}^* \\
    \bM^* &:= \bU^* \bLam_{\text{avg}}^* {\bV^*}^{\top} \\
    {\bDelta^i}^* &:= \bU^* {\bLam^i}^* {\bV^*}^{\top} - 
    \bUis {\bLam^i}^* {\bVis}^{\top}.    
\end{align*}
Note that, we have:

\begin{align*}
    \norm{{\bDelta^i}^*}_2 &\leq 
    \norm{\bracket{\bU^* - \bUis}{\bLam^i}^* {\bV^*}^{\top}}_2 + 
    \norm{\bUis {\bLam^i}^* (\bV^* - {\bVis}^{\top})}_F \\
    &\leq \bracket{\norm{\bUs - \bUis}_2 + \norm{\bUis}_2 \norm{\bVs - \bVis}_2}
    \norm{\bLamis}_2 \\
    &\leq 2\beta \norm{\bLamis}_2,
\end{align*}
and similarly $\norm{{\bDelta^i}^*}_F \leq 2 \beta \norm{\bLamis}_F.$ Let $\bM^* = \bU^* \bLam_{\text{avg}}^* (\bV^*)^{\top}$. Using this inequality, we obtain:

\begin{align*}
    \norm{\bM^* - \frac{1}{k}\sum_{i=1}^k {\bM^i}^*}_2 &\leq 
    \frac{1}{k} \sum_{i=1}^k \norm{\bU^* {\bLam^i}^* {\bV^*}^{\top} - {\bM^i}^*}_2 \\
    &\leq \frac{1}{k} \sum_{i=1}^k \norm{\bU^* {\bLam^i}^* {\bV^*}^{\top} - (\bU^i)^* {\bLam^i}^* {\bVis}^{\top}}_2 \\
    &= \frac{1}{k} \sum_{i=1}^k \norm{{\bDelta^i}^*}_2 \\
    &\leq 2 \beta \max_{i=1}^k \norm{{\bLam^i}^*}_2 \\
    &\leq 3 \beta \max_{i=1}^k \norm{{\bM^i}^*}_2,
\end{align*}
where the last inequality follows from \autoref{lemma:cond_bound}.
Hence, applying Weil's inequality yields:
\begin{align}
    \sigma_{\min}(\bLam_{\text{avg}}^*) = 
    \sigma_{\min}(\bU^* \bLam_{\text{avg}}^* {\bV^*}^{\top}) = 
    \sigma_{r}(\bM^*) &\geq 
    \sigma_{r} \bracket{\frac{1}{k} \sum_{i=1}^k {\bM^i}^*} - 
    \norm{\bM^* - \frac{1}{k}\sum_{i=1}^k {\bM^i}^*}_2 \notag \\
    &\geq \sigma_{r} \bracket{\frac{1}{k} \sum_{i=1}^k {\bM^i}^*} - 3 \beta \max_{i=1}^k \norm{{\bM^i}^*}_2. 
    \label{ineq:lambda_min}
\end{align}

Now, we are going to bound $\norm{\hat{\bM} - \bM^*}_2.$ Pick arbitrary $\bx, \by \in R^{d}$ with $\norm{\bx} = \norm{\by}=1$. Using the definition of $\by^i_j$, we get:
    \begin{align*}
        \bx^{\top} \hat{\bM} \by = 
        \frac{1}{m}\sum_{i=1}^k \sum_{j=1}^{N}
        \by^i_j \bx^{\top} \bG^i_j \by =
        \frac{1}{m}\sum_{i=1}^k \sum_{j=1}^{N}
        \inner{\bG^i_j}{(\bU^i)^* {\bLam^i}^* {\bVis}^{\top}}
        \inner{\bG^i_j}{\bx \by^{\top}}.
    \end{align*}
Similarly, we obtain:
\begin{align*}
    \bx^{\top} \bM^* \by = \frac{1}{m}\sum_{i=1}^k \sum_{j=1}^{N}
    \bx^{\top} \bU^* {\bLam^i}^* {\bV^*}^{\top}
    \by =\frac{1}{m}\sum_{i=1}^k \sum_{j=1}^{N}
    \inner{\bU^* {\bLam^i}^* {\bV^*}^{\top}}{\bx \by^{\top}}.
\end{align*}

Therefore, we have:
\begin{align*}
    \bx^{\top} \hat{\bM} \by - \bx^{\top} \bM^* \by &= 
    \frac{1}{m}\sum_{i=1}^k \sum_{j=1}^{N}
        \inner{\bG^i_j}{(\bU^i)^* {\bLam^i}^* {\bVis}^{\top}}
        \inner{\bG^i_j}{\bx \by^{\top}} - 
    \frac{1}{m}\sum_{i=1}^k \sum_{j=1}^{N}
    \inner{\bU^* {\bLam^i}^* {\bV^*}^{\top}}{\bx \by^{\top}} \\
    &= 
    \underbrace{\frac{1}{m}\sum_{i=1}^k \sum_{j=1}^{N}
        \inner{\bG^i_j}{\bU^* {\bLam^i}^* {\bV^*}^{\top}}
        \inner{\bG^i_j}{\bx \by^{\top}} - 
    \frac{1}{m}\sum_{i=1}^k \sum_{j=1}^{N}
    \inner{\bU^* {\bLam^i}^* {\bV^*}^{\top}}{\bx \by^{\top}}}_{\circled{1}} \\
    &\quad +\underbrace{\frac{1}{m}\sum_{i=1}^k \sum_{j=1}^{N}
        \inner{\bG^i_j}{{\bDelta^i}^*}
        \inner{\bG^i_j}{\bx \by^{\top}}}_{\circled{2}}. 
\end{align*}

Since $\{\bG^i_j\}$ satisfies the $3r$-GRIP with coefficient $\delta_{3r}$, \autoref{lemma:grip_innerprod} yields:

\begin{align*}
    \circled{1} \leq 18 \delta_{3r} \max_{i=1}^k \norm{\bU^* {\bLam^i}^* {\bV^*}^{\top}}_F \norm{\bx \by^{\top}}_F = 
    18 \delta_{3r} \norm{\bx \by^{\top}}_F \max_{i=1}^k \norm{{\bLam^i}^*}_F .
\end{align*}

For the second term, applying C-S inequality, gives us:
\begin{align*}
    \circled{2} &\leq 
    \sqrt{\frac{1}{m} \sum_{i=1}^k \sum_{j=1}^{N}
    \inner{\bG^i_j}{{\bDelta^i}^*}^2} \cdot 
    \sqrt{\frac{1}{m} \sum_{i=1}^k \sum_{j=1}^{N}
    \inner{\bG^i_j}{\bx \by^{\top}}^2} \\
    &\leq 
    \sqrt{A_{n,d,r} \max_{i=1}^k \norm{{\bDelta^i}^*}_F^2} \cdot 
    \sqrt{(1+\delta_{3r})\norm{\bx \by^{\top}}_F^2} \\
    &\leq 4 \beta \max_{i=1}^k \norm{\bLamis}_F \sqrt{A_{n,d,r}} \cdot 
    \norm{\bx \by^{\top}}_F,
\end{align*}
where the second inequality follows since $\{\bG^i_j \}$ is sub-isometric with coefficient $A_{n,d,r}$ and satisfies $3r$-GRIP with coefficient $\delta_{3r}$.

Combining these bounds, gives us: 

    \begin{align*}
        \norm{\hat{\bM} - \bM^*}_2 &= 
        \max_{\norm{\bx}=1, \norm{\by}=1}
        \bx^{\top}
        \bracket{\hat{\bM} - \bM^*}
        \by \\
        &= \max_{\norm{\bx}=1, \norm{\by}=1}
        \bx^{\top} \hat{\bM} \by - 
        \bx^{\top} \bM^* \by \\
        &\leq \max_{\norm{\bx}=1, \norm{\by}=1}
        18\delta_{3r} \norm{\bx \by^{\top}}_F \max_{i=1}^k \norm{{\bLam^i}^*}_F + 4 \beta \max_{i=1}^k \norm{\bLamis}_F \sqrt{A_{n,d,r}} \cdot 
    \norm{\bx \by^{\top}}_F \\
        &= 18 \max_{i=1}^k \norm{{\bLam^i}^*}_F 
        \bracket{
        \delta_{3r} + \beta \cdot 
        \sqrt{A_{n,d,r}}
        }
    \end{align*}
By the triangle inequality,
\[
\|\hat{\bM}_r - \bM^*\|_2 
\;\leq\; \|\hat{\bM} - \bM_r\|_2 + \|\hat{\bM} - \bM^*\|_2.
\]
Moreover, since $\bM_r$ is the best rank-$r$ approximation of $\hat{\bM}$ in spectral norm, the Eckart--Young--Mirsky theorem implies
\[
\|\hat{\bM} - \bM_r\|_2 \;\leq\; \|\hat{\bM} - \bM^*\|_2.
\]
Combining the two bounds yields
\[
\|\bM_r - \bM^*\|_2 \;\leq\; 2 \|\hat{\bM} - \bM^*\|_2.
\]
Finnaly, we obtain:
\begin{align*}
    \dist{\bU_0}{\bU^*} \sigma_{\min}\left( \bLam_{\text{avg}}^*\right)
    &=\norm{\left( \bU_0 \bU_0^{\top} - I\right)\bU^*}_2 \sigma_{\min}\left( \bLam_{\text{avg}}^*\right) \\
    &\leq \norm{\left( \bU_0 \bU_0^{\top} - I\right)\bU^* \bLam_{\text{avg}}^*}_2 \\
    &= \norm{\left( \bU_0 \bU_0^{\top} - I\right)\bU^* \bLam_{\text{avg}}^* (\bV^{*})^{\top}}_2 \\ 
    &= \norm{\left( \bU_0 \bU_0^{\top} - I\right) \left(\bU^* \bLam_{\text{avg}}^* (\bV^{*})^{\top} - \bU_0 \bLam_0 \bV_0^{\top}\right)}_2 \\
    &\leq \norm{\left( \bU_0 \bU_0^{\top} - I\right)}_2
    \norm{\left(\bU^* \bLam_{\text{avg}}^* (\bV^{*})^{\top} - \bU_0 \bLam_0 \bV_0^{\top}\right)}_2 \\
    &\leq \norm{\bM_r - \bM^*}_2 \\
    &\leq 2\norm{\hat{\bM} - \bM^*}_2 \\
    &\leq 36 \max_{i=1}^k \norm{{\bLam^i}^*}_F 
        \bracket{r
        \delta_{3r} + \beta \cdot 
        \sqrt{A_{n,d,r}}
        } \\
    &\leq 51 \sqrt{r} \max_{i=1}^k \sigma_{\max} \bracket{{\bM^i}^*} \bracket{
        \delta_{3r} + \beta \cdot 
        \sqrt{A_{n,d,r}}
        },
\end{align*}
Hence, using \ref{ineq:lambda_min}, we conclude that:
\begin{align*}
     \dist{\bU_0}{\bU^*} &\leq 51 \sqrt{r} \frac{ \max_{i=1}^k \norm{(\bM^i)^*}_2}{\sigma_{r} \bracket{\frac{1}{k} \sum_{i=1}^k (\bM^i)^*} - 3 \beta \max_{i=1}^k \norm{(\bM^i)^*}_2} 
     \cdot \bracket{
        \delta_{3r} + \beta \cdot 
        \sqrt{A_{n,d,r}}
        } \\
    &\leq 100 \sqrt{r} \frac{\max_{i=1}^k \norm{(\bM^i)^*}_2}{\sigma_{r} \bracket{\frac{1}{k} \sum_{i=1}^k (\bM^i)^*}}
    \cdot \bracket{
        \delta_{3r} + \beta \cdot 
        \sqrt{A_{n,d,r}}
        } \\ 
    &= 100 \sqrt{r} \gamma \bracket{
        \delta_{3r} + \beta \cdot 
        \sqrt{A_{n,d,r}}
        } \\
    &\leq \frac{1}{20 \kappa}
\end{align*}
where we used \ref{bound:delta3r} and \ref{bound:beta}.
Similarly, we have:
\begin{align*}
    \dist{\bV_0}{\bV^*} \leq \frac{1}{20 \kappa}.
\end{align*}

\subsection{Proof of \autoref{lemma:lambdapart_bound1}}
\label{proof:lambdapart_bound1}
Let $\bZ \in R^{r\times r}$ and $\bz = \vecop{\bZ}$.
Than we have:
\begin{align*}
    \sigma_{\min}\bracket{\bB} &= \min_{\norm{\bz}=1} \bz^{\top} \bB \bz  \\
    &= \min_{\norm{\bz}=1} \bz^{\top} \left(\sum_{j=1}^{n} \bV^{\top}\otimes \bU^{\top} \vecop{\bG^i_j} \vecop{\bG^i_j}^{\top} \bV \otimes \bU \right)  \bz \\
    &= \min_{\norm{\bz}=1}  \left( \sum_{j=1}^{n} \vecop{\bZ}^{\top} \bV^{\top} \otimes \bU^{\top} \vecop{\bG^i_j} \vecop{\bG^i_j}^{\top} \bV \otimes \bU \vecop{\bZ} \right) \\
    &= \min_{\norm{\bz}=1}  \left( \sum_{j=1}^{n} \vecop{\bU \bZ \bV^{\top}}^{\top} \vecop{\bG^i_j} \vecop{\bG^i_j}^{\top} \vecop{\bU \bZ \bV^{\top}} \right) \\
    &= \min_{\norm{\bz}=1}   \sum_{j=1}^{n} \left(\vecop{\bG^i_j}^{\top} \vecop{\bU \bZ \bV^{\top}} \right)^2 \\
    &=  \min_{\norm{\bz}=1}   \sum_{j=1}^{n}
    \inner{\bG^i_j}{\bU \bZ \bV^{\top}}^2. \\
\end{align*}
Now, since $\{\bG^i_j\}_{j=1}^{n}$ satisfies $(\bU, \bV)$-RIP with $\delta'_{2r}$, it follows that:
\begin{align*}
    \sum_{j=1}^{n}
    \inner{\bG^i_j}{\bU \bZ \bV^{\top}}^2 &\geq 
    \sum_{j=1}^{n} \norm{\bU \bZ \bV^{\top}}_F^2 - 
    n \delta'_{2r} \norm{\bU \bZ \bV^{\top}}_F^2 \\
    &= n - n \delta_{2r}^{'} \norm{\bZ}_F^2,
\end{align*}
where the equality follows since $\bU, \bV$ are orthonormal matrices and therefore $\norm{\bU \bZ \bV^{\top}}_F = \norm{\bZ}_F.$ Hence, we conclude:
\begin{align*}
    \min_{\norm{\bz}=1} \sum_{j=1}^{n}
    \inner{\bG^i_j}{\bU \bZ \bV^{\top}} \geq
    \min_{\norm{\bz}=1} n \left(1  - \delta'_{2r}\norm{\bZ}_F \right)
    = n (1 -  \delta'_{2r}).
\end{align*}

\subsection{Proof of \autoref{lemma:lambdapart_bound2}}
\label{proof:lambdapart_bound2}
Let $\bZ \in R^{r \times r}$ and $\bz = \vecop{\bZ}$. Than, we have:

\begin{align*}
    &\norm{\left( \bB (\bV^{\top} \bVis) \otimes (\bU^{\top} \bUis) - \bC \right)\vecop{{\bLam^i}^*}}_2 = 
    \max_{\norm{\bz}=1} \bz^{\top} \left( \bB (\bV^{\top} \bVis) \otimes (\bU^{\top} \bUis) - \bC \right)\vecop{{\bLam^i}^*} \\
    &\overset{(\zeta_1)}{=} \max_{\norm{\bz}=1}\sum_{j=1}^{n} 
    \vecop{\bZ}^{\top} \bV^{\top}
    \otimes \bU^{\top}
    \vecop{\bG^i_j} \vecop{\bG^i_j}^{\top}
    \left(
    \bV \bV^{\top} \bVis \otimes 
    \bU \bU^{\top} \bUis - 
    \bVis \otimes \bUis
    \right)
    \vecop{{\bLam^i}^*} \\
    &\overset{(\zeta_2)}{=} \max_{\norm{\bz}=1} 
    \sum_{j=1}^{n} 
    \vecop{\bZ}^{\top} \bV^{\top}
    \otimes \bU^{\top}
    \vecop{\bG^i_j} \vecop{\bG^i_j}^{\top}
    \left(
    \bV \bV^{\top} \bVis - \bVis \right) \otimes 
    \bU \bU^{\top} \bUis 
    \vecop{{\bLam^i}^*} \\
    & \quad \quad 
    \, \, \,
    +
    \sum_{j=1}^{n} 
    \vecop{\bZ}^{\top} \bV^{\top}
    \otimes \bU^{\top}
    \vecop{\bG^i_j} \vecop{\bG^i_j}^{\top}
    \bVis \otimes 
    \bracket{
    \bU \bU^{\top} \bUis - \bUis
    }
    \vecop{{\bLam^i}^*} \\
    &\overset{(\zeta_3)}{=} \max_{\norm{\bz}=1} 
    \sum_{j=1}^{n} \inner{\bG^i_j}{\bU \bZ \bV^{\top}}
    \inner{\bG^i_j}{\bU \bU^{\top} \bUis
    {\bLam^i}^*
    \left( 
    \bV \bV^{\top} \bVis - \bVis
    \right)^{\top}
    } \\
    & \quad \quad 
    \, \, \,
    + 
    \sum_{j=1}^{n} \inner{\bG^i_j}{\bU \bZ \bV^{\top}}
    \inner{\bG^i_j}{
    \left( 
    \bU \bU^{\top} \bUis - \bUis
    \right)
    {\bLam^i}^*
    (\bVis)^{\top}
    }, 
\end{align*}
where $\zeta_1$ follows from the definition of $\bB$ and $\bC$, $\zeta_2$ is just an algebraic manipulation of adding and subtracting the same term, and $\zeta_3$ follows from identity \ref{id:kronocker} and writing the inner products in terms of the matrices. Recall that the ensemble 
$\{\bG^i_j\}_{j=1}^{n}$ satisfies 
$(\Tilde{\bU}_1,\Tilde{\bV}_1)$-RIP with coefficient $\delta'_{2r}$, 
where
\begin{align*} 
\Tilde{\bU}_1 = \bigl[\bU, \; \bU\bU^{\top}\bUis \bigr], &&\Tilde{\bV}_1 = \bigl[\bV, \; \bV\bV^{\top}\bVis - \bVis\bigr]. 
\end{align*}

Hence, applying \autoref{lem:inner_2}, we obtain:
\begin{align*}
    &\sum_{j=1}^{n} \inner{\bG^i_j}{\bU \bZ \bV^{\top}}
    \inner{\bG^i_j}{\bU \bU^{\top} \bUis
    {\bLam^i}^*
    \left( 
    \bV \bV^{\top} \bVis - \bVis
    \right)^{\top}
    } \\
    &\leq 18n \delta'_{2r} 
    \norm{\bU \bZ \bV^{\top}}_F
    \norm{\bU \bU^{\top} \bUis
    {\bLam^i}^*
    \left( 
    \bV \bV^{\top} \bVis - \bVis
    \right)^{\top}}_F \\
    &= 18n \delta'_{2r} 
    \norm{\bU^{\top} \bUis
    {\bLam^i}^*
    \left( 
    \bV \bV^{\top} \bVis - \bVis
    \right)^{\top}}_F \\
    &\leq 18n \delta'_{2r} 
    \norm{\bU^{\top} \bUis}_2 \norm{
    {\bLam^i}^*
    \left( 
    \bV \bV^{\top} \bVis - \bVis
    \right)^{\top}}_F \\
    &\leq 18n \delta'_{2r} 
    \norm{\bU^{\top}}_2 \norm{\bUis}_2 \norm{{\bLam^i}^*}_F \norm{
    \left( 
    \bV \bV^{\top} \bVis - \bVis
    \right)^{\top}}_2 \\
    &= 18n \delta'_{2r} 
    \norm{{\bLam^i}^*}_F
    \norm{\left( 
    \bV \bV^{\top} \bVis - \bVis
    \right)^{\top}}_2 \\
    &\leq 18n \delta'_{2r} 
    \norm{{\bLam^i}^*}_F
    \norm{\left( 
    \bV \bV^{\top} \bVis - \bVis
    \right)^{\top}}_2 \norm{\bVis}_2 \\
    &\leq 36n \delta'_{2r} \norm{{\bLam^i}^*}_F \dist{\bV}{\bVis},
\end{align*}
where we used orthonormality of $\bU, \bV$, and also applied $\norm{\bA \bB}_F \leq \norm{\bA}_2 \norm{\bB}_F$ inequality several times.
Similarly, using $(\Tilde{\bU_2}, \Tilde{\bV_2})$-RIP, we obtain:

\begin{align*}
\sum_{j=1}^{n} \inner{\bG^i_j}{\bU \bZ \bV^{\top}}
    \inner{\bG^i_j}{
    \left( 
    \bU \bU^{\top} \bUis - \bUis
    \right)
    {\bLam^i}^*
    (\bVis)^{\top}
    } \leq 72 n \delta'_{2r} \norm{{\bLam^i}^*}_F \dist{\bU}{\bUis}.
\end{align*}
Adding these bounds completes the proof.

\subsection{Proof of \autoref{lemma:lambda_norm_bound}}
\label{proof:lambda_norm_bound}

For the ease of notation we remove the superscript $t$ from the variables. Hence
\begin{align*}
 \bU = \bU_t, \bV = \bV_t, \bLam^i = \bLam^{i}_{t+1}   
\end{align*}
Applying Weil's inequality yields:
    \begin{align*}
        \sigma_{i}({\bLam^i}^*) &\geq
        \sigma_{i} \left(
        \bU^{\top} \bUis
        {\bLam^i}^*
        (\bVis)^{\top}
        \bV 
        \right)
         - 
        \norm{
        \bLam^i - 
        \bU^{\top} \bUis
        {\bLam^i}^*
        (\bVis)^{\top}
        \bV 
        }_2 \\
        &\overset{(\xi_1)}{\geq}
        \sigma_{i}\left(\bU^{\top} \bUis
        {\bLam^i}^*
        (\bVis)^{\top}
        \bV 
        \right) - 144 \delta_{2r}'
    \norm{{\bLam^i}^*}_F
    \left(
    \dist{\bU}{\bUis} 
    +
    \dist{\bV}{\bVis}
    \right) \\
    &\overset{(\xi_2)}{\geq}
    \sigma_{i}\left(\bU^{\top} \bUis
        {\bLam^i}^*
        (\bVis)^{\top}
        \bV 
        \right) - 
        \sigma_{i}\left({\bLam^i}^*\right)
        \cdot 144 \kappa \sqrt{r} \delta_{2r}'
        \left(
    \dist{\bU}{\bUis} 
    +
    \dist{\bV}{\bVis}
    \right),
    \end{align*}
where $(\xi_1)$ follows from \ref{eq:lambda_dif} and $(\xi_2)$ from \ref{bound:condition_number}.
We further bound the first term using Wei'ls inequality as follows:
\begin{align*}
\sigma_{i}\left(\bU^{\top} \bUis
        {\bLam^i}^*
        (\bVis)^{\top}
        \bV 
        \right) 
&= \sigma_{i}\left(\bU \bU^{\top} \bUis
        {\bLam^i}^*
        (\bVis)^{\top}
        \bV 
        \right) \\
&\geq \sigma_{i}\left(\bUis {\bLam^i}^*(\bVis)^{\top}\bV \right) - 
\norm{(\bU \bU^{\top} \bUis - \bUis){\bLam^i}^*(\bVis)^{\top}\bV \bV^{\top}}_2 \\
&\geq \sigma_i\left({\bLam^i}^*(\bVis)^{\top}\bV \right) - 
\norm{(\bU \bU^{\top} \bUis - \bUis)}_2\norm{{\bLam^i}^*(\bVis)^{\top}\bV \bV^{\top}}_2 \\
&\geq \sigma_i \left( {\bLam^i}^*(\bVis)^{\top}\bV  \right) - \dist{\bU}{\bUis} 
\norm{{\bLam^i}^*}_2 \\
&= \sigma_i \left( {\bLam^i}^*(\bVis)^{\top}\bV \bV^{\top}\right) - \dist{\bU}{\bUis} 
\norm{{\bLam^i}^*}_2 \\
&\overset{(\xi_1)}{\geq} \sigma_{i} \left( {\bLam^i}^*\right) -  \dist{\bV}{\bVis} 
\norm{{\bLam^i}^*}_2 - \dist{\bU}{\bUis} 
\norm{{\bLam^i}^*}_2 \\
&\overset{(\xi_2)}{\geq} \sigma_{i} \left( {\bLam^i}^*\right)
\left(1 - 2\kappa \cdot \dist{\bU}{\bUis} - 2\kappa \cdot \dist{\bV}{\bVis} \right),
\end{align*}
where $(\xi_1)$ follows from a similar application of Weil's inequality and $(\xi_2)$ follows since 
\begin{align*}
    \norm{{\bLam^i}^*}_2 \leq \kappa \bracket{{\bLam^i}^*} \sigma_{i} \bracket{{\bLam^i}^*} \leq 2 \kappa \sigma_{i} \bracket{{\bLam^i}^*}.
\end{align*}

Combining these bounds gives:
\begin{align*}
\sigma_{i}(\bLam^i)_2 &\geq
\sigma_{i}({\bLam^i}^*)
\left(
1 - (2\kappa + 288 \kappa \sqrt{r} \delta_{2r}')
\left(
\dist{\bU}{\bUis} + \dist{\bV}{\bVis}
\right)
\right) \\
&\geq \sigma_{i}({\bLam^i}^*)
\left(
1 - (2\kappa + 288 \kappa \sqrt{r} \delta_{2r}')
\left(
\dist{\bU}{\bU^*} + \dist{\bU^*}{\bUis} + \dist{\bV}{\bV^*} + \dist{\bV^*}{\bVis}
\right)
\right) \\
&\geq \sigma_{i}({\bLam^i}^*)
\left(
1 - (2\kappa + 288 \kappa \sqrt{r} \delta_{2r}')
\left(
\dist{\bU}{\bU^*} + \dist{\bV}{\bV^*} + 2\beta
\right)
\right) \\
&\overset{(\xi_1)}{\geq} \sigma_{i}({\bLam^i}^*)
\left(
1 - (2\kappa + 288 \kappa \sqrt{r} \delta_{2r}')
\left(
\dist{\bU_0}{\bU^*} + \dist{\bV_0}{\bV^*} + 10 \beta B_{n,d,r}
\right)
\right) \\
&\overset{(\xi_2)}{\geq} \frac{1}{2} \sigma_{\min}\bracket{{\bLam^i}^*}.
\end{align*}
where $(\xi_1)$ follows from applying our induction hypothesis \autoref{prop:induction} yields:
\begin{align*}
    \dist{\bU}{\bU^*} + \dist{\bV}{\bV^*} &= 
    \dist{\bU_t}{\bU^*} + \dist{\bV_t}{\bV^*} \\
    &\leq \bracket{\frac{1}{2}}^t \bracket{\dist{\bU_0}{\bU^*} + \dist{\bV_0}{\bV^*}} + 8 \beta B_{n,d,r} \\
    &\leq \dist{\bU_0}{\bU^*} + \dist{\bV_0}{\bV^*} + 8 \beta B_{n,d,r},
\end{align*}
and $(\xi_2)$ from using \autoref{lemma:initialization} and inequalities \ref{bound:beta}, \ref{bound:delta2r}. The upper bound on $\sigma_{i}({\bLam^i}^*)_2$ can be derived similarly. Note that \ref{ineq:bound2} directly follows from \ref{ineq:bound1}.

\subsection{Proof of \autoref{lem:sumB_bound}}
\label{proof:sumB_bound}

Note that we have:
\begin{align*}
    \sigma_{\min}
    \left(\sum_{i=1}^k \bB_i\right) = 
    \min_{\norm{\bz}=1} \bz^{\top} \left(\sum_{i=1}^k \bB_i\right) \bz.
\end{align*}
   Let \( \bZ \in \mathbb{R}^{r \times r} \) be arbitrary with \( \|\bZ\|_F = 1 \), and define \( \bz = \vecop{\bZ} \). Than we have:
    \begin{align*}
        \bz^{\top} 
        \left(
        \sum_{i=1}^k \bB^i
        \right)
        \bz &= \bz^{\top} 
        \left(
        \sum_{i=1}^k 
        \sum_{j=1}^{N} 
        (\bU \bLam^i)^{\top} \otimes \bI_d 
        \cdot 
        \vecop{(\bG^i_j)^{\top}}
        \vecop{(\bG^i_j)^{\top}}^{\top}
        \cdot 
        (\bU \bLam^i) \otimes \bI_d
        \right)
        \bz \\
        &=
        \left(
        \sum_{i=1}^k 
        \sum_{j=1}^{N} 
        \vecop{\bZ}^{\top}
        \cdot 
        (\bU \bLam^i)^{\top} \otimes \bI_d 
        \cdot 
        \vecop{(\bG^i_j)^{\top}}
        \vecop{(\bG^i_j)^{\top}}^{\top}
        \cdot 
        (\bU \bLam^i) \otimes \bI_d
        \cdot 
        \vecop{\bZ}
        \right) \\
        &=
        \left(
        \sum_{i=1}^k 
        \sum_{j=1}^{N} 
        \vecop{\bZ (\bU \bLam^i)^{\top}}
        \cdot 
        \vecop{(\bG^i_j)^{\top}}
        \vecop{(\bG^i_j)^{\top}}^{\top}
        \cdot 
        \vecop{\bZ (\bU \bLam^i)^{\top}}
        \right) \\
        &=
        \sum_{i=1}^k 
        \sum_{j=1}^{N}
        \inner{(\bG^i_j)^{\top}}{\bZ (\bLam^i)^{\top} \bU^{\top}}^2        
         \\
    \end{align*}
Now, since $\{(\bG^i_j)^{\top}\}$ satisfies $3r$-GRIP with $\delta_{3r}$, the last term is lower bounded as follows:
\begin{align*}
    &\geq N \cdot \sum_{i=1}^k
    \norm{\bZ (\bLam^i)^{\top} \bU^{\top}}_F^2 - 
    \delta_{3r}
    m\max_{i=1}^k 
    \norm{\bZ (\bLam^i)^{\top} \bU^{\top}}_F^2 \\
    &\overset{(\xi_1)}{\geq} N \cdot \sum_{i=1}^k
    \sigma_{\min}^2(\bLam^i) - 
    \delta_{3r}  m \cdot \max_{i=1}^k \norm{\bLam^i}_2^2 \\
    &\overset{(\xi_2)}{\geq} \frac{N}{4} \cdot \sum_{i=1}^k
    \sigma_{\min}^2({\bLam^i}^*) - 
    4\delta_{3r} m \cdot \max_{i=1}^k \sigma_{\max}^2({\bLam^i}^*) \\
    &\overset{(\xi_3)}{\geq}\frac{n_1k}{4} \cdot \min_{i=1}^k
    \sigma_{\min}^2({\bLam^i}^*) - 
    4\delta_{3r}  m  \kappa^2 \cdot \max_{i=1}^k \sigma_{\min}^2({\bLam^i}^*)\\
    &= \frac{m}{4} \cdot 
    \min_{i=1}^k \sigma_{\min}^2({\bLam^i}^*)
    \left(
    1 - 16 \delta_{3r} \kappa^2 
    \right) \\
    &\overset{(\xi_4)}{\geq} \frac{m}{8} \cdot 
    \min_{i=1}^k \sigma_{\min}^2({\bLam^i}^*),
\end{align*}
where $(\xi_1)$ follows since $\bU$ is orthogonal and $\norm{(\bLam^i)\bZ^{\top}}_F \geq \sigma_{\min}(\bLam^i)\norm{\bZ^{\top}}_F = \sigma_{\min}(\bLam^i)$ and $\norm{\bZ (\bLam^i)^{\top}}_F \leq \norm{\bLam^i}_2 \norm{\bZ}_F = \norm{\bLam^i}_2$, $(\xi_2)$ follows from \autoref{lemma:lambda_norm_bound}, $(\xi_3)$ follows from the definition of $\kappa$, and finally $(\xi_4)$ holds since from \ref{bound:delta3r} we have $\delta_{3r} \leq \frac{1}{32}\kappa^{-2}.$ Since $\bz$ was arbitrary, this completes the proof.

\subsection{Proof of \autoref{lem:F1_bound}}
\label{proof:F1_bound}

Note that we have:
\begin{align*}
    \norm{\bF_1} = \max_{\norm{\bx}=1} 
    \bx^{\top} \bF_1.
\end{align*}
    Let \( \bX \in \mathbb{R}^{d \times r} \) be arbitrary with \( \|\bX\|_F = 1 \), and define \( \bx = \vecop{\bX} \). We have:
    \begin{align*}
        &\bx^{\top} \bF_1  =\\
        &=\bx^{\top}
        \left(
        \sum_{i=1}^k \sum_{j=1}^{N} 
        (\bU \bLam^i)^{\top} \otimes \bI_d 
        \cdot 
        \vecop{(\bG^i_j)^{\top}}
        \vecop{(\bG^i_j)^{\top}}^{\top}
        \cdot 
        \left(
        (\bU \bLam^i \bD^{-1}) \otimes \bI_d
        -
        \bU \bU^{\top} \bUis 
        {\bLam^i}^*
        \otimes \bI_d
        \right)        
        \right)
        \vecop{\bV^*} \\
    &= \sum_{i=1}^k \sum_{j=1}^{N} 
        \vecop{\bX}^{\top}
        (\bU \bLam^i)^{\top} \otimes \bI_d 
        \cdot 
        \vecop{(\bG^i_j)^{\top}}
        \vecop{(\bG^i_j)^{\top}}^{\top}
        \cdot 
        \left(
        (\bU \bLam^i \bD^{-1}) \otimes \bI_d
        -
        \bU \bU^{\top} \bUis 
        {\bLam^i}^*
        \otimes \bI_d
        \right)
        \vecop{\bV^*} \\
    &= \sum_{i=1}^k \sum_{j=1}^{N} 
        \vecop{\bX (\bLam^i)^{\top} \bU^{\top}}^{\top} 
        \cdot 
        \vecop{(\bG^i_j)^{\top}}
        \vecop{(\bG^i_j)^{\top}}^{\top}
        \cdot 
        \vecop{\bV^*
        \left(
        \bU \bLam^i \bD^{-1}
        -
        \bU \bU^{\top} \bUis 
        {\bLam^i}^*
        \right)^{\top}
        } \\
    &= \dsum
    \inner{(\bG^i_j)^{\top}}{\bX (\bLam^i)^{\top}
    \bU^{\top}}
    \inner{(\bG^i_j)^{\top}}{ \bV^*
    \left(
        \bU \bLam^i \bD^{-1}
        -
        \bU \bU^{\top} \bUis 
        {\bLam^i}^*\right)^{\top}} \\
    &= 
    \underbrace{
    \dsum
    \inner{(\bG^i_j)^{\top}}{\bX (\bLam^i)^{\top}
    \bU^{\top}}
    \inner{(\bG^i_j)^{\top}}{ \bV^*
    \left(
        \bU \bLam^i \bD_i^{-1}
        -
        \bU \bU^{\top} \bUis 
        {\bLam^i}^*\right)^{\top}}}_{\circled{1}} \\
    &\quad + 
    \underbrace{
    \dsum
    \inner{(\bG^i_j)^{\top}}{\bX (\bLam^i)^{\top}
    \bU^{\top}}
    \inner{(\bG^i_j)^{\top}}{ \bV^* (\bD^{-1} - \bD_i^{-1})(\bLam^i)^{\top} \bU^{\top}}}_{\circled{2}},
    \end{align*}
where $\bD_i = (\bVis)^{\top}\bV$. Using \autoref{lem:vstar_v}, we have:
\begin{align}
    \norm{\bD_i^{-1}}_2 &\leq \frac{1}{\sqrt{1 - \dist{\bVis}{\bV}}} 
    \leq \frac{1}{\sqrt{1 - \dist{\bV^*}{\bV} - \dist{\bVis}{\bV^*}}} \notag \notag \\ 
    &\leq \frac{1}{\sqrt{1 - \beta - \dist{\bV^*}{\bV}}} \leq 2
    \label{ineq:Di_bound}
\end{align}
Additionally, we obtain:
\begin{align}
    \norm{\bD^{-1} - \bD_i^{-1}}_2 \leq 
    \norm{\bD^{-1}}_2 \norm{\bD_i^{-1}}_2 \norm{\bD_i - \bD}_2
    \leq 4 \norm{(\bVis - \bV^*)^{\top}\bV} \leq 4 \beta.
    \label{ineq:Ddiff_bound}
\end{align}
Now, note that we have:
\begin{align}
    \norm{
    \bU \bLam^i \bD_i^{-1}
    -
    \bU \bU^{\top} \bUis 
    {\bLam^i}^*
    }_F &= 
    \norm{
    \left(
    \bU \bLam^i \bD_i^{-1}
    -
    \bU \bU^{\top} \bUis 
    {\bLam^i}^*
    \right) \bD_i \bD_i^{-1}
    }_F \notag \\
    &=
    \norm{
    \left(
    \bU \bLam^i 
    -
    \bU \bU^{\top} \bUis 
    {\bLam^i}^* (\bVis)^{\top} \bV
    \right) \bD_i^{-1}
    }_F \notag  \\
    &\leq 
    \norm{
    \bU \bLam^i
    -
    \bU \bU^{\top} \bUis 
    {\bLam^i}^* (\bVis)^{\top} \bV
    }_F \norm{\bD_i^{-1}}_2 \notag  \\
    &= 
    \norm{
    \bLam^i
    -
    \bU^{\top} \bUis
    {\bLam^i}^* (\bVis)^{\top} \bV
    }_F \norm{\bD_i^{-1}}_2 \notag \\
    &\overset{(\xi_1)}{\leq} 
    144\delta_{2r}' 
    \norm{{\bLam^i}^*}_F
    \left(
    \dist{\bU}{\bUis} 
    +
    \dist{\bV}{\bVis}
    \right) \cdot 2  \notag  \\
    &\leq 
    288\delta_{2r}'
    \norm{{\bLam^i}^*}_F
    \left(
    \dist{\bU}{\bU^*} 
    +
    \dist{\bV}{\bV^*} 
    + 
    \dist{\bUis}{\bU^*}
    +
    \dist{\bVis}{\bV^*}
    \right) \notag \\
    &\leq 
    288\delta_{2r}'
    \norm{{\bLam^i}^*}_F
    \left(
    \dist{\bU}{\bU^*} 
    +
    \dist{\bV}{\bV^*} 
    + 
    2 \beta 
    \right),
    \label{ineq:just_bound1}
\end{align}
where $(\xi_1)$ follows from \ref{eq:lambda_dif} and \ref{ineq:Di_bound}.
Continuing from the above inequality, we obtain:

\begin{align*}
    \circled{1} &= 
    \dsum
    \inner{(\bG^i_j)^{\top}}{\bX (\bLam^i)^{\top}
    \bU^{\top}}
    \inner{(\bG^i_j)^{\top}}{ \bV^*
    \left(
        \bU \bLam^i \bD_i^{-1}
        -
        \bU \bU^{\top} \bUis 
        {\bLam^i}^*\right)^{\top}} \\
    &\overset{(\xi_1)}{\leq} 
    \dsum 
    \inner{\bX (\bLam^i)^{\top}
    \bU^{\top}}{\bV^*
    \left(
        \bU \bLam^i \bD_i^{-1}
        -
        \bU \bU^{\top} \bUis 
        {\bLam^i}^*\right)^{\top}}
    \\
    &+ 18m
    \delta_{3r} \max_{i=1}^k \norm{\bX (\bLam^i)^{\top} \bU^{\top}}_F \norm{\bV^*
    \left(
        \bU \bLam^i \bD_i^{-1}
        -
        \bU \bU^{\top} \bUis 
        {\bLam^i}^*
        \right)^{\top}}_F \\
    &\overset{(\xi_2)}{\leq} 
    2m
     \max_{i=1}^k \norm{\bX (\bLam^i)^{\top} \bU^{\top}}_F \norm{\bV^*
    \left(
        \bU \bLam^i \bD_i^{-1}
        -
        \bU \bU^{\top} \bUis 
        {\bLam^i}^*
        \right)^{\top}}_F
    \\
    &\overset{(\xi_3)}{\leq}  2m  
    \norm{\bLam^i}_2
    \norm{\bU \bLam^i \bD_i^{-1}
        -
        \bU \bU^{\top} \bUis 
        {\bLam^i}^*}_F, \\
\end{align*}
where \((\xi_1)\) follows since the ensemble \(\{\bG^i_j\}\) satisfies the 3r-GRIP and therefore \autoref{lemma:grip_innerprod} applies with the following choice of parameters:
\[
\begin{aligned}
    &\bU_1 = \bX, \qquad &&\bV_1 = \bU, \qquad &&\bLam_1^i = \bLam^i, \\
    &\bU_2 = \bV^*(\bD_i^{-1})^\top, \qquad &&\bV_2 = \bU, \qquad &&\bLam_2^i = (\bLam^i)^{\top}, \\
    &\bU_3 = \bV^*, \qquad &&\bV_3 = \bU, \qquad &&\bLam_3^i = ({\bLam^i}^*)^{\top} (\bUis)^{\top} \bU,
\end{aligned}
\]
$(\xi_2)$ follows from Cauchy-Schwartz and using $\delta_{3r} \leq \frac{1}{18}$, and $(\xi_3)$ holds since $\bU, \bV^*$ have orthonormal columns and we have used the inequality $\norm{\bA \bB}_F \leq \norm{\bA}_2 \norm{\bB}_F$.Finnaly, using \ref{ineq:just_bound1}, we conclude:

\begin{align*}
    \circled{1} &\leq 2m \max_{i=1}^k 
    \norm{\bLam^i}_2
    \norm{\bU \bLam^i \bD^{-1}
        -
        \bU \bU^{\top} \bU^* 
        {\bLam^i}^*}_F \\
    &\leq 576 m \delta_{2r}'
    \cdot 
    \left(
    \dist{\bU}{\bU^*} 
    +
    \dist{\bV}{\bV^*} + 2\beta 
    \right)
    \cdot 
    \max_{i=1}^k \left( \norm{\bLam^i}_2
    \norm{{\bLam^i}^*}_F \right) \\
    &\overset{(\xi_1)}{\leq} 1152 m \delta_{2r}'
    \cdot 
    \left(
    \dist{\bU}{\bU^*} 
    +
    \dist{\bV}{\bV^*} + 2\beta
    \right)
    \cdot 
    \max_{i=1}^k \left( \norm{{\bLam^i}^*}_2
    \norm{{\bLam^i}^*}_F \right),
\end{align*}
where $(\xi_1)$ follows from \autoref{lemma:lambda_norm_bound}.

For the second part, applying C-S inequality, gives us:
\begin{align*}
    \circled{2} &= \dsum
    \inner{(\bG^i_j)^{\top}}{\bX (\bLam^i)^{\top}
    \bU^{\top}}
    \inner{(\bG^i_j)^{\top}}{ \bV^* (\bD^{-1} - \bD_i^{-1})(\bLam^i)^{\top} \bU^{\top}} \\
    &\leq \sqrt{\dsum
    \inner{(\bG^i_j)^{\top}}{\bX (\bLam^i)^{\top}
    \bU^{\top}}^2} 
    \cdot 
    \sqrt{\dsum \inner{(\bG^i_j)^{\top}}{ \bV^* (\bD^{-1} - \bD_i^{-1})(\bLam^i)^{\top} \bU^{\top}}^2} \\
    &\overset{(\xi_1)}{\leq} \sqrt{(1+\delta_{3r})m\max_{i=1}^k \norm{\bX (\bLam^i)^{\top}\bU^{\top}}_F^2} \cdot
    \sqrt{m A_{n,d,r} \max_{i=1}^k \norm{\bV^* (\bD^{-1} - \bD_i^{-1})(\bLam^i)^{\top} \bU^{\top}}_F^2} \\
    &\leq \sqrt{2 m \max_{i=1}^k \norm{\bX}_F^2 \norm{\bLam^i}_2^2} \cdot \sqrt{m A_{n,d,r} \max_{i=1}^k \norm{\bD^{-1} - \bD_i^{-1}}_2^2 \norm{(\bLam^i)}_F^2}\\
    &\overset{(\xi_2)}{\leq} 8m \beta \sqrt{A_{n,d,r}} \max_{i=1}^k \bracket{\norm{\bLam^i}_2 \norm{\bLam^i}_F} \\
    &\overset{(\xi_3)}{\leq} 32m \beta \sqrt{A_{n,d,r}} \max_{i=1}^k \bracket{\norm{{\bLam^i}^*}_2 \norm{{\bLam^i}^*}_F},
\end{align*}
where $(\xi_1)$ follows since $\{\bG^i_j\}$ is sub-isometric with coefficient $A_{n,d,r}$ and satisfies $r$-GRIP with coefficient $\delta_{3r}$, $(\xi_2)$ from \ref{ineq:Ddiff_bound} and $(\xi_3)$ from \autoref{lemma:lambda_norm_bound}.

Combining these two inequalities, we obtain:

\begin{align*}
    \bx^{\top} \bF_1 = \circled{1} + \circled{2}
    \leq 
    1152m 
    \cdot 
    \max_{i=1}^k
    \bracket{\norm{{\bLam^i}^*}_2, \norm{{\bLam^i}^*}_F}
    \cdot
    \bracket{
    \delta^{'}_{2r} \bracket{\dist{\bU}{\bU^*} + \dist{\bV}{\bV^*}}
    + \beta \sqrt{A_{n,d,r}}
    }
\end{align*}

Since $\bx$ was arbitrary, this completes the proof.

\subsection{Proof of \autoref{lem:F2_bound}}
\label{proof:F2_bound}

Note that we have:
\begin{align*}
    \norm{\bF_2} = \max_{\norm{\bx}=1} 
    \bx^{\top} \bF_2.
\end{align*}
    Let \( \bX \in \mathbb{R}^{d \times r} \) be arbitrary with \( \|\bX\|_F = 1 \), and define \( \bx = \vecop{\bX} \). We have

\begin{align*}
    &\bx^{\top} \bF_2 
    = \\
    &=
    \bx^{\top}
    \left(
    \sum_{i=1}^k \sum_{j=1}^{N}
        (\bU \bLam^i)^{\top} \otimes \bI_d 
        \cdot 
        \vecop{(\bG^i_j)^{\top}}
        \vecop{(\bG^i_j)^{\top}}^{\top}
        \cdot 
        \left(
        \bU \bU^{\top} \bUis 
        {\bLam^i}^* \otimes \bI_d
        -
        \bUis {\bLam^i}^*
        \otimes \bI_d
        \right)
    \vecop{\bV^*} \right) \\
    &= 
    \sum_{i=1}^k \sum_{j=1}^{N} 
        \vecop{\bX}^{\top}
        (\bU \bLam^i)^{\top} \otimes \bI_d 
        \cdot 
        \vecop{(\bG^i_j)^{\top}}
        \vecop{(\bG^i_j)^{\top}}^{\top}
        \cdot 
        \left(
        \bU \bU^{\top} \bUis 
        {\bLam^i}^* \otimes \bI_d
        -
        \bUis {\bLam^i}^*
        \otimes \bI_d
        \right)
        \vecop{\bV^*} \\
    &= 
    \dsum 
    \vecop{\bX (\bLam^i)^{\top}\bU^{\top}}^{\top}
    \cdot 
        \vecop{(\bG^i_j)^{\top}}
        \vecop{(\bG^i_j)^{\top}}^{\top}
        \cdot
    \vecop{\bV^* 
    \left(
    \left(\bU \bU^{\top} \bUis
    - \bUis
    \right) (\bLam^i)^{*}
    \right)^{\top}
    } \\
    &=
    \dsum 
    \inner{(\bG^i_j)^{\top}}{\bX (\bLam^i)^{\top}\bU^{\top}}
    \cdot 
    \inner{(\bG^i_j)^{\top}}{\bV^*
    \left(
    \left(\bU \bU^{\top} \bUis 
    - \bUis
    \right) (\bLam^i)^{*}
    \right)^{\top}} \\
    &=
    \underbrace{
    \dsum 
    \inner{(\bG^i_j)^{\top}}{\bX (\bLam^i)^{\top}\bU^{\top}}
    \cdot 
    \inner{(\bG^i_j)^{\top}}{\bV^*
    \left(
    \left(\bU \bU^{\top} \bU^* 
    - \bU^*
    \right) (\bLam^i)^{*}
    \right)^{\top}}}_{\circled{1}} \\
    &\quad +
    \underbrace{
    \dsum 
    \inner{(\bG^i_j)^{\top}}{\bX (\bLam^i)^{\top}\bU^{\top}}
    \cdot 
    \inner{(\bG^i_j)^{\top}}{\bV^*
    \left(
    (\bU \bU^{\top} - \bI) (\bU^* 
    - \bUis) (\bLam^i)^{*}
    \right)^{\top}}}_{\circled{2}}
    \end{align*}

For the first part, we have:
\begin{align*}
     \circled{1}&\overset{(\xi_1)}{\leq} 
     \dsum 
     \inner{\bX (\bLam^i)^{\top}\bU^{\top}}{\bV^*
    \left(
    \left(\bU \bU^{\top} \bU^* 
    - \bU^*
    \right) (\bLam^i)^{*}
    \right)^{\top}}
     \\
     & \quad 
     +18m\delta_{3r} \max_{i=1}^k 
     \norm{\bX (\bLam^i)^{\top} \bU^{\top}}_F
     \cdot 
     \norm{\left(\bU \bU^{\top} \bU^* - \bU^* \right) (\bLam^i)^{*} (\bV^*)^{\top}}_F \\
     &\overset{(\xi_2)}{=}18m\delta_{3r} \max_{i=1}^k 
     \norm{\bX (\bLam^i)^{\top} \bU^{\top}}_F
     \cdot 
     \norm{\left(\bU \bU^{\top} \bU^* - \bU^* \right) (\bLam^i)^{*} (\bV^*)^{\top}}_F \\
     &\overset{(\xi_3)}{=} 18m \delta_{3r}
     \max_{i=1}^k \norm{\bLam^i}_2 \norm{{\bLam^i}^*}_F
     \norm{\bU \bU^{\top} \bU^* - \bU^*}_2 \\
     &\overset{(\xi_4)}{\leq} 36m \delta_{3r}
     \cdot \dist{\bU}{\bU^*}
     \cdot 
     \max_{i=1}^k \left( 
     \norm{{\bLam^i}^*}_2
     \norm{{\bLam^i}^*}_F
     \right), \\
\end{align*}
where \((\xi_1)\) follows since the ensemble \(\{\bG^i_j\}\) satisfies the 3r-GRIP condition and therefore \autoref{lemma:grip_innerprod} applies with the following choice of parameters:
\[
\begin{aligned}
    &\bU_1 = \bX, \qquad &&\bV_1 = \bU, \qquad &&\bLam_1^i = (\bLam^i)^{\top}, \\
    &\bU_2 = \bV^, \qquad &&\bV_2 = \bU\bU^{\top}\bU^* - \bU^*, \qquad &&\bLam_2^i = ({\bLam^i}^*)^{\top}, \\
    &\bU_3 = 0, \qquad &&\bV_3 = 0, \qquad &&\bLam_3^i = 0,
\end{aligned}
\]
$(\xi_2)$ follows since we have:
\begin{align*}
    \inner{\bX (\bLam^i)^{\top}\bU^{\top}}{\bV^*
    \left(
    \left(\bU \bU^{\top} \bU^* 
    - \bU^*
    \right) (\bLam^i)^{*}
    \right)^{\top}} 
    &= 
    \tr{\bX (\bLam^i)^{\top}
    \underbrace{
    \bU^{\top}
    \left(\bU \bU^{\top} \bU^* 
    - \bU^*
    \right)}_{=\mathbf{0}}
    (\bLam^i)^{*} (\bV^*)^{\top}
    } \\
    &= \mathbf{0},
\end{align*}
$(\xi_3)$ holds using $\bU, \bV^*$ have orthonormal columns and the inequality 
$\norm{\bA \bB}_F \leq \norm{\bA}_2 \norm{\bB}_F$ and $(\xi_4)$ holds from \autoref{lemma:lambda_norm_bound}.

\noindent For the second part, using C-S inequality, we get:

\begin{align*}
    \circled{2} &= \dsum 
    \inner{(\bG^i_j)^{\top}}{\bX (\bLam^i)^{\top}\bU^{\top}}
    \cdot 
    \inner{(\bG^i_j)^{\top}}{\bV^*
    \left(
    (\bU \bU^{\top} - \bI) (\bU^* 
    - \bUis) (\bLam^i)^{*}
    \right)^{\top}} \\
    &\leq 
    \sqrt{
    \dsum 
    \inner{(\bG^i_j)^{\top}}{\bX (\bLam^i)^{\top}\bU^{\top}}^2
    }
    \cdot
    \sqrt{
    \dsum
    \inner{(\bG^i_j)^{\top}}{\bV^*
    \left(
    (\bU \bU^{\top} - \bI) (\bU^* 
    - \bUis) (\bLam^i)^{*}
    \right)^{\top}}^2} \\
    &\overset{(\xi_1)}{\leq} 
    \sqrt{(1+\delta_{3r})m \max_{i=1}^k \norm{\bX (\bLam^i)^{\top} \bU^{\top}}_F^2}
    \cdot 
    \sqrt{mA_{n,d,r} \max_{i=1}^k \norm{\bV^*
    \left(
    (\bU \bU^{\top} - \bI) (\bU^* 
    - \bUis) (\bLam^i)^{*}
    \right)^{\top}}_F^2} \\
    &\leq 
    \sqrt{2 m \max_{i=1}^k \norm{\bX}_F^2 \norm{\bLam^i}_2^2}
    \cdot 
    \sqrt{m A_{n,d,r} \max_{i=1}^k 
    \norm{\bU \bU^{\top} - \bI}_2^2
    \norm{\bU^* - \bUis}_2^2 
    \norm{{\bLam^i}^*}_F^2}\\
    &\overset{(\xi_2)}{\leq}  2m \cdot \beta \sqrt{A_{n,d,r}} \cdot \max_{i=1}^k 
    \bracket{
    \norm{\bLam^i}_2 \norm{{\bLam^i}^*}_F
    } \\
    &\leq 4m \cdot \beta \sqrt{A_{n,d,r}} \cdot \max_{i=1}^k 
    \bracket{
    \norm{{\bLam^i}^*}_2 \norm{{\bLam^i}^*}_F
    },
\end{align*}
where $(\xi_1)$ follows since $\{\bG^i_j\}$ is sub-isometric with coefficient $A_{n,d,r}$ and satisfies $r$-GRIP with coefficient $\delta_{3r}$, in $(\xi_2)$ we used \ref{ineq:U_difference}, and $(\xi_3)$ follows from \autoref{lemma:lambda_norm_bound}.

Combining these bounds, we conclude:

\begin{align*}
    \bx^{\top} \bF_2 = 
    \circled{1} + \circled{2} \leq 
    36m \cdot 
    \max_{i=1}^k 
    \bracket{
    \norm{{\bLam^i}^*}_2 \norm{{\bLam^i}^*}_F
    }
    \cdot 
    \bracket{
    \delta_{3r}\dist{\bU}{\bU^*} + \beta \sqrt{A_{n,d,r}}
    }
\end{align*}
Since $\bx$ was arbitrary, this completes the proof.

\subsection{Proof of \autoref{lemma:R_bound}}
\label{proof:R_bound}

We have:
    \begin{align*}
        \sigma_{\min}(\bR_{t+1}) &= 
        \min_{\norm{\bz}=1}
        \norm{\bR_{t+1} \bz}_2 = \min_{\norm{\bz}=1}
        \norm{\bV_{t+1} \bR_{t+1} \bz}_2, \\
        &= \min_{\norm{\bz}=1}
        \norm{\hat{\bV}_{t+1} \bz}_2 = \min_{\norm{\bz}=1}
        \norm{
        \bV^* ((\bD_t)^{-1})^{\top}
        \bz -
        \matop{\bH_t} \bz 
        }_2 \\
        &\geq \min_{\norm{\bz}=1}
        \norm{\bV^* ((\bD_t)^{-1})^{\top}
        \bz}_2 - 
        \max_{\norm{\bz}=1}
        \norm{\matop{\bH_t} \bz }_2 \\
        &= \min_{\norm{\bz}=1}
        \norm{((\bD_t)^{-1})^{\top}
        \bz}_2 - 
        \norm{\matop{\bH_t}}_2 \\
        &= \sigma_{\min}\bracket{(\bD_t)^{-1}} - \norm{\matop{\bH_t}}_2 \\
        &= \frac{1}{\sigma_{\max}(\bD_t)} - 
        \norm{\matop{\bH_t}}_2 \\
        &= \frac{1}{\norm{(\bV^*)^{\top}\bV_t}_2} - \norm{\matop{\bH_t}}_2 \\
        &\geq \frac{1}{
        \norm{\bV^*}_2
        \norm{\bV_t}_2
        } - \norm{\matop{\bH_t}}_2 \\
        &\geq 1 - \norm{\matop{\bH_t}}_2.
    \end{align*}
Using \ref{ineq:matFbound} completes the proof.

\subsection{Proof of \autoref{prop:GRIP}}
\label{proof:GRIP}
Let's define the operators $\mathcal{A}: R^{d\times r} \times R^{d \times r} \times R^{k \cdot (r\times r)} \rightarrow R^m$ and $\mathcal{B}: R^{d \times r} \times R^{d \times r} \times R^{k \cdot (r\times r)} \rightarrow R^m$ as follows:
\begin{align*}
    \Aop{\bU,\bV,\{\bLam\}_{i=1}^k} = 
    \begin{pmatrix}
        \frac{1}{\sqrt{m}}\inner{\bG^1_1}{\bU \bLam^1 \bV^{\top}} \\
        \vdots \\
        \frac{1}{\sqrt{m}}\inner{\bG^1_n}{\bU \bLam^1 \bV^{\top}} \\
        \vdots \\
        \vdots \\
        \frac{1}{\sqrt{m}}\inner{\bG^k_1}{\bU \bLam^k \bV^{\top}} \\
        \vdots \\
        \frac{1}{\sqrt{m}}\inner{\bG^k_n}{\bU \bLam^k \bV^{\top}} \\
    \end{pmatrix},
    \Bop{\bU,\bV,\{\bLam\}_{i=1}^k} = 
    \begin{pmatrix}
        \frac{1}{\sqrt{m}}\norm{\bU \bLam^1 \bV^{\top}}_F \\
        \vdots \\
        \frac{1}{\sqrt{m}}\norm{\bU \bLam^1 \bV^{\top}}_F \\
        \vdots \\
        \vdots \\
        \frac{1}{\sqrt{m}}\norm{\bU \bLam^k \bV^{\top}}_F \\
        \vdots \\
        \frac{1}{\sqrt{m}}\norm{\bU \bLam^k \bV^{\top}}_F. \\
    \end{pmatrix}
\end{align*}
We need to show that with high probabillity, for all $\bU, \bV$ and $\{\bLam^i\}_{i=1}^k$, the following inequality holds:
\begin{align}
    \left|
    \norm{ \Aop{\bU,\bV,\{\bLam\}_{i=1}^k}}_2^2 - 
    \norm{ \Bop{\bU,\bV,\{\bLam\}_{i=1}^k}}_2^2
    \right| \leq \delta_r \max_{i=1}^k 
    \norm{\bU \bLam^i (\bV)^{\top}}_F^2.
    \label{ineq:main}
\end{align}
\begin{observation}
\label{obs:orthogonalize}
Note that for $\bU, \bV, \{\bLam^i\}_{i=1}^k$ and invertible matrices $\bR_1, \bR_2 \in R^{r \times r}$, under the transformation $\bU \leftarrow \bU \bR_1, \bV \leftarrow \bV \bR_2$ and $\bLam^i \leftarrow (\bR_1)^{-1} \bLam (\bR_2^{\top})^{-1}$, we have that 
\begin{align*}
\Aop{\bU, \bV, \{\bLam^i\}_{i=1}^k},\Bop{\bU, \bV, \{\bLam^i\}_{i=1}^k} \text{ and } \bU \bLam^i \bV^{\top}
\end{align*}
stay the same.    
\end{observation}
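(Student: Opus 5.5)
The plan is a direct substitution, since both operators $\mathcal{A}$ and $\mathcal{B}$ depend on $(\bU,\bV,\{\bLam^i\}_{i=1}^k)$ only through the $k$ products $\bU\bLam^i\bV^{\top}$. I would therefore first show that each product is unchanged by the transformation, and then read off the invariance of $\mathcal{A}$ and $\mathcal{B}$ entrywise.

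\emph{Step 1 (the products).} Fix $i\in[k]$. Under $\bU\leftarrow\bU\bR_1$, $\bV\leftarrow\bV\bR_2$ and $\bLam^i\leftarrow\bR_1^{-1}\bLam^i(\bR_2^{\top})^{-1}$, the new product is
\begin{align*}
(\bU\bR_1)\,\bR_1^{-1}\bLam^i(\bR_2^{\top})^{-1}\,(\bV\bR_2)^{\top}
= \bU\,(\bR_1\bR_1^{-1})\,\bLam^i\,\big((\bR_2^{\top})^{-1}\bR_2^{\top}\big)\,\bV^{\top}
= \bU\bLam^i\bV^{\top}.
\end{align*}
This uses only associativity and $(\bV\bR_2)^{\top}=\bR_2^{\top}\bV^{\top}$. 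The inverses exist because $\bR_1,\bR_2$ are assumed invertible.

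\emph{Step 2 (the operators).} Every entry of $\mathcal{A}$ has the form $\frac{1}{\sqrt{m}}\inner{\bG^i_j}{\bU\bLam^i\bV^{\top}}$. Every entry of $\mathcal{B}$ has the form $\frac{1}{\sqrt{m}}\norm{\bU\bLam^i\bV^{\top}}_F$. By Step 1, each entry is a function of an unchanged product, so both vectors are unchanged, and so are the quantities $\max_i\norm{\bU\bLam^i\bV^{\top}}_F$ appearing in \eqref{ineq:main}.

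There is no real obstacle here. The only point needing care is the transpose ordering in $(\bR_2^{\top})^{-1}$, which must be placed so that it cancels against $\bR_2^{\top}$ coming from $\bV^{\top}$. In the subsequent GRIP proof, this observation will be used with $\bR_1,\bR_2$ taken from the QR factors of $\bU,\bV$. That choice reduces \eqref{ineq:main} to the case $\bU,\bV\in\text{St}(d,r)$, so a covering over the Stiefel manifold and bounded $\bLam^i$ suffices.
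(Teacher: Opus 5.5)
Your proof is correct and is the argument the paper leaves implicit: the observation is stated without a written proof, and the justification is exactly that $\mathcal{A}$ and $\mathcal{B}$ depend on $(\bU,\bV,\{\bLam^i\}_{i=1}^k)$ only through the products $\bU\bLam^i\bV^{\top}$, which are unchanged by the cancellations $\bR_1\bR_1^{-1}=\bI$ and $(\bR_2^{\top})^{-1}\bR_2^{\top}=\bI$ that you write out (you also correctly read the paper's $\bLam$ in the transformation as $\bLam^i$). A minor aside on your closing remark: if $\bU$ or $\bV$ is rank-deficient, its QR factor is singular, but the reduction to $\mathrm{St}(d,r)$ still works by absorbing the triangular factors into $\bLam^i$, and that step needs no invertibility.
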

\noindent Therefore, it suffices to prove that \ref{ineq:main} holds for all \( \bU, \bV \in \mathrm{St}(r, d) \). Moreover, since
\ref{ineq:main} is scale-invariant, we may impose
$\norm{\bLam^{i}}_{F}\le 1$ for every $i=1,\dots,k$ and establish the
bound with the same constant ~$\delta_{r}$.
We define the set $S$ as follows:
\begin{align*}
    S = \{(\bU,\bV,\{\bLam^i\}_{i=1}^k): \bU, \bV \in \text{St}(r,d), \{\bLam^i\}_{i=1}^k \in R^{k \cdot (r\times r)} \text{ and }
    \max_{i=1}^k \norm{\bLam^i}_F \leq 1
    \}.
\end{align*}
Let $\delta = \Bar{\delta_{r}}$. Let $S_{1}, S_{2}$ be a $\delta$-covers for $\text{St}(r,d) \subseteq B(\sqrt{r}, dr)$ and $ \text{B}(1,r^2)$ respectively. Have $|S_1| \leq \left(\frac{9 \sqrt{r}}{\delta}\right)^{dr}$ and $|S_2| \leq \left(\frac{3}{\delta}\right)^{r^2}$ \citep{wainwright2019high}. Define $\bar{S}$ as follows:
\begin{align*}
    \bar{S} = \{(\bbU,\bbV,\{\bbLam^i\}_{i=1}^k): \bbU, \bbV \in S_1, \{\bbLam^i\}_{i=1}^k \in (S_2)^k
    \}.
\end{align*}
Now, for a fixed $(\bbU, \bbV, \{\bbLam^i\}_{i=1}^k) \in \bar{S}$, by \autoref{property:dist}, we have:
\begin{align}
    \mathbf{P}
    \left[
    \left|
    \norm{ \Aop{\bbU,\bbV,\{\bbLam^i\}_{i=1}^k}}_2^2 - 
    \norm{ \Bop{\bbU,\bbV,\{\bbLam^i\}_{i=1}^k}}_2^2
    \right| \leq 
    \delta \max_{i=1}^k \norm{\bbLam^i}_F^2
    \right] \geq 1 - C_1 e^{-cm\delta^2}.
    \label{ineq:single_concentration}
\end{align}
Let $A$ be the event such that:
\begin{align*}
    \left|
    \norm{ \Aop{\bbU,\bbV,\{\bbLam\}_{i=1}^k}}_2^2 - 
    \norm{ \Bop{\bbU,\bbV,\{\bbLam\}_{i=1}^k}}_2^2
    \right| \leq 
    \delta 
    \text{ for all }
    (\bbU, \bbV, \{\bbLam^i\}_{i=1}^k) \in \bar{S}
\end{align*}
Using \ref{ineq:single_concentration} and applying union bound gives us:
\begin{align*}
    \mathbf{P}
    \left[A
    \right] 
    &\geq 1 - C_1|\bar{S}| e^{-cm \delta^2} \geq 
    1 - C_1\left( \frac{9 \sqrt{r}}{\delta} \right)^{dr + kr^2} \cdot 
    e^{-cm\delta^2} \\
    &\geq 
    1 - C_1
    e^{-cm \delta^2 + (dr+kr^2) \log \left(\frac{9\sqrt{r}}{\delta}\right)} \geq 
    1 - C_1^{-\frac{cm \delta^2}{2}},
\end{align*}
where the last inequality follows from the fact that 
$m \geq \frac{2(dr+kr^2)}{c \delta^2} \log \left(\frac{9 \sqrt{r}}{\delta}\right)$. We now show that if $A $ holds, then inequality~\ref{ineq:main} follows. From this point onward, assume that $A$ holds.

We set
\begin{align*}
    \gamma := \sup 
    \left\{\left|\norm{\Aop{\bU,\bV,\{\bLam^i\}_{i=1}^k}}_2 -
    \norm{\Bop{\bU,\bV,\{\bLam^i\}_{i=1}^k}}_2
    \right|
    \bigg|
    (\bU, \bV, \{\bLam^i\}_{i=1}^k) \in S
    \right\}.
\end{align*}
Now, for a fixed $(\bU,\bV, \{\bLam^i\}_{i=1}^k) \in S $, let $(\bbU,\bbV, \{\bbLam^i\}_{i=1}^k) \in \Bar{S}$ such that we have:
\begin{align*}
    \norm{\bU - \bbU}_F \leq \delta, 
    \norm{\bV - \bbV}_F \leq \delta, \text{ and }
    \norm{\bLam^i - \bbLam^i}_F \leq \delta
    \text{ for }i=1,\dots k.
\end{align*}
Now using the linearity of $\mathcal{A}$ and applying triangle inequality we obtain:
\begin{align*}
    \left| \norm{\Aop{\bU,\bV,\{\bLam^i\}_{i=1}^k}}_2 - 
    \norm{\Bop{\bU,\bV,\{\bLam^i\}_{i=1}^k}}_2 \right| \leq \quad
    &\norm{\Aop{\bU-\bbU,\bV,\{\bLam^i\}_{i=1}^k}}_2 \\
    + &\norm{\Aop{\bbU,\bV - \bbV,\{\bLam^i\}_{i=1}^k}}_2 \\
    + &\norm{\Aop{\bbU,\bbV,\{\bLam^i - \bbLam^i \}_{i=1}^k}}_2 \\
    + &
    \underbrace{ \left|
    \norm{\Aop{\bbU,\bbV,\{\bbLam^i \}_{i=1}^k}}_2 - \norm{\Bop{\bbU,\bbV,\{\bbLam^i \}_{i=1}^k}}_2 \right| }_{\textcircled{2}} \\
    + &
    \underbrace{ \left|
    \norm{\Bop{\bbU,\bbV,\{\bbLam^i \}_{i=1}^k}}_2 - 
    \norm{\Bop{\bU,\bV,\{\bLam^i \}_{i=1}^k}}_2 \right| }_{\textcircled{1}}.
\end{align*}
Observe that for every $i$ we have:
\begin{align*}
    \norm{(\bU - \bbU)\bLam^i (\bV)^{\top}}_F \leq
    \norm{(\bU - \bbU)}_F \norm{\bLam^i (\bV)^{\top}}_F \leq 
    \norm{(\bU - \bbU)}_F \norm{\bLam^i}_F \leq \delta,
\end{align*}
and similarly:
\begin{align*}
    \norm{\bbU \bLam^i (\bV - \bbV)^{\top}}_F \leq \delta,
    \norm{\bbU (\bLam^i-\bbLam^i) \bbV^{\top}}_F \leq \delta.
\end{align*}
If $\bU = \bbU$ we have $\norm{\Aop{\bU-\bbU,\bV,\{\bLam^i\}_{i=1}^k}}_2 = 0$. Otherwise, by homogeneity we have:
\begin{gather*}
    \norm{\Aop{\bU-\bbU,\bV,\{\bLam^i\}_{i=1}^k}}_2 - \norm{\Bop{\bU-\bbU,\bV,\{\bLam^i\}_{i=1}^k}}_2 = \\
    \norm{\bU - \bbU}_F
    \left(
    \norm{\Aop{\frac{\bU - \bbU}{\norm{\bU - \bbU}_F},\bV,\{\bLam^i\}_{i=1}^k}}_2 - \norm{\Bop{\frac{\bU - \bbU}{\norm{\bU - \bbU}_F},\bV,\{\bLam^i\}_{i=1}^k}}_2
    \right)
\end{gather*}
Now, recalling \autoref{obs:orthogonalize} and noting that for every $ i = 1, \dots, k $,
\begin{align*}
    \norm{
    \frac{\bU - \bbU}{\norm{\bU - \bbU}_F}
    \bLam^i
    \bV^{\top}
    }_F \leq 
    \norm{\bLam^i}_F \leq 1,
\end{align*}
we conclude that the second term is bounded by $ \gamma $, which yields:
\begin{align*}
    \norm{\Aop{\bU-\bbU,\bV,\{\bLam^i\}_{i=1}^k}}_2 \leq \norm{\Bop{\bU-\bbU,\bV,\{\bLam^i\}_{i=1}^k}}_2 + \delta \gamma \leq \delta + \delta \gamma, 
\end{align*}
where the final inequality follows from the fact that all entries of  $\Bop{\bU-\bbU,\bV,\{\bLam^i\}_{i=1}^k}$ are bounded by $\frac{\delta}{\sqrt{m}}$.
Similarly, we have
\begin{align}
    &\norm{\Aop{\bbU,\bV - \bbV,\{\bLam^i\}_{i=1}^k}}_2 
    \leq \delta + \delta \gamma, \label{bd:V_diff} \\
    &\norm{\Aop{\bbU,\bbV,\{\bLam^i - \bbLam^i \}_{i=1}^k}}_2 
    \leq \delta + \delta \gamma. \label{bd:Lambda_diff}
\end{align}
\ref{bd:V_diff} follows identically and for \ref{bd:Lambda_diff} one needs to multiply and divide by $\max_{i=1}^k \norm{\bLam^i - \bbLam^i}_F$. 
For \textcircled{1} observe that we have:
\begin{align*}
    \left| \norm{\Bop{\bbU,\bbV,\{\bbLam^i \}_{i=1}^k}}_2 - 
    \norm{\Bop{\bU,\bV,\{\bLam^i \}_{i=1}^k}}_2 \right| \leq
    \norm{\Bop{\bbU,\bbV,\{\bbLam^i \}_{i=1}^k} - 
    \Bop{\bU,\bV,\{\bLam^i \}_{i=1}^k}
    }_2.
\end{align*}
Note that, for every $i=1, \dots k$, we have:
\begin{align*}
    \norm{\bU \bLam^i (\bV)^{\top}}_F - \norm{\bbU \bbLam^i (\bbV)^{\top}}_F \leq
    \norm{(\bU - \bbU) \bLam^i \bV^{\top}}_F +
    \norm{\bbU (\bLam^i-\bbLam^i) \bV^{\top}}_F +
    \norm{\bbU \bbLam^i (\bV-\bbV)^{\top}}_F \leq 
    3 \delta,
\end{align*}
which gives us:
\begin{align*}
    \textcircled{1} \leq \norm{\Bop{\bbU,\bbV,\{\bbLam^i \}_{i=1}^k} - 
    \Bop{\bU,\bV,\{\bLam^i \}_{i=1}^k}
    }_2 \leq 3 \delta.
\end{align*}
For \textcircled{2} we have:
\begin{align*}
    \left|
    \norm{\Aop{\bbU,\bbV,\{\bbLam^i \}_{i=1}^k}}_2 - \norm{\Bop{\bbU,\bbV,\{\bbLam^i \}_{i=1}^k}}_2 
    \right|
    &=
    \frac{\left|
    \norm{\Aop{\bbU,\bbV,\{\bbLam^i \}_{i=1}^k}}_2^2 - \norm{\Bop{\bbU,\bbV,\{\bbLam^i \}_{i=1}^k}}_2^2
    \right|
    }
    {
    \norm{\Aop{\bbU,\bbV,\{\bbLam^i \}_{i=1}^k}}_2 + \norm{\Bop{\bbU,\bbV,\{\bbLam^i \}_{i=1}^k}}_2
    } \\
    &\leq 
    \frac{
    \delta \max_{i=1}^k \norm{\bbLam^i}_F^2
    }
    {
    \norm{\Aop{\bbU,\bbV,\{\bbLam^i \}_{i=1}^k}}_2 + \norm{\Bop{\bbU,\bbV,\{\bbLam^i \}_{i=1}^k}}_2
    },
\end{align*}
which can be further bounded
\begin{align*}
    &\leq \min \bracket{
    \frac{
    \delta
    }
    {
    \norm{\Aop{\bbU,\bbV,\{\bbLam^i \}_{i=1}^k}}_2 + \norm{\Bop{\bbU,\bbV,\{\bbLam^i \}_{i=1}^k}}_2
    },
    \frac{\delta \max_{i=1}^k \norm{\bbLam^i}_F^2}{\norm{\Bop{\bbU,\bbV,\{\bbLam^i \}_{i=1}^k}}_2}
    } \\
    &\leq \min \bracket{
    \frac{
    \delta
    }
    {
    \norm{\Aop{\bbU,\bbV,\{\bbLam^i \}_{i=1}^k}}_2 + \norm{\Bop{\bbU,\bbV,\{\bbLam^i \}_{i=1}^k}}_2
    },
    \delta \sqrt{k}
    },
\end{align*}
where the second inequality holds since
\begin{align*}
    \norm{\Bop{\bbU,\bbV,\{\bbLam^i \}_{i=1}^k}}_2 = 
    \sqrt{\frac{n}{m}\sum_{i=1}^k \norm{\bbLam^i}_F^2} \geq \frac{1}{\sqrt{k}} \max_{i=1}^k \norm{\bbLam^i}_F.
\end{align*}
Moreover, using triangle inequality, we get:
\begin{align*}
    \left|
    \norm{\Aop{\bbU,\bbV,\{\bbLam^i \}_{i=1}^k}}_2 - \norm{\Bop{\bbU,\bbV,\{\bbLam^i \}_{i=1}^k}}_2
    \right| &\leq 
    \norm{\Aop{\bbU,\bbV,\{\bbLam^i \}_{i=1}^k}}_2 + \norm{\Bop{\bbU,\bbV,\{\bbLam^i \}_{i=1}^k}}_2 \\
\end{align*}
Combigning both bounds, gives us:
\begin{align*}
    \norm{\Aop{\bbU,\bbV,\{\bbLam^i \}_{i=1}^k}}_2 - \norm{\Bop{\bbU,\bbV,\{\bbLam^i \}_{i=1}^k}}_2
    &\leq \min \bracket{\delta \sqrt{k},\sqrt{\delta}}.
\end{align*}
Putting together all bounds we obtain:
\begin{align*}
    \norm{\Aop{\bU,\bV,\{\bLam^i\}_{i=1}^k}}_2 - 
    \norm{\Bop{\bU,\bV,\{\bLam^i\}_{i=1}^k}}_2 \leq 
    3\delta + 3\delta \gamma + \min \bracket{\delta \sqrt{k}, \sqrt{\delta}} + 3 \delta
    \leq 7 \min \bracket{\delta \sqrt{k}, \sqrt{\delta}} + 3 \delta \gamma
\end{align*}
Taking supremum over $S$ gives:
\begin{align*}
    \gamma \leq  7 \min \bracket{\delta \sqrt{k}, \sqrt{\delta}} + 3 \delta \gamma,
\end{align*}
from which we get:
\begin{align*}
\gamma \leq 14 \min \bracket{\delta \sqrt{k}, \sqrt{\delta}} = 14 c(k, \delta).
\end{align*}

Hence, for all $(\bU, \bV, \{\bLam\}_{i=1}^k) \in S$ we have:

\begin{align*}
    \left|
    \norm{\Aop{\bU, \bV, \{\bLam^i\}_{i=1}^k}}_2^2
    - 
    \norm{\Bop{\bU, \bV, \{\bLam^i\}_{i=1}^k}}_2^2
    \right| = 
    &\left|
    \norm{\Aop{\bU, \bV, \{\bLam^i\}_{i=1}^k}}_2    - 
    \norm{\Bop{\bU, \bV, \{\bLam^i\}_{i=1}^k}}_2
    \right| \\
    \cdot 
    &\left|
    \norm{\Aop{\bU, \bV, \{\bLam^i\}_{i=1}^k}}_2    + 
    \norm{\Bop{\bU, \bV, \{\bLam^i\}_{i=1}^k}}_2
    \right| \\
    &\leq 14 c(k,\delta) 
    \left(
    2\norm{\Bop{\bU, \bV, \{\bLam^i\}_{i=1}^k}}_2 +
    14 c(k,\delta)
    \right) \\
    &\leq 14 c(k,\delta) 
    \left(
    2 +
    14 c(k,\delta)
    \right) \\
    &\leq 50 \min \bracket{\delta \sqrt{k}, \sqrt{\delta}}. \\
    &\leq \delta_r.
\end{align*}

\subsection{Proof of \autoref{lemma:grip_innerprod}}
\label{proof:grip_innerprod}
Let's define
    \begin{align*}
        \bU = 
        \begin{bmatrix}
            \bU_1, \bU_2, \bU_3
        \end{bmatrix} \in R^{d \times 3r},
        \bV = 
        \begin{bmatrix}
            \bV_1, \bV_2, \bV_3
        \end{bmatrix} \in R^{d \times 3r},
        \bLam^i =
        \textbf{diag}(\bLam^i_1,\bLam^i_2,
        \bLam^i_3) \in R^{3r \times 3r}.
    \end{align*}
    Note that we have:
    \begin{align*}
        \bX^i + \bY^i = 
        \bU_1 \bLam^i_1 (\bV_1)^{\top} +
        \bU_2 \bLam^i_2 (\bV_2)^{\top} +
        \bU_3 \bLam^i_3 (\bV_3)^{\top} = 
        \bU \bLam^i (\bV)^{\top}. 
    \end{align*}
    Since the ensemble $\{\bG^i_j\}$ satisfies 
    $\delta_{3r}$-GRIP, than we have:
    \begin{align*}
        \left|
        \dsum
        \inner{\bG^i_j}
        {\bX^i + \bY^i}^2 - 
        \dsum
        \norm{\bX^i + \bY^i}_F^2
        \right| 
        \leq 
        m \delta_{3r}
        \max_{i=1, \dots, k} 
        \norm{\bX_i + \bY_i}_F^2.
    \end{align*}   
Applying triangle inequality, gives us:
\begin{align*}
2\left|
\dsum
\inner{\bG^i_j}
{\bX^i} 
\inner{\bG^i_j}
{\bY^i} 
- 
\dsum
\inner{\bX^i}{\bY^i}
\right| 
&\leq m \delta_{3r}
        \max_{i=1, \dots, k} 
        \norm{\bX_i + \bY_i}_F^2 \\
&\quad + \left|
        \dsum
        \inner{\bG^i_j}
        {\bX^i}^2 - 
        \dsum
        \norm{\bX^i}_F^2
        \right| \\
&\quad + \left|
        \dsum
        \inner{\bG^i_j}
        {\bY^i}^2 - 
        \dsum
        \norm{\bY^i}_F^2
        \right| \\
\end{align*}
Now, applying $3r-$GRIP for the last two terms in the RHS, gives us the bound:
\begin{align*}
    &\leq m \delta_{3r}
        \left(
        \max_{i=1, \dots, k} 
        \norm{\bX^i + \bY^i}_F^2 +
        \max_{i=1, \dots, k} 
        \norm{\bX^i}_F^2 +
        \max_{i=1, \dots, k} 
        \norm{\bY^i}_F^2
        \right) \\
    &\leq 3m \delta_{3r}
    \max_{i=1, \dots, k}
    \left(
    \norm{\bX^i + \bY^i}_F^2 +
    \norm{\bX^i}_F^2 +
    \norm{\bY^i}_F^2
    \right) \\
    &\leq 6m \delta_{3r}
    \max_{i=1, \dots, k}
    \left(
    \norm{\bX^i}_F^2 +
    \norm{\bY^i}_F^2 +
    \norm{\bX^i}_F
    \norm{\bY^i}_F
    \right)
\end{align*}
where in the second inequality we have used the simple fact that if $\{a_i\}_{i=1}^k,\{b_i\}_{i=1}^k,\{c_i\}_{i=1}^k$ are collections of positive numbers, than the following inequality holds:
\begin{align*}
    \max_{i=1, \dots, k} a_i +
    \max_{i=1, \dots, k} b_i +
    \max_{i=1, \dots, k} c_i
    \leq 
    3 \max_{i=1, \dots, k}
    \left(
    a_i + b_i + c_i
    \right).
\end{align*}
Now, note that if we replace $(\bLam^i_1, \bLam^i_2, \bLam^i_3)$ with 
$(\lambda^i \bLam^i_1, \frac{\bLam^i_2}{\lambda^i}, \frac{\bLam^i_3}{\lambda^i})$, for a non-zero real $\lambda^i$, the LHS doesn't change and in the RHS we get
$\left( \bX^i, \bY^i \right)$ replaced with $\left( \frac{\bX^i}{\lambda^i}, \lambda^i \bY^i \right)$. Therefore, optimizing over the RHS, gives us the final bound:
\begin{align*}
    \leq 18m \delta_{3r}
    \max_{i=1, \dots, k}
    \left(
    \norm{\bX^i}_F
    \norm{\bY^i}_F
    \right),
\end{align*}
which completes the proof.

\subsection{Proof of \autoref{prop:UV_concentration}}
\label{proof:UV_concentration}
We need to show that with high probability, for all $\bLam \in R^{r \times r}$, the following inequality holds:
\begin{align*}
    \left|
    \sum_{i=1}^n \inner{\bG_i}{\bU \bLam \bV^{\top}}^2 - 
    \sum_{i=1}^n \norm{\bU \bLam \bV^{\top}}_F^2
    \right| \leq
    n \delta_r \norm{\bU \bLam \bV^{\top}}_F^2.
\end{align*}
Note that, for invertible matrices $\bR_1,\bR_2 \in R^{r \times r}$, scalar $\gamma \in R$ under the transformations 
\begin{align*}
\bU \leftarrow \bU \bR_1, \bV \leftarrow \bV \bR_2, \bLam \leftarrow \gamma (\bR_1)^{-1} \bLam (\bR_2^{\top})^{-1},
\end{align*}
the statement of the problem remains the same. Therefore WLOG, $\bU$ and $\bV$ have orthonormal columns, and the problem becomes showing that with high probability, for every $\bLam \in R^{r \times r}$ such that $\norm{\bLam}_F=1$, the following inequality holds:
\begin{align*}
    \left|
    \frac{1}{n}\sum_{i=1}^n \inner{\bG_i}{\bU \bLam \bV^{\top}}^2 - 
    1
    \right| \leq \delta_r,
\end{align*}

\noindent Define a linear map $\mathcal{A}$ as follows:
\begin{align*}
    \mathcal{A}(\bLam) = 
    \begin{pmatrix}
        \frac{1}{\sqrt{n}}\inner{\bG_1}{\bU \bLam (\bV)^{\top}} \\
        \vdots \\
        \frac{1}{\sqrt{n}}\inner{\bG_n}{\bU \bLam (\bV)^{\top}}
    \end{pmatrix}
\end{align*}

\noindent Let $\delta = \frac{\delta_r}{9}$. Let $B_r = \{\bLam \in R^{r \times r}: \norm{\bLam}_F=1\}$ and $\bar{S}_r$ be a $\delta$ cover for $B_r$ in $\norm{\cdot}_F$. By classical covering results we have:
\begin{align*}
    |\bar{S}_r| \leq \left( \frac{3}{\delta} \right)^{r^2}.
\end{align*}
Now, by \autoref{property:dist}, we have that for a fixed $\bLam \in \bar{B}_r$, the following holds:

\begin{align*}
    \mathbf{P}
    \left[
    |\norm{\mathcal{A}(\bLam)}_2^2 - 1| \geq \delta
    \right] \leq C_1e^{-cn \delta^2}.
\end{align*}
Applying union bound, gives us:

\begin{align*}
    \mathbf{P}
    \left[
    \left| \norm{\mathcal{A}(\bLam)}_2^2 - 1 \right| \leq \delta \text{ for every }
    \bLam \in \bar{S}_r
    \right] 
    &\geq 1- |\bar{S}_r|C_1e^{-cn \delta^2}
    \geq 
    1 - C_1 \left( \frac{3}{\delta} \right)^{r^2} 
    e^{-cn \delta^2} \\
    &\geq 1 - C_1 
    e^{-cn \delta^2 + r^2 \log \left( \frac{3}{\delta} \right)}\geq 1 - C_1 
    e^{-cn \delta^2 \left(1 - 
    \frac{
    r^2 \log \left( \frac{3}{\delta} \right)
    }
    {
    cn \delta^2
    }
    \right)} \\
    &\geq 1 - C_1 e^{-\frac{cn \delta^2}{2}},
\end{align*}
where the last inequality follows from the fact that $n \geq \frac{2r^2 \log (\frac{3}{\delta})}{c\delta^2}.$ 

Now, denote:

\begin{align*}
    \gamma = \sup \{\norm{\mathcal{A}(\bLam)}_2 : \bLam \in B_r\}.
\end{align*}
Let us fix $\bLam \in B_r$. We know that there is $\bar{\bLam} \in \bar{B}_r$, such that $\norm{\bLam - \bar{\bLam}}_F \leq \delta.$ Applying the triangle inequality and using the fact that 
\begin{align*}
\norm{\Aop{\bLam - \bar{\bLam}}}_2 = \norm{\bLam - \bar{\bLam}}_F \norm{\Aop{\frac{
\bLam - \bar{\bLam}
}{
\norm{\bLam - \bar{\bLam}}_F
}}}_2 \leq \norm{\bLam - \bar{\bLam}}_F \gamma,
\end{align*}
gives us:
\begin{align*}
    \norm{\Aop{\bLam}}_2 = 
     \norm{\Aop{\bLam - \bar{\bLam} + \bar{\bLam}}}_2 &\leq 
     \norm{\Aop{\bar{\bLam}}}_2 + \norm{\Aop {\bLam - \bar{\bLam}}}_2 \leq 
     \norm{\Aop {\bLam - \bar{\bLam}}}_2 + 1 + \delta \\
     &\leq \gamma \norm{\bLam - \bar{\bLam}}_F + 1 + \delta \leq 
     \gamma \delta + 1 + \delta.
\end{align*}
Taking supremum over $B_r$ in the LHS, we conclude:

\begin{align*}
    \gamma \leq \gamma \delta + 1 + \delta,
\end{align*}
from which it follows $\gamma = \frac{1+\delta}{1-\delta} \leq (1+\delta)^2 \leq 1+3\delta.$ Similarly for every $\bLam \in B_r$, we have:

\begin{align*}
    \norm{\Aop{\bLam}}_2 &\geq \norm{\Aop{\bar{\bLam}}}_2 - \norm{\Aop{\bLam - \bar{\bLam}}}_2 \geq \sqrt{1-\delta} - \norm{\bLam - \bar{\bLam}}_F \gamma \\
    &\geq 1 - \delta - \gamma \delta \geq 1 - 3\delta.
\end{align*}
Combigning both bounds we conclude the following:
\begin{align*}
    \mathbf{P}
    \left[
    \left|
    \norm{\Aop{\bLam}}_2^2 - 1 
    \right| \leq 9 \delta \text{ for all }
    \bLam \in R^{r \times r}
    \right] \geq 
    1 - C_1e^{-\frac{cn \delta^2}{2}}.
\end{align*}
Taking $\delta = \frac{\delta_r}{9}$ completes the proof.

\section{Appendix}

\subsection{Used tasks}
\label{section:tasks}

Here we list the tasks used in the experiments. We write explicitly some definitions of the tasks, but for most of them we give the task code and refer the reader to \url{https://github.com/allenai/natural-instructions}.

The tasks used with $T^1_1$, from most similar to least similar are:
\begin{itemize}
    \item \texttt{task370}(given a list remove numbers that are divisible by 3), \texttt{task205}(given a list remove numbers that are even), \texttt{task367} (given a list remove numbers that are not integer)
    \item \texttt{task370},\texttt{task205},\texttt{task097}(given a list remove duplicates)
    \item \texttt{task370},\texttt{task205},\texttt{task488}(given a list extract alphabetical elements)
    \item \texttt{task370},\texttt{task205},\texttt{task506}(positions of all alphabetical numbers in a list)
    \item \texttt{task205},\texttt{task093},\texttt{task206}
    \item \texttt{task370},\texttt{task637},\texttt{task1214}
    \item \texttt{task370},\texttt{task378},\texttt{task586}
    \item \texttt{task064},\texttt{task504},\texttt{task096}
    \item \texttt{task162},\texttt{task378},\texttt{task586}
    \item \texttt{task162},\texttt{task1203},\texttt{task586}
    \item \texttt{task1210},\texttt{task1203},\texttt{task586}
\end{itemize}

The tasks used with $T^2_1$ , from most similar to least similar are:
\begin{itemize}
\item \texttt{task852}(given a list of lists, multiply all odd elements in each list),\texttt{task371}(given a list of lists, multiply all numbers in each list),\texttt{task207}(given a list of lists, find the maximum in each list)
\item \texttt{task852},\texttt{task371},\texttt{task205}
\item \texttt{task852},\texttt{task371},\texttt{task637}
\item \texttt{task852},\texttt{task373},\texttt{task098}
\item \texttt{task371},\texttt{task095},\texttt{task904}
\item \texttt{task373},\texttt{task1446},\texttt{task1214}
\item \texttt{task373},\texttt{task199},\texttt{task1214}
\item \texttt{task1308},\texttt{task199},\texttt{task1214}
\end{itemize}

The tasks used in \autoref{sec:baselines}, from most similar to least similar are:

\begin{itemize}
\item \texttt{task1206},\texttt{task1211},\texttt{task1202},\texttt{task1215}
\item \texttt{task367},\texttt{task372},\texttt{task369},\texttt{task205}
\item \texttt{task851},\texttt{task852},\texttt{task368},\texttt{task207}
\item \texttt{task064},\texttt{task100},\texttt{task091},\texttt{task099}
\item \texttt{task373},\texttt{task374},\texttt{task368},\texttt{task125}
\item \texttt{task064},\texttt{task078},\texttt{task091},\texttt{task099}
\item \texttt{task605},\texttt{task497},\texttt{task636},\texttt{task637}
\item \texttt{task123},\texttt{task205},\texttt{task098},\texttt{task097}
\item \texttt{task851},\texttt{task497},\texttt{task369},\texttt{task206}
\item \texttt{task366},\texttt{task851},\texttt{task374},\texttt{task123}
\item \texttt{task497},\texttt{task636},\texttt{task205},\texttt{task208}
\item \texttt{task267},\texttt{task063},\texttt{task509},\texttt{task125}
\item \texttt{task267},\texttt{task600},\texttt{task499},\texttt{task370}
\item \texttt{task1446},\texttt{task160},\texttt{task523},\texttt{task499}
\item \texttt{task064},\texttt{task162},\texttt{task494},\texttt{task1197}
\item \texttt{task1542},\texttt{task157},\texttt{task378},\texttt{task097}
\item \texttt{task078},\texttt{task523},\texttt{task371},\texttt{task162}
\item \texttt{task850},\texttt{task122},\texttt{task1203},\texttt{task617}
\end{itemize}

The tasks used for 20 clients experiments are 
\texttt{task095,task097,task098,task123,task205,task207,task366,task367,
task368,task369,task370,task372,task373,task374,task497,task605,task636,task637,task851,
task852}.
\end{document}